\documentclass[twoside,11pt]{article}

%

\usepackage{amsmath}
\usepackage{mathrsfs}
\usepackage{mathtools}
\usepackage{algorithm}
\usepackage{algorithmic}
\usepackage{enumitem}
\usepackage{bbm}
\usepackage{changepage}
\usepackage{booktabs, verbatim} 
\usepackage{tikz}
\usepackage{subcaption}
\usetikzlibrary{shapes,arrows,positioning}
\mathtoolsset{showonlyrefs}
\usepackage{kbordermatrix}

\usepackage[preprint]{jmlr2e}

\allowdisplaybreaks

\usepackage[open,openlevel=1]{bookmark}
\usepackage[titletoc]{appendix}

\usepackage{xcolor}



\newcommand{\Cov}{\mathrm{Cov}}
\newcommand{\Var}{\mathrm{Var}}
\newcommand{\tr}{\mathrm{tr}}
\newcommand{\frobnorm}[1]{\left\lVert#1\right\rVert_{\text{F}}}
\DeclareMathOperator*{\argmax}{arg\,max}
\DeclareMathOperator*{\argmin}{arg\,min}
\DeclareMathOperator{\vect}{vec}

\newcommand{\independent}{\perp\!\!\!\perp}

\usepackage{centernot}
\usepackage{bm}

\DeclareFontFamily{U}{mathx}{\hyphenchar\font45}
\DeclareFontShape{U}{mathx}{m}{n}{
      <5> <6> <7> <8> <9> <10>
      <10.95> <12> <14.4> <17.28> <20.74> <24.88>
      mathx10
      }{}
\DeclareSymbolFont{mathx}{U}{mathx}{m}{n}
\DeclareFontSubstitution{U}{mathx}{m}{n}
\DeclareMathAccent{\widecheck}{0}{mathx}{"71}
\DeclareMathAccent{\wideparen}{0}{mathx}{"75}

\newtheorem{assump}{Assumption}


\usepackage{lastpage}

\jmlrheading{23}{2022}{1-\pageref{LastPage}}{3/20}{3/22}{20-231}{Boxin Zhao, Y. Samuel Wang, and Mladen Kolar}


\ShortHeadings{Functional Differential Graph Estimation}{Zhao, Wang, and Kolar}


\firstpageno{1}  

\begin{document}
	
\title{FuDGE: A Method to Estimate a Functional Differential Graph in a High-Dimensional Setting}
	
\author{\name Boxin Zhao \email boxinz@uchicago.edu \\
		\addr Booth School of Business\\
		The University of Chicago\\
		Chicago, IL 60637, USA\\
		\name Y. Samuel Wang \email ysw7@cornell.edu \\
      \addr Department of Statistics and Data Science\\
Cornell University\\
       Ithaca, NY 14853, USA\\
		\name Mladen Kolar \email mkolar@chicagobooth.edu \\
		\addr Booth School of Business\\
		The University of Chicago\\
		Chicago, IL 60637, USA}
	
\editor{Daniela Witten}

\maketitle

\begin{abstract}
We consider the problem of estimating the difference between two undirected functional graphical models with shared structures. In many applications, data are naturally regarded as a vector of random functions rather than as a vector of scalars. For example, electroencephalography (EEG) data are treated more appropriately as functions of time. In such a problem, not only can the number of functions measured per sample be large, but each function is itself an infinite-dimensional object, making estimation of model parameters challenging. This is further complicated by the fact that curves are usually observed only at discrete time points. We first define a functional differential graph that captures the differences between two functional graphical models and formally characterize when the functional differential graph is well defined. We then propose a method, FuDGE, that directly estimates the functional differential graph without first estimating each individual graph. This is particularly beneficial in settings where the individual graphs are dense but the differential graph is sparse. We show that FuDGE consistently estimates the functional differential graph even in a high-dimensional setting for both fully observed and discretely observed function paths. We illustrate the finite sample properties of our method through simulation studies. We also propose a competing method, the Joint Functional Graphical Lasso, which generalizes the Joint Graphical Lasso to the functional setting. Finally, we apply our method to EEG data to uncover differences in functional brain connectivity between a group of individuals with alcohol use disorder and a control group.
\end{abstract}

\begin{keywords}
	differential graph estimation, 
	functional data analysis,
	multivariate functional data, 
	probabilistic graphical models,
	structure learning
\end{keywords}

\section{Introduction}\label{sec:intro}
We consider a setting where we observe two samples of multivariate functional data, $X_i(t)$ for $i = 1, \ldots, n_X$ and $Y_i(t)$ for $i = 1, \ldots, n_Y$. The primary goal is to determine if and how the underlying populations---specifically their conditional dependency structures---differ. As a motivating example, consider electroencephalography (EEG) data, where the electrical activity of
multiple regions of the brain can be measured simultaneously over a period of time. Given samples from the general population, fitting a graphical model to the observed measurements would allow a researcher to determine which regions of the brain are dependent after conditioning on all other regions. The EEG data analyzed in Section~\ref{subsec:EEG} consists of two samples: one from a control group and the other from a group of individuals with alcohol use disorder (AUD). Using these data, researchers may be interested in explicitly comparing the two groups and investigating the complex question of how brain functional connectivity patterns in the AUD group differ from those in the
control group.

The conditional independence structure within multivariate data is commonly represented by a graphical model \citep{lauritzen1996graphical}. Let $G = \{V, E\}$ denote an undirected graph where $V$ is the set of vertices with $|V| = p$ and $E \subset V^2$ is the set of edges. At times, we also denote $V$ as $[p]=\{1,2,\dots,p\}$. When the data consist of random vectors $X=(X_1,\dots,X_p)^{\top}$, we say that $X$ satisfies the pairwise Markov property with respect to $G$ if $X_{v} \centernot \independent X_{w} \mid \{X_u\}_{u \in V\setminus \{v,w\}}$ holds if and only if $\{v,w\} \in E$.  When $X$ follows a multivariate Gaussian distribution with covariance $\Sigma = \Theta^{-1}$, then $\Theta_{vw} \neq 0$ if and only if $\{v,w\} \in E$. Thus, recovering the structure of an undirected graph from multivariate Gaussian data is equivalent to estimating the support of the precision matrix, $\Theta$.

When the primary interest is in characterizing the difference between the conditional independence structure of two populations, the object of interest may be the \emph{differential graph}, $G_\Delta = \{V, E_\Delta\}$. When $X$ and $Y$ follow multivariate normal distributions with covariance matrices $\Sigma^{X}$ and $\Sigma^{Y}$, let $\Delta = \Theta^X - \Theta^Y$, where $\Theta^X = (\Sigma^X)^{-1}$ and $\Theta^Y = (\Sigma^Y)^{-1}$ are the precision matrices of $X$ and $Y$, respectively. The differential graph is then defined by letting $E_\Delta = \left\{\{v, w\} \, : \, \Delta_{v,w} \neq 0\right\}$. This type of differential model for vector-valued data has been adopted in \cite{zhao2014direct}, \cite{xu2016semiparametric}, and \cite{cai2017global}.

In the motivating example of EEG data, electrical activity is observed over a period of time. When the measurements smoothly vary over time, it may be more natural to consider the observations as arising from an underlying function. This is particularly true when data from different subjects are observed at different time points. Furthermore, when characterizing conditional independence, it is likely that the activity of each region depends not only on what is occurring simultaneously in the other regions but also on what has previously occurred in other regions; this suggests that a functional graphical model might be appropriate. 

In this paper, we define a differential graph for functional data that we refer to as a functional differential graphical model. Similar to differential graphs for vector-valued data, functional differential graphical models characterize the differences in the conditional dependence structures of two distributions of multivariate curves. We build on the functional graphical model developed in \cite{Qiao2015Functional}. However, while \cite{Qiao2015Functional} required that the observed functions lie in a finite-dimensional space in order for the functional graphical model to be well defined, the functional differential graphical models may be well defined even in certain cases where the observed functions live in an infinite-dimensional space. 

We propose an algorithm called FuDGE to estimate the differential graph and show that this procedure enjoys many benefits, similar to differential graph estimation in the vector-valued setting. Most notably, we show that under suitable conditions, the proposed method can consistently recover the differential graph even in the high-dimensional setting where $p$, the number of observed variables, may be larger than $n$, the number of observed samples.

A conference version of this paper was presented at the Conference on Neural Information Processing Systems \citep{zhao2019direct}. Compared to the conference version, this paper includes the following new results.
\begin{itemize}
\item We give a new definition for a differential graph for functional data, which allows us to circumvent the unnatural assumption made in the previous version and take a truly functional approach. Specifically, instead of defining the differential graph based on the difference between conditional covariance functions, we use the limit of the norm of the difference between finite-dimensional precision matrices. 

\item We include new theoretical guarantees for discretely observed curves. In practice, we can only observe the functions at discrete time points, so this extends the theoretical guarantees to a practical estimation procedure. Discrete observations bring an additional source of error when the estimated curves are used in the functional PCA. In Theorem~\ref{Thm:ErrSamCovDis}, we give an error bound for estimating the covariance matrix of the PCA score vectors under mild conditions.

\item We introduce the Joint Functional Graphical Lasso, which is a generalization of the Joint Graphical Lasso \citep{Danaher2011Joint} to the functional data setting. Empirically, we show that the procedure performs competitively in some settings but is generally outperformed by the FuDGE procedure. 
\end{itemize}

The software implementation can be found at \url{https://github.com/boxinz17/FuDGE}. The repository also contains the code to reproduce the simulation results.

\subsection{Related Work}
The work we develop lies at the intersection of two different lines of literature: graphical models for functional data and direct estimation of differential graphs.

Many previous works have studied the structure estimation of a static undirected graphical model \citep{chow68approximating,Yuan2007Model,Cai2011Constrained,Meinshausen2006High,Kolar2012Consistent,Wang2016Inference,Vogel2011Elliptical,Sun2015Learning,Suggala2017Expxorcist}. Previous methods have also been proposed to characterize conditional independence for multivariate observations recorded over time. For example, \cite{talih2005structural}, \cite{Xuan2007modeling}, \cite{amr09tesla}, \citet{le09keller}, \citet{song09time}, \citet{Kolar2010Estimating}, \citet{kolar09nips_tv_paper}, \cite{kolar09sparsistent}, \citet{Zhou08time}, \cite{yin10nonparametric}, \citet{kolar10nonparametric}, \citet{kolar2011time}, \citet{kolar10estimating}, \cite{Wang2014Inference}, \citet{Lu2015Posta}, \citet{Geng2018Joint}, \citet{Geng2019Partially}, \citet{Tsai2020Nonconvex} studied methods for dynamic graphical models that assume that data are sampled independently at different time points but generated by related distributions.  In these works, the authors proposed procedures to estimate a series of graphs that represent the conditional independence structure at each time point; however, they assumed that the observed data do not encode ``longitudinal'' dependence. In contrast, \citet{Wang2020Statistical} focused on graphical models for time series data, while \citet{Qiao2015Functional}, \citet{zhu2016bayesian}, \citet{Li2018nonparametric}, \citet{Zhang2018Dynamic}, \citet{zhao2021high} considered the setting where the data are multivariate random functions. Most similar to the setting we consider, \citet{Qiao2015Functional} assumed that the data are distributed as a multivariate Gaussian process (MGP) and use a graphical lasso type procedure on the estimated functional principal component scores. \citet{zhu2016bayesian} also assumed an MGP but proposed a Bayesian procedure. Crucially, however, both required that the covariance kernel can essentially be represented by a finite-dimensional object. \citet{zapata2021partial} showed that under various notions of separability---roughly when the covariance kernel can be decomposed into covariance across time and covariance across nodes---the conditional independence of the MGP is well defined even when the functional data are truly infinite-dimensional and that the conditional independence graph can be recovered by the union of a (potentially infinitely) countable number of graphs over finite-dimensional objects. \citet{zhao2021high} adopted a neighborhood selection approach to learn the conditional independence structure of an MGP, which does not need to assume that functional data are finite-dimensional or that the MGP is separable to ensure consistency. In a different approach, \citet{Li2018nonparametric} did not assume that random functions are Gaussian and instead used the notion of additive conditional independence to define a graphical model for random functions. \citet{qiao2020doubly} also assumed that the data are random functions, but allowed the dependency structure to change smoothly over time---similar to a dynamic graphical model.

We also draw on recent literature that has shown that when the object of interest is the difference between two distributions, directly estimating the difference can provide improvements over first estimating each distribution and then taking the difference. Most notably, when estimating the difference in graphs in a high-dimensional setting, even if each individual graph does not satisfy the appropriate sparsity conditions, the differential graph may still be recovered consistently. \citet{zhao2014direct} considered data drawn from two Gaussian graphical models and showed that even if both underlying graphs are dense, if the difference between the precision matrices of each distribution is sparse, the differential graph can still be recovered in the high-dimensional setting.  \citet{Liu2014Direct} proposed procedure based on KLIEP \citep{Sugiyama2008Direct} that estimates the differential graph by directly modeling the ratio of two densities. They did not assume Gaussianity but required that both distributions lie in some exponential family. \citet{Fazayeli2016Generalized} extended this idea to estimate the differences in Ising models. \citet{wang2018direct} also proposed direct difference estimators for directed graphs when data are generated by linear structural equation models that share a common topological ordering.

\subsection{Notation}

Let $\vert \cdot \vert_p $ denote the vector $p$-norm and $\Vert \cdot\Vert_p$ denote the matrix/operator $p$-norm.  For example, for a $p\times 1$ vector $a=(a_1,a_2,\dots,a_p)^{\top}$, we have $\vert a\vert_1=\sum_{j}\vert a_j \vert$, $\vert a\vert_2=(\sum_{j}\vert a^{2}_j \vert)^{1/2}$ and $\vert a \vert_{\infty}=\max_{j}\vert a_j \vert$. For a $p\times{q}$ matrix $A$ with entries $a_{jk}$, $|A|_{1}=\sum_{j,k}|a_{jk}|$, $\|A\|_{1}=\max_{k}\sum_{j}|a_{jk}|$, $|A|_{\infty}=\max_{j,k}|a_{jk}|$, and $\|A\|_{\infty}=\max_{j}\sum_{k}|a_{jk}|$. Let $\frobnorm{A}=(\sum_{j,k}a^{2}_{jk})^{1/2}$ be the Frobenius norm of $A$. When $A$ is symmetric, let $\tr(A)=\sum_{j}a_{jj}$ denote the trace of A. Let $\lambda_{\min}(A)$ and $\lambda_{\max}(A)$ denote the minimum and maximum eigenvalues, respectively. Let $a_{n}\asymp{b_{n}}$ denote $0<C_1\leq{\inf_{n}|a_{n}/b_{n}|}\leq{\sup_{n}|a_{n}/b_{n}|}\leq C_2<\infty$ for some positive constants $C_1$ and $C_2$.

We assume that all random functions belong to a separable Hilbert space $\mathbb{H}$. For any two functions $f_{1},f_{2} \in \mathbb{H}$, we define their inner product as $\langle f_1,f_2 \rangle=\int f_1(t)f_2(t)dt$. The induced norm is $\Vert f_1 \Vert=\Vert f_1 \Vert_{\mathcal{L}^2}=\{ \int f_{1}^{2}(t)
dt \}^{1/2}$.  

For a function vector $f(t)=(f_1(t),f_2(t),\dots,f_p(t))^{\top}$, we let $\Vert f \Vert_{\mathcal{L}^2,2}=(\sum^{p}_{j=1}\Vert f_j \Vert^2)^{1/2}$ denote its $\mathcal{L}^2,2$-norm.  For a bivariate function $g(s,t)$, we define the Hilbert-Schmidt norm of $g(s,t)$ as $\|g\|_{\text{HS}}=\int\int \{g(s,t)\}^2dsdt$. Typically, we will use $f(\cdot)$ (and similarly $g(\cdot ,*)$) to denote the entire function $f$, while we use $f(t)$ (and similarly $g(s,t)$) to mean the value of $f$ evaluated at $t$. 

For a vector space $\mathbb{V}$, we use $\mathbb{V}^{\bot}$ to denote its orthogonal complement. For $v_1,\ldots,v_K \in \mathbb{V}$ and $v=(v_1,\ldots,v_K)^{\top}$, we use ${\rm Span}\left\{ v_1,v_2,\dots,v_K \right\} ={\rm Span}\left(v\right)$ to denote the vector subspace spanned by $v_1,\ldots,v_K$.

\section{Functional Differential Graphical Models}

In this section, we review functional graphical models and introduce the notion of a functional differential graphical model.

\subsection{Functional Graphical Model}\label{sec:FGM}

Suppose $X_i(\cdot)=\left(X_{i1}(\cdot),X_{i2}(\cdot),\dots,X_{ip}(\cdot)\right)^{\top}$ is a p-dimensional \emph{multivariate Gaussian process (MGP)} with mean zero and common domain $\mathcal{T}$, where $\mathcal{T}$ is a closed interval of the real line with length $\lvert \mathcal{T} \rvert$.\footnote{We assume mean zero and a common domain $\mathcal{T}$ to simplify the notation, but the methodology and theory generalize to non-zero means and different time domains.} Each observation, for $i = 1, 2, \ldots, n$, is i.i.d. In addition, assume that for $j \in V$, $X_{ij}(\cdot)$ is a random element of a separable Hilbert space $\mathbb{H}$. \citet{Qiao2015Functional}, define the conditional cross-covariance function for $X_i(\cdot)$ as 
\begin{equation}{\label{con:CX_jl}}
C^{X}_{jl}(s,t) \; = \; \Cov\left(X_{ij}(s), X_{il}(t) \, \mid \, \{X_{ik}(\cdot)\}_{k \neq j,l}\right).
\end{equation}
If $C^{X}_{jl}(s,t) = 0$ for all $s,t\in\mathcal{T}$, then the random functions $X_{j}(\cdot)$ and $X_{l}(\cdot)$ are conditionally independent given the other random functions, and the graph $G_X = \{V, E_X\}$ represents the pairwise Markov property of $X_i(\cdot)$ if 
\begin{equation}\label{eq:singleFGdef}
E_{X} = \left\{ ( j,l) \,: \, j < l   \text{ and } \Vert C^{X}_{jl}\Vert _{\text{HS}} \neq 0 \right\}.
\end{equation}

In general, we cannot directly estimate \eqref{eq:singleFGdef}, since $X_{i}(\cdot)$ may be an infinite-dimensional object. Thus, before applying a statistical estimation procedure, dimension reduction is typically required. \citet{Qiao2015Functional} used \emph{functional principal component analysis} (FPCA) to project each observed function onto an orthonormal function basis defined by a finite number of eigenfunctions. Their procedure then estimates the conditional independence structure from the ``projection scores'' of this basis. We outline their approach in the following. However, in contrast to \citet{Qiao2015Functional}, we do not restrict ourselves to dimension reduction by projecting onto the FPCA basis, and in our discussion we instead consider a general function subspace. 

Let $\mathbb{V}^{M_j}_j \subseteq \mathbb{H}$ be a subspace of a separable Hilbert space $\mathbb{H}$ with dimension $M_j \in \mathbb{N}^{+}$ for all $j=1,2,\dots,p$. Our theory easily generalizes to the setting where $M_j$ may differ, but to simplify the notation, we assume $M_j = M$ for all $j$ and simply write $\mathbb{V}^{M}_j$ instead of $\mathbb{V}^{M_j}_j$. Let $\mathbb{V}^M_{[p]} \coloneqq \mathbb{V}^M_1 \otimes \mathbb{V}^M_2 \otimes \dots \otimes \mathbb{V}^M_p$.

For any function $g(\cdot) \in \mathbb{H}$ and a subspace $\mathbb{F} \subseteq \mathbb{H}$, let $\pi(g(\cdot);\mathbb{F}) \in \mathbb{F}$ denote the projection of the function $g(\cdot)$ onto the subspace $\mathbb{F}$, and let
\begin{equation*}
\pi(X_{i}(\cdot);\mathbb{V}^M_{[p]})=\left(  \pi (X_{i1}(\cdot);\mathbb{V}^M_1), \pi (X_{i2}(\cdot);\mathbb{V}^M_2), \dots, \pi (X_{ip}(\cdot);\mathbb{V}^M_p)  \right)^{\top}.    
\end{equation*}
When the choice of subspace is clear from the context, we will use the following shorthand notation:
$X^{\pi}_{ij}(\cdot)=\pi (X_{ij}(\cdot);\mathbb{V}^M_j)$, 
$j=1,2,\dots,p$, and $X^{\pi}_{i}(\cdot)=\pi(X_{i}(\cdot);\mathbb{V}^M_{[p]})$.

Similarly to the definitions in \eqref{con:CX_jl} and \eqref{eq:singleFGdef}, we define the conditional independence graph of $X^\pi(\cdot)$ as
\begin{equation}\label{eq:singleFGdef-pi}
E^{\pi}_{X} = \left\{ \{ j,l \} \,: \, j < l   \text{ and } \Vert C^{X,\pi}_{jl}\Vert _{\text{HS}} \neq 0 \right\},
\end{equation}
where
\begin{equation}{\label{con:CX_jl-pi}}
C^{X,\pi}_{jl}(s,t) \; = \; \Cov\left(X^{\pi}_{ij}(s), X^{\pi}_{il}(t) \, \mid \, \{X^{\pi}_{ik}(\cdot)\}_{k \neq j,l}\right).
\end{equation}
Note that $E^{\pi}_{X}$ depends on the choice of $\mathbb{V}^M_{[p]}$ through the projection operator $\pi$, and, as we discuss below, $E_X^\pi$ may be recovered from the observed samples.

When data arise from an MGP, we can estimate the projected graphical structure by studying the precision matrix of projection score vectors (defined below) with \emph{any} orthonormal function basis of the subspace $\mathbb{V}^M_{[p]}$. Let $e^M_{j}=( e_{j1}(\cdot),e_{j2}(\cdot),\dots,e_{jM}(\cdot) )^{\top}$ be any orthonormal function basis of $\mathbb{V}^M_j$ and let $e^M(\cdot)=\{ e^M_{j} \}^p_{j=1}$ be an orthonormal function basis of $\mathbb{V}^M_{[p]}$.
Let
\begin{equation*}
a^{X}_{ijk}=\int_{\mathcal{T}} X_{ij}(t) e_{jk}(t)dt
\end{equation*}
denote the projection score of $X_{ij}(\cdot)$ onto $e_{jk}(\cdot)$ and let 
\begin{align*}
    a^{X,M}_{ij}=(a^{X}_{ij1}, a^{X}_{ij2}, \dots, a^{X}_{ijM})^{\top} \; \text{ and }  \; a^{X,M}_{i}=((a^{X,M}_{i1})^{\top},\ldots,(a^{X,M}_{ip})^{\top})^{\top}\in{\mathbb{R}^{pM}}. 
\end{align*}
Since $X_{i}(\cdot)$ is a $p$-dimensional MGP, $a^{X,M}_{i}$ follows a multivariate Gaussian distribution and we denote the covariance matrix of that distribution as $\Sigma^{X,M}=(\Theta^{X,M})^{-1} \in \mathbb{R}^{pM \times pM}$. Each function $X_{ij}(\cdot)$ is associated with $M$ rows and columns of $\Sigma^{X,M}$ corresponding to $a_{ij}^{X,M}$. We use $\Theta_{jl}^{X,M}$ to refer to the $M\times M$ submatrix of $\Theta^{X,M}$ that corresponds to the functions $X_{ij}(\cdot)$ and $X_{il}(\cdot)$. Lemma~\ref{lemma:score-precision-pi}, from \citet{Qiao2015Functional}, shows that the conditional independence structure of the projected functional data can be obtained from the block sparsity of $\Theta^{X,M}$. 

\begin{lemma}{[\citet{Qiao2015Functional}]}\label{lemma:score-precision-pi}
Let $\Theta^{X,M}$ be the inverse covariance of the projection scores. Then, $X^{\pi}_{ij}(s) \independent X^{\pi}_{il}(t) \mid \{X^{\pi}_{ik}(\cdot)\}_{k \neq j, l}$ for all\footnote{More precisely, we only need the conditional independence to hold for all $s, t \in {\cal T}$ except for a subset of $\mathcal{T}^2$ with zero measure.}  $s, t \in {\cal T}$ if and only if $\Theta_{jl}^{X,M} \equiv 0$. 
This implies that $E^{\pi}_X$---as defined in
\eqref{eq:singleFGdef-pi}---can be equivalently defined as
\begin{equation}\label{eq:finiteDimRep-pi}
E^{\pi}_{X}\;=\; \left\{ \{j,l\} \,: \, j < l \text{ and } \Vert \Theta^{X,M}_{jl} \Vert_F \neq 0\right\}.
\end{equation}
\end{lemma}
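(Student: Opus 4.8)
The plan is to translate the statement about the infinite-dimensional projected curves into an equivalent statement about their finite-dimensional score vectors $a^{X,M}_{ij}$, and then apply the standard Gaussian graphical model characterization of conditional independence through the precision matrix. There are three links in the chain: (i) functional conditional independence $\iff$ block conditional independence of the scores; (ii) for jointly Gaussian scores, block conditional independence $\iff$ vanishing conditional cross-covariance matrix; and (iii) vanishing conditional cross-covariance block $\iff$ vanishing precision block $\Theta^{X,M}_{jl}$.

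For link (i), I would use that $e^M_j = (e_{j1}(\cdot),\dots,e_{jM}(\cdot))^\top$ is orthonormal, so the projected curve and its score vector determine one another linearly: $X^{\pi}_{ij}(s) = e^M_j(s)^\top a^{X,M}_{ij}$ and $a^X_{ijk} = \langle X^{\pi}_{ij}(\cdot), e_{jk}(\cdot)\rangle$. Consequently, conditioning on $\{X^{\pi}_{ik}(\cdot)\}_{k\neq j,l}$ generates the same $\sigma$-algebra as conditioning on $\{a^{X,M}_{ik}\}_{k\neq j,l}$, and the requirement that $X^{\pi}_{ij}(s) \independent X^{\pi}_{il}(t) \mid \{X^{\pi}_{ik}(\cdot)\}_{k\neq j,l}$ for all $s,t$ is equivalent to $a^{X,M}_{ij} \independent a^{X,M}_{il} \mid \{a^{X,M}_{ik}\}_{k\neq j,l}$.

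The computational core is to express the conditional cross-covariance function through the conditional cross-covariance matrix of the scores. Writing $K_{jl} = \Cov(a^{X,M}_{ij}, a^{X,M}_{il} \mid \{a^{X,M}_{ik}\}_{k\neq j,l})$, bilinearity of covariance gives $C^{X,\pi}_{jl}(s,t) = e^M_j(s)^\top K_{jl}\, e^M_l(t)$, and orthonormality of the bases yields $\|C^{X,\pi}_{jl}\|_{\text{HS}} = \frobnorm{K_{jl}}$, so the two vanish together. Since the scores are jointly Gaussian, link (ii) is immediate: block conditional independence is equivalent to $K_{jl}=0$. For link (iii) I would partition the indices into $S=\{j,l\}$ and its complement and use that the conditional covariance of the pair given the rest equals $(\Theta^{X,M}_{SS})^{-1}$, whose off-diagonal block is exactly $K_{jl}$; for a symmetric positive-definite $2\times 2$ block matrix this off-diagonal block is zero precisely when $\Theta^{X,M}_{jl}=0$. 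Chaining (i)--(iii) proves the iff, and combining $\|C^{X,\pi}_{jl}\|_{\text{HS}} = \frobnorm{K_{jl}}$ with $K_{jl}=0 \iff \Theta^{X,M}_{jl}=0$ rewrites the edge set $E^{\pi}_X$ of \eqref{eq:singleFGdef-pi} in the form \eqref{eq:finiteDimRep-pi}.

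I expect the main obstacle to be link (i): carefully arguing that the functional conditional independence required to hold simultaneously for all $s,t$ is captured exactly by the finite score blocks, and that in the Gaussian setting this is equivalent to the conditional cross-covariance function vanishing identically (rather than merely to pointwise uncorrelatedness at isolated $(s,t)$, modulo the measure-zero caveat in the footnote). The remaining two links are the familiar Schur-complement facts for Gaussian vectors.
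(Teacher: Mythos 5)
Your proposal is correct, but note that the paper contains no proof of this lemma to compare against: it is imported verbatim from \citet{Qiao2015Functional}, with only the remark that the result extends from the FPCA basis to any orthonormal basis of the chosen subspace, so the benchmark is the standard argument---which is exactly what you give. Your link (iii) is literally the paper's Lemma~\ref{lemma:lemma1} (proved in Appendix~\ref{sec:proof-lemma-lemma1}), which yields $\Theta^{X,M}_{jl}=-(H^{X,M}_{jj})^{-1}H^{X,M}_{jl}(H^{\backslash j,X,M}_{ll})^{-1}$ with both flanking matrices invertible, so the precision block and the conditional cross-covariance block $K_{jl}$ vanish together; and your resolution of the main obstacle in link (i) is sound, since $C^{X,\pi}_{jl}(s,t)=e^M_j(s)^{\top}K_{jl}\,e^M_l(t)$ has the entries of $K_{jl}$ as its coefficients in the $L^2(\mathcal{T}^2)$-orthonormal family $\{e_{jk}(s)e_{lm}(t)\}_{k,m}$, so vanishing of this function almost everywhere (the footnote's caveat) forces $K_{jl}=0$, which by joint Gaussianity upgrades pointwise pairwise conditional independence to conditional independence of the full score blocks. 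One cosmetic point: under the paper's convention $\|g\|_{\text{HS}}=\int\int\{g(s,t)\}^2\,ds\,dt$ (no square root), your identity should read $\|C^{X,\pi}_{jl}\|_{\text{HS}}=\frobnorm{K_{jl}}^2$ rather than $\frobnorm{K_{jl}}$, which is immaterial for the vanishing equivalence and for \eqref{eq:finiteDimRep-pi}.
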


Although \citet{Qiao2015Functional} only considered projections onto the span of the FPCA basis (that is, the eigenfunctions of $X_{ij}(\cdot)$ corresponding to $M$ largest eigenvalues), the result trivially extends to the more general case of \emph{any subspace} and \emph{any orthonormal function basis} of that subspace. 

Although $\Theta^{X,M}$ depends on the specific basis onto which $X_i(\cdot)$ is projected, the edge set $E^\pi_X$ only depends on the subspace $\mathbb{V}^M_{[p]}$, that is, the span of the basis onto which $X_i(\cdot)$ is projected. Thus, Lemma~\ref{lemma:score-precision-pi} implies that although the entries of $\Theta^{X,M}$ can change when using different orthonormal function bases to represent $\mathbb{V}^M_{[p]}$, the block sparsity pattern of $\Theta^{X,M}$ only depends on the span of the selected basis.

When $X_i(\cdot) \neq X^\pi_i(\cdot)$, $E^{\pi}_{X}$ may not be the same as $E_{X}$; furthermore, it may not be the case that $E^{\pi}_{X} \subseteq E_{X}$ or $E_{X} \subseteq E^{\pi}_{X}$. Thus, Condition 2 of \citet{Qiao2015Functional} requires a finite $M^\star<\infty$ such that $X_{ij}$ lies in $\mathbb{V}^{M^\star}_{[p]}$ almost surely. When $M = M^\star$, then $X_i(\cdot) = X_i^\pi(\cdot)$ and $E^{\pi}_X=E_X$. 
Under this assumption, to estimate $E^{\pi}_X = E_X$, \cite{Qiao2015Functional} proposed the functional graphical lasso estimator (fglasso), which solves the following objective:
\begin{equation}
\label{eq:FGL_objective}
\hat{\Theta}^{X,M}=\argmax_{\Theta^{X,M}}{\left\{\log{\text{det} \left(\Theta^{X,M}\right)} - \tr\left(S^{X,M}\Theta^{X,M}\right) - \gamma_{n}\sum_{j \neq l}{\frobnorm{\Theta^{X,M}_{jl}}} \right\}}.
\end{equation}
In \eqref{eq:FGL_objective}, $\Theta^{X,M}$ is a symmetric positive definite matrix, $\Theta^{X,M}_{jl} \in \mathbb{R}^{M \times M}$ corresponds to the $(j,l)$ submatrix of $\Theta^{X,M}$, $\gamma_n$ is a non-negative tuning parameter, and $S^{X,M}$ is an estimator of $\Sigma^{X,M}$. The matrix $S^{X,M}$ is obtained by using FPCA on the empirical covariance functions (see Section~\ref{sec:choose-sub-fpca} for details). The resulting estimated edge set for the functional graph is
\begin{equation}
\hat{E}_{X}^\pi=\left\{\{j,l\} \, :\, j < l  \text{ and } \frobnorm{\hat{\Theta}^{X,M}_{jl}} > 0 \right\}.
\end{equation}
We also note that the objective in \eqref{eq:FGL_objective} was previously used in \citet{kolar13multiatticml} and \citet{Kolar2014Graph} for the estimation of graphical models from multi-attribute data.

However, the requirement that $X_i(\cdot)$ lies in a subspace with finite-dimension may be violated in many practical applications and negates one of the primary benefits of considering the observations as functions. Unfortunately, the extension to infinite-dimensional data is nontrivial, and indeed Condition 2 in \citet{Qiao2015Functional} requires that the observed functional data lie within a finite-dimensional span. To see why, we first note that $\Sigma^{X,M^\star}$ is always a compact operator on $\mathbb{R}^{pM^\star}$. Thus, as $M^\star \to \infty$, the smallest eigenvalue of $\Sigma^{X,M^\star}$ will go to zero. As a consequence, $\Sigma^{X,M^\star}$ becomes increasingly ill-conditioned, and $\Theta^{X,M^\star}$, the inverse of $\Sigma^{X,M^\star}$ will become ill-defined when $M^\star=\infty$. This behavior makes the estimation of a functional graphical model---at least through the basis expansion approach proposed by \citet{Qiao2015Functional}---generally infeasible for truly infinite-dimensional functional data. When the data are truly infinite-dimensional, the best we can do is to estimate a finite-dimensional approximation and hope that it captures the relevant information.

\subsection{Functional Differential Graphical Models: Finite-Dimensional Setting}
\label{sec:finiteDim}

In this paper, rather than estimating the conditional independence structure of a single MGP, we are interested in characterizing the difference between two MGPs, $X$ and $Y$.  For brevity, we will typically only explicitly define the notation for $X$; however, the reader should infer that all the notation for $Y$ is defined analogously. As described in the introduction, \citet{li2007finding} and \citet{zhao2014direct} consider the setting where $X$ and $Y$ are multivariate Gaussian vectors, and define the differential graph $G_\Delta = \{V, E_\Delta\}$ by letting
\begin{equation}\label{eq:vecValDiff}
    E_\Delta = \left\{ (v,w) \,: \, v < w \text{ and } \Delta_{vw} \neq 0\right\}
\end{equation}
where $\Delta = (\Sigma^X)^{-1} - (\Sigma^Y)^{-1}$ and $\Sigma^X,\Sigma^Y$ are the covariance matrices of $X$ and $Y$.

We extend this definition to the functional data setting and define functional differential graphical models. To develop intuition, we first start by defining the differential graph with respect to the finite-dimensional projections of functional data, that is, with respect to $X^{\pi}_{i}(t)$ and $Y^{\pi}_{i}(t)$ for some choice of $\mathbb{V}^M_{[p]}$. As implied by Lemma~\ref{lemma:score-precision-pi}, in the functional graphical model setting, the $M\times M$ blocks of the precision matrix of the projection scores play a similar role to the individual entries of a precision matrix in the vector-valued Gaussian graphical model setting. Thus, we also define a functional differential graphical model by the difference of the precision matrices of the projection scores. Note that for each $j \in V$, we require that both $a^X_{ij}$ and $a^Y_{ij}$ be calculated using the same function basis of $\mathbb{V}^M_j$. Let $\Theta^{X,M}=\left( \Sigma^{X,M} \right)^{-1}$ and $\Theta^{Y,M}=\left( \Sigma^{Y,M} \right)^{-1}$ be the precision matrices for the projection scores for $X$ and $Y$, respectively, where the inverse should be understood as the pseudo-inverse when $\Sigma^{X,M}$ or $\Sigma^{Y,M}$ are not invertible. 

We now define the functional differential graphical model.
Let $\Delta^M = \Theta^{X,M} - \Theta^{Y,M}$ and $\Delta^{M}_{jl}$ be the $(j,l)$-th $M \times M$ block of $\Delta^{M}$. We define the edges of the functional differential graph of the projected data as:
\begin{equation}\label{eq:diffGraphDef}
E^{\pi}_\Delta \, = \, \left\{ (j,l) \, :  \, j < l \text{ and } \, \Vert \Delta^{M}_{jl}\Vert_F > 0 \right\}.
\end{equation}

While the entries of $\Delta^M$ depend on the choice of orthonormal function basis, the definition of $E^{\pi}_\Delta$ is invariant to the particular basis and only depends on the span. The following lemma formally states this result.

\begin{lemma}\label{lemma:Delta-indp-funB}
Suppose that ${\rm span}(e^M(\cdot)) = {\rm span}(\tilde{e}^M(\cdot))$
for two orthonormal bases $e^M(\cdot)$ and $\tilde{e}^M(\cdot)$. Let $E_\Delta^\pi$ and $E_\Delta^{\tilde{\pi}}$ be defined by \eqref{eq:diffGraphDef} when projecting $X$ and $Y$ onto $e^M(\cdot)$ and $\tilde{e}^M(\cdot)$, respectively. Then, $E_\Delta^\pi = E_\Delta^{\tilde{\pi}}$.
\end{lemma}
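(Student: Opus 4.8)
The plan is to reduce the whole statement to a block-diagonal orthogonal change of coordinates. First I would observe that since $e^M_j$ and $\tilde{e}^M_j$ are both orthonormal bases of the \emph{same} subspace $\mathbb{V}^M_j$, each new basis function is a linear combination of the old ones, $\tilde{e}_{jk}(\cdot) = \sum_{l=1}^M (U_j)_{kl}\, e_{jl}(\cdot)$, and the orthonormality of both bases forces the coefficient matrix $U_j \in \mathbb{R}^{M\times M}$ to be orthogonal: computing $\langle \tilde{e}_{jk}, \tilde{e}_{jm}\rangle = (U_j U_j^\top)_{km}$ and equating this to $\delta_{km}$ gives $U_j U_j^\top = I$. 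Collecting these into the block-diagonal matrix $U = \mathrm{blockdiag}(U_1,\dots,U_p) \in \mathbb{R}^{pM \times pM}$, which is itself orthogonal, I would then show that the projection scores transform linearly: since $\tilde{a}^{X}_{ijk} = \langle X_{ij}, \tilde{e}_{jk}\rangle = \sum_l (U_j)_{kl}\, a^{X}_{ijl}$, we obtain $\tilde{a}^{X,M}_i = U a^{X,M}_i$, and analogously for $Y$.

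Second, I would propagate this transformation to the covariance and precision matrices. From $\tilde{a}^{X,M}_i = U a^{X,M}_i$ it follows immediately that $\tilde{\Sigma}^{X,M} = U \Sigma^{X,M} U^\top$. Because $U$ is orthogonal, the (pseudo-)inverse satisfies $(U \Sigma^{X,M} U^\top)^+ = U (\Sigma^{X,M})^+ U^\top$, so $\tilde{\Theta}^{X,M} = U \Theta^{X,M} U^\top$, with the same identity for the $Y$ precision matrix. Subtracting gives $\tilde{\Delta}^M = U \Delta^M U^\top$, whose $(j,l)$ block is $\tilde{\Delta}^M_{jl} = U_j \Delta^M_{jl} U_l^\top$ by the block-diagonal form of $U$.

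Third, I would invoke the orthogonal invariance of the Frobenius norm: for orthogonal $U_j, U_l$ we have $\frobnorm{U_j \Delta^M_{jl} U_l^\top} = \frobnorm{\Delta^M_{jl}}$. Hence $\frobnorm{\tilde{\Delta}^M_{jl}} > 0$ if and only if $\frobnorm{\Delta^M_{jl}} > 0$, which is exactly the claim $E_\Delta^{\tilde{\pi}} = E_\Delta^\pi$.

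The main obstacle is the pseudo-inverse step: one must justify $(U \Sigma U^\top)^+ = U \Sigma^+ U^\top$ when $\Sigma$ is singular, rather than relying on ordinary matrix inversion. This holds whenever $U$ is orthogonal, as can be checked directly from the four Moore--Penrose conditions, or via the SVD, since conjugating by an orthogonal matrix leaves the singular values unchanged and rotates the singular subspaces consistently. Everything else is routine linear algebra; the conceptual content is simply that a same-span change of orthonormal basis acts by a block-diagonal orthogonal congruence, under which both the block structure and the per-block Frobenius norms are preserved.
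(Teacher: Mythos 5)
Your proof is correct and follows essentially the same route as the paper's: represent the change of basis by a block-diagonal orthogonal matrix $U=\mathrm{diag}(U_1,\dots,U_p)$, propagate it through the scores, covariance, and precision matrices to obtain $\tilde{\Delta}^M_{jl}=U_j\Delta^M_{jl}U_l^{\top}$, and conclude via orthogonal invariance of the Frobenius norm. The only difference is that you explicitly verify the pseudo-inverse identity $(U\Sigma U^{\top})^{+}=U\Sigma^{+}U^{\top}$, a detail the paper's proof glosses over by writing ordinary inverses even though the definition allows singular $\Sigma^{X,M}$, so your version is, if anything, slightly more complete.
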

\begin{proof}
See Appendix~\ref{sec:proof-lemma-delta-indp-funB}.
\end{proof}

We have several comments about $E^{\pi}_\Delta$ defined in \eqref{eq:diffGraphDef}.

\textbf{Projecting $X$ and $Y$ onto different subspaces:} 
While we project both $X$ and $Y$ onto the same subspace  $\mathbb{V}^M_{[p]}$, our definition can be easily generalized to  a setting where we project $X$ onto $\mathbb{V}^{X,M}_{[p]}$ and $Y$ onto $\mathbb{V}^{Y,M}_{[p]}$, with $\mathbb{V}^{X,M}_{[p]} \neq \mathbb{V}^{Y,M}_{[p]}$. For example, naively following the procedure of \citet{Qiao2015Functional}, we could perform FPCA on $X$ and $Y$ separately, and subsequently we could use the difference between the precision matrices of the projection scores to define the functional differential graph. Although defining the functional differential graph using this alternative approach may be suitable for some applications, it may result in the undesirable case where $(j,l) \in E_\Delta^\pi$ even though $C_{jl}^{X,\pi}(\cdot ,* ) = C_{jl}^{Y,\pi}(\cdot ,* )$, $C_{jj}^{X,\pi}(\cdot ,* ) = C_{ll}^{Y,\pi}(\cdot ,* )$, and $C_{ll}^{\setminus j, X,\pi}(\cdot ,* ) = C_{ll}^{\setminus j, Y,\pi}(\cdot ,* )$. Therefore, we restrict our discussion to the setting where $X$ and $Y$ are projected onto the same subspace.

\textbf{Connection to Multi-Attribute Graphical Models:} 
The selection of a specific functional subspace is connected to multi-attribute graphical models \citep{Kolar2014Graph}. If we treat the random function $X_{ij}(\cdot)$ as representing an infinite number of attributes, then $X^{\pi}_{ij}(\cdot)$ will be an approximation using $M$ attributes. The chosen attributes are given by the subspace $\mathbb{V}^M_j$. While we allow different nodes to choose different attributes by allowing $\mathbb{V}^M_j$ to vary across $j$, we require that the same attributes are used to represent both $X$ and $Y$ by restricting $\mathbb{V}^M_{[p]}$ to be the same for $X$ and $Y$. The specific choice of $\mathbb{V}^M_{[p]}$, can extract different attributes from the data. For instance, using the subspace spanned by the Fourier basis can be viewed as extracting frequency information, while using the subspace spanned by the eigenfunctions---as introduced in the next section---can be viewed as extracting the dominant modes of variation.

Given the definition~\eqref{eq:diffGraphDef} and the Lemma~\ref{lemma:Delta-indp-funB}, there are two main questions to answer: First, how do we choose $\mathbb{V}^M_{[p]}$? Second, what happens when $X$ and $Y$ are infinite-dimensional? We answer the first question in Section~\ref{sec:choose-sub-fpca} and the second question in Section~\ref{sec:infiniteDim}.

\subsection{Choosing Functional Subspace via FPCA}
\label{sec:choose-sub-fpca}

As discussed in Section~\ref{sec:finiteDim}, the choice of $\mathbb{V}^M_{[p]}$ in Definition~\ref{eq:diffGraphDef} decides---roughly speaking---the attributes or dimensions in which we compare the conditional independence structures of $X$ and $Y$. In some applications, we may have very good prior knowledge about this choice. However, in many cases, we may not have a strong prior knowledge. In this section, we describe our recommended ``default choice'' that uses FPCA on the combined $X$ and $Y$ observations. In particular, suppose that there exist subspaces $\{\mathbb{V}^{M^\star}_j\}_{j \in V}$ such that $\mathbb{V}^{M^\star}_j$ has dimension $M^\star < \infty$ and $X_{ij}(t),Y_{ij}(t)\in \mathbb{V}^{M^\star}_j$ for all $j \in V$. Then, FPCA---when given population values---recovers this subspace.

Similarly to the way principal component analysis provides the $L_2$ optimal lower dimensional representation of vector-valued data,  FPCA provides the $L_2$ optimal finite-dimensional representation of functional data. Let $K^{X}_{jj}(t,s)=\Cov(X_{ij}(t),X_{ij}(s))$ denote the covariance function for $X_{ij}$ for $j \in V$. Then, there exist orthonormal eigenfunctions and eigenvalues $\{\phi^X_{jk}(t), \lambda^{X}_{jk} \}_{k \in \mathbb{N}}$ such that $\int_{\mathcal{T}}K^{X}_{jj}(s,t)\phi_{jk}^{X}(t)dt=\lambda_{jk}^{X}\phi_{jk}^{X}(s)$ for all $k \in \mathbb{N}$ \citep{Hsing2015Theoretical}. Since $K^{X}_{jj}(s,t)$ is symmetric and non-negative definite, we assume, without loss of generality, that $\{\lambda^{X}_{js}\}_{s \in \mathbb{N}^+}$ is non-negative and non-increasing. By the Karhunen-Lo\`{e}ve expansion \citep[Theorem7.3.5]{Hsing2015Theoretical}, $X_{ij}(t)$ can be expressed as $X_{ij}(t)=\sum_{k=1}^{\infty}a^X_{ijk}\phi^{X}_{jk}(t)$, where the principal component scores satisfy $a^X_{ijk} =\int_{\mathcal{T}}X_{ij}(t)\phi^{X}_{jk}(t)dt$ and $a^X_{ijk} \sim N(0, \lambda_{jk}^X)$ with $E(a^X_{ijk} a^X_{ijl}) = 0$ if $k \neq l$. Because the eigenfunctions are orthonormal, the $L_2$ projection of $X_{ij}$ onto the span of the first $M$ eigenfunctions is $X^{M}_{ij}(t)=\sum_{k=1}^{M}a^{X}_{ijk}\phi^{X}_{jk}(t)$. Similarly, we can define $K^Y_{jj}(t,s)$, $\{\phi^Y_{jk}(t), \lambda^Y_{jk} \}_{k \in \mathbb{N}}$ and $Y^M_{ij}(t)$. Let $K_{jj}(s,t)=K^X_{jj}(s,t)+K^Y_{jj}(s,t)$ and let $\{\phi_{jk}(t), \lambda_{jk} \}_{k \in \mathbb{N}}$ be the eigenfunction-eigenvalue pairs of $K_{jj}(s,t)$. 

Lemma~\ref{lemma:M-dim-subspace-find} implies that $X_{ij}(\cdot)$ and $Y_{ij}(\cdot)$ lie within the span of the eigenfunctions corresponding to the non-zero eigenvalues of $K_{jj}$. Furthermore, this subspace is minimal in the sense that no subspace of a smaller dimension contains $X_{ij}(\cdot)$ and $Y_{ij}(\cdot)$ almost surely. Thus, the FPCA basis of $K_{jj}$ provides a good default choice for dimension reduction.

\begin{lemma}\label{lemma:M-dim-subspace-find}
Let $\vert  \mathbb{V} \vert$ denote the dimension of a subspace $\mathbb{V}$ and suppose that 
\[
M^{\prime}_j=\inf\{ \vert  \mathbb{V} \vert : \mathbb{V} \subseteq \mathbb{H}, X_{ij}(\cdot),Y_{ij}(\cdot) \in \mathbb{V} \, \text{almost surely} \}. 
\]
Let $\{\phi_{jk}(t), \lambda_{jk} \}_{k \in \mathbb{N}}$ be 
the eigenfunction-eigenvalue pairs of $K_{jj}(s,t)$ and 
\[M^\star_j = \sup\{ M \in \mathbb{N}^{+}: \lambda_{jM } > 0 \}.\]
Then $M^{\prime}_j=M^\star_j$ and $X_{ij},Y_{ij} \in {\rm Span}\{ \phi_{j1}(\cdot), \phi_{j2}(\cdot), \dots, \phi_{j,M^\star_j}(\cdot) \}$ almost surely.
\end{lemma}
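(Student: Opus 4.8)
The plan is to recast the statement in terms of the covariance operators behind $K^X_{jj}$, $K^Y_{jj}$, and $K_{jj}$, and to exploit the fundamental fact that a centered Gaussian element lives almost surely in the closure of the range of its covariance operator. Write $\mathcal{K}^X$, $\mathcal{K}^Y$, and $\mathcal{K}$ for the self-adjoint, nonnegative, trace-class integral operators on $\mathbb{H}$ with kernels $K^X_{jj}$, $K^Y_{jj}$, and $K_{jj}$, and set $\mathbb{S}^X=\overline{\mathrm{range}(\mathcal{K}^X)}=(\ker\mathcal{K}^X)^{\bot}$, with $\mathbb{S}^Y$ and $\mathbb{S}$ defined analogously. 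By the spectral decomposition of $\mathcal{K}$ we have $\mathbb{S}={\rm Span}\{\phi_{j1}(\cdot),\dots,\phi_{j,M^\star_j}(\cdot)\}$ (the closed span of the positive-eigenvalue eigenfunctions, which is this finite span whenever $M^\star_j<\infty$). Hence the lemma reduces to two claims: (i) $X_{ij},Y_{ij}\in\mathbb{S}$ almost surely, and (ii) $\mathbb{S}$ is the smallest subspace with this property, i.e.\ $M'_j=\dim\mathbb{S}=M^\star_j$.

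The crucial tool I would isolate first is the following equivalence for a centered Gaussian element $Z$ with covariance operator $\mathcal{K}_Z$ and a \emph{closed} subspace $\mathbb{V}\subseteq\mathbb{H}$: we have $Z\in\mathbb{V}$ almost surely if and only if $\overline{\mathrm{range}(\mathcal{K}_Z)}\subseteq\mathbb{V}$. For the forward direction, each $u\in\mathbb{V}^{\bot}$ satisfies $\langle Z,u\rangle=0$ almost surely, so $0=\Var(\langle Z,u\rangle)=\langle\mathcal{K}_Z u,u\rangle$; since $\mathcal{K}_Z\geq 0$, Cauchy--Schwarz for the nonnegative form $\langle\mathcal{K}_Z\cdot,\cdot\rangle$ forces $\mathcal{K}_Z u=0$, giving $\mathbb{V}^{\bot}\subseteq\ker\mathcal{K}_Z$ and thus $\overline{\mathrm{range}(\mathcal{K}_Z)}\subseteq\mathbb{V}$. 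For the reverse direction I would invoke the Karhunen--Lo\`eve expansion already recorded in the text: $Z=\sum_k\xi_k\psi_k$ over the eigenfunctions $\psi_k$ of $\mathcal{K}_Z$ with $\xi_k\sim N(0,\mu_k)$, where the coordinates with $\mu_k=0$ vanish almost surely, so $Z\in\overline{\mathrm{range}(\mathcal{K}_Z)}\subseteq\mathbb{V}$ almost surely. Applying the reverse direction with $\mathbb{V}=\mathbb{S}^X$ and $\mathbb{V}=\mathbb{S}^Y$ yields $X_{ij}\in\mathbb{S}^X$ and $Y_{ij}\in\mathbb{S}^Y$ almost surely.

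To link $\mathbb{S}$ with $\mathbb{S}^X$ and $\mathbb{S}^Y$, I would use the kernel identity $\ker(\mathcal{K}^X+\mathcal{K}^Y)=\ker\mathcal{K}^X\cap\ker\mathcal{K}^Y$ for nonnegative self-adjoint operators, which follows because $\langle(\mathcal{K}^X+\mathcal{K}^Y)u,u\rangle$ is a sum of two nonnegative terms and hence vanishes only when both do. Taking orthogonal complements gives $\mathbb{S}=\overline{\mathbb{S}^X+\mathbb{S}^Y}$, so in particular $\mathbb{S}^X,\mathbb{S}^Y\subseteq\mathbb{S}$; combined with the previous paragraph this establishes (i). For (ii), the same inclusion shows $\mathbb{S}$ itself contains both functions almost surely, whence $M'_j\le\dim\mathbb{S}=M^\star_j$. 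Conversely, any competitor $\mathbb{V}$ with $X_{ij},Y_{ij}\in\mathbb{V}$ almost surely and $\dim\mathbb{V}<\infty$ is closed, so the forward direction gives $\mathbb{S}^X,\mathbb{S}^Y\subseteq\mathbb{V}$, hence $\mathbb{S}\subseteq\mathbb{V}$ and $\dim\mathbb{V}\ge M^\star_j$ (infinite-dimensional competitors exceed $M^\star_j$ trivially). This yields $M'_j\ge M^\star_j$, completing (ii).

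The step I expect to require the most care is the infinite-dimensional Gaussian support argument of the second paragraph: justifying the almost-sure membership (reverse) direction rigorously, and carefully distinguishing closed from non-closed subspaces when translating between the probabilistic statement ``$Z\in\mathbb{V}$ almost surely'' and the deterministic statement ``$\ker\mathcal{K}_Z\supseteq\mathbb{V}^{\bot}$.'' The elementary implications $\langle\mathcal{K}_Z u,u\rangle=0\Rightarrow\mathcal{K}_Z u=0$ and the kernel-of-a-sum identity must be invoked cleanly, but everything else (trace-class and compactness of the covariance operators, existence of the orthonormal eigenbasis) is standard Hilbert-space spectral theory already available from the Karhunen--Lo\`eve material cited in the text.
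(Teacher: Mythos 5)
Your proof is correct, and it takes a genuinely different (and in places tighter) route than the paper's. The paper argues at the level of kernels: it invokes the Karhunen--Lo\`{e}ve expansion to place $X_{ij}$ in the span of its own positive-eigenvalue eigenfunctions $\{\phi^X_{jk}\}$, then argues that $\int\int K_{jj}(s,t)\phi^X_{jk}(s)\phi^X_{jk}(t)\,ds\,dt \geq \lambda^X_{jk} > 0$ forces $\phi^X_{jk} \in {\rm Span}\{\phi_{j1},\dots,\phi_{j,M^\star_j}\}$, and it establishes minimality by contradiction, extracting a direction in the span but outside any lower-dimensional competitor along which the quadratic forms of $K^X_{jj}$ and $K^Y_{jj}$ both vanish. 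You instead isolate a two-way support characterization for a centered Gaussian element ($Z \in \mathbb{V}$ almost surely if and only if $\overline{\mathrm{range}(\mathcal{K}_Z)} \subseteq \mathbb{V}$, for closed $\mathbb{V}$) together with the operator identity $\ker(\mathcal{K}^X+\mathcal{K}^Y)=\ker\mathcal{K}^X \cap \ker\mathcal{K}^Y$, which delivers both the almost-sure containment and the minimality $M'_j = M^\star_j$ directly, with no contradiction argument. Your packaging buys two things beyond elegance. First, it firms up a shortcut in the paper: positivity of the $K_{jj}$-quadratic form at $\phi^X_{jk}$ shows only that $\phi^X_{jk}$ is not in $\ker\mathcal{K}$, not by itself that it lies entirely in the positive-eigenvalue span; your route gets the full containment rigorously via $\phi^X_{jk} \in \mathrm{range}(\mathcal{K}^X) \subseteq (\ker\mathcal{K}^X)^{\bot} \subseteq (\ker\mathcal{K})^{\bot}$. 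Second, in the minimality step the paper's choice of a test function merely \emph{outside} the competitor subspace would not justify $\mathbb{E}[\langle\phi, X_{ij}\rangle^2]=0$; one needs a direction \emph{orthogonal} to it, which is exactly what your forward direction (working with $\mathbb{V}^{\bot}$, and noting finite-dimensional subspaces are closed) supplies. The cost is slightly heavier machinery (covariance operators, the square-root/Cauchy--Schwarz step $\langle\mathcal{K}_Z u,u\rangle=0 \Rightarrow \mathcal{K}_Z u=0$), but you flagged correctly that the closedness bookkeeping and the countable union of null events in the Karhunen--Lo\`{e}ve step are where the real care is required, and you handled both.
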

\begin{proof}
See Appendix~\ref{sec:proof-lemma-M-dim-subspace-find}.
\end{proof}

\subsection{Infinite-Dimensional Functional Data}
\label{sec:infiniteDim}

In Section~\ref{sec:finiteDim}, we defined a functional differential graph for functional data that have finite-dimensional representation. In this section, we present a more general definition that also allows for infinite-dimensional functional data.

As discussed in Section~\ref{sec:FGM}, when the data are infinite-dimensional, estimating a functional graphical model is not straightforward because the precision matrix of the scores does not have a well-defined limit as $M$, the dimension of the projected data, increases to $\infty$. When estimating the differential graph, however, although $\Vert \Theta^{X,M} \Vert_{\text{F}} \to \infty$ and $\Vert \Theta^{Y,M} \Vert_{\text{F}} \to \infty$ as $M \to \infty$, it is still possible for $\Vert \Theta^{X,M} - \Theta^{Y,M} \Vert_{\text{F}}$ to be bounded as $M \to \infty$. For instance, $x_n,y_n \in \mathbb{R}$ may both tend to infinity, but $\lim_n x_n-y_n$ may still exist and be bounded. Furthermore, even when $\Vert \Theta^{X,M} - \Theta^{Y,M} \Vert_{\text{F}} \rightarrow \infty$, it is still possible for the difference $\Theta^{X,M} - \Theta^{Y,M}$ to be informative. This observation leads to Definition~\ref{def:DGO} below. To simplify notation, in the rest of the paper, we assume that $X_{ij}(\cdot)$ and $Y_{ij}(\cdot)$ live in an $M^\star$ dimensional space where $ M^\star \leq \infty$. Recall that $\{\phi^X_{jk}(\cdot), \lambda^X_{jk} \}_{k \in \mathbb{N}}$  and $\{\phi^Y_{jk}(\cdot), \lambda^Y_{jk} \}_{k \in \mathbb{N}}$ denote the eigenpairs of $K_{jj}^X$ and $K_{jj}^Y$ respectively. 

\begin{definition}[Differential Graph Matrix and Comparability]\label{def:DGO}
The MGPs $X$ and $Y$ are \textbf{comparable} if the following two conditions hold:
\begin{enumerate}
    \item For all $j \in [p]$, $K_{jj}^X$ and $K_{jj}^Y$ have $M^\star$ non-zero eigenvalues and \[\mathrm{ span}\left(\{\phi_{jk}^X\}_{k=1}^{M^\star} \right) = \mathrm{span}\left(\{\phi_{jk}^Y\}_{k=1}^{M^\star} \right).\]
    \item For every $(j,l) \in V^2$ where $j\neq l$ and a projection subspace sequence $\left\{\mathbb{V}^M_{[p]}\right\}_{M \geq 1}$ satisfying $\lim_{M \to M^{\star}}\mathbb{V}^M_j=\mathrm{ span}\left(\{\phi_{jk}^X\}_{k=1}^{M^\star} \right)$, we have either:
\begin{equation*}
\lim_{M \to M^\star} \Vert \Delta^M_{jl} \Vert_{\text{F}} = 0 
\qquad\text{or}\qquad
\lim\inf_{M \to M^\star} \Vert \Delta^M_{jl} \Vert_{\text{F}} > 0.
\end{equation*}
\end{enumerate} 

We say that $X$ and $Y$ are \textbf{incomparable}, if for some $j$, $K_{jj}^X$ and $K_{jj}^Y$ have a different number of non-zero eigenvalues, or if $\mathrm{ span}\left(\{\phi_{jk}^X\}_{k=1}^{M^\star} \right) \neq \mathrm{span}\left(\{\phi_{jk}^Y\}_{k=1}^{M^\star} \right)$, or if there exists some $(j,l)$ such that given $\left\{\mathbb{V}^M_{[p]}\right\}_{M \geq 1}$ satisfying $\lim_{M \to M^{\star}}\mathbb{V}^M_j=\mathrm{ span}\left(\{\phi_{jk}^X\}_{k=1}^{M^\star} \right)$, we have
\begin{equation*}
\lim\inf_{M \to M^\star} \Vert \Delta^M_{jl} \Vert_{\text{F}} = 0, \qquad \text{but} \qquad
\lim\sup_{M \to M^\star} \Vert \Delta^M_{jl} \Vert_{\text{F}} > 0.
\end{equation*}

When $X$ and $Y$ are comparable, we define the \textbf{differential graph matrix} (DGM) $D=(D_{jl})_{(j,l) \in V^2}\in \mathbb{R}^{p\times p}$, where
\begin{equation}\label{eq:DGOdef}
D_{jl} = \lim\inf_{M \to M^\star} \Vert \Delta^M_{jl} \Vert_{\text{F}}.
\end{equation}

\end{definition}
In Definition~\ref{def:DGO} we say $\lim_{M \to M^{\star}}\mathbb{V}^M_j=\mathrm{ span}\left(\{\phi_{jk}^X\}_{k=1}^{M^\star} \right)$, to mean the following: For any $\epsilon>0$ and all $g \in \mathrm{ span}\left(\{\phi_{jk}^X\}_{k=1}^{M^\star} \right)$, there exists $M^{\prime} = M^{\prime}(\epsilon) < \infty$ such that $\Vert g -g^M_{P} \Vert < \epsilon$ for all $M \geq M^{\prime}$, where $g^M_{P}$ denotes the projection of $g$ onto the subspace of $\mathbb{V}^M_j$.

When $M^\star<\infty$, the conditional independence structure in $X_i$ and $Y_i$ can be fully captured by a finite-dimensional representation. When $M^\star=\infty$, as $M \to \infty$, $\Delta^{M}_{jl}$ approaches the difference of two matrices with unbounded eigenvalues. However, when $X$ and $Y$ are comparable, the limits are still informative. This would suggest that by using a sufficiently large subspace, we can capture such a difference arbitrarily well. However, if the MGPs are not comparable, then using a larger subspace may not improve the approximation regardless of the sample size. For this reason, in the
remainder of the article, we focus only on the setting where $X$ and $Y$ are comparable.

To our knowledge, there is no existing procedure to estimate a graphical model for functional data when the functions are infinite-dimensional. Thus, it is not straightforward to determine whether the comparability condition is stronger or weaker than what might be required for estimating the graphs separately and then comparing post hoc. However, we hope to provide some intuition to the reader.   

Suppose that $X$ and $Y$ are of the same dimension, $M^\star$. If $M^\star < \infty$ and the functional graphical model for each sample could be estimated separately (that is, $\Vert \Theta^{X,M} \Vert_{F} < \infty$ and $ \Vert \Theta^{Y,M} \Vert_{F} < \infty$), then $X$ and $Y$ are comparable when the minimal basis that spans $X$ and $Y$ is the same. Thus, the functional differential graph is also well defined. On the other hand, the conditions required by \citet[Condition 2]{Qiao2015Functional} for consistent estimation are not satisfied when $M^\star = \infty$, since $\lim_{M \rightarrow \infty} \Vert \Theta^{X,M} \Vert_{F} = \infty$ due to the compactness of the covariance operator. However, $X$ and $Y$ may still be comparable depending on the limiting behavior of $\Theta^{X,M}$ and $\Theta^{Y,M}$. Thus, there are settings where the differential graph may exist and can be consistently recovered even when each individual graph cannot be recovered (even when $p$ is fixed).  

However, when one MGP is finite-dimensional and the other is infinite-dimensional, then the MGPs are incomparable. To see this, without loss of generality, we assume that MGP $X$ has infinite-dimension $M^X_{j}=M^\star_X=\infty$ for all $j\in V$ and MGP $Y$ has finite-dimension $M^Y_{j}=M^\star_Y<\infty$ for all $j\in V$. Then $\Theta^{Y,M}$ is ill-defined when $M > M^\star_Y$ and recovering the differential graph is not straightforward.

We now define the notion of a functional differential graph.

\begin{definition}\label{def:funDGM}
  When two MGPs $X$ and $Y$ are comparable, we define their \textbf{functional differential graph} as an undirected graph $G_{\Delta}=\{V,E_{\Delta}\}$, where $E_{\Delta}$ is defined as
	\begin{equation}\label{eq:funcDGM}
	E_{\Delta}=\left\{ \{j,l\} \,:\, j < l \text{ and } D_{jl} > 0 \right\}.
	\end{equation}
\end{definition}

\begin{remark}
The functional graphical model defined by \citet{Qiao2015Functional} uses the conditional covariance function $C_{jl}^{X}(\cdot ,* )$ given in~\eqref{con:CX_jl}. Thus, it would be quite natural to use the conditional covariance functions directly to define a differential graph, where 
\begin{equation}\label{eq:directCovDiff}
    E_{\Delta} = \left\{\{j,l\}  \; : \; j <l \text{ and } C_{jl}^{X}(\cdot ,* ) \neq C_{jl}^{Y}(\cdot ,* )\right\}. 
\end{equation}
Unfortunately, this definition does not always coincide with the one we propose in Definition~\ref{def:funDGM}. However, the functional differential graph given in Definition~\ref{def:funDGM} has many nice statistical properties and retains important features of the graph defined in~\eqref{eq:directCovDiff}.

The primary statistical benefit of the graph defined in Definition~\ref{def:funDGM} is that it can be directly estimated without estimating each conditional independence function: $C^X_{jl}(\cdot, \cdot)$ and $C^Y_{jl}(\cdot, \cdot)$. Similarly to the vector-valued case considered by \citep{zhao2014direct}, this allows for a much lower sample complexity when each individual graph is dense but the difference is sparse. In some settings, there may not be enough samples to accurately estimate each individual graph, but the difference may still be recovered. This result is demonstrated in Theorem~\ref{Thm:smallboundThm}. 

The statistical advantages of our estimand unfortunately come at the cost of a slightly less precise characterization of the difference in the conditional covariance functions. However, many of the key characteristics are still preserved. Suppose $X_i$ and $Y_i$ are both $M^\star$-dimensional with $M^\star < \infty$ and further suppose that $\{\phi_{jm}(\cdot)\phi_{lm'}(*)\}_{m, m' \in [M^\star] \times [M^\star]}$ is a linearly independent set of functions. Suppose that the conditional covariance functions for $j, l \in V$ are unchanged so that $C_{jj}^{X}(\cdot ,* ) = C_{jj}^{Y}(\cdot , * )$ and $C_{ll}^{\backslash j,X}(\cdot ,* ) = C_{ll}^{\backslash j,Y}(\cdot ,* )$, where 
\[C_{ll}^{\backslash j,X}(\cdot ,* ) \coloneqq {\rm Cov}(X_l(\cdot),X_l(*) \,\vert\, X_k(\cdot), k \neq j,l)\]
and $C_{ll}^{\backslash j,Y}(\cdot ,* )$ is defined similarly; then, $\Delta_{jl} = 0$ if and only if $C_{jl}^{X}(\cdot ,* ) = C_{jl}^{Y,\pi}(\cdot ,* )$. When this holds for all pairs $j,l \in V$, then the definitions of a differential graph in Definition~\ref{def:funDGM} and \eqref{eq:directCovDiff} are equivalent. When the conditional covariance functions change so that $C_{jj}^{X}(\cdot ,* ) \neq C_{jj}^{Y}(\cdot ,* )$, then we still have $\Delta_{jl} \neq 0$ if $C_{jl}^{X,\pi}(\cdot ,* ) = 0$ and $C_{jl}^{Y,\pi}(\cdot ,* ) \neq 0$ (or vice versa). Thus, even in this more general setting, the functional differential graph given in Definition~\ref{def:funDGM} captures all qualitative differences between conditional covariance functions $C_{jl}^{X}(\cdot ,* )$ and $C_{jl}^{Y}(\cdot ,* )$.  
\end{remark}

Our objective is to directly estimate $E_{\Delta}$ without first estimating $E_X$ or $E_Y$. Since the functions we consider may be infinite-dimensional objects, in practice, what we can directly estimate is actually $E^{\pi}_{\Delta}$ defined in \eqref{eq:diffGraphDef}. We will use a sieve estimator to estimate $\Delta^{M}$, where $M$ increases with the sample size $n$. When $M^\star=M$, then $E^{\pi}_{\Delta}=E_{\Delta}$. When $M<M^\star\leq \infty$, then this is generally not true; however, we would expect the graphs to be similar when $M$ is large enough compared to $M^\star$. Thus, by constructing a suitable estimator of $\Delta^{M}$, we can still recover $E_{\Delta}$.

\subsection{Illustration of Comparability}

We provide a few examples that illustrate the notion of comparability. In the first two examples, the graphs are comparable, whereas in the third example, the graphs are incomparable. First, we state a lemma that will be helpful in the following discussion. The lemma follows directly from the properties of the multivariate normal and the inverse of block matrices. 

\begin{lemma}\label{lemma:lemma1}
Let $H^{X,M}_{jl}=\Cov (a^{X,M}_{ij}, a^{X,M}_{il} \mid a^{X,M}_{ik}, k \neq j,l )$ and $H^{\backslash l,X,M}_{jj}=\Var (a^{X,M}_{ij} \mid a^{X,M}_{ik}, k \neq j,l )$.
For any $j\in V$, we have $\Theta^{X,M}_{jj}=(H^{X,M}_{jj})^{-1}$. For any $(j,l) \in V^2$ and $j \neq l$, we have $
\Theta^{X,M}_{jl}=-(H^{X,M}_{jj})^{-1}H^{X,M}_{jl}(H^{\backslash j, X,M}_{ll})^{-1}$.
\end{lemma}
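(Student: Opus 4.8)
The plan is to reduce both identities to one classical fact about multivariate Gaussian precision matrices and then apply the block-matrix inversion formula; the paper's remark that the result ``follows directly from the properties of the multivariate normal and the inverse of block matrices'' is exactly this. Since $a^{X,M}_i$ is jointly Gaussian with precision matrix $\Theta^{X,M}$, the first ingredient I would invoke is the standard result that, for a jointly Gaussian vector, the conditional covariance of any subcollection of blocks given all the remaining blocks equals the inverse of the principal submatrix of the precision matrix indexed by those blocks. Assembling the definitions of the various $H$'s, this says that the conditional covariance matrix of $(a^{X,M}_{ij}, a^{X,M}_{il})$ given $\{a^{X,M}_{ik}\}_{k\neq j,l}$ equals
\[
\begin{pmatrix} H^{\backslash l, X,M}_{jj} & H^{X,M}_{jl} \\ H^{X,M}_{lj} & H^{\backslash j, X,M}_{ll} \end{pmatrix}
=
\begin{pmatrix} \Theta^{X,M}_{jj} & \Theta^{X,M}_{jl} \\ \Theta^{X,M}_{lj} & \Theta^{X,M}_{ll} \end{pmatrix}^{-1},
\]
where the right-hand matrix is the principal submatrix of $\Theta^{X,M}$ indexed by the blocks $j$ and $l$. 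Both claims rest on this single relation (which itself comes from the Schur-complement form of the Gaussian conditional distribution; I would assume it as standard, working in the non-degenerate case $\Sigma^{X,M}\succ 0$, which holds whenever $M\le M^\star$, and defer the singular case to the pseudo-inverse convention already adopted in the paper).

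For the diagonal identity I would specialize the first ingredient to a single block. Taking $l=j$ in the definition of $H^{X,M}_{jl}$, the conditioning set $\{k\neq j,l\}$ collapses to $\{k\neq j\}$, so $H^{X,M}_{jj}=\Var(a^{X,M}_{ij}\mid \{a^{X,M}_{ik}\}_{k\neq j})$ is the conditional variance of block $j$ given all other blocks. The classical fact applied to the singleton $\{j\}$ gives $\Var(a^{X,M}_{ij}\mid \{a^{X,M}_{ik}\}_{k\neq j})=(\Theta^{X,M}_{jj})^{-1}$, which is exactly $\Theta^{X,M}_{jj}=(H^{X,M}_{jj})^{-1}$.

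For the off-diagonal identity I would invert the displayed $2\times 2$ block relation. Letting $S=H^{\backslash l, X,M}_{jj}-H^{X,M}_{jl}(H^{\backslash j, X,M}_{ll})^{-1}H^{X,M}_{lj}$ denote the Schur complement of the lower-right block of the conditional covariance matrix, the block-inversion formula expresses the principal submatrix of $\Theta^{X,M}$ through $S$: its diagonal block is $\Theta^{X,M}_{jj}=S^{-1}$ and its off-diagonal block is $\Theta^{X,M}_{jl}=-S^{-1}H^{X,M}_{jl}(H^{\backslash j, X,M}_{ll})^{-1}$. Comparing the diagonal expression with the identity $\Theta^{X,M}_{jj}=(H^{X,M}_{jj})^{-1}$ just established forces $S=H^{X,M}_{jj}$ (this also follows directly from the law of total variance, since adjoining $a^{X,M}_{il}$ to the conditioning set decreases the conditional variance of $a^{X,M}_{ij}$ by precisely $H^{X,M}_{jl}(H^{\backslash j,X,M}_{ll})^{-1}H^{X,M}_{lj}$). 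Substituting $S=H^{X,M}_{jj}$ into the off-diagonal expression yields $\Theta^{X,M}_{jl}=-(H^{X,M}_{jj})^{-1}H^{X,M}_{jl}(H^{\backslash j,X,M}_{ll})^{-1}$, as claimed.

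The only real care required, and the step I expect to be the main source of confusion rather than genuine difficulty, is the bookkeeping of which conditioning set each $H$ refers to: $H^{X,M}_{jj}$ conditions on \emph{all} blocks except $j$, whereas $H^{\backslash l,X,M}_{jj}$, $H^{\backslash j,X,M}_{ll}$, and $H^{X,M}_{jl}$ all condition on all blocks except \emph{both} $j$ and $l$. Keeping these distinct is exactly what makes the Schur complement of the smaller ($\{j,l\}$-)conditional covariance coincide with the fully conditioned variance $H^{X,M}_{jj}$, which is the identity that closes the argument.
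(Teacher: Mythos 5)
Your proof is correct and takes essentially the same route as the paper: both rest on the Gaussian identity that the conditional covariance of $(a^{X,M}_{ij}, a^{X,M}_{il})$ given the remaining blocks equals the inverse of the principal submatrix $P$ of $\Theta^{X,M}$ indexed by $\{j,l\}$, combined with the $2\times 2$ block-inversion formula. The only cosmetic difference is direction: you invert the conditional covariance matrix via its Schur complement (identifying $S=H^{X,M}_{jj}$) to read off $\Theta^{X,M}_{jl}$ directly, whereas the paper inverts $P$ to obtain $H^{X,M}_{jl}=-H^{X,M}_{jj}\Theta^{X,M}_{jl}H^{\backslash j,X,M}_{ll}$ and then solves for $\Theta^{X,M}_{jl}$ --- the same computation read in the opposite direction.
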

\begin{proof}
See Appendix~\ref{sec:proof-lemma-lemma1}.
\end{proof}

The following proposition follows directly from Lemma~\ref{lemma:lemma1}.

\begin{proposition}\label{prop:OpRepForm}
Assume that for any $(j,l) \in V^2$ and $j\neq l$, we have
\[
a^{X}_{ijm} \independent a^{X}_{ijm^{\prime}} \mid a^{X,M}_{ik},k\neq j \qquad\text{and}\qquad
a^{X}_{ijm}\independent a^{X}_{ijm^{\prime}} \mid a^{X,M}_{ik},k\neq j,l,
\]
for any $M$ and $1 \leq m \neq m^{\prime} \leq M$. We then have
\begin{equation*}
\Theta^{X,M}_{jj}={\rm diag} \left( \frac{1}{ {\rm Var}\left(a^{X}_{ij1}\mid a^{X,M}_{ik},k\neq j \right) }, \dots, \frac{1}{ {\rm Var}\left(a^{X}_{ijM}\mid a^{X,M}_{ik},k\neq j \right) }  \right)
\end{equation*}
and
\begin{equation*}
\Theta^{X,M}_{jl, m m^{\prime}} = \frac{ {\rm Cov}\left(a^{X}_{ijm},a^{X}_{ilm^{\prime}} \mid a^{X,M}_{ik},k\neq j,l \right) }{ {\rm Var}\left(a^{X}_{ijm} \mid a^{X,M}_{ik},k\neq j \right) {\rm Var}\left(a^{X}_{ilm^{\prime}}\mid a^{X,M}_{ik},k\neq j \right) }\overset{\Delta}{=} \bar{v}^{X,jl,M}_{mm^{\prime}},
\end{equation*}
for any $M$ and $1 \leq m \neq m^{\prime} \leq M$. 
In addition, if 
\[
a^{Y}_{ijm} \independent a^{Y}_{ijm^{\prime}} \mid a^{Y,M}_{ik},k\neq j \quad\text{and}\quad
a^{Y}_{ijm}\independent a^{Y}_{ijm^{\prime}} \mid a^{Y,M}_{ik},k\neq j,l,
\]
for any $M$ and $1 \leq m \neq m^{\prime} \leq M$, then 
\begin{align*}
 \Theta^{X,M}_{jj} - \Theta^{Y,M}_{jj} 
&= {\rm diag} \left( \left\{ \frac{ {\rm Var}\left(a^{Y}_{ijm}\mid a^{Y,M}_{ik},k\neq j \right) -  {\rm Var}\left(a^{X}_{ijm}\mid a^{X,M}_{ik},k\neq j \right) }{ {\rm Var}\left(a^{X}_{ijm}\mid a^{X,M}_{ik},k\neq j \right) {\rm Var}\left(a^{Y}_{ijm}\mid a^{Y,M}_{ik},k\neq j \right) } \right\}^M_{m=1} \right) \\
&\overset{\Delta}{=} {\rm diag} \left( \bar{w}^{j,M}_{1}, \bar{w}^{j,M}_{2}, \dots, \bar{w}^{j,M}_{M} \right)
\end{align*}
and
\begin{align*}
\Theta^{X,M}_{jl, m m^{\prime}} - \Theta^{Y,M}_{jl, m m^{\prime}} 
& = \frac{ {\rm Cov}\left(a^{X}_{ijm},a^{X}_{ilm^{\prime}} \mid a^{X,M}_{ik},k\neq j,l \right) }{ {\rm Var}\left(a^{X}_{ijm} \mid a^{X,M}_{ik},k\neq j \right) {\rm Var}\left(a^{X}_{ilm^{\prime}}\mid a^{X,M}_{ik},k\neq j \right) } \\
& \qquad \qquad\qquad - \frac{ {\rm Cov}\left(a^{Y}_{ijm},a^{Y}_{ilm^{\prime}} \mid a^{Y,M}_{ik},k\neq j,l \right) }{ {\rm Var}\left(a^{Y}_{ijm} \mid a^{Y,M}_{ik},k\neq j \right) {\rm Var}\left(a^{Y}_{ilm^{\prime}}\mid a^{Y,M}_{ik},k\neq j \right) } \\
& = \bar{v}^{Y,jl,M}_{mm^{\prime}} - \bar{v}^{X,jl,M}_{mm^{\prime}} \overset{\Delta}{=} \bar{z}^{jl,M}_{mm^{\prime}},
\end{align*}
for any $M$ and $1 \leq m \neq m^{\prime} \leq M$.
\end{proposition}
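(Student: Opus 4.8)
The plan is to read off every displayed identity directly from Lemma~\ref{lemma:lemma1}, using the elementary fact that for a jointly Gaussian vector a pair of coordinates is conditionally independent given the remaining coordinates if and only if the corresponding conditional covariance is zero. Since $a^{X,M}_i$ is Gaussian, the first hypothesis $a^{X}_{ijm} \independent a^{X}_{ijm^{\prime}} \mid a^{X,M}_{ik}, k\neq j$ says exactly that the off-diagonal entries of the conditional covariance matrix $H^{X,M}_{jj}=\Var(a^{X,M}_{ij}\mid a^{X,M}_{ik}, k\neq j)$ vanish, so $H^{X,M}_{jj}$ is diagonal with $m$-th diagonal entry $\Var(a^{X}_{ijm}\mid a^{X,M}_{ik}, k\neq j)$. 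Applying the second hypothesis (and its version with the roles of $j$ and $l$ exchanged) in the same way forces $H^{\backslash j,X,M}_{ll}=\Var(a^{X,M}_{il}\mid a^{X,M}_{ik}, k\neq j,l)$ to be diagonal as well.

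First I would treat the diagonal block. Lemma~\ref{lemma:lemma1} gives $\Theta^{X,M}_{jj}=(H^{X,M}_{jj})^{-1}$, and the inverse of a diagonal matrix is diagonal with reciprocal entries; this is precisely the stated expression for $\Theta^{X,M}_{jj}$. Next, for the off-diagonal block I would use $\Theta^{X,M}_{jl}=-(H^{X,M}_{jj})^{-1}H^{X,M}_{jl}(H^{\backslash j,X,M}_{ll})^{-1}$. Since both outer factors are now known to be diagonal, the left- and right-multiplication simply rescales each entry of $H^{X,M}_{jl}$ by the appropriate reciprocal variances; extracting the $(m,m^{\prime})$ entry and recalling that $(H^{X,M}_{jl})_{mm^{\prime}}=\Cov(a^{X}_{ijm},a^{X}_{ilm^{\prime}}\mid a^{X,M}_{ik}, k\neq j,l)$ yields $\bar v^{X,jl,M}_{mm^{\prime}}$. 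The identical argument applied to $Y$ produces the analogous formulas for $\Theta^{Y,M}_{jj}$ and $\Theta^{Y,M}_{jl}$.

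Finally, the two difference identities follow by subtracting the $X$ and $Y$ expressions entrywise. For the diagonal block, $\Theta^{X,M}_{jj,mm}-\Theta^{Y,M}_{jj,mm}$ is a difference of two reciprocals, which I would combine over the common denominator $\Var(a^{X}_{ijm}\mid\cdot)\,\Var(a^{Y}_{ijm}\mid\cdot)$ to obtain $\bar w^{j,M}_{m}$; for the off-diagonal block the subtraction is immediate and equals $\bar v^{Y,jl,M}_{mm^{\prime}}-\bar v^{X,jl,M}_{mm^{\prime}}=\bar z^{jl,M}_{mm^{\prime}}$.

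I expect the only real care to lie in the bookkeeping rather than in any deep step. Specifically, one must keep straight the sign coming from the $-(\cdot)(\cdot)(\cdot)$ representation in Lemma~\ref{lemma:lemma1} and, above all, match the conditioning sets correctly: the three factors $H^{X,M}_{jj}$, $H^{X,M}_{jl}$, and $H^{\backslash j,X,M}_{ll}$ condition on $k\neq j$, on $k\neq j,l$, and on $k\neq j,l$ respectively, so one must verify that these are the conditioning sets appearing in the conditional variances and covariances in the denominators and numerator. Once the diagonality is established, the substantive content---that each precision block is diagonal (resp.\ a rescaled conditional covariance)---is an immediate consequence of the Gaussian conditional-independence characterization together with the diagonal scaling, and no machinery beyond Lemma~\ref{lemma:lemma1} is required.
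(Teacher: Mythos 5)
Your proof is correct and is exactly the route the paper takes: the paper offers no separate argument, stating only that the proposition ``follows directly from Lemma~\ref{lemma:lemma1}'', and your derivation---Gaussian conditional independence forces $H^{X,M}_{jj}$ and $H^{\backslash j,X,M}_{ll}$ to be diagonal, after which the block formulas of Lemma~\ref{lemma:lemma1} reduce to entrywise algebra---is the intended fleshing out. The sign and conditioning-set bookkeeping you flag is indeed the one delicate point: a faithful application of Lemma~\ref{lemma:lemma1} yields $\Theta^{X,M}_{jl,mm'}=-\bar{v}^{X,jl,M}_{mm'}$ with the second denominator conditioned on $k\neq j,l$ rather than $k\neq j$, so the proposition as printed carries a sign/conditioning typo that cancels in the final difference $\bar{z}^{jl,M}_{mm'}=\bar{v}^{Y,jl,M}_{mm'}-\bar{v}^{X,jl,M}_{mm'}$, which your argument reproduces correctly.
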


With the notation defined in Proposition~\ref{prop:OpRepForm},
we have that 
\begin{equation}
\Vert \Delta^M_{jj} \Vert^{2}_{\text{HS}}=\sum^{M}_{m=1}\left( \bar{w}^{j,M}_{m} \right)^2
\qquad\text{and}\qquad
\Vert \Delta^M_{jl} \Vert^{2}_{\text{HS}}=\sum^{M}_{m^{\prime}=1}\sum^{M}_{m=1}\left( \bar{z}^{jl,M}_{mm^{\prime}} \right)^2.
\end{equation}
As a result, we have the following condition for comparability.
\begin{proposition}\label{prop:CondforComp}
Under the assumptions in Proposition~\ref{prop:OpRepForm}, assume that MGPs $X$ and $Y$ are $M^{\star}$-dimensional, with $1\leq M^{\star} \leq \infty$, and lie in the same space. Then they are comparable if and only if for every $(j,l) \in V \times V$, we have either
\begin{equation}
\lim\inf_{M\to M^{\star}}\sum^{M}_{m^{\prime}=1}\sum^{M}_{m=1} \left( \bar{z}^{jl,M}_{mm^{\prime}} \right)^2>0
\qquad\text{or}\qquad
\lim_{M\to M^{\star}}\sum^{M}_{m^{\prime}=1}\sum^{M}_{m=1}\left( \bar{z}^{jl,M}_{mm^{\prime}} \right)^2=0,
\end{equation}
where $\bar{z}^{jl,M}_{mm^{\prime}}$ are defined in Proposition~\ref{prop:OpRepForm}.
\end{proposition}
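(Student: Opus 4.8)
The plan is to reduce comparability, as given in Definition~\ref{def:DGO}, to a statement purely about the scalar sequences $\sum_{m,m'}(\bar z^{jl,M}_{mm'})^2$, and then to observe that the stated dichotomy is exactly the squared form of the norm dichotomy in the definition. First I would dispose of the first half of Definition~\ref{def:DGO}: since we assume $X$ and $Y$ are both $M^\star$-dimensional and lie in the same space, the requirements that $K^X_{jj}$ and $K^Y_{jj}$ each have $M^\star$ non-zero eigenvalues and that $\mathrm{span}(\{\phi^X_{jk}\}_{k=1}^{M^\star}) = \mathrm{span}(\{\phi^Y_{jk}\}_{k=1}^{M^\star})$ hold by hypothesis. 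Hence comparability reduces to the second half: for every $(j,l)\in V^2$ and every admissible projection subspace sequence, the non-negative sequence $\|\Delta^M_{jl}\|_F$ either satisfies $\lim_{M\to M^\star}\|\Delta^M_{jl}\|_F = 0$ or $\liminf_{M\to M^\star}\|\Delta^M_{jl}\|_F > 0$.

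Next I would invoke the identities stated immediately before the proposition, which are consequences of Proposition~\ref{prop:OpRepForm}: for $j\neq l$ we have $\|\Delta^M_{jl}\|_F^2 = \|\Delta^M_{jl}\|_{\text{HS}}^2 = \sum_{m'=1}^M\sum_{m=1}^M (\bar z^{jl,M}_{mm'})^2$, while for the diagonal blocks $\|\Delta^M_{jj}\|_F^2 = \sum_{m=1}^M (\bar w^{j,M}_m)^2$. The diagonal case is handled identically once we note that Proposition~\ref{prop:OpRepForm} forces $\Delta^M_{jj}$ to be diagonal, so interpreting $\bar z^{jj,M}_{mm'}$ as the $(m,m')$ entry of $\Delta^M_{jj}$ gives $\sum_{m,m'}(\bar z^{jj,M}_{mm'})^2 = \sum_m (\bar w^{j,M}_m)^2$, matching the notation in the statement. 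The crux is then an elementary observation: $t\mapsto t^2$ is a continuous strictly increasing bijection of $[0,\infty)$, so for any non-negative sequence $b_M$ one has $\lim b_M = 0 \iff \lim b_M^2 = 0$ and $\liminf b_M > 0 \iff \liminf b_M^2 > 0$. Applying this with $b_M = \|\Delta^M_{jl}\|_F$ shows that the norm dichotomy above is equivalent, for each $(j,l)$, to the dichotomy ``$\lim_{M\to M^\star}\sum_{m,m'}(\bar z^{jl,M}_{mm'})^2 = 0$ or $\liminf_{M\to M^\star}\sum_{m,m'}(\bar z^{jl,M}_{mm'})^2 > 0$.''

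Both directions of the equivalence then follow by quantifying over all $(j,l)\in V\times V$: comparability holds iff the norm dichotomy holds for every pair, iff (by the squaring equivalence together with the sum-of-squares identities) the stated condition holds for every pair. The contrapositive matches as well: incomparability in Definition~\ref{def:DGO} means some pair has $\liminf\|\Delta^M_{jl}\|_F = 0$ while $\limsup\|\Delta^M_{jl}\|_F > 0$, which squares to $\liminf\sum_{m,m'}(\bar z^{jl,M}_{mm'})^2 = 0$ together with $\limsup\sum_{m,m'}(\bar z^{jl,M}_{mm'})^2 > 0$, precisely the negation of the stated condition.

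I expect the main obstacle to be bookkeeping rather than mathematical depth, since the analytic content has already been front-loaded into Proposition~\ref{prop:OpRepForm}. The careful part will be matching the quantification structure of Definition~\ref{def:DGO} exactly: ensuring the squared-norm dichotomy is asserted for the same family of projection subspace sequences $\{\mathbb{V}^M_{[p]}\}$ with $\lim_{M\to M^\star}\mathbb{V}^M_j = \mathrm{span}(\{\phi^X_{jk}\}_{k=1}^{M^\star})$, and treating the diagonal blocks $j=l$ consistently with the off-diagonal $\bar z$ notation. I would also remark that although the individual quantities $\bar z^{jl,M}_{mm'}$ are basis-dependent, the sum $\sum_{m,m'}(\bar z^{jl,M}_{mm'})^2 = \|\Delta^M_{jl}\|_F^2$ is invariant to the choice of orthonormal basis within a fixed subspace, since orthogonal conjugation preserves the Frobenius norm (cf.\ Lemma~\ref{lemma:Delta-indp-funB}); this guarantees that the condition in the statement is well defined given the subspace sequence.
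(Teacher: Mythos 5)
Your proposal is correct and matches the paper's (implicit) argument exactly: the paper derives the identities $\Vert \Delta^M_{jl}\Vert_{\text{F}}^2 = \sum_{m,m'}(\bar z^{jl,M}_{mm'})^2$ and $\Vert \Delta^M_{jj}\Vert_{\text{F}}^2 = \sum_m(\bar w^{j,M}_m)^2$ from Proposition~\ref{prop:OpRepForm} and then reads the proposition off Definition~\ref{def:DGO} via the same observation that, for non-negative sequences, squaring preserves both $\lim = 0$ and $\liminf > 0$. Your explicit handling of the diagonal blocks and the basis-invariance remark (via Lemma~\ref{lemma:Delta-indp-funB}) are sound refinements of bookkeeping the paper leaves tacit, not a different route.
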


We now give an infinite-dimensional comparable example.

\begin{example}
\label{example:comp-diff-graph}
Assume that $\{\epsilon^{X}_{i1k}\}_{k\geq 1}$, 
$\{\epsilon^{X}_{i2k}\}_{k\geq 1}$,
and $\{\epsilon^{X}_{i3k}\}_{k\geq 1}$ are all independent mean 
zero Gaussian variables with
${\rm Var}(\epsilon^{X}_{ijk})=\sigma^{2}_{X,jk}$, $j=1,2,3$, $k\geq 1$ for all $i$.  
For any $k\geq 1$, let
\begin{equation*}
  a^{X}_{i1k}=a^{X}_{i2k}+\epsilon^{X}_{i1k}, \quad
  a^{X}_{i2k}=\epsilon^{X}_{i2k}, \quad
  a^{X}_{i3k}=a^{X}_{i2k}+\epsilon^{X}_{i3k}.
\end{equation*}
Let $a^{X,M}_{ij}=(a^X_{ij1}, \cdots, a^X_{ijM})^{\top}$, $j=1,2,3$. We then define $X_{ij}(t)=\sum^{\infty}_{k=1}a^{X}_{ijk} b_k(t)$, $j=1,2,3$, where $\{b_k(t)\}^{\infty}_{k=1}$ is some orthonormal function basis of $\mathbb{H}$. We define $\{\epsilon^{Y}_{ijk}\}_{k\geq 1}$, $\{a^{Y}_{ijk}\}_{k\geq 1}$, $a^{Y,M}_{ij}$, and 
$Y_{ij}(t)$, $j=1,2,3$, similarly.

The graph structure of $X$ and $Y$ is shown in Figure~\ref{fig:ComEx}. Since
$a^{X,M}_{ij}$ follows a multivariate Gaussian distribution, for any
$M \geq 2$, $1\leq m,m^{\prime}\leq M$ and $m\neq m^{\prime}$:
\begin{equation*}
\begin{aligned}
&{\rm Var}\left(a^{X}_{i1m}\mid a^{X,M}_{i2},a^{X,M}_{i3}\right)=\sigma^{2}_{X,1m},\\
&{\rm Var}\left(a^{X}_{i3m}\mid a^{X,M}_{i1}, a^{X,M}_{i2}\right)=\sigma^{2}_{X,3m},\\
&{\rm Var}\left(a^{X}_{i2m}\mid a^{X,M}_{i1}, a^{X,M}_{i3}\right)=\frac{\sigma^{2}_{X,1m}\sigma^{2}_{X,2m}\sigma^{2}_{X,3m}}{\sigma^{2}_{X,1m}\sigma^{2}_{X,2m}+\sigma^{2}_{X,1m}\sigma^{2}_{X,3m}+\sigma^{2}_{X,2m}\sigma^{2}_{X,3m}},\\
\end{aligned}
\end{equation*}
and
\begin{equation*}
\begin{aligned}
&{\rm Var}\left(a^{X}_{i1m}\mid a^{X,M}_{i2} \right)=\sigma^{2}_{X,1m},\\
&{\rm Var}\left(a^{X}_{i1m}\mid a^{X,M}_{i3} \right)=\frac{\sigma^{2}_{X,1m}\sigma^{2}_{X,2m}+\sigma^{2}_{X,1m}\sigma^{2}_{X,3m}+\sigma^{2}_{X,2m}\sigma^{2}_{X,3m}}{\sigma^{2}_{2m}+\sigma^{2}_{3m}},\\
&{\rm Var}\left(a^{X}_{i3m}\mid a^{X,M}_{i2} \right)=\sigma^{2}_{X,3m},\\
&{\rm Var}\left(a^{X}_{i3m}\mid a^{X,M}_{i1} \right)=\frac{\sigma^{2}_{X,1m}\sigma^{2}_{X,2m}+\sigma^{2}_{X,1m}\sigma^{2}_{X,3m}+\sigma^{2}_{X,2m}\sigma^{2}_{X,3m}}{\sigma^{2}_{2m}+\sigma^{2}_{1m}},\\
&{\rm Var}\left(a^{X}_{i2m}\mid a^{X,M}_{i1} \right)=\frac{\sigma^{2}_{X,1m}\sigma^{2}_{X,2m}}{\sigma^{2}_{X,1m}+\sigma^{2}_{X,2m}},\\
&{\rm Var}\left(a^{X}_{i2m}\mid a^{X,M}_{i3} \right)=\frac{\sigma^{2}_{X,3m}\sigma^{2}_{X,2m}}{\sigma^{2}_{X,3m}+\sigma^{2}_{X,2m}}.
\end{aligned}
\end{equation*}
In addition, we also have
\begin{equation*}
\begin{aligned}
&{\rm Cov}(a^{X}_{i1m},a^{X}_{3m^{\prime}}\mid a^{X,M}_{i2})=0,\\
&{\rm Cov}(a^{X}_{i1m},a^{X}_{i2m^{\prime}}\mid a^{X,M}_{i3})=\mathbbm{1}(m=m^{\prime})\cdot\frac{\sigma^{2}_{X,3m}\sigma^{2}_{X,2m}}{\sigma^{2}_{X,3m}+\sigma^{2}_{X,2m}},\\
&{\rm Cov}(a^{X}_{i2m},a^{X}_{i3m^{\prime}}\mid a^{X,M}_{i3})=\mathbbm{1}(m=m^{\prime})\cdot\frac{\sigma^{2}_{X,1m}\sigma^{2}_{X,2m}}{\sigma^{2}_{X,1m}+\sigma^{2}_{X,2m}}.
\end{aligned}
\end{equation*}
\begin{figure}[t]
	\centering
		\begin{tikzpicture}[shorten >=1pt,
		auto,
		main node/.style={ellipse,inner sep=0pt,fill=gray!20,draw,font=\sffamily,
			minimum width = .8cm, minimum height = .8cm}]
		
		\node[main node] (1) {1};
		\node[main node] (2) [right= 1cm of 1]  {2};
		\node[main node] (3) [right = 1cm of 2]  {3};

		\path[color=black!20!blue, line width = .25mm]
		(1) edge node {} (2)
		(2) edge node {} (3)
		;
		\end{tikzpicture}
	\caption{The conditional independence graph for both $X$ and $Y$ in Example~\ref{example:comp-diff-graph}. The differential graph between $X$ and $Y$ has the same structure.}
	\label{fig:ComEx}
\end{figure}
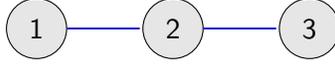
Similar results hold for $Y$. Suppose that
\[
  \sigma^{2}_{X,jk},\sigma^{2}_{Y,jk}\asymp k^{-\alpha}
  \quad\text{and}\quad
  \vert \sigma^{2}_{X,jk}-\sigma^{2}_{Y,jk} \vert \asymp k^{-\beta},\quad
  j=1,2,3,
\]
where $\alpha,\beta > 0$ and $\beta>\alpha$.
Then
\begin{equation*}
\begin{aligned}
&\bar{z}^{13,M}_{mm^{\prime}}=0,\\
&\bar{z}^{12,M}_{mm^{\prime}}=\mathbbm{1}(m=m^{\prime})\frac{\sigma^{2}_{X,1m}-\sigma^{2}_{Y,1m}}{\sigma^{2}_{X,1m}\cdot\sigma^{2}_{Y,1m}} \asymp \mathbbm{1}(m=m^{\prime})\cdot m^{-(\beta-\alpha)},\\
&\bar{z}^{23,M}_{mm^{\prime}}=\mathbbm{1}(m=m^{\prime})\frac{\sigma^{2}_{X,3m}-\sigma^{2}_{Y,3m}}{\sigma^{2}_{X,3m}\cdot\sigma^{2}_{Y,3m}}\asymp\mathbbm{1}(m=m^{\prime})\cdot m^{-(\beta-\alpha)}.
\end{aligned}
\end{equation*}
This implies that
\begin{equation}
\begin{aligned}
& \Vert \Delta^M_{13} \Vert^2_{\text{F}} = \sum^{M}_{m^{\prime}=1}\sum^{M}_{m=1} \left( \bar{z}^{13,M}_{mm^{\prime}} \right)^2=0, \\
& \Vert \Delta^M_{12} \Vert^2_{\text{F}} = \sum^{M}_{m^{\prime}=1}\sum^{M}_{m=1} \left( \bar{z}^{12,M}_{mm^{\prime}} \right)^2 \asymp \sum^M_{m=1} \frac{1}{m^{\beta-\alpha}}, \\
& \Vert \Delta^M_{23} \Vert^2_{\text{F}} =\sum^{M}_{m^{\prime}=1}\sum^{M}_{m=1} \left( \bar{z}^{23,M}_{mm^{\prime}} \right)^2 \asymp \sum^M_{m=1} \frac{1}{m^{\beta-\alpha}}.
\end{aligned}
\end{equation}
When $\beta>\alpha+1$, we have $0<\lim_{M\to\infty} \Vert \Delta^M_{12} \Vert_{\text{F}}=\lim_{M\to\infty} \Vert \Delta^M_{23} \Vert_{\text{F}} < \infty$. When $\beta \leq \alpha+1$, we have $\lim_{M\to\infty} \Vert \Delta^M_{12} \Vert_{\text{F}}=\lim_{M\to\infty} \Vert \Delta^M_{23} \Vert_{\text{F}} = \infty$. In both cases, the two graphs are comparable.
\end{example}

Comparability describes population quantities and does not immediately imply that the differential graph is easy to estimate. The following example describes a sequence of MGPs that are comparable; however, the differential graph can be arbitrarily hard to estimate. 

\begin{example}
\label{example:incomp-diff-graph}

We define $\{\epsilon^{X}_{ijk}\}_{k\geq 1}$, $\{a^{X}_{ijk}\}_{k\geq 1}$, $\{\epsilon^{Y}_{ijk}\}_{k\geq 1}$, and $\{a^{Y}_{ijk}\}_{k\geq 1}$ as in Example~\ref{example:comp-diff-graph}. Let $X_{ij}(t)=\sum^{M^{\star}}_{k=1}a^{X}_{ijk} b_k(t)$ and $Y_{ij}(t)=\sum^{M^{\star}}_{k=1}a^{Y}_{ijk} b_k(t)$, $j=1,2,3$, where $M^{\star}$ is a positive integer. Suppose that
\[
  \sigma^{2}_{X,jk},\sigma^{2}_{Y,jk}\asymp k^{-\alpha}
  \quad\text{and}\quad
  \vert \sigma^{2}_{X,jk}-\sigma^{2}_{Y,jk} \vert \asymp \mathbbm{1}(k=M^{\star}) k^{-\beta},\quad
  j=1,2,3,
\]
where $\alpha,\beta > 0$ and $\beta>\alpha$. Following the argument in Example~\ref{example:comp-diff-graph}, for any $1 \leq M \leq M^{\star}$, we have
\begin{equation*}
\begin{aligned}
&\bar{z}^{13,M}_{mm^{\prime}}=0,\\
&\bar{z}^{12,M}_{mm^{\prime}}=\mathbbm{1}(m=m^{\prime})\mathbbm{1}(m=M^{\star})\cdot\frac{\sigma^{2}_{X,1m}-\sigma^{2}_{Y,1m}}{\sigma^{2}_{X,1m}\cdot\sigma^{2}_{Y,1m}} \asymp \mathbbm{1}(m=m^{\prime})\mathbbm{1}(m=M^{\star})\cdot m^{-(\beta_1-2\alpha_1)},\\
&\bar{z}^{23,M}_{mm^{\prime}}=\mathbbm{1}(m=m^{\prime})\mathbbm{1}(m=M^{\star})\cdot\frac{\sigma^{2}_{X,3m}-\sigma^{2}_{Y,3m}}{\sigma^{2}_{X,3m}\cdot\sigma^{2}_{Y,3m}}\asymp\mathbbm{1}(m=m^{\prime})\mathbbm{1}(m=M^{\star})\cdot m^{-(\beta_3-2\alpha_3)}.
\end{aligned}
\end{equation*}
This implies that
\begin{equation}
\begin{aligned}
& \Vert \Delta^M_{13} \Vert^2_{\text{F}} = \sum^{M}_{m^{\prime}=1}\sum^{M}_{m=1} \left( \bar{z}^{13,M}_{mm^{\prime}} \right)^2=0, \\
& \Vert \Delta^M_{12} \Vert^2_{\text{F}} = \sum^{M}_{m^{\prime}=1}\sum^{M}_{m=1} \left( \bar{z}^{12,M}_{mm^{\prime}} \right)^2 \asymp M^{-2(\beta-2\alpha)} \mathbbm{1}(M=M^{\star}), \\
& \Vert \Delta^M_{23} \Vert^2_{\text{F}} = \sum^{M}_{m^{\prime}=1}\sum^{M}_{m=1} \left( \bar{z}^{23,M}_{mm^{\prime}} \right)^2 \asymp M^{-2(\beta-2\alpha)}\mathbbm{1}(M=M^{\star}).
\end{aligned}
\end{equation}

On the basis of the calculation above, we observe that estimation of the differential graph here is intrinsically hard. For any $M<M^{\star}$, we have $\Vert \Delta^M_{12} \Vert_{\text{F}}=\Vert \Delta^M_{23} \Vert_{\text{F}}=0$. Thus, when $M<M^{\star}$ is used for estimation, the resulting target graph $E^{\pi}_\Delta$ would be empty. However, by Definition~\ref{def:DGO} and Definition~\ref{def:funDGM}, we have $D_{12}=D_{23} \asymp (M^{\star})^{-2(\beta-2\alpha)}>0$ and $E_{\Delta}=\{(1,2), (2,3)\}$.

In practice, if $M^{\star}$ is very large and we do not have enough samples to accurately estimate $\Delta^M$ for a large $M$, then it is impossible for us to estimate the differential graph correctly. Furthermore, the situation is worse if $\beta>2\alpha$, since $D_{12}$ and $D_{23}$---the signal strength---vanishes as $M^{\star}$ increases. Figure~\ref{fig:sig-strg-exmp} shows how the signal strength (defined as $D_{12}$) changes as $M^{\star}$ increases for three cases: $\beta<2\alpha$, $\beta=2\alpha$, and $\beta>2\alpha$.

This problem is intrinsically hard because the difference between two graphs occurs only between components with the smallest positive eigenvalue. To capture this difference, we have to use a large number of bases $M$ to approximate the functional data, which is statistically expensive. As we increase $M$, no useful information is captured until $M=M^{\star}$. Furthermore, if the difference between the eigenvalues decreases quickly relative to the decrease of the eigenvalues, the signal strength will be very weak when the intrinsic dimension is large.

\begin{figure}[t]
	\centering
	\includegraphics[width=\linewidth]{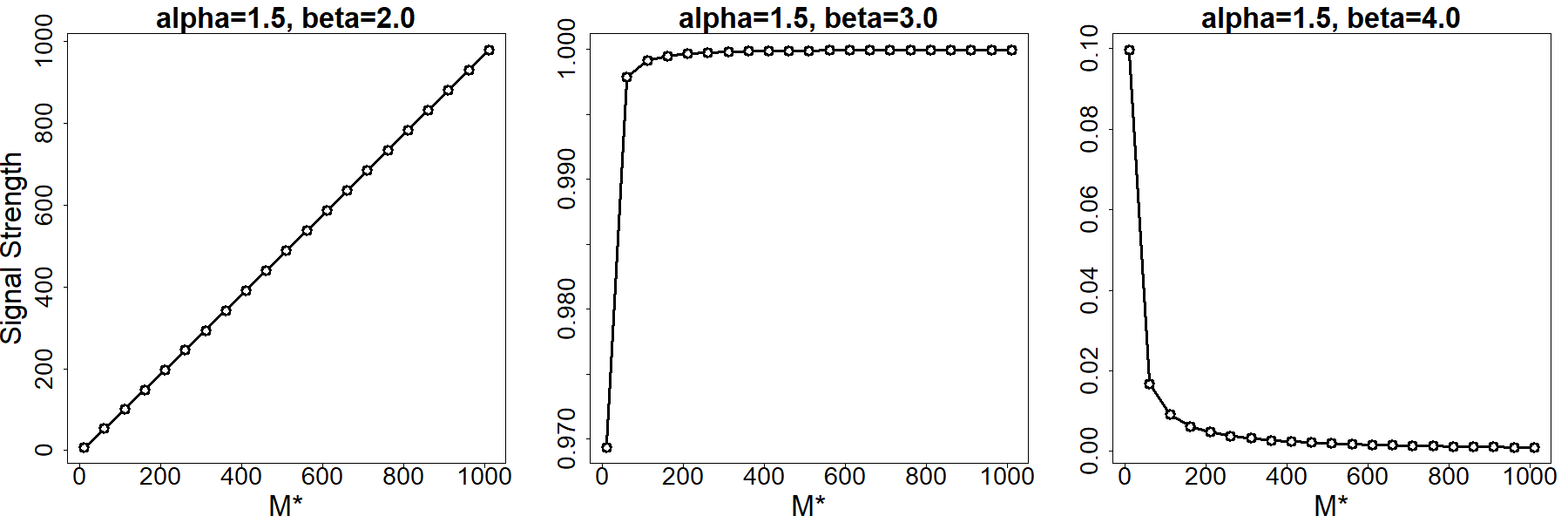}
	\caption{Signal Strength $D_{12}\asymp (M^{\star})^{-2(\beta-2\alpha)}$ in Example~\ref{example:incomp-diff-graph}. }
	\label{fig:sig-strg-exmp}
\end{figure}

\end{example}

In Example~\ref{example:comp-diff-graph}, we characterized a pair of infinite-dimensional MGPs that are comparable, and in Example~\ref{example:incomp-diff-graph} we discussed a sequence of models that are all comparable, but increasingly difficult to recover. The following example demonstrates that there are infinite-dimensional MGPs that may share the same eigenspace but are still not comparable.

\begin{example}
\label{example:incomp-graphs2}

We construct two MGPs that are both infinite-dimensional and have the same eigenspace but are incomparable. As in the previous two examples, let $V = \{1,2,3\}$. We assume that $X$ and $Y$ share a common set of eigenfunctions: $\{\phi_m\}_{m=1}^\infty$ for $j = 1,2, 3$.  

We construct the distribution of the scores of $X$ and $Y$ as follows. For any $m \in \mathbb{N}^{+}$, let $a^X_{i\,\cdot\,m}$ denote the vector of scores $(a^X_{i1m}, a^X_{i2m}, a^X_{i3m})$ and define $a^Y_{i\,\cdot\,m}$ analogously. For any natural number $z$, we first assume that
\begin{equation}
a^X_{i\,\cdot\,(3z-2)}, a^X_{i\,\cdot\,(3z-1)}, a^X_{i\,\cdot\,(3z)} \independent \{a^X_{i\,\cdot\,k}\}_{k \neq 3z, 3z-1, 3z-2}.
\end{equation}
Thus, the conditional independence graph for individual scores is a set of disconnected subgraphs corresponding to $\{a^X_{i\,\cdot\, (3z-2)}, a^X_{i\,\cdot, (3z-1)}, a^X_{i\,\cdot \, (3z)}\}$ for $z \in \mathbb{N}^{+}$. We make an analogous assumption for the scores of $Y$.
 
Within sets $\{a^X_{i \,\cdot\, (3z-2)}, a^X_{i\,\cdot\, (3z-1)}, a^X_{i\,\cdot\, (3z)}\}$ and $\{a^Y_{i\,\cdot\, (3z-2)}, a^Y_{i\,\cdot\, (3z-1)}, a^Y_{i\,\cdot\, (3z)}\}$, we assume that the conditional independence graph has the structure shown in Figure~\ref{fig:incompEx}. By construction, when projecting onto the span of the first $M$ functions, the edge set of individual functional graphical models for $X^\pi$ and $Y^\pi$ is not stable as $M \rightarrow \infty$. In particular, for both $X$ and $Y$, the edges $(1,2)$ and $(2,3)$ will persist; however, the edge $(1,3)$ will appear or disappear depending on $M$.

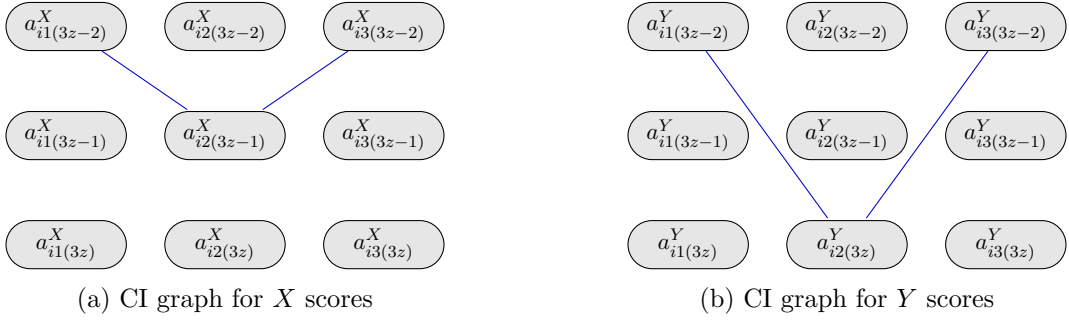
\begin{figure}[t]
\centering
\begin{minipage}[b]{.45\linewidth}
\centering 
\begin{tikzpicture}[-,shorten >=1pt,
		auto,
		main node/.style={rounded rectangle,inner sep=0pt,fill=gray!20,draw,font=\sffamily,
			minimum width = 2cm, minimum height = .8cm, scale=.8}]

		\node[main node] (1z1) {$a^X_{i1(3z-2)}$};
		\node[main node] (2z1) [right= .5cm of 1z1]  {$a^X_{i2(3z-2)}$};
		\node[main node] (3z1) [right = .5cm of 2z1]  {$a^X_{i3(3z-2)}$};
		\node[main node] (1z2) [below= .8cm of 1z1] {$a^X_{i1(3z-1)}$};
		\node[main node] (2z2) [right= .5cm of 1z2]  {$a^X_{i2(3z-1)}$};
		\node[main node] (3z2) [right = .5cm of 2z2]  {$a^X_{i3(3z-1)}$};
		\node[main node] (1z3) [below= .8cm of 1z2] {$a^X_{i1(3z)}$};
		\node[main node] (2z3) [right= .5cm of 1z3]  {$a^X_{i2(3z)}$};
		\node[main node] (3z3) [right = .5cm of 2z3]  {$a^X_{i3(3z)}$};
		
		\path[color=black!20!blue]
		(1z1) edge node {} (2z2)
		(3z1) edge  node {} (2z2);
		
	\end{tikzpicture}
\subcaption{CI graph for $X$ scores}\label{fig:1a}
\end{minipage}%
~$\qquad \quad$
\begin{minipage}[b]{.45\linewidth}
\centering
\begin{tikzpicture}[-,shorten >=1pt,
		auto,
		main node/.style={rounded rectangle,inner sep=0pt,fill=gray!20,draw,font=\sffamily,
			minimum width = 2cm, minimum height = .8cm, scale=.8}]

	\node[main node] (1z1) {$a^Y_{i1(3z-2)}$};
		\node[main node] (2z1) [right= .5cm of 1z1]  {$a^Y_{i2(3z-2)}$};
		\node[main node] (3z1) [right = .5cm of 2z1]  {$a^Y_{i3(3z-2)}$};
		\node[main node] (1z2) [below= .8cm of 1z1] {$a^Y_{i1(3z-1)}$};
		\node[main node] (2z2) [right= .5cm of 1z2]  {$a^Y_{i2(3z-1)}$};
		\node[main node] (3z2) [right = .5cm of 2z2]  {$a^Y_{i3(3z-1)}$};
		\node[main node] (1z3) [below= .8cm of 1z2] {$a^Y_{i1(3z)}$};
		\node[main node] (2z3) [right= .5cm of 1z3]  {$a^Y_{i2(3z)}$};
		\node[main node] (3z3) [right = .5cm of 2z3]  {$a^Y_{i3(3z)}$};
		
		\path[color=black!20!blue]
		(1z1) edge node {} (2z3)
		(3z1) edge  node {} (2z3);
		
	\end{tikzpicture}
\subcaption{CI graph for $Y$ scores}\label{fig:1b}
\end{minipage}%
\caption{\label{fig:incompEx}CI graph for the individual scores for two incomparable MGPs.}
\end{figure}

If $M = 3z-2$ for some $z \in \mathbb{N}^{+}$, which corresponds to the first row in Figure~\ref{fig:incompEx},  then 
\begin{equation} \label{eq:condAllBeore}
    \{a^{X}_{i1k}\}_{k < M}  \independent \{a^{X}_{i3k}\}_{k < M} \mid \{a^{X}_{i2k}\}_{k \leq M}\quad \text{ and } \quad
    \{a^{Y}_{i1k}\}_{k < M}  \independent \{a^{Y}_{i3k}\}_{k < M} \mid \{a^{Y}_{i2k}\}_{k \leq M}.
\end{equation}
However, $a^{X}_{i1M} \not \independent a^{X}_{i3M} \mid \{a^{X}_{i2k}\}_{k \leq M}$ since we do not condition on $a^X_{i2(M+1)}$. Similarly, $a^{Y}_{i1M} \not \independent a^{Y}_{i3M} \mid \{a^{Y}_{i2k}\}_{k \leq M} $ since we do not condition on $a^{Y}_{i2(M+2)}$. Thus, the edge $(1,3)$ is in the functional graphical model for both $X^\pi$ and $Y^\pi$; however, the specific values of $\Theta^{X,M}$ and $\Theta^{Y,M}$ may differ.

In contrast to the previous case, when $M = 3z-1$ for some $z \in \mathbb{N}^{+}$, which corresponds to the second row of Figure~\ref{fig:incompEx}, the functional graphical models for $X^\pi$ and $Y^\pi$ now differ. Note that $\{a^{X}_{i1k}\}_{k \leq M}  \independent \{a^{X}_{i3k}\}_{k \leq M} \mid \{a^{X}_{i2k}\}_{k \leq M}$. Thus, the edge $(1,3)$ is absent from the functional graphical model for $X^\pi$ and $\Theta^{X,M}_{1,3} = 0$. Considering $Y^\pi$, we have $\{a^{Y}_{i1k}\}_{k < M - 1}  \independent \{a^{Y}_{i3k}\}_{k < M - 1} \mid \{a^{Y}_{i2k}\}_{k \leq M}$. However, because we do not condition on $a^Y_{i2(M+1)}$ (the node in the third row of Figure~\ref{fig:incompEx}), the $(1,3)$ edge exists in the functional graphical model for $Y^\pi$ since $a^{Y}_{i1(M-1)} \not \independent a^{Y}_{i3(M-1)}  \mid \{a^{Y}_{i2k}\}_{k \leq M}$.

Suppose that the covariance of the scores is
\begin{equation*}
z^{-\beta} \times \kbordermatrix{ &a^Y_{i1(3z-2)}&a^Y_{i1(3z-1)}&a^Y_{i1(3z)} & a^Y_{i2(3z-2)}&a^Y_{i2(3z-1)}&a^Y_{i2(3z)} & a^Y_{i3(3z-2)}&a^Y_{i3(3z-1)}&a^Y_{i3(3z)}\\
a^Y_{i1(3z-2)}& 3/2 & 0 & 0  & 0 & 0 & -1 & 1/2 & 0 & 0  \\
a^Y_{i1(3z-1)}&0  &1 & 0 & 0 & 0 & 0 & 0 & 0 & 0  \\
a^Y_{i1(3z)} &0 & 0 & 1 & 0 & 0 & 0 & 0 & 0 & 0 
\\
a^Y_{i2(3z-2)}&0 & 0 & 0 & 8 & 0 & 0 & 0 & 0 & 0 \\
a^Y_{i2(3z-1)}&0 & 0 & 0 & 0 & 4 & 0 & 0 & 0 & 0 \\
a^Y_{i2(3z)} &-1 & 0 & 1 & 0 & 0 & 2 & -1 & 0 & 0
\\
a^Y_{i3(3z-2)}&1/2 & 0 & 0 & 0 & 0 & -1 & 3/2 & 0 & 0 \\
a^Y_{i3(3z-1)}&0 & 0 & 0 & 0 & 0 & 0 & 0 & 1 & 0 \\
a^Y_{i3(3z)}&0 & 0 & 0 & 0 & 0 & 0 & 0 & 0 & 1 \\
},
\end{equation*}
where $\beta > 0$ is a parameter that determines the decay rate of the eigenvalues (see Assumption~\ref{assump:EigenAssump}). We then set all other elements of the covariance as $0$. The support of the inverse of this matrix corresponds to the edges of the graph in Figure~\ref{fig:incompEx}. However, when we consider the marginal distribution of the first $M$ scores and invert the corresponding covariance, $\Theta^{Y,M}_{1,3}$ is $0$ everywhere except for the element corresponding to $a^Y_{i,1,M-1}$ and $a^Y_{i,3,M-1}$, that is, the nodes in the top row of Figure~\ref{fig:incompEx}, which is equal to $-1/4 \times ((M+1)/ 3)^\beta$. Thus, $\Vert \Delta^M_{1,3}\Vert_F =1/4 \times ((M+1)/ 3)^\beta$ and $\lim\sup_{M\rightarrow \infty} \Vert \Delta^M_{1,3}\Vert_F = \infty$.

Finally, when $M = 3z$ for some $z \in \mathbb{N}^{+}$, the $(1,3)$ edge is absent in both functional graphical models for $X^\pi$ and $Y^\pi$ because 
\[
\{a^{X}_{i1 k}\}_{k \leq M}  \independent \{a^{X}_{i3 k}\}_{k \leq M} \mid \{a^{X}_{i2 k}\}_{k \leq M}
\ \text{ and } \ 
\{a^{Y}_{i1 k}\}_{k \leq M}  \independent \{a^{Y}_{i3 k}\}_{k \leq M} \mid \{a^{Y}_{i2 k}\}_{k \leq M}.
\]
Thus, $\Theta^{X,M}_{1,3} = \Theta^{Y,M}_{1,3} = \Delta^M_{1,3} = 0$. This implies that $\lim\inf_{M \rightarrow \infty} \Vert \Delta^M_{1,3} \Vert_F = 0$.

Because $\lim\inf_{M \rightarrow \infty} \Vert \Delta^M_{1,3} \Vert_F = 0$, but $\lim\sup_{M\rightarrow \infty} \Vert \Delta^M_{1,3}\Vert_F = \infty$, $X$ and $Y$ are incomparable. 
\end{example}

The notion of comparability illustrates the intrinsic difficulty of dealing with functional data. However, it also illustrates when we can still hope to estimate the differential network consistently. We have formally stated when two infinite-dimensional functional graphical models will be comparable and have given conditions and examples of comparability. Unfortunately, these conditions cannot be checked using the observed data. For this reason, we mainly discuss the methodology and theoretical properties for estimation of $E^{\pi}_{\Delta}$. Prior knowledge about the problem at hand should be used to decide whether two infinite-dimensional functional graphs are comparable. This is similar to other assumptions common in the graphical modeling literature, such as ``faithfulness'' \citep{spirtes00causation}, that are critical to graph recovery, but can not be verified.

\section{\textbf{Fu}nctional \textbf{D}ifferential \textbf{G}raph \textbf{E}stimation: FuDGE}
\label{sec:FuDGE}

In this section, we detail our methodology for estimating a
functional differential graph. Unfortunately, in most situations, there may not be prior knowledge on which subspace to use to define the functional differential graph. In such situations, we suggest using the principle component scores of $K_{jj}(s,t)=K^X_{jj}(s,t)+K^Y_{jj}(s,t)$, $j \in V$ as a default choice. In addition, each observed function is only recorded (potentially with measurement error) at discrete time points. In Section~\ref{sec:estCovScores} we consider this practical setting. Of course, if an appropriate basis for dimension reduction is known in advance or if the functions are fully observed at all time points, then the estimated objects can always be replaced with their known/observed counterparts.

\subsection{Estimating the Covariance of the Scores}\label{sec:estCovScores}
For each $X_{ij}$, suppose we have measurements at time points $t_{ijk}$, $k=1,\ldots,T$,\footnote{For simplicity, we assume that all functions have the same number of observations, however, our method and theory can be trivially extended to allow a different number of observations for each function.} and the recorded data, $h_{ijk}$, are the function values with random noise. That is,
\begin{equation}\label{eq:datagenerate}
h_{ijk}=g_{ij}(t_{ijk})+\epsilon_{ijk},
\end{equation}
where $g_{ij}$ can denote either $X_{ij}$ or $Y_{ij}$ and the unobserved noise $\epsilon_{ijk}$ is i.i.d.~Gaussian with mean $0$ and variance $\sigma^{2}_{0}$. Without loss of generality, we assume that $t_{ij1}< \ldots< t_{ijT}$ for any $1\leq i \leq n$ and $1\leq j \leq p$. We do not assume that $t_{ijk} = t_{i'jk}$ for $i \neq i'$, so that each observation may be observed on a different grid.

We first use a basis expansion to estimate a least squares approximation of the whole curve $X_{ij}(t)$ (see Section 4.2 in \cite{Ramsay2005Functional}). Specifically, given an initial basis function vector $b(t)=(b_{1}(t),\dots,b_{L}(t))^{\top}$---for example, the B-spline or Fourier basis---our estimated approximation for $X_{ij}(t)$ is given by:
\begin{equation}
\begin{aligned}
\hat{X}_{ij}(t)&=\hat{\beta}_{ij}^{\top}b(t),\\
\hat{\beta}_{ij}&=\left(B^{\top}_{ij} B_{ij}\right)^{-1}B^{\top}_{ij}h_{ij},
\end{aligned}
\end{equation}
where $h_{ij}=(h_{ij1},h_{ij2},\dots,h_{ijT})^{\top}$ and $B_{ij}$ is the design matrix for $g_{ij}$:
\begin{equation}\label{eq:designmat}
B_{ij}=
\left[
\begin{matrix}
b_{1}(t_{ij1}) & \cdots & b_{L}(t_{ij1}) \\
\vdots & \ddots & \vdots \\
b_{1}(t_{ijT}) & \cdots & b_{L}(t_{ijT})
\end{matrix}
\right] \in \mathbb{R}^{T \times L}.
\end{equation}

The computational complexity of the basis expansion procedure 
is $O(n p T^3 L^3)$, and in practice, there are many efficient package implementations of this step; for example, \texttt{fda} \citep{fdaRpackage}.

We repeat this process for the observed $Y$ functions. After obtaining $\{\hat{X}_{ij}(t)\}_{j\in V, i=1,\ldots,n_X}$ and $\{\hat{Y}_{ij}(t)\}_{j\in V, i=1,\ldots,n_Y}$, we use them as inputs for the FPCA procedure. Specifically, we first estimate the sum of the covariance functions by
\begin{equation}\label{eq:CovEstDis}
\hat{K}_{jj}(s,t)=\hat{K}^X_{jj}(s,t)+\hat{K}^Y_{jj}(s,t) = \frac{1}{n_X}\sum^{n_X}_{i=1}\hat{X}_{ij}(s)\hat{X}_{ij}(t) + \frac{1}{n_Y}\sum^{n_Y}_{i=1}\hat{Y}_{ij}(s)\hat{Y}_{ij}(t).
\end{equation}
Using $\hat{K}_{jj}(s,t)$ as input to FPCA, we can estimate the corresponding eigenfunctions $\hat{\phi}_{jk}(t)$, $k=1,\ldots, M$, $j=1,\ldots,p$. Given the estimated eigenfunctions, we compute the estimated projection scores 
\begin{align*}
    \hat{a}^{X}_{ijk} &=\int_{\mathcal{T}}\hat X_{ij}(t)\hat{\phi}_{jk}(t)dt \qquad \text{and} \qquad
    \hat{a}^{Y}_{ijk}=\int_{\mathcal{T}}Y_{ij}(t)\hat{\phi}_{jk}(t)dt,
\end{align*}
and collect them into vectors
\begin{align*}
    \hat{a}^{X,M}_{ij}&=(\hat{a}^{X}_{ij1},\cdots, \hat{a}^{X}_{ijM})^{\top} \in{\mathbb{R}^{M}} \qquad \text{and} \qquad
    \hat{a}^{X,M}_{i}&=((\hat{a}^{X,M}_{i1})^{\top},\ldots,(\hat{a}^{X,M}_{ip})^{\top})^{\top}\in{\mathbb{R}^{pM}},\\
        \hat{a}^{Y,M}_{ij}&=(\hat{a}^{Y}_{ij1},\cdots, \hat{a}^{Y}_{ijM})^{\top} \in{\mathbb{R}^{M}} \qquad \text{and} \qquad
    \hat{a}^{Y,M}_{i}&=((\hat{a}^{Y,M}_{i1})^{\top},\ldots,(\hat{a}^{Y,M}_{ip})^{\top})^{\top}\in{\mathbb{R}^{pM}}.
\end{align*}
Finally, we estimate the covariance matrices of the score vectors, $\Sigma^{X,M}$ and $\Sigma^{Y,M}$, as 
\begin{align*}
    S^{X,M}=\frac{1}{n_X}\sum^{n_X}_{i=1}\hat a^{X,M}_{i} (\hat a^{X,M}_{i})^{\top}
    \qquad\text{and}\qquad
    S^{Y,M}=\frac{1}{n_Y}\sum^{n_Y}_{i=1} \hat a^{Y,M}_{i} (\hat a^{Y,M}_{i})^{\top}.
\end{align*}

\subsection{FuDGE: \textbf{Fu}nctional
\textbf{D}ifferential \textbf{G}raph \textbf{E}stimation}

Now we describe the FuDGE algorithm for \textbf{Fu}nctional \textbf{D}ifferential \textbf{G}raph \textbf{E}stimation. To estimate $\Delta^M$, we solve the following optimization program:
\begin{equation}\label{eq:objectivefunc}
\hat\Delta^{M} \in \argmin_{\Delta\in{\mathbb{R}^{pM\times pM}}} \mathcal{L}(\Delta) + \lambda_{n}\sum_{\{i,j\}\in V^2}\|\Delta_{ij}\|_{F},
\end{equation}
where
\begin{equation}
\label{eq:def-loss-fun-1}
\mathcal{L}(\Delta)=\mathrm{tr}\left[\frac{1}{2}
S^{Y,M}\Delta^{\top}{S^{X,M}}\Delta-\Delta^{\top}\left(S^{Y,M}-S^{X,M}\right)\right]
\end{equation}
and $S^{X,M}$ and $S^{Y,M}$ are obtained as described in Section~\ref{sec:estCovScores}.

We construct the loss function, $\mathcal{L}(\Delta)$, so that the true parameter value, that is, $\Delta^{M}=\left(\Sigma^{X,M}\right)^{-1}-\left(\Sigma^{Y,M}\right)^{-1}$, minimizes the population loss $\mathbb{E}\left[\mathcal{L}(\Delta)\right]$, which for a differentiable and convex loss function is equivalent to selecting $\mathcal{L}$ so that $\mathbb{E}\left[\nabla \mathcal{L}(\Delta^{M})\right]=0$. Since $\Delta^{M}$ satisfies
\[
  \Sigma^{X,M}\Delta^{M}\Sigma^{Y,M}-(\Sigma^{Y,M}-\Sigma^{X,M})=0,
\]
a choice for $\nabla \mathcal{L}(\Delta)$ is
\begin{equation}\label{con:GradLoss}
\nabla{\mathcal{L}(\Delta^M)}=S^{X,M}\Delta^M{S^{Y,M}}-\left(S^{Y,M}-S^{X,M}\right)
\end{equation}
so that
\[
  \mathbb{E}\left[\nabla
    \mathcal{L}(\Delta^{M})\right]=\Sigma^{X,M}\Delta^{M}\Sigma^{Y,M}-(\Sigma^{Y,M}-\Sigma^{X,M})=0.
\]
Given this choice of $\nabla \mathcal{L}(\Delta)$, $\mathcal{L}(\Delta)$ in \eqref{eq:objectivefunc} directly follows from properties of the differential of the trace function. The chosen loss is quadratic (see \eqref{con:lossdef} in appendix) and leads to an efficient algorithm. Similar loss functions are used in \cite{xu2016semiparametric}, \citet{yuan2017differential}, \citet{Na2019Estimating}, and \cite{zhao2014direct}.  

We also include the additional group lasso penalty~\citep{yuan2006model} to promote blockwise sparsity in $\hat \Delta^M$. The objective in~\eqref{eq:objectivefunc} can be solved by a proximal gradient method detailed in Algorithm~\ref{alg:prox}. Finally, we form $\hat E_{\Delta}$ by thresholding $\hat \Delta^M$ so that:
\begin{equation}{\label{con:changedgeest}}
\hat{E}_{\Delta} = \left\{ \{j,l\}  \,:\, \|\hat{\Delta}^{M}_{jl}\|_{F}>\epsilon_{n}\;\;\text{or}\;\;\|\hat{\Delta}^{M}_{lj}\|_{F}>\epsilon_{n}\right\}.
\end{equation}
The thresholding step in \eqref{con:changedgeest} is used for theoretical purposes. Specifically, it helps correct for bias induced by finite-dimensional truncation and relaxes commonly used assumptions for the recovery of the graph structure, such as the irrepresentability or incoherence condition \citep{geer09conditions}. In practice, one can simply set $\epsilon_n=0$, as we do in the simulations.

\subsection{Optimization Algorithm for FuDGE}

\begin{algorithm}[t]
	\caption{\label{alg:prox}Functional differential graph estimation} 
	\label{alg:A}
	\begin{algorithmic}
		\REQUIRE $S^{X,M},S^{Y,M},\lambda_{n}, \eta$.
		\ENSURE $\hat{\Delta}^{M}$.
		\STATE Initialize $\Delta^{(0)}=0_{pM}$.
		\REPEAT 
		\STATE $A=\Delta-\eta\nabla \mathcal{L}(\Delta)=\Delta-\eta\left(S^{X,M}\Delta S^{Y,M}-\left(S^{Y,M}-S^{X,M}\right)\right)$
		\FOR{$1\leq{i,j}\leq{p}$} 
		\STATE $\Delta_{jl}\leftarrow\left(\frac{\|A_{jl}\|_{F}-\lambda_{n}\eta}{\|A_{jl}\|_{F}}\right)_{+} \cdot A_{jl}$
		\ENDFOR
		\UNTIL{Converge} 
	\end{algorithmic}
\end{algorithm}

The optimization program~\eqref{eq:objectivefunc} can be solved using a proximal gradient method~\citep{parikh2014proximal} summarized in Algorithm~\ref{alg:prox}. Specifically, at each iteration, we update the current value of $\Delta$, denoted as $\Delta^{\text{old}}$, by solving the following problem:
\begin{equation}{\label{con:iterorign}}
\Delta^{\text{new}}=\argmin_{\Delta}\left(\frac{1}{2}\left\Vert\Delta-\left(\Delta^{\text{old}}-\eta\nabla \mathcal{L}\left(\Delta^{\text{old}}\right)\right)\right\Vert_{F}^{2}+\eta\cdot\lambda_{n}\sum^{p}_{j,l=1}\|\Delta_{jl}\|_{F}\right),
\end{equation}
where $\nabla \mathcal{L}(\Delta)$ is defined in \eqref{con:GradLoss} and $\eta$ is a step size specified by the user. Note that $\nabla \mathcal{L}(\Delta)$ is Lipschitz continuous with Lipschitz constant $\lambda^{S}_{\max} = \|S^{Y,M}\otimes S^{X,M}\|_{2}= \lambda_{\max}(S^{Y,M})\lambda_{\max}(S^{X,M})$. Thus, for any step size $\eta$ such that $0<\eta\leq1/\lambda^{S}_{\max}$, the proximal gradient method is guaranteed to converge \citep{Beck2009Fast}.

The update in~\eqref{con:iterorign} has a closed-form solution:
\begin{equation}{\label{con:itersimplified}}
\Delta^{\text{new}}_{jl}=\left[\left(\Vert A^{\text{old}}_{jl}\Vert_{F}-\lambda_{n}\eta\right)/\Vert A^{\text{old}}_{jl}\Vert_{F}\right]_{+}\cdot A^{\text{old}}_{jl}, \qquad 1\leq{j,l}\leq{p},
\end{equation}
where $A^{\text{old}}=\Delta^{\text{old}}-\eta\nabla \mathcal{L}(\Delta^{\text{old}})$ and $x_{+}=\max\{0,x\},x\in{\mathbb{R}}$, represents the positive part of $x$. Detailed derivations are given in the appendix. Note that although true $\Delta^M$ is symmetric, we do not explicitly enforce symmetry in $\hat \Delta^M$ in Algorithm~\ref{alg:prox}.   

After performing FPCA, the proximal gradient descent method converges in $O\left(\lambda^{S}_{\max}/\text{tol}\right)$ iterations, where $\text{tol}$ is a user specified optimization error tolerance, and each iteration takes $O((pM)^3)$ operations; see \citet{tibshirani2010proximal} for a convergence analysis of the general proximal gradient descent algorithm.

\subsection{Selection of Tuning Parameters}

There are four tuning parameters that must be chosen for the implementation of FuDGE: $L$ (basis dimension used to estimate the curves from the discretely observed data), $M$ (subspace dimension to estimate projection scores), $\lambda_n$ (regularization parameter to tune the block sparsity of $\Delta^{M}$), and $\epsilon_n$ (thresholding parameter for $\hat{E}_{\Delta}$). While we need the thresholding parameter $\epsilon_n$ in~\eqref{con:changedgeest} to establish theoretical results, in practice, we simply set $\epsilon_n = 0$. To select $M$, we follow the procedure in \cite{Qiao2015Functional}. More specifically, for each discretely-observed curve, we first estimate the underlying functions by fitting an $L$-dimensional B-spline basis. Both $M$ and $L$ are chosen by 5-fold cross-validation as discussed in~\cite{Qiao2015Functional}.

Finally, to choose $\lambda_n$, we recommend using selective cross-validation (SCV) \citep{she2012iterative}. Given a value of $\lambda_n$, we use the entire data set to estimate a sparsity pattern. Then, fixing the sparsity pattern, we use a typical cross-validation procedure to calculate the CV error, where the CV error is measured by an unpenalized version of the loss function in~\eqref{eq:def-loss-fun-1} on the held-out data set. Ultimately, we choose the value of $\lambda_n$ that results in the sparsity pattern that minimizes CV error. In addition to SCV, if we have prior knowledge about the number of edges in the differential graph, we can also choose $\lambda_n$ which results in the desired level of sparsity of the differential graph. 

\section{Theoretical Properties}\label{sec:ThmProp}

In this section, we provide theoretical guarantees for FuDGE. We first give a deterministic result for $\hat{E}_{\Delta}$ defined in \eqref{con:changedgeest} when the max-norm of the difference between the estimates $S^{X,M},S^{Y,M}$  and their corresponding parameters, $\Sigma^{X,M},\Sigma^{Y,M}$, is bounded by $\delta_n$. We then show that when projecting the data onto either a fixed basis or an estimated basis---under some mild conditions---$\delta_n$ can be controlled and the bias of the finite-dimensional projection decreases fast enough that $E_\Delta$ can be consistently recovered.

\subsection{Deterministic Guarantees for $\hat{E}_{\Delta}$}
\label{sec:thm-E-Delta}

In this section, we assume that $S^{X,M},S^{Y,M}$ are good estimates of $\Sigma^{X,M},\Sigma^{Y,M}$ and give a deterministic result in Theorem~\ref{Thm:smallboundThm}. Let $n=\min\{n_X,n_Y\}$. We assume that the following holds.
\begin{assump}
\label{assump:req-SXSY}
The matrices $S^{X,M},S^{Y,M}$ are estimates of $\Sigma^{X,M},\Sigma^{Y,M}$ that satisfy
\begin{equation}\label{eq:req-SXSY}
\max \left\{ \vert S^{X,M}-\Sigma^{X,M} \vert_{\infty}, \vert S^{Y,M}-\Sigma^{Y,M} \vert_{\infty} \right\} \leq \delta_n.
\end{equation}
\end{assump}

We also require that $E_{\Delta}$ be sparse. This does not preclude the case where $E_X$ and $E_Y$ are dense, as long as there are not too many differences in the precision matrices. This assumption is also required when estimating a differential graph from vector-valued data; for example, see Condition 1 in \cite{zhao2014direct}.   

\begin{assump}{\label{assump:SparseAssump}}
There are $s$ edges in the differential graph; that is, $\vert E_{\Delta} \vert = s$ and $s \ll p$.
\end{assump}

We introduce the following three quantities that characterize the problem instance and will be used in Theorem~\ref{Thm:smallboundThm} below:
\begin{equation*}
\nu_1=\nu_1(M)=\min_{(j,l)\in E_{\Delta}}  \Vert\Delta^{M}_{jl}\Vert_{F} ,
\quad
\nu_2=\nu_2(M)=\max_{(j,l) \in E^C_{\Delta}}  \Vert\Delta^{M}_{jl}\Vert_{F},
\end{equation*}
and 
\begin{equation}\label{eq:signal-strength-tau}
\tau = \tau(M) = \nu_1(M) - \nu_2(M).
\end{equation}
Roughly speaking, $\nu_1(M)$ indicates the ``signal strength'' present when we use the $M$-dimensional representation and $\nu_2(M)$ measures the bias. By Definition~\ref{def:DGO}, when $X$ and $Y$ are comparable, we have $\lim\inf_{M \to M^\star}\nu_1(M)>0$ and $\lim_{M \to M^\star}\nu_2(M)=0$. Therefore, for a sufficiently large $M$, we have $\tau > 0$. However, a smaller $\tau$ implies that the differential graph is harder to recover. 

Before we give the deterministic result in Theorem~\ref{Thm:smallboundThm}, we first define additional quantities that will be used in subsequent results. Let
\begin{equation}
\begin{aligned}
\sigma_{\max} &= \max\{\vert \Sigma^{X,M}\vert_\infty, \vert
\Sigma^{Y,M}\vert_\infty\},\\
\lambda^{*}_{\min} &=\lambda_{\min}\left(\Sigma^{X,M}\right)\times
\lambda_{\min}\left(\Sigma^{Y,M}\right), \text{ and }\\
  \Gamma^2_n &= \frac{9\lambda^{2}_{n}s}{\kappa^{2}_{\mathcal{L}}}+\frac{2\lambda_{n}}{\kappa_{\mathcal{L}}}(\omega^{2}_{\mathcal{L}}+2p^2\nu_2),
\end{aligned}
\end{equation}
where
\begin{equation}
  \begin{aligned}
    \lambda_{n}\;&=\;2M\left[\left(\delta_{n}^2 + 2\delta_{n}\sigma_{\max}\right)\left \vert \Delta^M \right\vert_{1} + 2\delta_{n}\right], \\
    \kappa_{\mathcal{L}} \; &= \; (1/2)\lambda^{*}_{\min}-8M^{2}s\left(\delta_{n}^2 + 2\delta_{n}\sigma_{\max}\right), \\
    \omega_{\mathcal{L}} \;&=\; 4Mp^2\nu_2\sqrt{\delta_{n}^2 + 2\delta_{n}\sigma_{\max}},
  \end{aligned}    
\end{equation}
and $\delta_n$ is defined in Assumption~\ref{assump:req-SXSY}. Note that $\Gamma_n$---which measures the estimation error of $\Vert \hat{\Delta}^M-\Delta^M \Vert_{\text{F}}$---implicitly depends on $\delta_n$ through $\lambda_n$, $\kappa_\mathcal{L}$, and $\omega_\mathcal{L}$. We observe that $\Gamma_n$ decreases to zero as $\delta_n$ goes to zero. The quantity $\kappa_{\mathcal{L}}$ is the maximum restricted eigenvalue from the analysis framework of \cite{negahban2010unified}. Finally, $\omega_{\mathcal{L}}$ is the tolerance parameter that comes from the fact that $\nu_2$ could be larger than zero, and will decrease to zero as $\nu_2$ goes to zero.

\begin{theorem}\label{Thm:smallboundThm}
Given Assumptions~\ref{assump:req-SXSY} and~\ref{assump:SparseAssump}, when $\nu_1(M), \nu_2(M), \delta_n, \lambda_n, \sigma_{\max}, M$ and $s$ satisfy 
\begin{equation}{\label{con:sizecond}}
\begin{aligned}
0 <  \Gamma_n < \tau/2
\qquad\text{and}\qquad
\delta_n  < (1/4)\sqrt{ \left(\lambda^{*}_{\min} + 16M^2s (\sigma_{\max})^2\right)/\left( M^2 s\right)} - \sigma_{\max},
\end{aligned}
\end{equation}
then setting $\epsilon_{n} \in \left[\nu_2+\Gamma_n , \nu_1-\Gamma_n \right)$ ensures that $\hat{E}_{\Delta}=E_{\Delta}$.
\end{theorem}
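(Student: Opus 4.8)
The plan is to obtain exact recovery as a corollary of a deterministic Frobenius-norm error bound $\Vert \hat\Delta^M - \Delta^M\Vert_F \leq \Gamma_n$, established through the decomposable-regularizer M-estimation framework of \citet{negahban2010unified}, and then to translate this bound into edge recovery via the thresholding step \eqref{con:changedgeest}. That framework requires three ingredients: (i) a bound on the dual (max-block-Frobenius) norm of $\nabla L$ at the truth, which dictates $\lambda_n$; (ii) a restricted strong convexity (RSC) condition, which supplies the curvature $\kappa_{\mathcal{L}}$; and (iii) control of the approximation error coming from the fact that $\Delta^M$ is only approximately block-sparse (off-support blocks have norm up to $\nu_2$).

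First I would bound the gradient $\nabla L(\Delta^M)=S^{X,M}\Delta^M S^{Y,M}-(S^{Y,M}-S^{X,M})$. Writing $E^X=S^{X,M}-\Sigma^{X,M}$ and $E^Y=S^{Y,M}-\Sigma^{Y,M}$, so that $|E^X|_\infty,|E^Y|_\infty\leq\delta_n$ by Assumption~\ref{assump:req-SXSY}, and using the population identity $\Sigma^{X,M}\Delta^M\Sigma^{Y,M}=\Sigma^{Y,M}-\Sigma^{X,M}$, the gradient collapses to $\Sigma^{X,M}\Delta^M E^Y + E^X\Delta^M\Sigma^{Y,M} + E^X\Delta^M E^Y - (E^Y-E^X)$. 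Bounding each term entrywise by $\delta_n$, $\sigma_{\max}$, and $|\Delta^M|_1$ gives $|\nabla L(\Delta^M)|_\infty \leq (\delta_n^2+2\delta_n\sigma_{\max})|\Delta^M|_1 + 2\delta_n$, and since each block is $M\times M$ one has $\Vert[\nabla L(\Delta^M)]_{jl}\Vert_F\leq M\,|\nabla L(\Delta^M)|_\infty$. This equals $\lambda_n/2$, so the standard requirement $\lambda_n\geq 2\max_{jl}\Vert[\nabla L(\Delta^M)]_{jl}\Vert_F$ holds and pins down the stated $\lambda_n$.

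Next I would verify RSC. Since $L$ is quadratic, its Bregman divergence is exactly $\tfrac{1}{2}\,\vect(D)^\top(S^{Y,M}\otimes S^{X,M})\,\vect(D)$ for $D=\hat\Delta^M-\Delta^M$. Splitting $S^{Y,M}\otimes S^{X,M}=\Sigma^{Y,M}\otimes\Sigma^{X,M}+P$, the population part contributes $\lambda^{*}_{\min}\Vert D\Vert_F^2$, while $|P|_\infty\leq\delta_n^2+2\delta_n\sigma_{\max}$ controls the perturbation through $|\vect(D)^\top P\,\vect(D)|\leq(\delta_n^2+2\delta_n\sigma_{\max})\,|D|_1^2$. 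In the restricted cone produced by the framework one has $|D|_1\leq 4M\sqrt{s}\,\Vert D\Vert_F + 4Mp^2\nu_2$; the leading piece yields $\kappa_{\mathcal{L}}=\tfrac{1}{2}\lambda^{*}_{\min}-8M^2 s(\delta_n^2+2\delta_n\sigma_{\max})$, and the $\nu_2$ piece yields the tolerance $\omega_{\mathcal{L}}$. Completing the square shows that the second condition on $\delta_n$ in \eqref{con:sizecond} is precisely what forces $\kappa_{\mathcal{L}}>0$. Feeding $\lambda_n$, $\kappa_{\mathcal{L}}$, the subspace-compatibility constant $s$, the approximation error $\sum_{(j,l)\in E^C_\Delta}\Vert\Delta^M_{jl}\Vert_F\leq p^2\nu_2$, and the tolerance $\omega_{\mathcal{L}}$ into the framework's error inequality assembles the claimed $\Gamma_n$.

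Finally I would convert the Frobenius bound into edge recovery. Because $\Vert\hat\Delta^M_{jl}-\Delta^M_{jl}\Vert_F\leq\Vert\hat\Delta^M-\Delta^M\Vert_F\leq\Gamma_n$ for every block, any true edge obeys $\Vert\hat\Delta^M_{jl}\Vert_F\geq\nu_1-\Gamma_n$ and any non-edge obeys $\Vert\hat\Delta^M_{jl}\Vert_F\leq\nu_2+\Gamma_n$ (and likewise for the transposed block, using symmetry of $\Delta^M$). Thresholding at any $\epsilon_n\in[\nu_2+\Gamma_n,\nu_1-\Gamma_n)$ therefore separates edges from non-edges, and this interval is nonempty exactly when $2\Gamma_n<\nu_1-\nu_2=\tau$, which is the first condition in \eqref{con:sizecond}. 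I expect the main obstacle to be the RSC step: controlling the Hessian perturbation $S^{Y,M}\otimes S^{X,M}-\Sigma^{Y,M}\otimes\Sigma^{X,M}$ over the restricted cone while simultaneously tracking how the non-exact sparsity $\nu_2$ enters the cone constraint, the curvature $\kappa_{\mathcal{L}}$, and the tolerance $\omega_{\mathcal{L}}$, so that all constants reconcile with the stated form of $\Gamma_n$.
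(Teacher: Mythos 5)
Your proposal is correct and follows essentially the same route as the paper's proof: the decomposable-regularizer framework of \citet{negahban2010unified} with the same gradient dual-norm bound pinning down $\lambda_n$, the same restricted-strong-convexity argument in which the cone inequality carries the $\sqrt{s}\,\Vert D\Vert_F$ term and the $p^2\nu_2$ approximation term into $\kappa_{\mathcal{L}}$ and $\omega_{\mathcal{L}}$, and the same blockwise thresholding step separating edges from non-edges whenever $2\Gamma_n<\tau$. Your only departures are cosmetic bookkeeping---expanding the gradient via $E^X,E^Y$ rather than invoking the paper's Kronecker-perturbation lemma, and working with $\lvert D\rvert_1$ (which absorbs an extra factor of $M$) in place of the paper's group $(1,2)$-norm---both of which reconcile to the identical constants.
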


As shown in Section~\ref{sec:th-SX-SY-delta_n}, under some additional conditions, Assumption~\ref{assump:req-SXSY} holds for a sequence of $\delta_n$ that decreases to $0$ as $n$ goes to infinity. Thus, as $M$ and $n$ both increase to infinity, we have $\nu_2+\Gamma_n \approx 0$ and $\nu_1-\Gamma_n \approx \min_{(j,l)\in E_{\Delta}}D_{jl}$, and we only require $0 \leq \epsilon_n < \min_{(j,l)\in E_{\Delta}}D_{jl}$.

\subsection{Theoretical Guarantees for $S^{X,M}$ and $S^{Y,M}$}
\label{sec:th-SX-SY-delta_n}

In this section, we prove that under some mild conditions, \eqref{eq:req-SXSY} will hold with a high probability for specific values of $\delta_n$. We discuss the results in two cases: the case where the curves are fully observed and the case where the curves are only observed at discrete time points.

\subsubsection{Fully Observed Curves}

In this section, we discuss the case where each curve is fully observed. We first consider the case where the basis defining the differential graph is known in advance; that is, the exact form of $\{e_{jk}\}_{k\geq 1}$ for all $j \in V$ is known. In this case, the projection score vectors $a^{X,M}_i$ and $a^{Y,M}_i$ can be recovered exactly for all $i=1,2,\dots,n$. By the assumption that $X_{i}(t)$ and $Y_{i}(t)$ are $p$-dimensional multivariate Gaussian processes with mean zero, we then have $a^{X,M}_{i} \sim N(0,\Sigma^{X,M})$ and $a^{Y,M}_{i} \sim N(0,\Sigma^{Y,M})$. The following result follows directly from the standard results on the sample covariance of multivariate Gaussian variables.
\begin{theorem}\label{thm:proj-score-mat-case1}
Assume that $S^{X,M}$ and $S^{Y,M}$ are computed as in Section~\ref{sec:estCovScores}, except that the basis functions $\{e_{jk}\}_{k\geq 1}$, $j \in V$, are fixed and known in advance. Recall that 
\[
n = \min\{n_{X}, n_{Y}\}
\quad\text{and}\quad 
\sigma_{\max} = \max\{\vert \Sigma^{X,M}\vert_\infty, \vert
\Sigma^{Y,M}\vert_\infty\}.
\]
Fix $\iota \in (0, 1]$. Suppose that $n$ is large enough so that 
\[
\delta_n = \sigma_{\max} \sqrt{ \frac{C_1}{n} \log \left( \frac{8 p^2 M^2 }{\iota} \right) } \leq C_2, 
\]
for some universal constants $C_1,C_2>0$. Then \eqref{eq:req-SXSY} holds with probability at least $1-\iota$.
\end{theorem}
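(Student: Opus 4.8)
The plan is to establish \eqref{eq:req-SXSY} one entry at a time and then combine the entrywise bounds by a union bound. Since $|A|_\infty = \max_{a,b}|A_{ab}|$ and the event in \eqref{eq:req-SXSY} is a maximum over the entries of both $S^{X,M}-\Sigma^{X,M}$ and $S^{Y,M}-\Sigma^{Y,M}$, it suffices to control the tail probability of a single entry and then union bound over the at most $2(pM)^2 = 2p^2M^2$ relevant entries. Fixing coordinates $1 \le a, b \le pM$, I would write
\begin{equation*}
S^{X,M}_{ab} - \Sigma^{X,M}_{ab} = \frac{1}{n_X}\sum_{i=1}^{n_X}\left([a^{X,M}_i]_a\, [a^{X,M}_i]_b - \Sigma^{X,M}_{ab}\right),
\end{equation*}
which is an average of $n_X$ i.i.d.\ centered random variables, using that $a^{X,M}_i \sim N(0, \Sigma^{X,M})$ \emph{exactly} (the scores are recovered without error because the basis $\{e_{jk}\}$ is fixed and known in advance).

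Next I would invoke the standard concentration behavior of products of jointly Gaussian coordinates. Each summand $[a^{X,M}_i]_a[a^{X,M}_i]_b - \Sigma^{X,M}_{ab}$ is a mean-zero sub-exponential random variable whose sub-exponential parameter is controlled by $\sqrt{\Sigma^{X,M}_{aa}\,\Sigma^{X,M}_{bb}} \le \sigma_{\max}$. A Bernstein-type inequality for averages of sub-exponential variables then yields, for a single entry,
\begin{equation*}
\mathbb{P}\left(\left|S^{X,M}_{ab} - \Sigma^{X,M}_{ab}\right| > t\right) \le c_1 \exp\left(-c_2\, n_X\, \frac{t^2}{\sigma_{\max}^2}\right),
\end{equation*}
valid in the regime $t \le c_3\,\sigma_{\max}$, where the restriction to this range is precisely what forces the quadratic (sub-Gaussian) branch of the Bernstein bound to dominate the linear branch. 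This is the content of the standard Gaussian sample-covariance deviation lemma, and the same bound holds for $S^{Y,M}$ with $n_X$ replaced by $n_Y$.

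Finally I would assemble the pieces. Using $n = \min\{n_X, n_Y\}$ in the exponent so that both matrices are covered simultaneously, and union bounding over the at most $2p^2M^2$ relevant entries, the total failure probability is at most $c_1 \cdot 2 p^2 M^2 \exp\left(-c_2 n t^2/\sigma_{\max}^2\right)$; with the explicit constant coming from the deviation lemma this reproduces the factor $8p^2M^2$ appearing in the statement. Setting this probability equal to $\iota$ and solving for $t$ recovers precisely $\delta_n = \sigma_{\max}\sqrt{(C_1/n)\log(8p^2M^2/\iota)}$, while the hypothesis $\delta_n \le C_2$ guarantees that $t$ is small enough to remain in the quadratic regime where the exponential bound is valid. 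I do not expect a genuine obstacle here, since the theorem is essentially a corollary of known Gaussian sample-covariance bounds; the only points requiring care are tracking the constants so that the sub-exponential parameter scales like $\sigma_{\max}$, and verifying that $\delta_n \le C_2$ indeed places $t$ in the sub-Gaussian branch of the Bernstein inequality.
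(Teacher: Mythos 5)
Your proposal is correct and is essentially the paper's own argument: the paper proves this theorem in one line by citing Lemma~1 of \citet{Ravikumar2011High} together with a union bound, and your entrywise sub-exponential (Bernstein-type) concentration for products of jointly Gaussian score coordinates, the union bound over the $2p^2M^2$ entries yielding the factor $8p^2M^2$, and the condition $\delta_n \leq C_2$ keeping the deviation in the sub-Gaussian branch are exactly the content of that lemma. No gaps.
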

\begin{proof}
The proof follows directly from Lemma 1 of \cite{Ravikumar2011High} and a union bound.
\end{proof}

With fully observed curves and known basis functions, it follows from Theorem~\ref{thm:proj-score-mat-case1} that \eqref{assump:req-SXSY} is satisfied for $\delta_n \asymp \sqrt{\log(p^2 M^2)/n}$ with high probability. As assumed in Section~\ref{sec:finiteDim} (and also in \citet{Qiao2015Functional}), when $\lambda^X_{jm'} =\lambda^Y_{jm'} = 0$ for all $j$ and $m' > M$ (where $M$ is allowed to grow with $n$), then $\nu_2(M)=0$, $\tau(M) = \nu_1(M)=\min_{(j,l)\in E_{\Delta}}D_{jl} > 0$, and $E_{\Delta} = E^{\pi}_{\Delta}$. We can recover $E_\Delta$ with high probability even in the high-dimensional setting, as long as
\[
\max \left\{  \frac{ sM^2\log(p^2M^2)\vert \Delta^M \vert_1^2 / ((\lambda^{\star}_{\min})^2 \tau^2) }{n} , \frac{sM^2\log(p^2M^2)/\lambda^{\star}_{\min}}{n} \right\} \rightarrow 0.
\] 
Even with an infinite number of positive eigenvalues, high-dimensional consistency is still possible for quickly increasing $\nu_1$ and quickly decaying $\nu_2$.

We then consider the case where the curves are fully observed, but we do not have any prior knowledge on which orthonormal function basis should be used. In this case, as discussed in Section~\ref{sec:choose-sub-fpca}, we recommend using the eigenfunctions of $K_{jj}(\cdot, *)=K^X_{jj}(\cdot, *)+K^Y_{jj}(\cdot, *)$ as basis functions. We use FPCA to estimate the eigenfuctions of $K_{jj}(\cdot, *)$ and make the following assumption.

\begin{assump}\label{assump:EigenAssump}
Let $\{\lambda_{jk}, \phi_{jk}(\cdot)\}$ be the eigenpairs of $K_{jj}(\cdot, *)=K^X_{jj}(\cdot, *)+K^Y_{jj}(\cdot, *)$, $j\in V$, and suppose that $\lambda_{jk}$ are non-increasing in $k$.
\begin{enumerate}[label=(\roman*)]
\item Suppose $\max_{j\in{V}}\sum_{k=1}^{\infty}\lambda_{jk}<\infty$ 
and assume that there exists a constant $\beta>1$ such that, 
for each $k\in \mathbb{N}$, 
$\lambda_{jk}\asymp{k^{-\beta}}$ and 
$d_{jk}\lambda_{jk}=O(k)$ uniformly in $j\in{V}$, 
where 
$d_{jk}=2\sqrt{2}\max\{(\lambda_{j(k-1)}-\lambda_{jk})^{-1},(\lambda_{jk}-\lambda_{j(k+1)})^{-1}\}$, $k\geq 2$,
and $d_{j1}=2\sqrt{2}(\lambda_{j1}-\lambda_{j2})^{-1}$.
\item For all $k$, 
$\phi_{jk}(\cdot)$'s are continuous on the compact set $\mathcal{T}$ and 
satisfy \begin{equation*}
    \max_{j\in{V}}\sup_{s\in{\mathcal{T}}}\sup_{k\geq{1}}|\phi_{jk}(s)|_\infty=O(1).
\end{equation*}
\end{enumerate}
\end{assump}
This assumption was used in \citet[Condition 1]{Qiao2015Functional}. We have the following result.
\begin{theorem}\label{thm:proj-score-mat-case2}
Suppose Assumption~\ref{assump:EigenAssump} holds and the basis functions are estimated using the FPCA of $K_{jj}(\cdot, *)$ with fully observed curves. Fix $\iota \in (0, 1]$. Suppose that $n$ is large enough so that 
\begin{equation*}
\delta_n = M^{1+\beta}\sqrt{ \frac{ \log \left( 2 C_2 p^2 M^2 / \iota  \right) }{ n} } \leq C_1,
\end{equation*}
for some universal constants $C_1,C_2>0$. Then \eqref{eq:req-SXSY} holds with probability at least $1-\iota$. Note that $C_1,C_2$ may be different constants from those of Theorem~\ref{thm:proj-score-mat-case1}.
\end{theorem}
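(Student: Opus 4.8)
The plan is to reduce to Theorem~\ref{thm:proj-score-mat-case1} by isolating the additional error incurred from estimating the eigenfunctions $\hat\phi_{jk}$ rather than using a known basis. Since the curves are fully observed, $\hat X_{ij} = X_{ij}$, so the only new error source relative to Theorem~\ref{thm:proj-score-mat-case1} is the discrepancy between the estimated score $\hat a^X_{ijk} = \int X_{ij}(t)\hat\phi_{jk}(t)\,dt$ and the true score $a^X_{ijk} = \int X_{ij}(t)\phi_{jk}(t)\,dt$. Writing a generic entry of $S^{X,M} - \Sigma^{X,M}$ as
\[
\frac{1}{n_X}\sum_i \hat a^X_{ijk}\hat a^X_{ilm} - \mathbb{E}\left[a^X_{ijk}a^X_{ilm}\right],
\]
I would split it into the sampling error $\frac{1}{n_X}\sum_i a^X_{ijk}a^X_{ilm} - \mathbb{E}[a^X_{ijk}a^X_{ilm}]$ and the FPCA error $\frac{1}{n_X}\sum_i(\hat a^X_{ijk}\hat a^X_{ilm} - a^X_{ijk}a^X_{ilm})$. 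The first is controlled exactly as in Theorem~\ref{thm:proj-score-mat-case1} via Lemma~1 of \citet{Ravikumar2011High} and a union bound, contributing an $O(\sqrt{\log(p^2M^2)/n})$ term that is dominated by the target $\delta_n$.

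For the FPCA error, using the identity $\hat a\hat a - aa = (\hat a - a)\hat a + a(\hat a - a)$ together with Cauchy--Schwarz gives $|\hat a^X_{ijk} - a^X_{ijk}| \leq \|X_{ij}\|\,\|\hat\phi_{jk} - \phi_{jk}\|$, so each FPCA-error entry is bounded by sums of terms of the form (empirical second moment of the curve/score) $\times \|\hat\phi_{jk} - \phi_{jk}\|$. I would then invoke the standard eigenfunction perturbation bound $\|\hat\phi_{jk} - \phi_{jk}\| \leq d_{jk}\|\hat K_{jj} - K_{jj}\|_{\text{HS}}$ (after resolving the sign ambiguity of the eigenfunctions), where $d_{jk}$ is exactly the eigengap quantity from Assumption~\ref{assump:EigenAssump} and $\|\cdot\|_{\text{HS}}$ upper-bounds the operator norm appearing in the classical bound. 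Under $\lambda_{jk} \asymp k^{-\beta}$ the gaps satisfy $\lambda_{j(k-1)} - \lambda_{jk} \asymp k^{-(\beta+1)}$, so $d_{jk} \asymp k^{\beta+1}$ and, since the entry involves indices $k,m \leq M$, $\max_{k\leq M} d_{jk} \asymp M^{\beta+1}$; this is precisely where the $M^{1+\beta}$ factor in $\delta_n$ originates.

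It remains to control $\|\hat K_{jj} - K_{jj}\|_{\text{HS}}$ and the curve norms $\|X_{ij}\|$ uniformly over $j \in V$. Because $\max_{j} \sum_k \lambda_{jk} < \infty$, the sample covariance operator concentrates and $\|\hat K_{jj} - K_{jj}\|_{\text{HS}} = O_p(n^{-1/2})$ with a Gaussian-process tail; the empirical curve-norm and score second moments are $O_p(1)$. Combining these bounds with a union bound over all $p^2M^2$ entries and over $j$ produces the stated $\delta_n = M^{1+\beta}\sqrt{\log(2C_2 p^2M^2/\iota)/n}$. I expect the main obstacle to be the FPCA error: one must propagate the eigenfunction perturbation through the quadratic structure of the score covariance while keeping the bound uniform in the entrywise ($|\cdot|_\infty$) sense over all $p^2M^2$ coordinates and in $j$, and simultaneously obtain a sufficiently sharp high-probability bound on $\|\hat K_{jj} - K_{jj}\|_{\text{HS}}$ for infinite-dimensional covariance operators. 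The continuity and uniform boundedness in Assumption~\ref{assump:EigenAssump}(ii), together with the uniform-in-$j$ eigengap control in Assumption~\ref{assump:EigenAssump}(i), are exactly what make this concentration uniform over $j$.
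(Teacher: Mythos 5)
Your proposal is correct and is essentially the same argument the paper relies on: the paper's proof of this theorem is a one-line citation of Theorem 1 of \cite{Qiao2015Functional} plus the triangle inequality $\Vert \hat{K}_{jj}-K_{jj}\Vert_{\text{HS}} \leq \Vert \hat{K}^X_{jj}-K^X_{jj}\Vert_{\text{HS}}+\Vert \hat{K}^Y_{jj}-K^Y_{jj}\Vert_{\text{HS}}$, and your decomposition---sampling error with true scores plus FPCA score error, Cauchy--Schwarz, the perturbation bound $\Vert\hat{\phi}_{jk}-\phi_{jk}\Vert \leq d_{jk}\Vert\hat{K}_{jj}-K_{jj}\Vert_{\text{HS}}$ (Lemma~\ref{lemma:EigenfuncEstBound}), Hilbert-space Bernstein concentration of $\hat{K}_{jj}$ (Lemma~\ref{lemma:crosscovfuncbound}), and a union bound over the $p^2M^2$ entries---is exactly the fully-observed specialization of the paper's proof of Theorem~\ref{Thm:ErrSamCovDis}, in which every term involving $\hat{g}_{ij}-g_{ij}$ vanishes and only your sampling and eigenfunction-perturbation terms survive. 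One small correction: $\lambda_{jk}\asymp k^{-\beta}$ alone does \emph{not} imply $\lambda_{j(k-1)}-\lambda_{jk}\asymp k^{-(\beta+1)}$, since a decay rate places no lower bound on consecutive eigengaps; the bound $d_{jk}=O(k^{1+\beta})$ must instead be read off directly from the assumed condition $d_{jk}\lambda_{jk}=O(k)$ in Assumption~\ref{assump:EigenAssump}(i)---which you also invoke, so the conclusion $\max_{k\leq M} d_{jk} = O(M^{1+\beta})$ and hence the stated $\delta_n$ are unaffected.
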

\begin{proof}
The proof follows directly from Theorem 1 of \cite{Qiao2015Functional} and the fact that $\Vert \hat{K}_{jj}(\cdot, *) - K_{jj}(\cdot, *) \Vert_{\text{HS}} \leq \Vert \hat{K}^X_{jj}(\cdot, *) - K^X_{jj}(\cdot, *) \Vert_{\text{HS}} + \Vert \hat{K}^Y_{jj}(\cdot, *) - K^Y_{jj}(\cdot, *) \Vert_{\text{HS}}$.
\end{proof}

It follows from Theorem~\ref{thm:proj-score-mat-case2} that \eqref{assump:req-SXSY} holds for $\delta_n \asymp M^{1+\beta}\sqrt{\log(p^2 M^2/)/n}$ with high probability. Compared to Theorem~\ref{thm:proj-score-mat-case1}, there is an additional $M^{1+\beta}$ term that arises from FPCA estimation error. Similarly, when $\lambda^X_{jm'} =\lambda^Y_{jm'} = 0$ for all $j$ and $m' > M $, we can recover $E_\Delta$ with high probability as long as
\[\max \left\{  \frac{ sM^{(4+2\beta)}\log(p^2M^2)\vert \Delta^M \vert_1^2 / ((\lambda^{\star}_{\min})^2 \tau^2) }{n} , \frac{sM^{(4+2\beta)}\log(p^2M^2)/\lambda^{\star}_{\min}}{n} \right\} \rightarrow 0.
\]

\subsubsection{Discretely-Observed Curves}

Finally, we discuss the case where the curves are only observed at discrete time points---possibly with measurement error. Following Chapter 1 of~\cite{kokoszka2017introduction}, we first estimate each curve from the available observations by basis expansion; then, we use the estimated curves to form empirical covariance functions from which we estimate the eigenfunctions using FPCA. The estimated eigenfunctions are then used to calculate the scores.

Recall the model for discretely observed functions given in~\eqref{eq:datagenerate}:
\[
h_{ijk}=g_{ij}(t_{ijk})+\epsilon_{ijk},
\]
where $g_{ij}$ denotes either $X_{ij}$ or $Y_{ij}$, $\epsilon_{ijk}$ are i.i.d.~Gaussian noise with mean $0$ and variance $\sigma^{2}_{0}$. Assume that $t_{ij1}< \dots< t_{ijT}$ for any $1\leq i \leq n$ and $1\leq j \leq p$. Note that we do not need $X$ and $Y$ to be observed at the same time points and we use $t_{ijk}$ to represent either $t^X_{ijk}$ or $t^Y_{ijk}$. Furthermore, recall that we first compute a least squares estimator of $X_{ij}(\cdot)$ and $Y_{ij}(\cdot)$ by projecting it onto the basis $b(\cdot) = \left(b_1(\cdot), \ldots, b_L(\cdot)\right)$.

First, we assume that as we increase the number of basis functions, we can approximate any function in $\mathbb{H}$ arbitrarily well.
\begin{assump}\label{assump:CONS}
 We assume that $\{b_l\}^{\infty}_{l=1}$ is a complete orthonormal system (CONS) \citep[See Definition 2.4.11 of ][]{Hsing2015Theoretical} of $\mathbb{H}$, that is, $\overline{{\rm Span}\left(\{b_l\}^{\infty}_{l=1}\right)}=\mathbb{H}$. 
\end{assump}
Assumption~\ref{assump:CONS} requires that the basis functions are orthonormal. When this assumption is violated---for example, when using the B-splines basis---we can always first use an orthonormalization process, such as Gram-Schmidt, to convert the basis to an orthonormal one. For B-splines, there are many algorithms that can efficiently provide orthonormalization \citep{liu2019splinets}.

To establish theoretical guarantees for the least squares estimator, we require smoothness in both the curves we are trying to estimate as well as the basis functions we use.
\begin{assump}\label{assump:smoothness}
We assume that the basis functions $\{b_{l}(\cdot)\}^{\infty}_{l=1}$ satisfy the following conditions:
\begin{equation}
D_{0,b} \coloneqq \sup_{l \geq 1} \sup_{t \in \mathcal{T}} \lvert b_{l}(t)  \rvert < \infty, \qquad D_{1,b}(l) \coloneqq \sup_{t \in \mathcal{T}} \lvert b^{\prime}_{l}(t)  \rvert < \infty, \qquad D_{1,b,L} \coloneqq \max_{1 \leq l \leq L} D_{1,b}(l).
\end{equation}
We also require that the curves $g_{ij}$ satisfy the following smoothness condition:
\begin{equation}\label{eq:asump4-cond1}
\max_{1\leq j \leq p} \sum^{\infty}_{m=1} \mathbb{E} \left[ \left( \langle g_{ij},b_m \rangle \right)^2  \right] D^2_{1,b}(m) < \infty.
\end{equation}
\end{assump}

To better understand Assumption~\ref{assump:smoothness}, we use the Fourier basis as an example. Let $\mathcal{T}=[0,1]$ and $b_m(t)=\sqrt{2}\cos(2\pi m t)$, $0\leq t \leq 1$ and $m \in \mathbb{N}$. Thus, $\{b_m(t)\}^{\infty}_{m=0}$ then constitutes an orthonormal basis of $\mathbb{H}=\mathcal{L}^2[0,1]$. We then have $b^{\prime}(t)=-2\sqrt{2} \pi m \sin(2\pi m t)$, $D_{0,b}=\sqrt{2}$, $D_{1,b}(m)=2\sqrt{2}\pi m$ and $D_{1,b,L}=2\sqrt{2}\pi L$. In this case, \eqref{eq:asump4-cond1} is equivalent to
\begin{equation*}
\max_{1\leq j \leq p}  \sum^{\infty}_{m=1} \mathbb{E} \left[ \left( \langle g_{ij},b_m \rangle \right)^2  \right] m^2 < \infty.
\end{equation*}
On the other hand, $g_{ij}(t)=\sum^{\infty}_{m=1} \langle g_{ij},b_m \rangle b_m(t)$ and $g^{\prime}_{ij}(t)=\sum^{\infty}_{m=1} \langle g_{ij},b_m \rangle b^{\prime}_m(t)$. Suppose that $\mathbb{E} \left[ \Vert g^{\prime}_{ij} \Vert^2 \right] < \infty$. Then
\begin{equation}\label{eq:smoothnessAssumptionEx}
\mathbb{E} \left[ \Vert g^{\prime}_{ij} \Vert^2 \right] = \sum^{\infty}_{m=1} \mathbb{E} \left[ \left( \langle g_{ij},b_m \rangle \right)^2  \right] \Vert b^{\prime}_m \Vert^2 \asymp \sum^{\infty}_{m=1} \mathbb{E} \left[ \left( \langle g_{ij},b_m \rangle \right)^2  \right] m^2.
\end{equation}
Therefore, $\max_{1\leq j \leq p}  \mathbb{E} \left[ \Vert g^{\prime}_{ij} \Vert^2 \right] < \infty$, which is a commonly used assumption in nonparameteric statistics (e.g., Section 7.2 of~\cite{wasserman2006all}), implies \eqref{eq:asump4-cond1}.

Finally, we require that each function be observed at time points that are ``evenly spaced.'' Formally, we require the following assumption.
\begin{assump}\label{assump:observdensity}
  The observation time points
  $\{t_{ijk}:1\leq i \leq n,1 \leq j \leq p,1 \leq k \leq T\}$ satisfy
  \begin{equation}
    \max_{1\leq i \leq n}\max_{1 \leq j \leq p}\max_{1 \leq k \leq T+1}\left\vert \frac{t_{ijk}-t_{ij(k-1)}}{\lvert \mathcal{T} \rvert} - \frac{1}{T}\right\vert \leq \frac{\zeta_0}{T^2},
  \end{equation}
  where $t_{ij0}$ and $t_{ij(T+1)}$ are endpoints of $\mathcal{T}$ for any $1\leq i \leq n$, $1 \leq j \leq p$, and $\zeta_0$ is a positive constant that does not depend on $i$ or $j$.
\end{assump}

Any $g_{ij}$ can be decomposed into $g_{ij}=g^{\shortparallel}_{ij}+g^{\bot}_{ij}$, where $g_{ij}^{\shortparallel} \in {\rm Span}(b)$ and $g_{ij}^{\bot} \in {\rm Span}(b)^{\bot}$. We denote the eigenvalues of the covariance operator of $g_{ij}$ as $\{\lambda_{jk}\}_{k \geq 1}$ and $\lambda_{j0}=\sum^{\infty}_{k=1}\lambda_{jk}$; and we denote the eigenvalues of the covariance operator of $g^{\bot}_{ij}$ as $\{\lambda^{\bot}_{jk}\}_{k \geq 1}$ and $\lambda^{\bot}_{j0}=\sum^{\infty}_{k=1}\lambda^{\bot}_{jk}$. Note that under Assumption \ref{assump:EigenAssump}, we have $\max_{1\leq j \leq p}\lambda_{j0}<\infty$. Let $1<\lambda_{0,\max}<\infty$ be a constant such that $\max_{1\leq j \leq p}\lambda_{j0}\leq \lambda_{0,\max}$.  Let $B_{ij}$ be the design matrix of $g_{ij}$ as defined in \eqref{eq:designmat} and let $\lambda^{B}_{\min}=\min_{1\leq i \leq n, 1 \leq j \leq p}\left\{ \lambda_{\min}(B^{\top}_{ij}B_{ij}) \right\}$. We define
\begin{gather}
\tilde{\psi}_{1}(T,L)=\frac{\sigma_0 L }{\sqrt{\lambda^{B}_{\min}}}, \quad 
\tilde{\psi}_2(T,L) =  \frac{ L^2 }{ (\lambda^B_{\min})^2 } \left( \lambda_0  \left( \tilde{c}_1 D^2_{1,b,L}  + \tilde{c_2} \right) \tilde{\psi}_3 (L) +  \tilde{c}_1 \tilde{\psi}_4(L)  \right), \\
\tilde{\psi}_{3}(L)\;=\;\max_{1\leq j \leq p}\left( \lambda^{\bot}_{j0}/\lambda_{j0} \right), 
\quad
\tilde{\psi}_4 (L) = \max_{1\leq j \leq p}  \sum_{m>L} \mathbb{E} \left[ \left( \langle g_{ij},b_m \rangle \right)^2  \right] D^2_{1,b}(m), \\
\Phi(T,L)=\min\left\{ 1/\tilde{\psi}_{1}(T,L), 1/\sqrt{\tilde{\psi}_{3}(L)} \right\},
\end{gather}
where $\tilde{c}_1=18 D^2_{0,b} (\zeta_{0}+1)^4 \vert \mathcal{T} \vert^2$ and $\tilde{c_2}=36 D^4_{0,b} (2\zeta_{0}+1)^2$.

We now use superscripts or subscripts to indicate the specific quantities for $X$ and $Y$. In this way, we define $L_X$, $L_Y$, $T_X$, $T_Y$, $\tilde{\psi}^{X}_1$-$\tilde{\psi}^{X}_4$, $\tilde{\psi}^{Y}_1$-$\tilde{\psi}^{Y}_4$, and $\Phi^{X},\Phi^{Y}$.
Furthermore, let $T=\min\{T_X,T_Y\}$, $L=\min\{L_X,L_Y\}$, $\bar{\psi}_k=\max \{ \tilde{\psi}^X_k,\tilde{\psi}^Y_k \}$, $k=1,\cdots,4$, $\bar{\Phi}=\min \{ \Phi^X, \Phi^Y \}$,
and let $n$, $\beta$ be defined as in Section~\ref{sec:thm-E-Delta}.

\begin{theorem}\label{Thm:ErrSamCovDis}
  Assume the observation model given in~\eqref{eq:datagenerate}. Suppose Assumption~\ref{assump:EigenAssump} holds and Assumptions~\ref{assump:CONS}-\ref{assump:observdensity} hold for both $X$ and $Y$. Suppose $T$ and $L$ are large enough so that 
  \begin{equation}\label{eq:curverecovercond}
    \begin{aligned}
      \bar{\psi}_{1}(T,L) \leq \gamma_1 \frac{\delta_n}{M^{1+\beta}},\quad
      \bar{\psi}_{3}(L) \leq \gamma_3 \frac{\delta_n^2}{M^{2+2\beta}}
    \end{aligned}
  \end{equation}
  where
  \begin{multline}\label{eq:ErrCovDis}
  \delta_n = \max \left\{ \frac{ M^{1+\beta} \log \left( 4 \bar{C}_1 n p/ \iota \right) }{ \bar{C}_2 \bar{\Phi} (T,L) }, M^{1+\beta} \sqrt{ \frac{1}{C_6} \bar{\psi}_2 (T, L) \log \left( \frac{ C_5 n p L }{ \iota } \right) },  \right.\\ 
  \left. M^{1+\beta} \sqrt{  \frac{ \log \left( 4 \bar{C}_3 p^2 M^2 / \iota \right) }{ \bar{C}_4 n } } \right\},
  \end{multline}
  $\bar{C}_1=\max\{C^{X}_1,C^{Y}_1\}$, $\bar{C}_2=\min\{C^{X}_2,C^{Y}_2\}$,  $\bar{C}_3=\max\{C^{X}_3,C^{Y}_3\}$,  $\bar{C}_4=\min\{C^{X}_4,C^{Y}_4\}$,
  $\bar{C}_5=\max\{C^{X}_5,C^{Y}_6\}$,
  $\bar{C}_6=\min\{C^{X}_6,C^{Y}_6\}$.
  $\gamma^{X}_k$, $\gamma^{Y}_k$, $k=1,3$, and $C^{X}_k$, $C^{Y}_k$, $k=1,\cdots,6$ are constants that do not depend on $n$, $p$, and $M$. Then
 \begin{equation}
 \max \left\{ \vert S^{X,M}-\Sigma^{X,M} \vert_{\infty}, \vert S^{Y,M}-\Sigma^{Y,M} \vert_{\infty} \right\} \leq \delta_n
 \end{equation}
holds with probability at least $1-\iota$.
\end{theorem}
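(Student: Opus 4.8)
The plan is to decompose the entrywise error $|S^{X,M}-\Sigma^{X,M}|_\infty$ into a finite-sample fluctuation and a curve-recovery contribution, and then reduce everything to a bound on the covariance-function error $\|\hat K_{jj}-K_{jj}\|_{\text{HS}}$ so that the FPCA error-propagation argument already underlying Theorem~\ref{thm:proj-score-mat-case2} can be reused. Concretely, I would introduce the ``oracle'' empirical covariance $\tilde K^X_{jj}(s,t)=\frac{1}{n_X}\sum_i X_{ij}(s)X_{ij}(t)$ built from the true (unobserved) curves and split
\begin{equation*}
\|\hat K^X_{jj}-K^X_{jj}\|_{\text{HS}} \;\le\; \|\hat K^X_{jj}-\tilde K^X_{jj}\|_{\text{HS}} + \|\tilde K^X_{jj}-K^X_{jj}\|_{\text{HS}}.
\end{equation*}
The second term is a pure finite-sample fluctuation of the empirical covariance of a Gaussian process and, after passing through FPCA perturbation bounds exactly as in the proof of Theorem~\ref{thm:proj-score-mat-case2}, contributes the third term $M^{1+\beta}\sqrt{\log(p^2M^2/\iota)/n}$ of $\delta_n$. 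The first term is the genuinely new ingredient and is where Assumptions~\ref{assump:CONS}--\ref{assump:observdensity} enter.

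First I would bound the least-squares curve-recovery error $\|\hat X_{ij}-X_{ij}\|$ for the model \eqref{eq:datagenerate}, splitting $\hat X_{ij}-X_{ij}$ into a stochastic part driven by the measurement noise $\epsilon_{ijk}$ and a deterministic bias part. The noise part is controlled by $\tilde\psi_1(T,L)=\sigma_0 L/\sqrt{\lambda^B_{\min}}$, using Assumption~\ref{assump:observdensity} to lower-bound $\lambda^B_{\min}=\min_{i,j}\lambda_{\min}(B_{ij}^\top B_{ij})$ through the evenly-spaced design together with a Gaussian/chi-square tail bound for the quadratic form in $\epsilon$. The bias part splits further into the energy of $g_{ij}$ lying outside $\mathrm{Span}(b)$---captured by $\tilde\psi_3(L)=\max_j \lambda^\bot_{j0}/\lambda_{j0}$---and the discretization error of the in-span component, controlled via the smoothness quantities $D_{1,b,L}$ and $\tilde\psi_4(L)$ of Assumption~\ref{assump:smoothness}; these assemble into $\tilde\psi_2(T,L)$. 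Using the identity $\hat X_{ij}(s)\hat X_{ij}(t)-X_{ij}(s)X_{ij}(t)=(\hat X_{ij}(s)-X_{ij}(s))\hat X_{ij}(t)+X_{ij}(s)(\hat X_{ij}(t)-X_{ij}(t))$ and Cauchy--Schwarz, I would then convert the curve error into a bound on $\|\hat K^X_{jj}-\tilde K^X_{jj}\|_{\text{HS}}$ expressed through $\bar\psi_1$, $\bar\psi_2$, and $\bar\psi_3$.

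Next I would feed the resulting bound on $\|\hat K^X_{jj}-K^X_{jj}\|_{\text{HS}}$ into the FPCA perturbation machinery of Theorem~\ref{thm:proj-score-mat-case2}, which converts a covariance-function error of order $\eta$ into an entrywise score-covariance error of order $M^{1+\beta}\eta$. The $M^{1+\beta}$ amplification arises because the perturbation of the $m$-th eigenfunction is weighted by the inverse eigenvalue gap $d_{jm}$, and summing the first $M$ contributions under $\lambda_{jk}\asymp k^{-\beta}$ (Assumption~\ref{assump:EigenAssump}) produces the $M^{1+\beta}$ growth; this is exactly why the hypotheses \eqref{eq:curverecovercond} require $\bar\psi_1$ and $\bar\psi_3$ to be small relative to $\delta_n/M^{1+\beta}$ and $\delta_n^2/M^{2+2\beta}$. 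Collecting the noise-driven curve error (the $\bar\Phi$ term, with $1/\bar\Phi=\max\{\bar\psi_1,\sqrt{\bar\psi_3}\}$), the bias-driven curve error (the $\bar\psi_2$ term), and the finite-sample fluctuation, and bounding their sum by the maximum up to constants absorbed into $\bar C_1,\dots,\bar C_6$, yields the stated $\delta_n$ in \eqref{eq:ErrCovDis}. A union bound over the $p$ nodes, the $M$ retained components, the $n$ curves, and the $p^2M^2$ entries of the two covariance matrices then gives the claim with probability at least $1-\iota$.

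I expect the main obstacle to be Step~1 together with its propagation through FPCA: cleanly separating the noise and bias contributions to the curve error while keeping the dependence on $T$, $L$, and $\lambda^B_{\min}$ explicit, and then ensuring the eigenvalue-gap bookkeeping (the $d_{jk}$ factors) produces exactly the $M^{1+\beta}$ factor rather than a worse power of $M$. Because Davis--Kahan-type eigenfunction perturbation bounds must be applied to $\hat K_{jj}$, whose error already contains the curve-recovery terms, the sampling error and the discretization error interact and must be tracked jointly rather than treated as cleanly additive.
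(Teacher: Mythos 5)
Your proposal follows essentially the same route as the paper's proof: your curve-recovery step with the noise/bias split into $\tilde{\psi}_1$, $\tilde{\psi}_2$, $\tilde{\psi}_3$, $\tilde{\psi}_4$ is exactly Theorem~\ref{Thm:approxerrsinglefunc}, your oracle-covariance split $\Vert \hat{K}^X_{jj}-\tilde{K}^X_{jj}\Vert_{\text{HS}}+\Vert \tilde{K}^X_{jj}-K^X_{jj}\Vert_{\text{HS}}$ is the paper's event-$A_6$ argument, and the Bosq-type perturbation bound $\Vert \hat{\phi}_{jk}-\phi_{jk}\Vert \leq d_{jk}\Vert \hat{K}_{jj}-K_{jj}\Vert_{\text{HS}}$ with $d_{jk}\asymp k^{1+\beta}$, combined with a union bound over samples, nodes, and entries, yields $\delta_n$ just as in the paper (which merely organizes the cross terms you leave implicit into sixteen pieces $I_1,\dots,I_{16}$ controlled on events $A_1$--$A_5$). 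One small correction of intuition: the $M^{1+\beta}$ factor arises from the worst single inverse eigenvalue gap $d_{jM}\asymp M^{1+\beta}$ under the entrywise max norm, not from summing the first $M$ contributions, which would produce a strictly worse power of $M$.
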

\begin{proof}
See Appendix~\ref{sec:proof-thm-ErrSamCovDis}.
\end{proof}

The rate $\delta_n$ in Theorem~\ref{Thm:ErrSamCovDis} is composed of three terms. The first two terms correspond to the error incurred by measuring the curves at discrete locations and are approximation errors. The third term, which also appears in Theorem~\ref{thm:proj-score-mat-case2}, is the sampling error.

We provide some insight into how $\tilde{\psi}_{1}$, $\tilde{\psi}_{2}$, $\tilde{\psi}_{3}$, and $\tilde{\psi}_{4}$ depend on $T$ and $L$. Note that we choose an orthonormal basis. Then, as $T \to \infty$, we have
\begin{equation*}
\begin{aligned}
  \frac{1}{T}B^{\top}_{ij}B_{ij}&=
  \frac{1}{T}\sum^{T}_{k=1}
  \left[
\begin{matrix}
b^{2}_{1}(t_{ijk}) & b_{1}(t_{ijk})b_{2}(t_{ijk}) & \cdots & b_{1}(t_{ijk})b_{L}(t_{ijk})\\
\vdots & \vdots & \ddots & \vdots \\
b_{L}(t_{ijk})b_{1}(t_{ijk}) & 
b_{L}(t_{ijk})b_{2}(t_{ijk}) & 
\cdots & b^{2}_{L}(t_{ijk})
\end{matrix}
\right]\\
&\approx \left[
\begin{matrix}
\Vert b_1 \Vert^2 & \langle b_1, b_2 \rangle & \cdots & \langle b_1, b_L \rangle \\
\vdots & \vdots &  & \vdots \\
\langle b_L, b_1 \rangle & \langle b_L, b_2 \rangle & \cdots & \Vert b_L \Vert^2
\end{matrix}
\right]\\
&=\left[
\begin{matrix}
1 & 0 & \cdots & 0 \\
\vdots & \vdots &  & \vdots \\
0 & 0 & \cdots & 1
\end{matrix}
\right].
\end{aligned}
\end{equation*}
Thus, as $T$ grows, we expect $\lambda_{\min}(B^{\top}_{ij}B_{ij}) \approx T$ for any $1\leq j \leq p$ and $1 \leq i \leq n$. This implies that $\tilde{\psi}_1(T,L) \approx L/\sqrt{T}$ and $\tilde{\psi}_2(T,L) \approx \left( D^2_{1,b,L}  \tilde{\psi}_3(L) + \tilde{\psi}_4(L) \right) L^2 /T^2$. Furthermore, $D^2_{1,b,L} \asymp L^2$ when we use the Fourier basis.

To understand $\tilde{\psi}_3(L)$ and $\tilde{\psi}_4(L)$, note that $\lambda^{\bot}_{j0}=\mathbb{E}[ \Vert g^{\bot}_{ij} \Vert^2 ]=\mathbb{E}_{g_{ij}}[ \mathbb{E}_{\epsilon}[ \Vert g^{\bot}_{ij} \Vert^2 \mid g_{ij} ] ]$. Under Assumption~\ref{assump:CONS}, $\lambda^{\bot}_{j0}\to 0$ as $L\to \infty$; however, the speed at which $\lambda^{\bot}_{j0}$ goes to zero will depend on $\mathbb{H}$ and the choice of the basis functions. For example, for a fixed $g_{ij}$, by well-known approximation results (see, for example, \citet{Barron1991Approximation}), if $g_{ij}$ has $r$-th continuous and square integrable derivatives, $\Vert g^{\bot}_{ij} \Vert^2 \approx 1/L^r$ for frequently used bases such as the Legendre polynomials, B-splines, and Fourier basis. Thus, roughly speaking, we should have $\tilde{\psi}_3(L) \approx 1/L^r$ when $\mathbb{H}$ is a Sobolev space of order $r$.  When $g_{ij}$ is an infinitely differentiable function and all derivatives can be uniformly bounded, then $\Vert g^{\bot}_{ij} \Vert^2 \approx \exp(-L)$ and thus $\tilde{\psi}_3(L) \approx \exp(-L)$. Similarly, we have $\tilde{\psi}_4(L) \approx 1 / L^{r-1}$ if $g_{ij}$ has $r$-th continuous and square integrable derivatives; and $\tilde{\psi}_4(L) \approx \exp (-L)$ if $g_{ij}$ is an infinitely differentiable function and all derivatives can be uniformly bounded.

To roughly show how $M$, $T$, $L$ and $n$ may co-vary, we assume that $p$ and $s$ are fixed and all elements of $\mathbb{H}$ have $r$-th continuous and square integrable derivatives. Then FuDGE will recover the differential graph with high probability if $M \ll n^{1/(2+2\beta)}$, $\sqrt{T}/L \gg M^{1+\beta}$, $T \gg L^{2-r/2}$, and $L \gg M^{(1+\beta)/r}$.

As a reviewer pointed out, the noise term in~\eqref{eq:datagenerate} will create a nugget effect in the covariance, meaning that $\text{Var}(h_{ijk})=\text{Var}(g_{ij}(t_{ijk}))+\sigma^2_0$. This nugget effect leads to bias in the estimated eigenvalues (variances of the scores). In our theorem, the nugget effect is reflected by $\sigma_0$ in $\tilde{\psi}_1$. When $\sigma_0$ is large, adding a regularization term when estimating eigenvalues can improve the estimation of FPCA scores and their covariance matrices (see Chapter 6 of~\cite{Hsing2015Theoretical}). However, adding a regularization term increases the number of tuning parameters that need to be chosen. An alternative approach to estimating the covariance matrix is through local polynomial regression \citep{Zhang2016sparse}. Since the focus of the paper is on the estimation of differential functional graphical models, we do not explore ways to improve the estimation of FPCA scores. However, we recognize that there are alternative approaches that can perform better in some cases.

\section{Joint Functional Graphical Lasso}\label{sec:JFGL}

In this section, we introduce two variants of a \textit{Joint Functional Graphical Lasso (JFGL)} estimator, which we compare empirically with our proposed FuDGE procedure in Section~\ref{subsec:Simulation}. \citet{Danaher2011Joint} proposed the \textit{Joint Graphical Lasso (JGL)} to estimate multiple related Gaussian graphical models from different classes simultaneously. Given $Q \geq 2$ data sets, where the $q$-th data set consists of $n_q$ independent random vectors drawn from $N(\mu_{q},\Sigma_{q})$, JGL simultaneously estimates $\{\Theta\}=\{\Theta^{(1)},\Theta^{(2)},\dots,\Theta^{(Q)}\}$, where $\Theta^{(q)}=\Sigma^{-1}_{q}$ is the precision matrix of the $q$-th data set. Specifically, JGL constructs an estimator $\{\hat{\Theta}\}=\{\hat{\Theta}^{(1)},\hat{\Theta}^{(2)},\dots,\hat{\Theta}^{(Q)}\}$ by solving the penalized log-likelihood:
\begin{equation}\label{eq:JGLobj}
\{\hat{\Theta}\}=\argmin_{\{\Theta\}}\left\{ -\sum^{Q}_{q=1}n_q\left( \log\text{det}\Theta^{(q)}-\text{trace}\left(S^{(q)}\Theta^{(q)}\right) \right)+P(\{\Theta\}) \right\},
\end{equation}
where $S^{(q)}$ is the sample covariance of the $q$-th data set and $P(\{\Theta\})$ is a penalty function. The \textit{fused graphical lasso (FGL)} is obtained by setting
\begin{equation}\label{eq:FGLPen}
P(\{\Theta\})=\lambda_1\sum^{Q}_{q=1}\sum_{i\neq j}\vert \Theta^{(q)}_{ij} \vert+\lambda_2\sum_{q<q^{\prime}}\sum_{i \neq j}\vert \Theta^{(q)}_{ij}-\Theta^{(q^{\prime})}_{ij}  \vert,
\end{equation}
while the \textit{group graphical lasso (GGL)} is obtained by setting
\begin{equation}\label{eq:GGLPen}
P(\{\Theta\})=\lambda_1\sum^{Q}_{q=1}\sum_{i\neq j}\vert \Theta^{(q)}_{ij} \vert+\lambda_2\sum_{i\neq j}\sqrt{\sum^{Q}_{q=1}\left(\Theta^{(q)}_{ij}\right)^2}.
\end{equation}
The terms $\lambda_1$ and $\lambda_2$ are non-negative tuning parameters, while $\Theta^{(q)}_{ij}$ denotes the $(i,j)$-th entry of $\Theta^{(q)}$. For both penalties, the first term is the lasso penalty, which encourages sparsity for the off-diagonal entries of all precision matrices; however, FGL and GGL differ in the second term. For FGL, the second term encourages the off-diagonal entries of precision matrices among all classes to be similar, which means that it encourages not only a similar network structure but also similar edge values. For GGL, the second term is a group lasso penalty, which encourages the support of the precision matrices to be similar but allows the specific values to differ. See \citet{Tsai2021Joint} for a recent survey of joint estimation procedures for joint Gaussian graphical models.

A similar approach can be used to estimate the precision matrix of the score vectors. Unlike the direct estimation procedure proposed in Section~\ref{sec:FuDGE}, we could first estimate $\hat{\Theta}^{X,M}$ and $\hat{\Theta}^{Y,M}$ using a joint graphical lasso objective, and then take the difference to estimate $\Delta$.

In the functional graphical model setting, we are interested in the block sparsity, so we modify the entry-wise penalties to a block-wise penalty. Specifically, we propose to solve the objective function in \eqref{eq:JGLobj}, where $S^{(q)}$ and $\Theta^{(q)}$ denote the sample covariance and the estimated precision of the projection scores for the $q$-th group. Note that now $S^{(q)}$, $\Theta^{(q)}$ and $\hat{\Theta}^{(q)}$, $q=1,\ldots,Q$ are all $pM \times pM$ matrices. Similarly to the GGL and FGL procedures, we define the \textit{Grouped Functional Graphical Lasso (GFGL)} and \textit{Fused Functional Graphical Lasso (FFGL)} penalties for functional graphs. Specifically, letting $\Theta^{(q)}_{jl}$ denote the $(j,l)$-th $M \times M$ block matrix, the GFGL penalty is
\begin{equation}\label{eq:GFGLPen}
P(\{\Theta\})=\lambda_1\sum^{Q}_{q=1}\sum_{j\neq l}\Vert \Theta^{(q)}_{jl} \Vert_{\text{F}}+\lambda_2\sum_{j\neq l}\sqrt{\sum^{Q}_{q=1}\Vert\Theta^{(q)}_{jl}\Vert^{2}_{\text{F}}},
\end{equation}
where $\lambda_1$ and $\lambda_2$ are non-negative tuning parameters. The FFGL penalty can be defined in two ways. The first way is to use the Frobenius norm for the second term:
\begin{equation}\label{eq:FFGLPen}
P(\{\Theta\})=\lambda_1\sum^{Q}_{q=1}\sum_{j\neq l}\Vert \Theta^{(q)}_{jl} \Vert_{\text{F}}+\lambda_2\sum_{q<q^{\prime}}\sum_{j,l}\Vert \Theta^{(q)}_{jl}-\Theta^{(q^{\prime})}_{jl} \Vert_{\text{F}}.
\end{equation}
The second way is to keep the element-wise $L_1$ norm as in FGL:
\begin{equation}\label{eq:FFGL2Pen}
P(\{\Theta\})=\lambda_1\sum^{Q}_{q=1}\sum_{j\neq l}\Vert \Theta^{(q)}_{jl} \Vert_{\text{F}}+\lambda_2\sum_{q<q^{\prime}}\sum_{j,l}\vert \Theta^{(q)}_{jl}-\Theta^{(q^{\prime})}_{jl} \vert_1,
\end{equation}
where $\lambda_1$ and $\lambda_2$ are non-negative tuning parameters. 

The Joint Functional Graphical Lasso accommodates an arbitrary $Q$. However, when estimating the functional differential graph, we set $Q=2$. We will refer to \eqref{eq:FFGLPen} as FFGL and to \eqref{eq:FFGL2Pen} as FFGL2. The algorithms to solve GFGL, FFGL, and FFGL2 are given in the Appendix~\ref{apex:opt_alg}.

\section{Experiments}\label{sec:experiments}

We examine the performance of FuDGE using both simulations and a real data set.\footnote{Code to replicate the simulations is available at \url{https://github.com/boxinz17/FuDGE}.}

\subsection{Simulations}\label{subsec:Simulation}
Given a graph $G_X$, we generate samples of $X$ such that $X_{ij}(t)=b^{\prime}(t)^{\top}\delta^{X}_{ij}$. The coefficients $\delta^{X}_{i}=((\delta^{X}_{i1})^{\top},\ldots,(\delta^{X}_{ip})^{\top})^{\top}\in{\mathbb{R}^{mp}}$ are drawn from $N\left(0, (\Omega^X)^{-1}\right)$ where $\Omega_X$ is described below. In all cases, $b^{\prime}(t)$ is an $m$-dimensional basis with disjoint support over $[0,1]$ such that for $k=1, \ldots m$:
\begin{equation}\label{eq:FunGeneration}
b^{\prime}_{k}(t)=
\begin{cases}
\cos\left(10\pi\left(x-(2k-1)/10\right)\right)+1 & \text{if }  (k-1)/m\leq{x}<k/m; \\
0 & \text{otherwise}.
\end{cases}
\end{equation}
To generate noisy observations at discrete time points, we sample data
\begin{equation*}
h^{X}_{ijk}=X_{ij}(t_{k})+e_{ijk},\quad e_{ijk}\sim N(0,0.5^2),
\end{equation*}
for $200$ evenly spaced time points $0 = t_{1} \leq \ldots \leq t_{200}= 1$. $Y_{ij}(t)$ and $h^{Y}_{ijk}$ are sampled in an analogous procedure. We use $m=5$ for the experiments below, except for the simulation, where we explore the effect of $m$ on empirical performance.

We consider three different simulation settings for the construction of $G_X$ and $G_Y$. In each setting, we let $n_X=n_Y=100$ and $p=30,60,90, 120$, and replicate the procedure 30 times for each $p$ and the model setting.

\textbf{Model 1:} This model is similar to the setting considered in \cite{zhao2014direct}, but modified for the functional case. We generate the support of $\Omega^{X}$ according to a graph with $p(p-1)/10$ edges and a power law degree distribution with an expected power parameter of 2. Although the graph is sparse with only 20\% of all possible edges present, the power-law structure mimics certain real-world graphs by creating hub nodes with a large degree \citep{newman2003structure}. For each non-zero block, we set $\Omega^{X}_{jl}=\delta^{\prime}I_{5}$, where $\delta^{\prime}$ is sampled uniformly from $\pm [0.2, 0.5]$. To ensure positive definiteness, we further scale each off-diagonal block by $1/2,1/3,1/4,1/5$ for $p=30, 60, 90, 120$ respectively. Each diagonal element of $\Omega^X$ is set to $1$ and the matrix is symmetrized by averaging it with its transpose.  To get $\Omega^{Y}$, we first select the top 2 hub nodes in $G_X$ (i.e., the nodes with top 2 largest degree), and for each hub node we select the top (by magnitude) 20\% of edges. For each selected edge, we set $\Omega^{Y}_ {jl}=\Omega^{X}_{jl}+W$ where $W_{k k^{\prime}}=0$ for $|k-k^{\prime}|\leq{2}$, and $W_{k k^{\prime}}=c$ otherwise, where $c$ is generated the same way as $\delta^{\prime}$.  For all other blocks, $\Omega^{Y}_{jl} = \Omega^{X}_{jl}$.

\textbf{Model 2:} We first generate a tridiagonal block matrix $\Omega^{*}_{X}$  with $\Omega^{*}_{X, jj}=I_5$, $\Omega^{*}_{X,j,j+1}=\Omega^{*}_{X,j+1,j}=0.6I_5$,  and $\Omega^{*}_{X,j,j+2}=\Omega^{*}_{X,j+2,j}=0.4I_5$ for $j=1,\ldots,p$.  All other blocks are set to 0. We form $G_Y$ by adding four edges to $G_X$. Specifically, we first let $\Omega^{*}_{Y,jl}=\Omega^{*}_{X,jl}$ for all blocks, then for $j = 1, 2, 3, 4$, we set $\Omega^{*}_{Y,j,j+3}=\Omega^{*}_{Y, j+3,j}=W$, where $W_{k k^{\prime}}=0.1$ for all $1 \leq k,k^{\prime} \leq M$. Finally, we set $\Omega^{X}=\Omega^{*}_{X}+\delta I$, $\Omega^{Y}=\Omega^{*}_{Y}+\delta I$, where $\delta=\max\left\{|\min(\lambda_{\min}(\Omega^{*}_{X}),0)|, |\min(\lambda_{\min}(\Omega^{*}_{Y}),0)|\right\} + 0.05$.

\textbf{Model 3:} We generate $\Omega^{*}_{X}$ according to an Erd\"{o}s-R\'{e}nyi graph. We first set $\Omega^{*}_{X,jj}=I_5$.  With probability $.8$, we set $\Omega^{*}_{X,jl}=\Omega^{*}_{X,lj}=0.1 I_5$, and set it to $0$ otherwise.  Thus, we expect 80\% of all possible edges to be present. Then we form $G_Y$ by randomly adding $s$ new edges to $G_X$, where $s=3$ for $p=30$, $s=4$ for $p=60$, $s=5$ for $p=90$, and $s=6$ for $p=120$. We set each corresponding block as $\Omega^{*}_{Y,jl} = W$, where $W_{k k^{\prime}}=0$ when $|k-k^{\prime}|\leq{1}$ and $W_{k k^{\prime}}=c$ otherwise.  We let $c=2/5$ for $p=30$, $c=4/15$ for $p=60$, $c=1/5$ for $p=90$, and $c=4/25$ for $p=120$.  Finally, we set $\Omega^{X}=\Omega^{*}_{X}+\delta I$, $\Omega^{Y}=\Omega^{*}_{Y}+\delta I$, where $\delta=\max\left\{|\min(\lambda_{\min}(\Omega^{*}_{X}),0)|,  |\min(\lambda_{\min}(\Omega^{*}_{Y}),0)|\right\} + 0.05$.

\begin{figure}[t]
	\centering
	\includegraphics[width=0.9\linewidth]{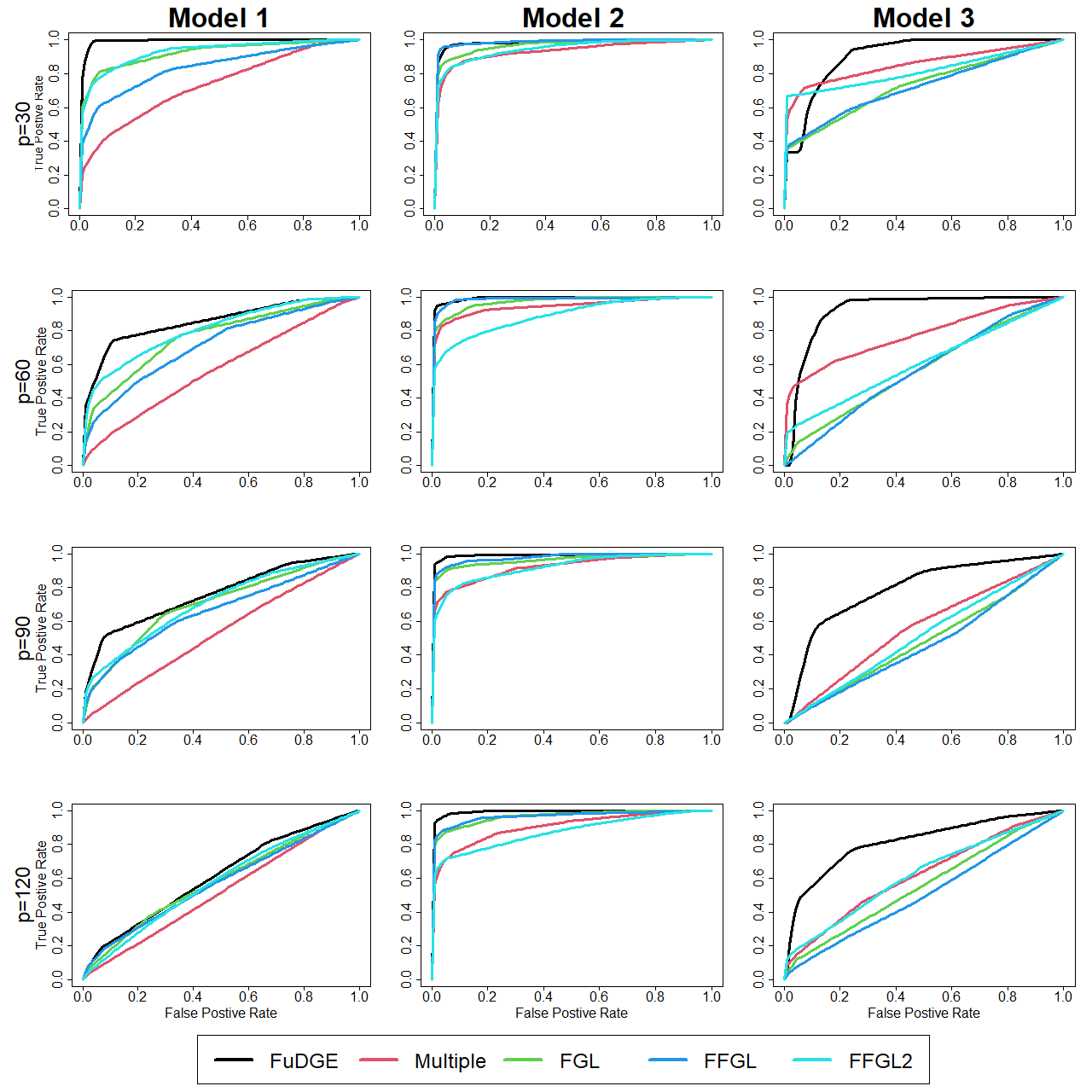}
	\caption{Average ROC curves across 30 simulations. Different columns correspond to different models, different rows correspond to different dimensions.}
	\label{fig:ROC}
\end{figure}

We compare FuDGE with four competing methods. The first competing method (denoted \emph{multiple} in Figure~\ref{fig:ROC}) ignores the functional nature of the data. We select 15 equally spaced time points and at each time point implement a direct difference estimation procedure \citep{zhao2014direct} to estimate the graph at that time point. Specifically, for each $t$, $X_{i}(t)$ and $Y_{i}(t)$ are simply $p$-dimensional random vectors, and we use their sample covariances in \eqref{eq:objectivefunc} to obtain a $p\times p$ matrix $\hat{\Delta}$. This produces 15 differential graphs, and we use a majority vote to form a single differential graph. The ROC curve is obtained by changing the $L_1$ penalty, $\lambda_{n}$, used for all time points.

The other three competing methods all estimate two functional graphical models using either the Joint Graphical Lasso or the Functional Joint Graphical Lasso introduced in Section~\ref{sec:JFGL}. For each method, we first estimate the sample covariances of the FPCA scores for $X$ and $Y$. The second competing method (denoted \emph{FGL}) ignores the block structure in precision matrices and applies the fused graphical lasso method directly. The third and fourth competing methods take into account the block structure and apply FFGL and FFGL2 defined in Section~\ref{sec:JFGL}. To draw an ROC curve, we follow the same approach as in \cite{zhao2014direct}. We first fix $\lambda_1=0.1$, which controls the overall sparsity in each graph; then we form an ROC curve by varying $\lambda_2$, which controls the similarity between two graphs.

For each setting and method, the ROC curve averaged across the $30$
replications is shown in Figure~\ref{fig:ROC}. We see that FuDGE clearly has the best overall performance in recovering the support of the differential graph for all cases. We also note that explicit consideration of block structure in the joint graphical methods does not seem to make a substantial difference as the performance of FGL is comparable to FFGL and FFGL2.

\textbf{The effect of the number of basis functions:} To examine how the accuracy of the estimation is associated with the dimension of the functional data, we repeat the experiment under Model 1 with $p=30$ and vary the number of basis functions used to generate the data in \eqref{eq:FunGeneration}. In each case, the number of principal components selected by cross-validation is $M =4$. In Figure~\ref{fig:ROC_basis_change}, we see that as the gap between the true dimension $m$ and the number of dimensions used $M$ increases, the performance of FuDGE degrades slightly, but remains relatively robust. This is because the FPCA procedure is data adaptive and produces an eigenfunction basis that approximates the true functions well with a relatively small number of basis functions.

\begin{figure}[t]
	\centering
	\includegraphics[width=0.5\linewidth]{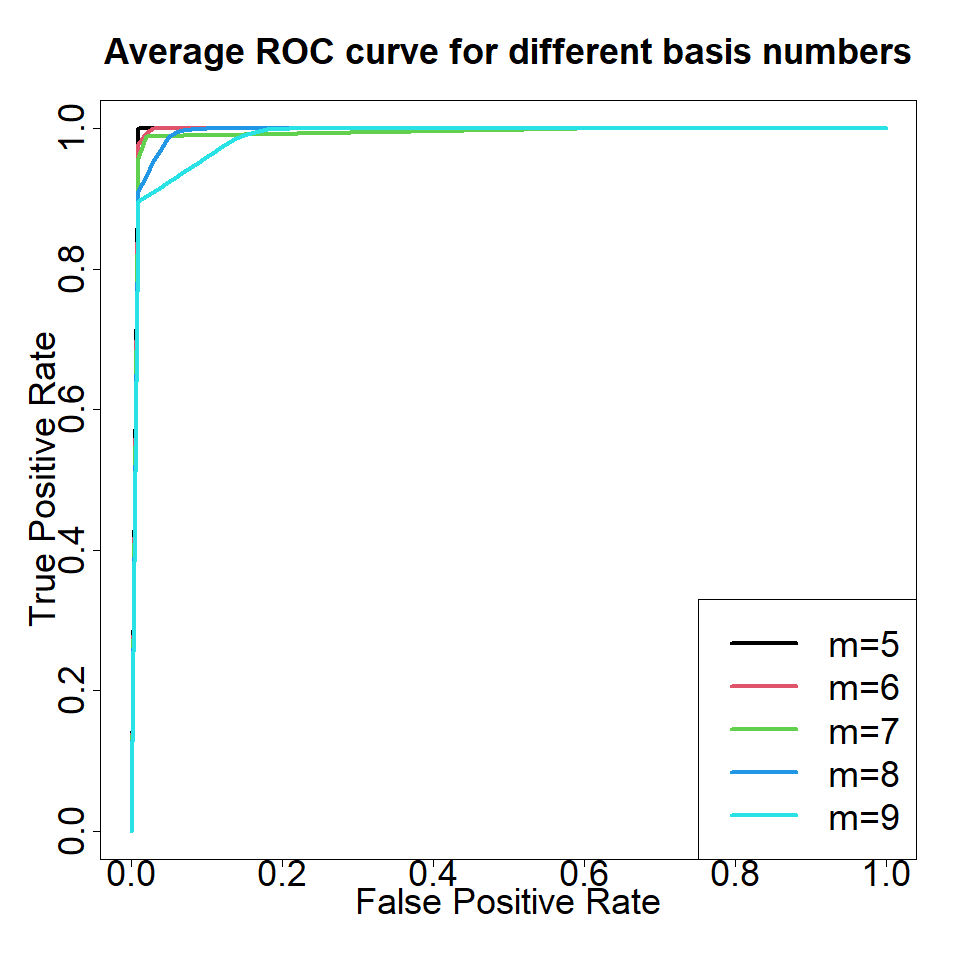}
	\caption{ROC curves for Model 1 with $p=30$ and changing number of basis functions~$m$. Each curve is drawn by averaging across 30 simulations. The number of eigenfunctions, $M$, selected by cross-validation is 4 in each replication.}
	\label{fig:ROC_basis_change}
\end{figure}

\subsection{Neuroscience Application}\label{subsec:EEG}
We apply our method to electroencephalogram (EEG) data obtained from a study \citep{Zhang1995Event, Ingber1997Statistical}, which included 122 total subjects; 77 individuals with alcohol use disorder (AUD) and 45 in the control group. Specifically, the EEG data was measured by placing $p=64$ electrodes at various locations on the subject's scalp and measuring voltage values over time. We follow the preprocessing procedure in \cite{knyazev2007motivation} and \cite{zhu2016bayesian}, which filters the EEG signals at $\alpha$ frequency bands between 8 and 12.5 Hz.

\cite{Qiao2015Functional} estimate separate functional graphs for each group, but we directly estimate the differential graph using FuDGE. We choose $\lambda_{n}$ so that the estimated differential graph has approximately 1\% of possible edges.  The estimated edges of the differential graph are shown in Figure~\ref{fig:EEG}.

In this setting, an edge in the differential graph suggests that the communication pattern between two different regions of the brain may be affected by alcohol use disorder. However, the differential graph does not exactly indicate how the communication pattern has changed. For example, the edge between P4 and P6 suggests that AUD affects the communication pattern between those two regions; however, it could be that these two regions are (conditionally) associated with the control group, but not with the AUD group or vice versa. It could also be that the two regions are (conditionally) associated in both groups, but the conditional covariance is different. However, many interesting observations can be gleaned from the results and may generate interesting hypotheses that could be investigated more thoroughly in an experimental setting.

\begin{figure}[t]
	\centering
	\includegraphics[width=0.7\linewidth]{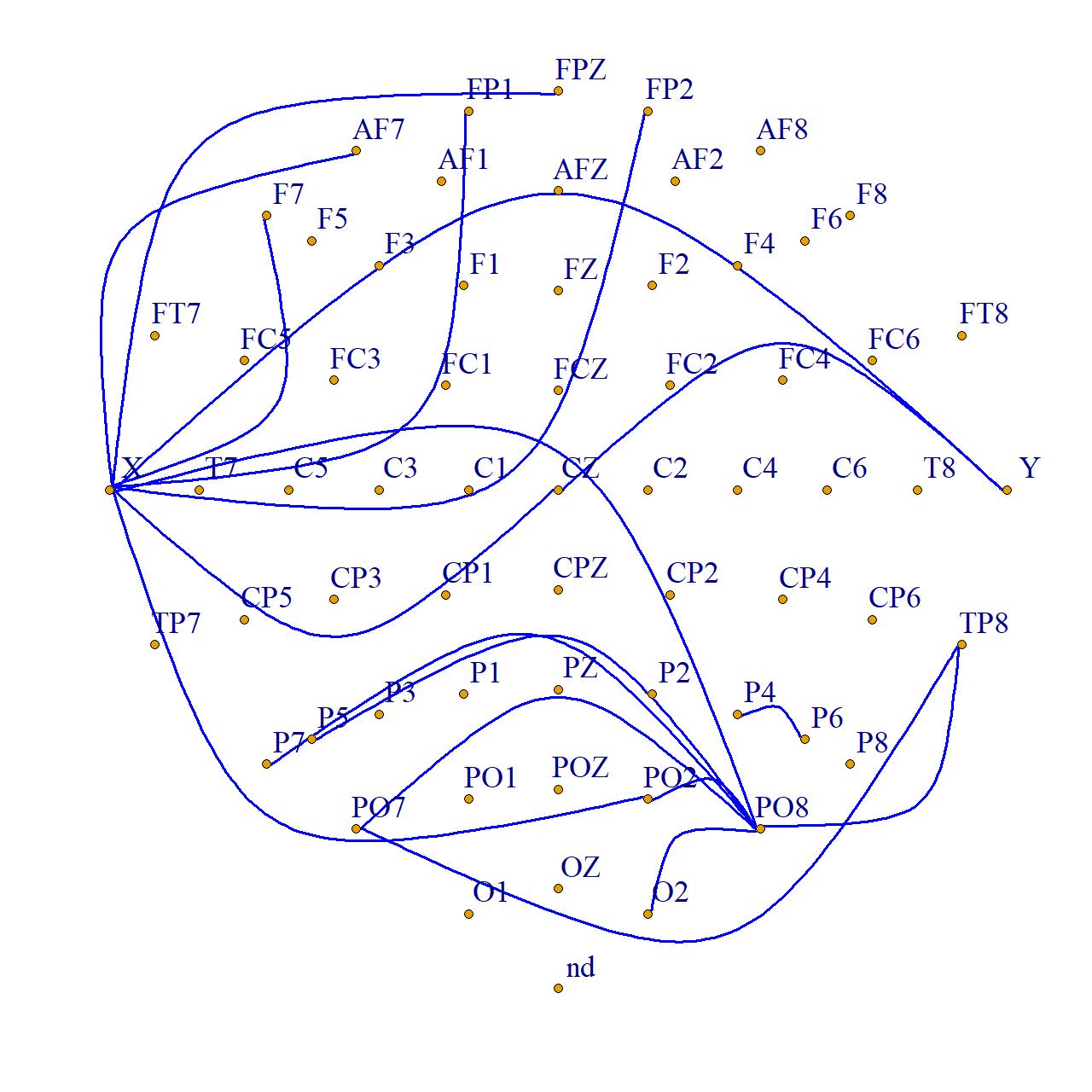}
	\caption{Estimated differential graph for EEG data. The anterior region is the top of the figure and the posterior region is the bottom of the figure.}
	\label{fig:EEG}
\end{figure}

We give two specific observations. First, edges are generally between nodes located in the same region---either the anterior region or the posterior region---and there is no edge that crosses between regions. This observation is consistent with the result in \cite{Qiao2015Functional} where there are no connections between the anterior and posterior regions for both groups. We also note that electrode X, lying in the middle left region, has a high degree in the estimated differential graph. Although there is no direct connection between the anterior and posterior regions, this region may play a role in helping the two parts communicate and may be greatly affected by AUD. Similarly, P08 in the anterior region also has a high degree and is connected to other nodes in the anterior region, which may indicate that this region can be an information exchange center for the anterior regions, which, at the same time, may be heavily affected by AUD.

\section{Discussion}

We proposed a method to directly estimate the differential graph for functional graphical models. In certain settings, direct estimation allows the differential graph to be recovered consistently, even if each underlying graph cannot be consistently recovered.  Experiments with simulated data also show that preserving the functional nature of the data rather than treating the data as multivariate scalars can also result in better estimation of the differential graph.

A key step in the procedure is to first represent the functions with an $M$-dimensional basis using FPCA. Definition~\ref{def:DGO} ensures that there exists some $M$ large enough so that the signal, $\nu_1(M)$, is larger than the bias, $\nu_2(M)$, due to the use of a finite-dimensional representation. Intuitively, $\tau = \nu_1(M) - \nu_2(M)$ is tied to the eigenvalue decay rate; however, we defer the derivation of the explicit connection for future work. 

We have provided a method for direct estimation of the differential graph, but the development of methods that allow for inference and hypothesis testing in functional differential graphs is a fruitful avenue for future work. In recent years, a number of studies have focused on inference in high-dimensional linear models \citep{Zhang2011Confidence,Geer2013asymptotically,Javanmard2013Confidence,Zhao2014General,Bradic2017Uniform,Wang2021Robust}. Subsequently, these approaches were extended for statistical inference of low-dimensional parameters in graphical models \citep{Ren2013Asymptotic,Wasserman2014Berry,Jankova2014Confidence,Jankova2017Honest,Barber2015ROCKET,Yu2016Statistical,Yu2019Simultaneous} and differential graphical models \citep{Xia2015Testing,Liu2017Structural,kim2019two}. Future work may extend these results to the functional graph setting. A promising approach would be to extend the inference procedures developed for semi- and non-parametric models \citep[see, e.g.,][]{Lu2019Kernel,Dai2021Inference}.

\acks{We thank the associate editor Daniela Witten and reviewers for their helpful feedback which has greatly improved the manuscript. This work is partially supported by the William S. Fishman Faculty Research Fund at the University of Chicago Booth School of Business. This work was completed in part with resources provided by the University of Chicago Research Computing Center.}

\newpage

\begin{appendices}


\numberwithin{equation}{section}

\section{Derivation of Optimization Algorithm} \label{apex:opt_alg}

In this section, we derive the key steps for optimization algorithms.

\subsection{Optimization Algorithm for FuDGE}

We derive closed-form updates for the proximal method stated in \eqref{con:itersimplified}. In particular, recall that for all $1\leq{j,l}\leq{p}$, we have
\begin{equation*}
\Delta^{\text{new}}_{jl} \;=\; \left[\left(\Vert A^{\text{old}}_{jl}\Vert_{F}-\lambda_{n}\eta\right)/\Vert A^{\text{old}}_{jl}\Vert_{F}\right]_{+}\times A^{\text{old}}_{jl},
\end{equation*}
where $A^{\text{old}}=\Delta^{\text{old}}-\eta\nabla \mathcal{L}(\Delta^{\text{old}})$ and $x_{+}=\max\{0,x\}$ represents the positive part of~$x\in{\mathbb{R}}$.

\begin{proof}[Proof of \eqref{con:itersimplified}]
Let $A^{\text{old}}=\Delta^{\text{old}}-\eta\nabla \mathcal{L}(\Delta^{\text{old}})$ and let $f_{jl}$ denote the loss decomposed over each $j,l$ block so that
\begin{equation}{\label{con:splitlost}}
f_{jl}(\Delta_{jl})\;=\;\frac{1}{2\lambda_{n}\eta}\|\Delta_{jl}-A^{\text{old}}_{jl} \|^{2}_{F}+\|\Delta_{jl}\|_{F}
\end{equation}
and
\begin{equation}
\Delta^{\text{new}}_{jl}\;=\;\argmin_{\Delta_{jl}\in{\mathbb{R}^{M\times{M}}}}f_{jl}(\Delta_{jl}).
\end{equation}
The loss $f_{jl}(\Delta_{jl})$ is convex, so the first-order optimality condition implies that:
\begin{equation}{\label{con:optimcondition}}
0\in \partial f_{jl}\left(\Delta^{\text{new}}_{jl}\right),
\end{equation}
where $\partial f_{jl}\left(\Delta_{jl}\right)$ is the subdifferential of $f_{jl}$ at $\Delta_{jl}$:
\begin{equation}{\label{con:subdiff}}
\partial f_{jl}(\Delta_{jl})\;=\; \frac{1}{\lambda_{n}\eta}\left(\Delta_{jl}-A^{\text{old}}_{jl}\right)+Z_{jl},
\end{equation}
where
\begin{equation}{\label{con:subdiff2}}
Z_{jl} \;=\;
\begin{cases}
\frac{\Delta_{jl}}{\|\Delta_{jl}\|_{F}}\qquad & \text{ if } \Delta_{jl}\neq{0}\\[10pt]
\left\{Z_{jl}\in{\mathbb{R}^{M\times{M}}}\colon \|Z_{jl}\|_{F}\leq{1}\right\} \qquad & \text{ if } \Delta_{jl}=0. 
\end{cases}
\end{equation}
\textbf{Claim 1} If $\|A^{\text{old}}_{jl}\|_{F}>\lambda_{n}\eta > 0$, then $\Delta^{\text{new}}_{jl}\neq{0}$.

We verify this claim by proving the contrapositive. Suppose $\Delta^{\text{new}}_{jl} = {0}$. Then by \eqref{con:optimcondition} and \eqref{con:subdiff2}, there exists a $Z_{jl}\in{\mathbb{R}^{M\times{M}}}$ such that $\|Z_{jl}\|_{F}\leq{1}$ and
\begin{equation*}
0=-\frac{1}{\lambda_{n}\eta}A^{\text{old}}_{jl}+Z_{jl}.
\end{equation*}
Thus, $\|A^{\text{old}}_{jl}\|_{F}=\|\lambda_{n}\eta\cdot Z_{jl}\|_{F}\leq{\lambda_{n}\eta}$, so that Claim 1 holds.

Combining Claim 1 with \eqref{con:optimcondition} and \eqref{con:subdiff2}, for any $j,l$ such that $\|A^{\text{old}}_{jl}\|_{F}>\lambda_{n}\eta$, we have
\begin{equation*}
0=\frac{1}{\lambda_{n}\eta}\left(\Delta^{\text{new}}_{jl}-A^{\text{old}}_{jl}\right)+\frac{\Delta^{\text{new}}_{jl}}{\|\Delta^{\text{new}}_{jl}\|_{F}},
\end{equation*}
which is solved by
\begin{equation}{\label{con:solution1}}
\Delta^{\text{new}}_{jl}=\frac{\|A^{\text{old}}_{jl}\|_{F}-\lambda_{n}\eta}{\|A^{\text{old}}_{jl}\|_{F}}A^{\text{old}}_{jl}.
\end{equation}
\textbf{Claim 2} If $\|A^{\text{old}}_{jl}\|_{F}\leq\lambda_{n}\eta$, then $\Delta^{\text{new}}_{jl}=0$.

Again, we verify the claim by proving the contrapositive. Suppose $\Delta^{\text{new}}_{jl}\neq 0$. Then the first-order optimality implies the updates in \eqref{con:solution1}. However, taking the Frobenius norm on both sides of the equation gives $\|\Delta^{\text{new}}_{jl}\|_{F}=\|A^{\text{old}}_{jl}\|_{F}-\lambda_{n}\eta$, which implies that $\|A^{\text{old}}_{jl}\|_{F}-\lambda_{n}\eta\geq{0}$.

Updates in \eqref{con:itersimplified} follow immediately by combining Claim 2 and \eqref{con:solution1}.
\end{proof}

\subsection{Solving the Joint Functional Graphical Lasso}

As in \cite{Danaher2011Joint}, we use the alternating directions method of multipliers (ADMM) algorithm to solve \eqref{eq:JGLobj}; see \cite{Boyd2011Distributed} for a detailed exposition of ADMM.

To solve \eqref{eq:JGLobj}, we first rewrite the problem as:
\begin{equation*}
\max_{\{\Theta\},\{ Z \}}\left\{ -\sum^{Q}_{q=1}n_q\left( \log\text{det}\Theta^{(q)}-\text{trace}\left(S^{(q)}\Theta^{(q)}\right) \right)+P(\{Z\}) \right\},
\end{equation*}
subject to $\Theta^{(q)}\succ0$ and $Z^{(q)}=\Theta^{(q)}$, where $\{Z\}=\{Z^{(1)},Z^{(2)},\dots,Z^{(Q)}\}$. The scaled augmented Lagrangian \citep{Boyd2011Distributed} is given by
\begin{multline}\label{eq:scaledAgumentedLagrange}
L_{\rho}\left(\{\Theta\},\{Z\},\{U\}\right)=-\sum^{Q}_{q=1}n_q\left( \log\text{det}\Theta^{(q)}-\text{trace}\left(S^{(q)}\Theta^{(q)}\right) \right)+P(\{Z\})\\
+\frac{\rho}{2}\sum^{Q}_{q=1}\Vert \Theta^{(q)}-Z^{(q)}+U^{(q)} \Vert^{2}_{\text{F}},
\end{multline}
where $\rho>0$ is a tuning parameter and $\{U\}=\{U^{(1)},U^{(2)},\dots,U^{(Q)}\}$ are dual variables. The ADMM algorithm will then solve \eqref{eq:scaledAgumentedLagrange} by iterating the following three steps. At the $i$-th iteration, they are as follows:

\begin{enumerate}[topsep=0pt,itemsep=-1ex,partopsep=1ex,parsep=1ex]
\item $\{\Theta_{(i)}\}\leftarrow\argmin_{\{\Theta \}}L_{\rho}\left(\{\Theta\},\{Z_{(i-1)}\},\{U_{(i-1)}\}\right)$.
\item $\{Z_{(i)}\}\leftarrow\argmin_{\{Z \}}L_{\rho}\left(\{\Theta_{(i)}\},\{Z\},\{U_{(i-1)}\}\right)$.
\item $\{U_{(i)}\}\leftarrow \{U_{(i-1)}\}+(\{\Theta_{(i)}\}-\{Z_{(i)}\})$.
\end{enumerate}

We now give more details on the above three steps.

\centerline{\textbf{ADMM algorithm for solving the joint functional graphical lasso problem}}

\smallskip

\textbf{Input:} $\{S^{(q)}\}^Q_{q=1}$, $\{n_q\}^{Q}_{q=1}$, and the penalty term $P(\cdot)$.

\textbf{Output:} $\{\hat{\Theta}^{(q)} \}^Q_{q=1}$.

(a) Initialize the variables: $\Theta^{(q)}_{(0)}=I_{pM}$, $U^{(q)}_{(0)}=0_{pM}$, and $Z^{(q)}_{(0)}=0_{pM}$, $q=1,\ldots,Q$.

(b) Select a scalar $\rho>0$.

(c) For $i=1,2,3,\dots$ until convergence

\hspace*{15pt} (i) For $q=1,\ldots,Q$, update $\Theta^{(q)}_{(i)}$ as the minimizer (with respect to $\Theta^{(q)}$) of
\begin{equation*}
-n_q\left( \log\text{det}\Theta^{(q)}-\text{trace}\left(S^{(q)}\Theta^{(q)}\right) \right)+\frac{\rho}{2}\Vert \Theta^{(q)}-Z^{(q)}_{(i-1)}+U^{(q)}_{(i-1)} \Vert^{2}_{\text{F}}
\end{equation*}
\begin{adjustwidth}{1.8cm}{}
	Let $VDV^{\top}$ denote the eigendecomposition of $S^{(q)}-\rho Z^{(q)}_{(i-1)}/n_q+\rho U^{(q)}_{(i-1)}/n_q$. The solution is given by $V\tilde{D}V^{\top}$, where $\tilde{D}$ is the diagonal matrix with the $j$-th diagonal element being
	\begin{equation*}
	\frac{n_q}{2\rho}\left(-D_{jj}+\sqrt{D^{2}_{jj}+4\rho/n_q}\right),
	\end{equation*}
	where $D_{jj}$ is the $(j,j)$-th entry of $D$.

\end{adjustwidth}

\hspace*{15pt} (ii) Update $\{Z_{(i)}\}$ as minimizer (with respect to $\{Z\}$) of
\begin{equation}\label{eq:ADMMkeystep}
\min_{\{Z\}} \frac{\rho}{2}\sum^{Q}_{q=1} \Vert Z^{(q)}-A^{(q)} \Vert^{2}_{\text{F}}+P(\{Z\}),
\end{equation}
\begin{adjustwidth}{1.8cm}{}
	where $A^{(q)}=\Theta^{(q)}_{(i)}+U^{(q)}_{(i-1)}$, $q=1,\ldots,Q$.
\end{adjustwidth}

\hspace*{15pt} (iii) $U^{(q)}_{(i)}\leftarrow U^{(q)}_{(i-1)}+(\Theta^{(q)}_{(i)}-Z^{(q)}_{(i)})$, $q=1,\ldots,Q$.

(d) Output $\hat{\Theta}^{(q)}$ as $\Theta^{(q)}_{(i)}$, $q=1,\ldots,Q$, from the final round.

\smallskip

There are three things that are worth noting. \textbf{1.} The key step is to solve \eqref{eq:ADMMkeystep}, which depends on the form of penalty term $P(\cdot)$; \textbf{2.} This algorithm is guaranteed to converge to the global optimum when $P(\cdot)$ is convex \citep{Boyd2011Distributed}; \textbf{3.} The positive-definiteness constraint on $\{\hat{\Theta}\}$ is naturally enforced by step (c) (i).

\subsection{Solving \texorpdfstring{\eqref{eq:ADMMkeystep}}{} for different penalty functions}

We provide solutions to \eqref{eq:ADMMkeystep} for three problems (GFGL, FFGL, FFGL2) defined by \eqref{eq:GFGLPen}, \eqref{eq:FFGLPen}, and \eqref{eq:FFGL2Pen}.

\subsubsection{Solution to \texorpdfstring{\eqref{eq:ADMMkeystep}}{} for GFGL}

Let the solution for
\begin{equation}
\min_{\{Z\}}\frac{\rho}{2}\sum^{Q}_{q=1}\Vert Z^{(q)}-A^{(q)} \Vert^{2}_{\text{F}}+\lambda_1\sum^{Q}_{q=1}\sum_{j \neq l}\Vert Z^{(q)}_{jl} \Vert_{\text{F}}+\lambda_2\sum_{j \neq l}\left( \sum^{Q}_{q=1}\Vert Z^{(q)}_{jl} \Vert^{2}_{\text{F}} \right)^{1/2}
\end{equation}
be denoted as $\{\hat{Z} \}=\{\hat{Z}^{(1)}, \hat{Z}^{(2)},\dots, \hat{Z}^{(Q)}\}$. Let $Z^{(q)}_{jl}$, $\hat{Z}^{(q)}_{jl}$ be the $(j,l)$-th $M \times M$ block of $Z^{(q)}$ and $\hat{Z}^{(q)}$, $q=1,\ldots,Q$. Then, for $j=1,\ldots,p$, we have
\begin{equation}\label{eq:GFGLkeysetp1}
\hat{Z}^{(q)}_{jj}=A^{(q)}_{jj},\qquad q=1,\ldots,Q,
\end{equation}
and, for $j \neq l$, we have
\begin{equation}\label{eq:GFGLkeysetp2}
\hat{Z}^{(q)}_{jl}=\left( \frac{\Vert A^{(q)}_{jl} \Vert_{\text{F}}-\lambda_1/\rho}{\Vert A^{(q)}_{jl} \Vert_{\text{F}}} \right)_{+}\left( 1-\frac{\lambda_2}{\rho\sqrt{\sum^{Q}_{q=1}\left(\Vert A^{(q)}_{jl} \Vert_{\text{F}}-\lambda_1/\rho  \right)^{2}_{+} }} \right)_{+}A^{(q)}_{jl},
\end{equation}
where $q=1,\ldots,Q$.
Details of the proof of \eqref{eq:GFGLkeysetp1} and \eqref{eq:GFGLkeysetp2} are given in Appendix~\ref{subsec:OptGFGL}.

\subsubsection{Solution to \texorpdfstring{\eqref{eq:ADMMkeystep}}{} for FFGL}\label{subsubsec:solFFGL}

For FFGL, there is no simple closed-form solution. When $Q=2$, \eqref{eq:ADMMkeystep} becomes
\begin{equation*}
\min_{\{Z\}}\; \frac{\rho}{2}\sum^{2}_{q=1} \Vert Z^{(q)}-A^{(q)} \Vert^{2}_{\text{F}}+\lambda_1\left(\sum^{2}_{q=1}\sum_{j \neq l}\Vert Z^{(q)}_{jl} \Vert_{\text{F}} \right)+\lambda_2 \sum_{j,l}\Vert Z^{(1)}_{jl}-Z^{(2)}_{jl} \Vert_{\text{F}}.
\end{equation*}
For each $1 \leq j,l \leq p$, we compute $\hat{Z}^{(1)}_{jl}$, $\hat{Z}^{(2)}_{jl}$ by solving
\begin{equation}\label{eq:FFGLQ2keysetp}
\min_{\{Z^{(1)}_{jl},Z^{(2)}_{jl}\}}\; \frac{1}{2}\sum^{2}_{q=1} \Vert Z^{(q)}_{jl}-A^{(q)}_{jl} \Vert^{2}_{\text{F}}+\frac{\lambda_1}{\rho}\mathbbm{1}_{j \neq l} \sum^{2}_{q=1}\Vert Z^{(q)}_{jl} \Vert_{\text{F}} +\frac{\lambda_2}{\rho}\Vert Z^{(1)}_{jl}-Z^{(2)}_{jl} \Vert_{\text{F}},
\end{equation}
where $\mathbbm{1}_{j \neq l}=1$ when $j \neq l$ and $0$ otherwise.

When $j=l$, by Lemma~\ref{lemma:LemmaOptFFGL}, we have the following closed-form updates for $\{\hat{Z}^{(1)}_{jj},\hat{Z}^{(2)}_{jj}\}$, $j=1,\ldots,p$. If $\Vert A^{(1)}_{jj}-A^{(2)}_{jj} \Vert_{\text{F}} \leq 2\lambda_2/\rho$, then 
\begin{equation*}
\hat{Z}^{(1)}_{jj}=\hat{Z}^{(2)}_{jj}=\frac{1}{2}\left(A^{(1)}_{jj}+A^{(2)}_{jj} \right).
\end{equation*}
If $\Vert A^{(1)}_{jj}-A^{(2)}_{jj} \Vert_{\text{F}} > 2\lambda_2/\rho$, then
\begin{equation*}
\begin{aligned}
\hat{Z}^{(1)}_{jj}&=A^{(1)}_{jj}-\frac{\lambda_2/\rho}{\Vert A^{(1)}_{jj}-A^{(2)}_{jj} \Vert_{\text{F}}}\left(A^{(1)}_{jj}-A^{(2)}_{jj}\right),\\
\hat{Z}^{(2)}_{jj}&=A^{(2)}_{jj}+\frac{\lambda_2/\rho}{\Vert A^{(1)}_{jj}-A^{(2)}_{jj} \Vert_{\text{F}}}\left(A^{(1)}_{jj}-A^{(2)}_{jj}\right).
\end{aligned}
\end{equation*}

For $j \neq l$, we get $\{\hat{Z}^{(1)}_{jl},\hat{Z}^{(2)}_{jl}\}$ using the ADMM algorithm again. We construct the scaled augmented Lagrangian as:
\begin{multline*}
{L^{\prime}}_{\rho^{\prime}}\left(\{W\},\{R\},\{V\} \right)
=\frac{1}{2}\sum^{2}_{q=1}\Vert W^{(q)}-B^{(q)} \Vert_{\text{F}}+\frac{\lambda_1}{\rho}\sum^{2}_{q=1}\Vert W^{(q)} \Vert_{\text{F}}\\
+\frac{\lambda_2}{\rho}\Vert R^{(1)}-R^{(2)} \Vert_{\text{F}}+\frac{\rho^{\prime}}{2}\sum^{2}_{q=1}\Vert W^{(q)}-R^{(q)}+V^{(q)} \Vert^{2}_{\text{F}},
\end{multline*}
where $\rho^{\prime}>0$ is a tuning parameter, $B^{(q)}=A^{(q)}_{jl}$, $q=1,2$, and $W^{(q)},R^{(q)},V^{(q)} \in \mathbb{R}^{M\times M}$, $q=1,2$.  $\{W\}=\{W^{(1)},W^{(2)}\}$, $\{R\}=\{R^{(1)},R^{(2)}\}$, and  $\{V\}=\{V^{(1)},V^{(2)}\}$. The detailed ADMM algorithm is described as below:

\centerline{\textbf{ADMM algorithm for solving \eqref{eq:FFGLQ2keysetp} for $j \neq l$}}

\smallskip

\textbf{Input:} $A^{(q)}_{jl}$, $q=1,2$; $\lambda_1,\lambda_2 \geq 0$.

\textbf{Output:} $\{\hat{Z}^{(1)}_{jl},\hat{Z}^{(2)}_{jl}\}$.

(a) Initialize the variables: $W^{(q)}_{(0)}=I_{M}$, $R^{(q)}_{(0)}=0_{M}$,  $V^{(q)}_{(0)}=0_{M}$, $B^{(q)}=A^{(q)}_{jl}$, $q=1,2$.

(b) Select a scalar $\rho^{\prime}>0$.

(c) For $i=1,2,3,\dots$ until convergence

\hspace*{15pt} (i) $\{W_{(i)}\}\leftarrow \argmin_{\{W\}} {L^{\prime}}_{\rho^{\prime}}\left(\{W\},\{R_{(i-1)}\},\{V_{(i-1)}\}\right)$.
\begin{adjustwidth}{1.8cm}{}
	This is equivalent to
	\begin{equation*}
	\{W_{(i)}\}\leftarrow \argmin_{\{W\}}\frac{1}{2}\sum^{2}_{q=1}\Vert W^{(q)}-C^{(q)} \Vert^{2}_{\text{F}}+\frac{\lambda_1}{\rho(1+\rho^{\prime})}\sum^{2}_{q=1}\Vert W^{(q)} \Vert_{\text{F}},
	\end{equation*}
	where
	\begin{equation*}
	C^{(q)}=\frac{1}{1+\rho^{\prime}}\left[B^{(q)}+\rho^{\prime}\left(R^{(q)}_{(i-1)}-V^{(q)}_{(i-1)}\right) \right].
	\end{equation*}
	Similar to~\eqref{con:iterorign}, we have
	\begin{equation*}
	W^{(q)}_{(i)}\leftarrow \left(\frac{\Vert C^{(q)} \Vert_{\text{F}}-\lambda_1/(\rho(1+\rho^{\prime}))}{\Vert C^{(q)} \Vert_{\text{F}}} \right)_{+}\cdot C^{(q)},\qquad q=1,2.
	\end{equation*}
\end{adjustwidth}

\hspace*{15pt} (ii) $\{R_{(i)}\}\leftarrow \argmin_{\{R\}}{L^{\prime}}_{\rho^{\prime}}\left(\{W_{(i)}\},\{R\},\{V_{(i-1)}\} \right)$.
\begin{adjustwidth}{1.8cm}{}
	This is equivalent to
	\begin{equation*}
	\{R_{(i)}\}\leftarrow\argmin_{\{R\}}\frac{1}{2}\sum^{2}_{q=1}\Vert R^{(q)}-D^{(q)} \Vert^{2}_{\text{F}}+\frac{\lambda_2}{\rho\rho^{\prime}}\Vert R^{(1)}-R^{(2)} \Vert_{\text{F}},
	\end{equation*}
	where $D^{(q)}=W^{(q)}_{(i)}+V^{(q)}_{(i-1)}$. By Lemma~\ref{lemma:LemmaOptFFGL}, if $\Vert D^{(1)}-D^{(2)} \Vert_{\text{F}}\leq 2\lambda_2/(\rho\rho^{\prime})$, then
	\begin{equation*}
	R^{(1)}_{(i)}=R^{(2)}_{(i)}\leftarrow \frac{1}{2}\left(D^{(1)}+D^{(2)}\right),
	\end{equation*}
	and if $\Vert D^{(1)}-D^{(2)} \Vert_{\text{F}}> 2\lambda_2/(\rho\rho^{\prime})$, then
	\begin{equation*}
	\begin{aligned}
	&R^{(1)}\leftarrow D^{(1)}-\frac{\lambda_2/(\rho\rho^{\prime})}{\Vert D^{(1)}-D^{(2)} \Vert_{\text{F}}}\left(D^{(1)}-D^{(2)}\right),\\
	&R^{(2)}\leftarrow D^{(2)}+\frac{\lambda_2/(\rho\rho^{\prime})}{\Vert D^{(1)}-D^{(2)} \Vert_{\text{F}}}\left(D^{(1)}-D^{(2)}\right).
	\end{aligned}
	\end{equation*}
\end{adjustwidth}

\hspace*{15pt} (iii) $V^{(q)}_{(i)} \leftarrow V^{(q)}_{(i-1)}+W^{(q)}_{(i)}-R^{(q)}_{(i)}$, $q=1,2$.

(d) Output $\{\hat{Z}^{(1)}_{jl},\hat{Z}^{(2)}_{jl}\}$ as $\{ W^{(1)}_{(i)}, W^{(2)}_{(i)} \}$ from the final round.

\subsubsection{Solution to \texorpdfstring{\eqref{eq:ADMMkeystep}}{} for FFGL2}

For FFGL2, there is also no closed-form solution. Similarly to Section~\ref{subsubsec:solFFGL}, we compute a closed-form solution for $\{\hat{Z}^{(1)}_{jj},\hat{Z}^{(2)}_{jj}\}$, $j=1,\ldots,p$, and use an ADMM algorithm to compute $\{\hat{Z}^{(1)}_{jl},\hat{Z}^{(2)}_{jl}\}$, $1 \leq j \neq l \leq p$.

For any $1 \leq j,l \leq p$, we solve: 
\begin{equation}\label{eq:FFGL2Q2keysetp}
\min_{\{Z^{(1)}_{jl},Z^{(2)}_{jl}\}}\; \frac{1}{2}\sum^{2}_{q=1} \Vert Z^{(q)}_{jl}-A^{(q)}_{jl} \Vert^{2}_{\text{F}}+\frac{\lambda_1}{\rho}\mathbbm{1}_{j \neq l} \sum^{2}_{q=1}\Vert Z^{(q)}_{jl} \Vert_{\text{F}} +\frac{\lambda_2}{\rho}\sum_{1\leq a,b\leq M}\vert Z^{(1)}_{jl,ab}-Z^{(2)}_{jl,ab} \vert,
\end{equation}
where $\mathbbm{1}_{j \neq l}=1$ when $j \neq l$ and $0$ otherwise.

By Lemma~\ref{lemma:LemmaOptFFGL}, when $j=l$ we have
\begin{align*}
& \left(\hat{Z}^{(1)}_{jj,ab},\hat{Z}^{(2)}_{jj,ab}\right)\\
&\quad =\left\{
\begin{array}{ll}
\left(A^{(1)}_{jl,ab}-\lambda_2/\rho,A^{(2)}_{jl,ab}+\lambda_2/\rho\right)& \text{if}\; A^{(1)}_{jl,ab}>A^{(2)}_{jl,ab}+2\lambda_2/\rho\\
\left(A^{(1)}_{jl,ab}+\lambda_2/\rho,A^{(2)}_{jl,ab}-\lambda_2/\rho\right)& \text{if}\; A^{(1)}_{jl,ab}<A^{(2)}_{jl,ab}-2\lambda_2/\rho\\
\left(\left(A^{(1)}_{jl,ab}+A^{(2)}_{jl,ab}\right)/2,\left(A^{(1)}_{jl,ab}+A^{(2)}_{jl,ab}\right)/2\right)& \text{if}\; \left\vert A^{(1)}_{jl,ab}-A^{(2)}_{jl,ab}\right\vert \leq 2\lambda_2/\rho
\end{array}
\right.,
\end{align*}
where the subscript denotes the $(a,b)$-th entry, $1\leq a,b \leq M$ and $j=1,\ldots,p$.

For $j\neq l$, we get $\{\hat{Z}^{(1)}_{jl},\hat{Z}^{(2)}_{jl}\}$, $1 \leq j \neq l \leq p$ using an ADMM algorithm. Let $B^{(q)}=A^{(q)}_{jl}$, $q=1,2$. We first construct the scaled augmented Lagrangian:
\begin{multline*}
{L^{\prime}}_{\rho^{\prime}}\left(\{W\},\{R\},\{V\} \right)
=\frac{1}{2}\sum^{2}_{q=1}\Vert W^{(q)}-B^{(q)} \Vert_{\text{F}}+\frac{\lambda_1}{\rho}\sum^{2}_{q=1}\Vert W^{(q)} \Vert_{\text{F}}\\
+\frac{\lambda_2}{\rho}\sum_{a,b}\vert R^{(1)}_{a,b}-R^{(2)}_{a,b}\vert+\frac{\rho^{\prime}}{2}\sum^{2}_{q=1}\Vert W^{(q)}-R^{(q)}+V^{(q)} \Vert^{2}_{\text{F}},
\end{multline*}
where $\rho^{\prime}>0$ is a tuning parameter, $W^{q},R^{(q)},V^{(q)} \in \mathbb{R}^{M\times M}$, $q=1,2$, $\{W\}=\{W^{(1)},W^{(2)}\}$, $\{R\}=\{R^{(1)},R^{(2)}\}$, and $\{V\}=\{V^{(1)},V^{(2)}\}$. The detailed ADMM algorithm is described below.

\centerline{\textbf{ADMM algorithm for solving \eqref{eq:FFGL2Q2keysetp} for $j \neq l$}}

\smallskip

\textbf{Input:} $A^{(q)}_{jl}$, $q=1,2$; $\lambda_1,\lambda_2 \geq 0$.

\textbf{Output:} $\{\hat{Z}^{(1)}_{jl},\hat{Z}^{(2)}_{jl}\}$.

(a) Initialize the variables: $W^{(q)}_{(0)}=I_{M}$, $R^{(q)}_{(0)}=0_{M}$, $V^{(q)}_{(0)}=0_{M}$, $B^{(q)}=A^{(q)}_{jl}$, $q=1,2$.

(b) Select a scalar $\rho^{\prime}>0$.

(c) For $i=1,2,3,\dots$ until convergence

\hspace*{15pt} (i) $\{W_{(i)}\}\leftarrow \argmin_{\{W\}}. {L^{\prime}}_{\rho^{\prime}}\left(\{W\},\{R_{(i-1)}\},\{V_{(i-1)}\}\right)$
\begin{adjustwidth}{1.8cm}{}
	This is equivalent to
	\begin{equation*}
	\{W_{(i)}\}\leftarrow \argmin_{\{W\}}\frac{1}{2}\sum^{2}_{q=1}\Vert W^{(q)}-C^{(q)} \Vert^{2}_{\text{F}}+\frac{\lambda_1}{\rho(1+\rho^{\prime})}\sum^{2}_{q=1}\Vert W^{(q)} \Vert_{\text{F}},
	\end{equation*}
	where
	\begin{equation*}
	C^{(q)}=\frac{1}{1+\rho^{\prime}}\left[B^{(q)}+\rho^{\prime}\left(R^{(q)}_{(i-1)}-V^{(q)}_{(i-1)}\right) \right].
	\end{equation*}
	Similarly to \eqref{con:iterorign}, we have
	\begin{equation*}
	W^{(q)}_{(i)}\leftarrow \left(\frac{\Vert C^{(q)} \Vert_{\text{F}}-\lambda_1/(\rho(1+\rho^{\prime}))}{\Vert C^{(q)} \Vert_{\text{F}}} \right)_{+}\cdot C^{(q)},\qquad  q=1,2.
	\end{equation*}
\end{adjustwidth}

\hspace*{15pt} (ii) $\{R_{(i)}\}\leftarrow \argmin_{\{R\}}{L^{\prime}}_{\rho^{\prime}}\left(\{W_{(i)}\},\{R\},\{V_{(i-1)}\} \right)$
\begin{adjustwidth}{1.8cm}{}
	This is equivalent to
	\begin{equation*}
	\{R_{(i)}\}\leftarrow\argmin_{\{R\}}\frac{1}{2}\sum^{2}_{q=1}\Vert R^{(q)}-D^{(q)} \Vert^{2}_{\text{F}}+\frac{\lambda_2}{\rho\rho^{\prime}}\sum_{a,b}\left\vert R^{(1)}_{ab}-R^{(2)}_{ab}\right\vert,
	\end{equation*}
	where $D^{(q)}=W^{(q)}_{(i)}+V^{(q)}_{(i-1)}$. Then, by Lemma~\ref{lemma:LemmaOptFFGL}, we have
	\begin{align*}
	& \left(R^{(1)}_{(i),ab},R^{(2)}_{(i),ab}\right)\\
	&\quad =\left\{
	\begin{array}{ll}
	\left(D^{(1)}_{ab}-\lambda_2/(\rho\rho^{\prime}),D^{(2)}_{ab}+\lambda_2/(\rho\rho^{\prime})\right) & \text{if}\; D^{(1)}_{ab}>D^{(2)}_{ab}+2\lambda_2/(\rho\rho^{\prime})\\
	\left(D^{(1)}_{ab}+\lambda_2/(\rho\rho^{\prime}),D^{(2)}_{ab}-\lambda_2/(\rho\rho^{\prime})\right) & \text{if}\; D^{(1)}_{ab}<D^{(2)}_{ab}-2\lambda_2/(\rho\rho^{\prime})\\
	\left(\left(D^{(1)}_{ab}+D^{(2)}_{ab}\right)/2,\left(D^{(1)}_{ab}+D^{(2)}_{ab}\right)/2\right) & \text{if}\; \left\vert D^{(1)}_{ab}-D^{(1)}_{ab}\right\vert \leq 2\lambda_2/(\rho\rho^{\prime})
	\end{array}
	\right.,
	\end{align*}
	where the subscript denotes the $(a,b)$-th entry, $1\leq a,b \leq M$ and $1 \leq j,l \leq p$.
\end{adjustwidth}

\hspace*{15pt} (iii) $V^{(q)}_{(i)} \leftarrow V^{(q)}_{(i-1)}+W^{(q)}_{(i)}-R^{(q)}_{(i)}$, $q=1,2$.

(d) Output $\{\hat{Z}^{(1)}_{jl},\hat{Z}^{(2)}_{jl}\}$ as $\{ W^{(1)}_{(i)}, W^{(2)}_{(i)} \}$ from the final round.

\subsection{Derivation of \texorpdfstring{\eqref{eq:GFGLkeysetp1}}{} and \texorpdfstring{\eqref{eq:GFGLkeysetp2}}{} }\label{subsec:OptGFGL}

Note that for any $1 \leq j,l \leq p$, we can obtain $\hat{Z}^{(1)}_{jl},\hat{Z}^{(2)}_{jl},\dots,\hat{Z}^{(Q)}_{jl}$ by solving
\begin{equation}\label{eq:GFGLdrivsplit}
	\argmin_{Z^{(1)}_{jl},Z^{(2)}_{jl},\dots,Z^{(Q)}_{jl}} \frac{\rho}{2}\sum^{Q}_{q=1}\Vert Z^{(q)}_{jl}-A^{(q)}_{jl} \Vert^{2}_{\text{F}}+\lambda_1\mathbbm{1}_{j\neq l}\sum^{Q}_{q=1}\Vert Z^{(q)}_{jl} \Vert_{\text{F}}+\lambda_2\mathbbm{1}_{j\neq l}\left( \sum^{Q}_{q=1}\Vert Z^{(q)}_{jl} \Vert^{2}_{\text{F}} \right)^{1/2},
\end{equation}
where $\mathbbm{1}_{j \neq l}=1$ when $j\neq l$ and $0$ otherwise. By \eqref{eq:GFGLdrivsplit}, we have $\hat{Z}^{(q)}_{jj}=A^{(q)}_{jj}$ for any $j=1,\ldots,p$ and $q=1,\ldots,Q$, which is \eqref{eq:GFGLkeysetp1}. We then prove \eqref{eq:GFGLkeysetp2}. Denote the objective function in \eqref{eq:GFGLdrivsplit} by $\tilde{L}_{jl}$. Then, for $j \neq l$, the subdifferential of $\tilde{L}_{jl}$ with respect to $Z^{(q)}_{jl}$ is
\begin{equation*}
	\partial_{Z^{(q)}_{jl}}\tilde{L}_{jl}=\rho(Z^{(q)}_{jl}-A^{(q)}_{jl})+\lambda_1 G^{(q)}_{jl}+\lambda_2 D^{(q)}_{jl},
\end{equation*}
where
\begin{equation*}
	G^{(q)}_{jl}=\left\{
	\begin{array}{ll}
	\frac{Z^{(q)}_{jl}}{\Vert Z^{(q)}_{jl} \Vert_{\text{F}}}& \text{when}\; Z^{(q)}_{jl}\neq 0\\
	\{G^{(q)}_{jl} \in \mathbb{R}^{M\times M}: \Vert G^{(q)}_{jl} \Vert_{\text{F}} \leq 1  \}& \text{otherwise}
	\end{array}
	\right.,
\end{equation*}
and
\begin{equation*}
	D^{(q)}_{jl}=\left\{
	\begin{array}{ll}
	\frac{Z^{(q)}_{jl}}{\left( \sum^{Q}_{q=1}\Vert Z^{(q)}_{jl} \Vert^{2}_{\text{F}} \right)^{1/2}}& \text{when}\; \sum^{Q}_{q=1}\Vert Z^{(q)}_{jl} \Vert^{2}_{\text{F}}>0\\
	\{D^{(q)}_{jl}\in \mathbb{R}^{M \times M}:\sum^{Q}_{q=1}\Vert D^{(q)}_{jl} \Vert^{2}_{\text{F}}\leq 1  \} & \text{otherwise}
	\end{array}
	\right..
\end{equation*}
To obtain the optimum, we need
\begin{equation*}
	0 \in \partial_{Z^{(q)}_{jl}}\tilde{L}_{jl} (\hat{Z}^{(q)}_{jl})
\end{equation*}
for all $q=1,\ldots,Q$. Now we split our discussion into two cases.
	
	(a) Suppose $\sum^{Q}_{q=1}\Vert \hat{Z}^{(q)}_{jl} \Vert^{2}_{\text{F}}=0$ or equivalently $\hat{Z}^{(q)}_{jl}=0$ for all $q=1,\ldots,Q$.
	
	In this case, there exist $G^{(q)}_{jl}$, where $\Vert G^{(q)}_{jl} \Vert_{\text{F}}\leq 1$, $q=1,\ldots,Q$; and also $D^{(q)}_{jl}$, where $\sum^{Q}_{q=1} \Vert D^{(q)}_{jl} \Vert^{2}_{\text{F}} \leq 1$, such that
	\begin{equation*}
	0=-\rho\cdot A^{(q)}_{jl}+\lambda_1 G^{(q)}_{jl}+\lambda_2 D^{(q)}_{jl}.
	\end{equation*}
	This implies that
	\begin{equation*}
	D^{(q)}_{jl}=\frac{\rho}{\lambda_2}\left(A^{(q)}_{jl}-\frac{\lambda_1}{\rho}G^{(q)}_{jl}  \right).
	\end{equation*}
	Thus, we have
	\begin{equation*}
	\begin{aligned}
	\Vert D^{(q)}_{jl} \Vert_{\text{F}}&=\frac{\rho}{\lambda_2}\left\Vert A^{(q)}_{jl}-\frac{\lambda_1}{\rho}G^{(q)}_{jl} \right\Vert_{\text{F}}\geq \frac{\rho}{\lambda_2}\left( \Vert A^{(q)}_{jl} \Vert_{\text{F}} -\frac{\lambda_1}{\rho}\Vert G^{(q)}_{jl}\Vert_{\text{F}} \right)_{+}\\
	&\geq \frac{\rho}{\lambda_2}\left( \Vert A^{(q)}_{jl} \Vert_{\text{F}} -\frac{\lambda_1}{\rho}\right)_{+},
	\end{aligned}
	\end{equation*}
	which implies that
	\begin{equation*}
	\frac{\rho^2}{\lambda^2_2}\sum^{Q}_{q=1}\left( \Vert A^{(q)}_{jl} \Vert_{\text{F}} -\frac{\lambda_1}{\rho}\right)^{2}_{+}\leq \sum^{Q}_{q=1}\Vert D^{(q)}_{jl} \Vert^{2}_{\text{F}}\leq 1.
	\end{equation*}
	Therefore, 
	\begin{equation}\label{eq:GFGLderiv3}
	\sqrt{\sum^{Q}_{q=1}\left( \Vert A^{(q)}_{jl} \Vert_{\text{F}} -\lambda_1/\rho\right)^{2}_{+}}\leq \lambda_2/\rho.
	\end{equation}
	
	(b) Suppose $\sum^{Q}_{q=1}\Vert \hat{Z}^{(q)}_{jl} \Vert^{2}_{\text{F}}>0$.
	
	For those $q$'s such that $\hat{Z}^{(q)}_{jl}=0$, there exists $G^{(q)}_{jl}$, where $\Vert G^{(q)}_{jl} \Vert_{\text{F}}=1$, such that
	\begin{equation*}
	0=-\rho A^{(q)}_{jl}+\lambda_1 G^{(q)}_{jl}.
	\end{equation*}
	Thus, we have
	\begin{equation*}
	\Vert A^{(q)}_{jl} \Vert_{\text{F}}=\frac{\lambda_1}{\rho}\Vert G^{(q)}_{jl} \Vert_{\text{F}} \leq \frac{\lambda_1}{\rho},
	\end{equation*}
	which implies that
	\begin{equation}\label{eq:GFGLderiv4}
	\left(\Vert A^{(q)}_{jl} \Vert_{\text{F}}-\lambda_1/\rho  \right)_{+}=0.
	\end{equation}
	
	On the other hand, for those $q$'s such that $\hat{Z}^{(q)}_{jl}\neq 0$, we have
	\begin{equation*}
	0=\rho\left( \hat{Z}^{(q)}_{jl}-A^{(q)}_{jl} \right)+\lambda_1 \frac{\hat{Z}^{(q)}_{jl}}{\Vert \hat{Z}^{(q)}_{jl} \Vert_{\text{F}}}+\lambda_2\frac{\hat{Z}^{(q)}_{jl}}{\left(\sum^{Q}_{q=1}\Vert \hat{Z}^{(q)}_{jl} \Vert^{2}_{\text{F}}\right)^{1/2}},
	\end{equation*}
	which implies that
	\begin{equation}\label{eq:GFGLderiv5}
	A^{(q)}_{jl}=\hat{Z}^{(q)}_{jl}\left(1+\frac{\lambda_1}{\rho\Vert \hat{Z}^{(q)}_{jl} \Vert_{\text{F}}}+\frac{\lambda_2}{\rho\left(\sum^{Q}_{q=1}\Vert \hat{Z}^{(q)}_{jl} \Vert^{2}_{\text{F}}\right)^{1/2}}\right),
	\end{equation}
	and
	\begin{equation}\label{eq:GFGLderiv6}
	\Vert A^{(q)}_{jl} \Vert_{\text{F}}=\Vert \hat{Z}^{(q)}_{jl} \Vert_{\text{F}}+\lambda_1/\rho+(\lambda_2/\rho)\cdot\frac{\Vert \hat{Z}^{(q)}_{jl} \Vert_{\text{F}}}{\left(\sum^{Q}_{q=1}\Vert \hat{Z}^{(q)}_{jl} \Vert^{2}_{\text{F}}\right)^{1/2}}.
	\end{equation}
	By \eqref{eq:GFGLderiv6}, we have
	\begin{equation}\label{eq:GFGLderiv7}
	\left(\Vert A^{(q)}_{jl} \Vert_{\text{F}}-\lambda_1/\rho  \right)_{+}>\frac{\lambda_2}{\rho}\cdot\frac{\Vert \hat{Z}^{(q)}_{jl} \Vert_{\text{F}}}{\sqrt{\sum^{Q}_{q=1} \Vert \hat{Z}^{(q)}_{jl} \Vert^{2}_{\text{F}} }}>0.
	\end{equation}
	By \eqref{eq:GFGLderiv4} and \eqref{eq:GFGLderiv7}, we have
	\begin{equation}\label{eq:GFGLderiv8}
	\begin{aligned}
	\sum^{Q}_{q=}\left(\Vert A^{(q)}_{jl} \Vert_{\text{F}}-\lambda_1/\rho  \right)^{2}_{+}&=\sum_{q:\Vert \hat{Z}^{(q)}_{jl} \Vert_{\text{F}}\neq 0}\left(\Vert A^{(q)}_{jl} \Vert_{\text{F}}-\lambda_1/\rho  \right)^{2}_{+}\\
	&>\frac{\lambda^{2}_2}{\rho^2}\sum_{q:\Vert \hat{Z}^{(q)}_{jl} \Vert_{\text{F}}\neq 0} \frac{\Vert \hat{Z}^{(q)}_{jl} \Vert^{2}_{\text{F}}}{\sum^{Q}_{q=1} \Vert \hat{Z}^{(q)}_{jl} \Vert^{2}_{\text{F}} }\\
	&> \lambda^2_2/\rho^2.
	\end{aligned}
	\end{equation}
	
	Now we make the following claims.
	
	\textbf{Claim 1.} $\sum^{Q}_{q=1} \Vert \hat{Z}^{(q)}_{jl} \Vert^{2}_{\text{F}}=0\Longleftrightarrow\sqrt{\sum^{Q}_{q=}\left(\Vert A^{(q)}_{jl} \Vert_{\text{F}}-\lambda_1/\rho  \right)^{2}_{+}}\leq \lambda_2/\rho$.
	
	This claim is easily shown by \eqref{eq:GFGLderiv3} and \eqref{eq:GFGLderiv8}.
	
	\textbf{Claim 2.} When $\sum^{Q}_{q=1} \Vert \hat{Z}^{(q)}_{jl} \Vert^{2}_{\text{F}}>0$, we have $\Vert \hat{Z}^{(q)}_{jl} \Vert_{\text{F}}=0\Longleftrightarrow \Vert A^{(q)}_{jl} \Vert_{\text{F}}\leq \lambda_1/\rho$.
	
	This claim is easily shown by \eqref{eq:GFGLderiv4} and \eqref{eq:GFGLderiv7}.
	
	\textbf{Claim 3.} When $\Vert \hat{Z}^{(q)}_{jl} \Vert_{\text{F}}\neq 0$, then we have
	\begin{equation*}
	\hat{Z}^{(q)}_{jl}=\left(\frac{\Vert A^{(q)}_{jl} \Vert_{\text{F}}-\lambda_1/\rho}{\Vert A^{(q)}_{jl} \Vert_{\text{F}}} \right)\left(1-\frac{\lambda_2}{\rho\sqrt{\sum^{Q}_{q=}\left(\Vert A^{(q)}_{jl} \Vert_{\text{F}}-\lambda_1/\rho  \right)^{2}_{+}}}\right)A^{(q)}_{jl}.
	\end{equation*}
	
	To prove this claim, note that by Claim 2 and \eqref{eq:GFGLderiv6}, we have
	\begin{equation*}
	\left(\Vert A^{(q)}_{jl} \Vert_{\text{F}}-\lambda_1/\rho  \right)_{+}=\Vert \hat{Z}^{(q)}_{jl} \Vert_{\text{F}}\left(1+\frac{\lambda_2}{\rho \left(\sum^{Q}_{q=1} \Vert \hat{Z}^{(q)}_{jl} \Vert^{2}_{\text{F}}\right)^{1/2}}\right),
	\qquad
	q=1,\ldots,Q.
	\end{equation*}
	Thus,
	\begin{equation*}
	\sqrt{\sum^{Q}_{q=1}\left(\Vert A^{(q)}_{jl} \Vert_{\text{F}}-\lambda_1/\rho  \right)^{2}_{+}}=\sqrt{\sum^{Q}_{q=1}\Vert \hat{Z}^{(q)}_{jl} \Vert^{2}_{\text{F}}}+\lambda_2/\rho,
	\end{equation*}
	which implies that
	\begin{equation*}
	\sqrt{\sum^{Q}_{q=1}\Vert \hat{Z}^{(q)}_{jl} \Vert^{2}_{\text{F}}}=\sqrt{\sum^{Q}_{q=1}\left(\Vert A^{(q)}_{jl} \Vert_{\text{F}}-\lambda_1/\rho  \right)^{2}_{+}}-\lambda_2/\rho.
	\end{equation*}
	Thus, by \eqref{eq:GFGLderiv6}, we have
	\begin{equation*}
	\begin{aligned}
	\Vert \hat{Z}^{(q)}_{jl} \Vert_{\text{F}}&=\frac{\Vert A^{(q)}_{jl} \Vert_{\text{F}}-\lambda_1/\rho }{1+\frac{\lambda_2/\rho}{\sqrt{\sum^{Q}_{q^{\prime}=1}\left(\Vert A^{(q^{\prime})}_{jl} \Vert_{\text{F}}-\lambda_1/\rho  \right)^{2}_{+}}-\lambda_2/\rho }}\\
	&=\left(1-\frac{\lambda_2}{\rho\sqrt{\sum^{Q}_{q^{\prime}=1}\left(\Vert A^{(q^{\prime})}_{jl} \Vert_{\text{F}}-\lambda_1/\rho  \right)^{2}_{+}} }\right)\left(\Vert A^{(q)}_{jl} \Vert_{\text{F}}-\lambda_1/\rho\right).
	\end{aligned}
	\end{equation*}
	 Claim 3 follows by combining the above display with \eqref{eq:GFGLderiv5}.
	
Finally, combining Claims 1-3, we obtain \eqref{eq:GFGLkeysetp2}.

\newpage

\section{Main Technical Proofs}

We give proofs of the results given in the main text.

\subsection{Proof of Lemma~\ref{lemma:Delta-indp-funB}}
\label{sec:proof-lemma-delta-indp-funB}

We only need to prove that when we use two sets of orthonormal function basis $e^M(t)=\{ e^M_j (t) \}^p_{j=1}$ and $\tilde{e}^M(t)=\{ \tilde{e}^M_j (t) \}^p_{j=1}$ to expand the same subspace $\mathbb{V}^M_{[p]}$, the definition of $E^{\pi}_{\Delta}$ will not change. Since both $e^M_j(t)=( e^M_{j1}(t),e^M_{j2}(t),\dots,e^M_{jM}(t) )^{\top}$ and $\tilde{e}^M_j(t)=(\tilde{e}^M_{j1}(t),\tilde{e}^M_{j2}(t),\dots,\tilde{e}^M_{jM}(t) )^{\top}$ are orthonormal function basis of $\mathbb{V}^M_j$, there must exist an orthonormal matrix $U_j \in \mathbb{R}^{M \times M}$ satisfying $U^{\top}_j U_j = U_j U^{\top}_j=I_M$, such that $\tilde{e}^M_j(t)=U_j e^M_j(t)$. Let $a^{X,M}_{ij}$ be the projection score vectors of $X_{ij}(t)$ onto $e^M_j(t)$ and $\tilde{a}^{X,M}_{ij}$ be the projection score vectors of $X_{ij}(t)$ onto $\tilde{e}^M_j(t)$. Then $\tilde{a}^{X,M}_{ij}=U_j a^{X,M}_{ij}$. Denote 
\[U={\rm diag} \{ U_1,U_2,\dots,U_p \} \in \mathbb{R}^{pM \times pM}.\]
We then have 
\begin{align*}
\tilde{a}^{X,M}_i &= ( (\tilde{a}^{X,M}_{i1})^{\top},(\tilde{a}^{X,M}_{i2})^{\top},\dots,(\tilde{a}^{X,M}_{ip})^{\top} )^{\top} \\
&= ( (a^{X,M}_{i1})^{\top} U^{\top}_1, (a^{X,M}_{i2})^{\top} U^{\top}_2,\dots,  (a^{X,M}_{ip})^{\top} U^{\top}_p )^{\top}=U a^{X,M}_i    
\end{align*}
and
\begin{equation*}
\tilde{\Sigma}^{X,M} = {\rm Cov} \left( \tilde{a}^{X,M} \right) = U {\rm Cov} \left( \tilde{a}^{X,M} \right) U^{\top} = U \Sigma^{X,M} U^{\top}.
\end{equation*}
Thus
\begin{equation*}
\tilde{\Theta}^{X,M} = \left( \tilde{\Sigma}^{X,M} \right)^{-1} = U \left( \Sigma^{X,M} \right)^{-1} U^{\top} = U \Theta^{X,M} U^{\top}.
\end{equation*}
Therefore, $\tilde{\Theta}^{X,M}_{jl} = U_j \Theta^{X,M}_{jl} U^{\top}_l$ for all $j,l \in V^2$ and, therefore, $\Vert \tilde{\Theta}^{X,M}_{jl} \Vert_{\text{F}}=\Vert \Theta^{X,M}_{jl} \Vert_{\text{F}}$ for all $j,l \in V^2$. This implies the final result.

\subsection{Proof of Lemma~\ref{lemma:M-dim-subspace-find}}
\label{sec:proof-lemma-M-dim-subspace-find}

We first show that $X_{ij}, Y_{ij} \in {\rm Span} \left\{ \phi_{j1},\dots,\phi_{jM^{\star}_j}  \right\}$ almost surely.
Let 
\[
M^X_j = \sup\{ M \in \mathbb{N}^{+}: \lambda^X_{jM } > 0 \}.
\]
By Karhunen–Loève theorem, we have $X_{ij}=\sum^{M^X_j}_{k=1} \langle X_{ij},\phi^X_{jk} \rangle \phi^X_{jk}$ almost surely. Thus, we have $X_{ij} \in {\rm Span} \left\{ \phi^X_{j1},\dots,\phi^X_{j,M^X_j}  \right\}$ almost surely. For any $1 \leq k \leq M^X_j$, we have that
\begin{equation*}
\int_{\mathcal{T}} K_{jj}(s,t) \phi^X_k(s) \phi^X_k(t) ds dt \geq \int_{\mathcal{T}} K^X_{jj}(s,t) \phi^X_k(s) \phi^X_k(t) ds dt = \lambda^X_{jk}>0,
\end{equation*}
which implies that $\phi^X_k \in {\rm Span} \left\{ \phi_{j1},\dots,\phi_{jM^{\star}_j}  \right\}$. Thus, we have ${\rm Span} \left\{ \phi^X_{j1},\dots,\phi^X_{j,M^X_j}  \right\} \subseteq {\rm Span} \left\{ \phi_{j1},\dots,\phi_{jM^{\star}_j}  \right\}$ and $X_{ij} \in {\rm Span} \left\{ \phi_{j1},\dots,\phi_{jM^{\star}_j}  \right\}$ almost surely. Similarly, we have that $Y_{ij} \in  {\rm Span} \left\{ \phi_{j1},\dots,\phi_{jM^{\star}_j}  \right\}$ almost surely.

Next, we show that $M^{\prime}_j=M^{\star}_j$ by contradiction. By the definition of $M^{\prime}_j$, we have that $M^{\prime}_j \leq M^{\star}_j$. If $M^{\prime}_j \neq M^{\star}_j$, then we have $\mathbb{V}^{M^{\prime}_j}_j \subseteq \mathbb{H}$ such that $M^{\prime}_j<M^{\star}_j$ and $X_{ij},Y_{ij} \in \mathbb{V}^{M^{\prime}_j}_j$ almost surely. This implies that there exists $\phi \in {\rm Span} \left\{ \phi_{j1},\dots,\phi_{jM^{\star}_j}  \right\} \setminus \mathbb{V}^{M^{\prime}_j}_j$ such that
\begin{align*}
& \mathbb{E} \left[ \left( \langle \phi_{jk}(t), X_{ij}(t) \rangle \right)^2 \right] = 0 \quad \text{and} \quad \mathbb{E} \left[ \left( \langle \phi_{jk}(t), Y_{ij}(t) \rangle \right)^2 \right] = 0 \\
\Rightarrow & \int_{\mathcal{T}} K^X_{jj}(s,t) \phi_{jk}(s) \phi_{jk}(t) ds dt = 0 \quad \text{and} \quad \int_{\mathcal{T}} K^Y_{jj}(s,t) \phi_{jk}(s) \phi_{jk}(t) ds dt = 0 \\
\Rightarrow & \int_{\mathcal{T}} K_{jj}(s,t) \phi_{jk}(s) \phi_{jk}(t) ds dt = 0, \\
\Rightarrow & \lambda_{jk} = 0,
\end{align*}
which contradicts the definition of $M^{\star}_j$. Thus, we must have $M^{\prime}_j=M^{\star}_j$.

\subsection{Proof of Lemma \ref{lemma:lemma1}}
\label{sec:proof-lemma-lemma1}

Let $U=V\backslash\{j,l\}$, and $a^{X,M}_{U}=\left((a^{X,M}_{j})^{\top},j \in U\right)^{\top}$. Without loss of generality, assume that $\Sigma^{X,M}$ and $\Theta^{X,M}$ take the following block structure:
\begin{equation*}
\begin{aligned}
\Sigma^{X,M}=\left[
\begin{matrix}
\Sigma^{X,M}_{jj} & \Sigma^{X,M}_{jl} & \Sigma^{X,M}_{jU} \\
\Sigma^{X,M}_{lj} & \Sigma^{X,M}_{ll} & \Sigma^{X,M}_{lU} \\
\Sigma^{X,M}_{Uj} & \Sigma^{X,M}_{Ul} & \Sigma^{X,M}_{UU} \\
\end{matrix}
\right],\quad
\Theta^{X,M}=\left[
\begin{matrix}
\Theta^{X,M}_{jj} & \Theta^{X,M}_{jl} & \Theta^{X,M}_{jU} \\
\Theta^{X,M}_{lj} & \Theta^{X,M}_{ll} & \Theta^{X,M}_{lU} \\
\Theta^{X,M}_{Uj} & \Theta^{X,M}_{Ul} & \Theta^{X,M}_{UU} \\
\end{matrix}
\right].
\end{aligned}
\end{equation*}
Let $P$ denote the submatrix:
\begin{equation*}
P=\left[
\begin{matrix}
\Theta^{X,M}_{jj} & \Theta^{X,M}_{jl} \\
\Theta^{X,M}_{lj} & \Theta^{X,M}_{ll}
\end{matrix}
\right].
\end{equation*}
By standard results for the multivariate Gaussian \citep{johnson2014applied}, we have
\begin{equation*}
\begin{aligned}
&\Var\left(a^{X,M}_{j} \mid a^{X,M}_{k},k \neq j\right)=H^{X,M}_{jj}=(\Theta^{X,M}_{jj})^{-1},\\
&\Var\left(\left[
\begin{matrix}
a^{X,M}_{j}\\
a^{X,M}_{l}
\end{matrix}
\right]
\mid a^{X,M}_{U} \right)= P^{-1}=
\left[
\begin{matrix}
(P^{-1})_{11} & (P^{-1})_{12} \\
(P^{-1})_{21} & (P^{-1})_{22}
\end{matrix}
\right].
\end{aligned}
\end{equation*}
Thus, the first statement directly follows from the first equation. To prove the second statement, we only need to note that
\begin{equation*}
\begin{aligned}
H^{X,M}_{jl} &= \Cov \left(a^{X,M}_{j}, a^{X,M}_{l} \mid a^{X,M}_{U}\right)\\
&=(P^{-1})_{12}\\
&=-(\Theta^{X,M}_{jj})^{-1}\Theta^{X,M}_{jl}(P^{-1})_{22}\\
&=-H^{X,M}_{jj}\Theta_{jl}^{X,M}H^{\backslash j,X,M}_{ll},
\end{aligned}
\end{equation*}
where the second to last equation follows from the $2\times 2$ block matrix inverse and the last equation follows from the property of multivariate Gaussian. This completes the proof.

\subsection{Proof of Theorem~\ref{Thm:smallboundThm}}

We provide the proof of Theorem~\ref{Thm:smallboundThm},
following the framework introduced in \cite{negahban2010unified}. We start by introducing some  notation.

We use $\otimes$ to denote the Kronecker product. For $\Delta \in \mathbb{R}^{pM \times pM}$, let $\theta=\vect(\Delta)\in{\mathbb{R}^{p^{2}M^{2}}}$ and $\theta^{*}=\vect({\Delta^{M}})$, where $\Delta^M$ is defined in Section 2.2. Let $\mathcal{G}=\{G_{t}\}_{t=1, \ldots, N_{\mathcal{G}}}$ be a set of indices, where $N_{\mathcal{G}}=p^{2}$ and $G_t \subset \{1,2,\cdots,p^2M^2\}$ is the set of indices for $\theta$ that correspond to the $t$-th $M\times M$ submatrix of $\Delta^M$. Thus, if $t=(j-1)p+l$, then $\theta_{G_{t}}=\vect{(\Delta_{jl})}\in{\mathbb{R}^{M^{2}}}$, where $\Delta_{jl}$ is the $(j,l)$-th $M\times{M}$ submatrix of $\Delta$. Denote the group indices of $\theta^{*}$ that belong to blocks corresponding to $E_\Delta$ as $S_{\mathcal{G}}\subseteq{\{1,2,\cdots,N_{\mathcal{G}}\}}$. Note that we define $S_\mathcal{G}$ using $E_\Delta$ and not $E_{\Delta^M}$. Therefore, as stated in Assumption~\ref{assump:SparseAssump}, $|S_{\mathcal{G}}| = s$. We further define the subspace $\mathcal{M}$ as
\begin{equation}{\label{con:Msubspace}}
\mathcal{M}\coloneqq{\{\theta\in{\mathbb{R}^{p^{2}M^{2}}} \mid \theta_{G_{t}}=0\text{ for all }t\notin{S_{\mathcal{G}}}\}}.
\end{equation}
Its orthogonal complement with respect to the Euclidean inner product is
\begin{equation}
\mathcal{M}^{\bot}\coloneqq{\{\theta\in{\mathbb{R}^{p^{2}M^{2}}} \mid \theta_{G_{t}}=0\text{ for all }t\in{S_{\mathcal{G}}}\}}.
\end{equation}
For a vector $\theta$, let $\theta_{\mathcal{M}}$ and $\theta_{\mathcal{M}^{\bot}}$ be the projection of $\theta$ on the subspaces $\mathcal{M}$ and $\mathcal{M}^{\bot}$, respectively. Let $\langle\cdot,\cdot\rangle$ represent the Euclidean inner product. Let
\begin{equation}{\label{con:Rnorm}}
\mathcal{R}(\theta)\coloneqq{\sum_{t=1}^{N_{\mathcal{G}}}\vert\theta_{G_{t}}\vert_{2}}\triangleq{\vert\theta\vert_{1,2}}.
\end{equation}
For any $v\in{\mathbb{R}^{p^{2}M^{2}}}$, the dual norm of $\mathcal{R}$ is given by
\begin{equation}{\label{con:Rdualnorm}}
\mathcal{R}^{*}(v)\coloneqq\sup_{u\in{\mathbb{R}^{p^{2}M^{2}}\backslash{\{0\}}}}\frac{\langle{u},{v}\rangle}{\mathcal{R}(u)}=\sup_{\mathcal{R}(u)\leq{1}}\langle{u},{v}\rangle.
\end{equation}
The subspace compatibility constant of $\mathcal{M}$ with respect to $\mathcal{R}$ is defined as
\begin{equation}{\label{con:SubspaceCompConst}}
\Psi(\mathcal{M})\coloneqq{\sup_{u\in{\mathcal{M}\backslash\{0\}}}}\frac{\mathcal{R}(u)}{\vert u\vert_{2}}.
\end{equation}

\begin{proof}
By Lemma~\ref{lema:S-ind-joint} and Assumption~\ref{assump:req-SXSY}, we have
\begin{equation}\label{con:KroneckerBoundSup}
\vert (S^{Y,M} \otimes{S^{X,M}})-(\Sigma^{Y,M}\otimes{\Sigma^{X,M}})\vert_{\infty}\leq \delta_n^2 + 2\delta_n\sigma_{\max}
\end{equation}
and
\begin{equation}\label{con:VectBoundSup}
\vert \vect{(S^{Y,M}-S^{X,M})}-\vect{(\Sigma^{Y,M}-\Sigma^{X,M})} \vert_{\infty}\leq 2\delta_n.
\end{equation}
The problem \eqref{eq:objectivefunc} can be written in the following form:
\begin{equation}
\hat{\theta}_{\lambda_{n}}\in\argmin_{\theta\in{\mathbb{R}^{p^{2}M^{2}}}}\mathcal{L}(\theta)+\lambda_{n}\mathcal{R}(\theta),
\end{equation}
where
\begin{equation}{\label{con:lossdef}}
\mathcal{L}(\theta)=\frac{1}{2}\theta^{\top}(S^{Y,M}\otimes{S^{X,M}})\theta-\theta^{\top}\vect({S^{Y,M}-S^{X,M}}).
\end{equation}
Here, we slightly abuse the notation and use $\mathcal{L}(\cdot)$ to denote the function of $\theta$ rather than $\Delta$.
The loss $\mathcal{L}(\theta)$ is convex and differentiable with respect to $\theta$, and it can easily be verified that $\mathcal{R}(\cdot)$ defines a vector norm. For $h \in \mathbb{R}^{p^2M^2}$, the error of the first-order Taylor series expansion of $\mathcal{L}$ is:
\begin{equation}{\label{con:errorTaylor}}
\begin{aligned}
\delta{\mathcal{L}}(h,\theta^{*})\coloneqq\mathcal{L}(\theta^{*} + h)-\mathcal{L}(\theta^{*})-\langle\nabla\mathcal{L}(\theta^{*}), h  \rangle
=\frac{1}{2}h^{\top}(S^{Y,M}\otimes{S^{X,M}}) h.
\end{aligned}
\end{equation}
From~\eqref{con:lossdef}, we see that $\nabla{\mathcal{L}}(\theta)=(S^{Y,M}\otimes{S^{X,M}})\theta-\vect({S^{Y,M}-S^{X,M}})$. By Lemma \ref{lemma:dualnorm}, we have
\begin{equation}{\label{con:RnormonL}}
\mathcal{R}^{*}(\nabla{\mathcal{L}}(\theta^{*}))=\max_{t=1,2,\cdots,N_{\mathcal{G}}} \left\vert \left[(S^{Y,M}\otimes{S^{X,M}})\theta^{*} - \vect({S^{Y,M}-S^{X,M}})\right]_{G_{t}}\right\vert_{2}.
\end{equation}

Now we establish an upper bound for $\mathcal{R}^{*}(\nabla{\mathcal{L}}(\theta^{*}))$. First, note that 
\[
(\Sigma^{Y,M}\otimes{\Sigma^{X,M}})\theta^{*}-\vect({\Sigma^{Y,M}-\Sigma^{X,M}})=\vect({\Sigma^{X,M}\Delta^{M}\Sigma^{Y,M}-(\Sigma^{Y,M}-\Sigma^{X,M})})=0.
\] 
Letting $(\cdot)_{jl}$ denote the $(j,l)$-th submatrix, we have
\begin{equation}{\label{con:transeq0}}
\begin{aligned}
&\left\vert \left[(S^{Y,M}\otimes{S^{X,M}})\theta^{*} - \vect({S^{Y,M}-S^{X,M}})\right]_{G_{t}}\right\vert_{2}\\
&=\left\vert \left[(S^{Y,M}\otimes{S^{X,M}}-\Sigma^{Y,M}\otimes{\Sigma^{X,M}})\theta^{*}-\vect{((S^{Y,M}-\Sigma^{Y,M})-(S^{X,M}-\Sigma^{X,M}))}\right]_{G_{t}}\right\vert_{2}\\
&={\Vert(S^{X,M}\Delta^{M}S^{Y,M}-\Sigma^{X,M}\Delta^{M}\Sigma^{Y,M})_{jl}-(S^{Y,M}-\Sigma^{Y,M})_{jl}-(S^{X,M}-\Sigma^{X,M})_{jl}\Vert_{F}}\\
&\leq{\|(S^{X,M}\Delta^{M}S^{Y,M}-\Sigma^{X,M}\Delta^{M}\Sigma^{Y,M})_{jl}\|_{F}+\|(S^{Y,M}-\Sigma^{Y,M})_{jl}\|_{F}+\|(S^{X,M}-\Sigma^{X,M})_{jl}\|_{F}}.
\end{aligned}
\end{equation}
For any $M\times{M}$ matrix $A$, $\|A\|_{F}\leq{M|A|_{\infty}}$, so
\begin{equation*}
\begin{aligned}
&\left\vert \left[(S^{Y,M}\otimes{S^{X,M}})\theta^{*}-\vect( {S^{Y,M}-S^{X,M}})\right]_{G_{t}} \right\vert_{2}\\
&\leq M\left[\left\vert (S^{X,M}\Delta^{M}S^{Y,M}-\Sigma^{X,M}\Delta^{M}\Sigma^{Y,M})_{jl}\right \vert_{\infty} \right. \\
& \quad \quad \left.+ \left|(S^{Y,M}-\Sigma^{Y,M})_{jl}\right|_{\infty} + \left|(S^{X,M}-\Sigma^{X,M})_{jl}\right|_{\infty}\right]\\
&\leq M\left[ \left\vert S^{X,M}\Delta^{M}S^{Y,M}-\Sigma^{X,M}\Delta^{M}\Sigma^{Y,M}\right\vert _{\infty} + \vert S^{Y,M}-\Sigma^{Y,M}\vert _{\infty} + \vert S^{X,M}-\Sigma^{X,M}\vert _{\infty} \right].
\end{aligned}
\end{equation*}

For any $A\in{\mathbb{R}^{k\times{k}}}$ and $v\in{\mathbb{R}^{k}}$, we have $|Av|_{\infty}\leq{|A|_{\infty} \vert v\vert_{1}}$. Thus, we also have
\begin{equation*}
\begin{aligned}
|S^{X,M}\Delta^{M}S^{Y,M}-\Sigma^{X,M}\Delta^{M}\Sigma^{Y,M}|_{\infty}&=|[(S^{Y,M}\otimes{S^{X,M}})-(\Sigma^{X,M}\otimes{\Sigma^{Y,M}})]\vect{(\Delta^{M})}|_{\infty}\\
&\leq{|(S^{Y,M}\otimes{S^{X,M}})-(\Sigma^{X,M}\otimes{\Sigma^{Y,M}})|_{\infty}}\vert\vect{(\Delta^{M})}\vert_{1}\\
&=|(S^{Y,M}\otimes{S^{X,M}})-(\Sigma^{X,M}\otimes{\Sigma^{Y,M}})|_{\infty}|\Delta^{M}|_{1}.
\end{aligned}
\end{equation*}

Combining the inequalities gives an upper bound uniform over $\mathcal{G}$ (i.e., for all $G_t$):
\begin{multline*}
\left\vert \left[(S^{Y,M}\otimes{S^{X,M}})\theta^{*}-\vect({S^{Y,M}-S^{X,M}})\right]_{G_{t}}\right\vert_{2} \\
\leq M [|(S^{Y,M}\otimes{S^{X,M}})-(\Sigma^{X,M}\otimes{\Sigma^{Y,M}})|_{\infty}|\Delta^{M}|_{1}  \\
  +|S^{Y,M}-\Sigma^{Y,M}|_{\infty}+|S^{X,M}-\Sigma^{X,M}|_{\infty}],
\end{multline*}
which implies
\begin{equation}{\label{con:RnormBoundMNorm}}
\begin{aligned}
\mathcal{R}^{*}\left(\nabla{\mathcal{L}}(\theta^{*})\right)
& \leq M[|(S^{Y,M}\otimes{S^{X,M}})-(\Sigma^{X,M}\otimes{\Sigma^{Y,M}})|_{\infty}|\Delta^{M}|_{1}\\
&\qquad\qquad\qquad\qquad+|S^{Y,M}-\Sigma^{Y,M}|_{\infty}+|S^{X,M}-\Sigma^{X,M}|_{\infty}].
\end{aligned}
\end{equation}
Assuming $|S^{X,M}-\Sigma^{X,M}|_{\infty}\leq{\delta_n}$ and $|S^{Y,M}-\Sigma^{Y,M}|_{\infty}\leq \delta_n$ implies
\begin{equation}{\label{con:RnormBoundDelta}}
\mathcal{R}^{*}\left(\nabla{\mathcal{L}}(\theta^{*})\right)\leq{M[(\delta_n^2 + 2\delta_n\sigma_{\max})|\Delta^{M}|_{1}+ 2\delta_n]}.
\end{equation}
Setting
\begin{equation}{\label{con:lambdavalueDelta}}
\lambda_{n}=2M\left[\left(\delta_n^2 + 2\delta_n\sigma_{\max}\right)\left \vert \Delta^M \right\vert_{1} + 2\delta_n\right],
\end{equation}
then implies that $\lambda_{n}\geq{2\mathcal{R}^{*}\left(\nabla{\mathcal{L}}(\theta^{*})\right)}$. Thus, invoking Lemma 1 in \cite{negahban2010unified}, $h=\hat{\theta}_{\lambda_{n}}-\theta^{*}$ must satisfy
\begin{equation}{\label{con:DeltathetaConstraintRnorm}}
\mathcal{R}(h_{\mathcal{M}^{\bot}})\leq{3\mathcal{R}(h_{\mathcal{M}})}+4\mathcal{R}(\theta^{*}_{\mathcal{M}^{\bot}}),
\end{equation}
where $\mathcal{M}$ is defined in \eqref{con:Msubspace}. Equivalently,
\begin{equation}{\label{con:DeltathetaConstraint12Norm}}
\vert h_{\mathcal{M}^{\bot}}\vert_{1,2}\leq{3 \vert h_{\mathcal{M}}\vert_{1,2}}+4\vert \theta^{*}_{\mathcal{M}^{\bot}}\vert_{1,2}.
\end{equation}
By the definition of $\nu_2$, we have
\begin{equation}{\label{con:DeltathetaMbot}}
\vert \theta^{*}_{\mathcal{M}^{\bot}}\vert_{1,2}=\sum_{t\notin{\mathcal{S}_{\mathcal{G}}}}\vert\theta^{*}_{G_{t}}\vert_{2}\leq \left(p(p+1)/2-s\right)\nu_2\leq p^2\nu_2.
\end{equation}

Next, we show that $\delta\mathcal{L}(h, \theta^{*})$, as defined in \eqref{con:errorTaylor}, satisfies the Restricted Strong Convexity property: $\delta\mathcal{L}(h,\theta^{*})\geq{\kappa_{\mathcal{L}}\vert h \vert^{2}_{2}} - \omega^2_\mathcal{L}\left(\theta^{*}\right)$ whenever $h$ satisfies \eqref{con:DeltathetaConstraint12Norm}.
We have
\begin{equation*}
\begin{aligned}
\theta^{\top}(S^{Y,M}\otimes{S^{X,M}})\theta&=\theta^{\top}(\Sigma^{Y,M}\otimes{\Sigma^{X,M}})\theta+\theta^{\top}(S^{Y,M}\otimes{S^{X,M}}-\Sigma^{Y,M}\otimes{\Sigma^{X,M}})\theta\\
&\geq{\theta^{\top}(\Sigma^{Y,M}\otimes{\Sigma^{X,M}})\theta-|\theta^{\top}(S^{Y,M}\otimes{S^{X,M}}-\Sigma^{Y,M}\otimes{\Sigma^{X,M}})\theta|}\\
&\geq{\lambda^{*}_{\min}}\vert \theta \vert^{2}_{2} - M^{2}|S^{Y,M}\otimes{S^{X,M}}-\Sigma^{Y,M}\otimes{\Sigma^{X,M}}|_{\infty}\vert\theta\vert^{2}_{1,2},
\end{aligned}
\end{equation*}
where the last inequality follows from Lemma~\ref{lemma:QRleq12Norm} and $\lambda^{*}_{\min}=\lambda_{\min}(\Sigma^{X,M})\times{\lambda_{\min}(\Sigma^{Y,M})}=\lambda_{\min}(\Sigma^{Y,M}\otimes{\Sigma^{X,M}})>0$. Thus,
\begin{equation*}
\begin{aligned}
\delta\mathcal{L}(h,\theta^{*})&=\frac{1}{2}h^{\top}(S^{Y,M}\otimes{S^{X,M}})h\\
&\geq{\frac{1}{2}\lambda^{*}_{\min}}\vert h \vert^{2}_{2} - \frac{1}{2}M^{2}|S^{Y,M}\otimes{S^{X,M}}-\Sigma^{Y,M}\otimes{\Sigma^{X,M}}|_{\infty}\vert h\vert^{2}_{1,2}.
\end{aligned}
\end{equation*}
By Lemma \ref{lemma:L12NormleqL2Norm} and \eqref{con:DeltathetaConstraint12Norm}, we have 
\begin{equation*}
\begin{aligned}
\vert h\vert^{2}_{1,2}&=(\vert h_{\mathcal{M}}\vert_{1,2}+\vert h_{\mathcal{M}^{\bot}}\vert_{1,2})^{2}\leq16({\vert h_{\mathcal{M}}\vert_{1,2}}+\vert \theta^{*}_{\mathcal{M}^{\bot}}\vert_{1,2})^{2}\\
&\leq 16(\sqrt{s}\vert h\vert_{2}+p^2\nu_2)^2\leq 32s\vert h\vert^{2}_{2}+32p^{4}\nu_2^2.
\end{aligned}
\end{equation*}
Combining with the above equation, we get
\begin{equation}
\begin{aligned}
\delta\mathcal{L}(h,\theta^{*})&\geq{\left[\frac{1}{2}\lambda^{*}_{\min}-16M^{2}s|S^{Y,M}\otimes{S^{X,M}}-\Sigma^{Y,M}\otimes{\Sigma^{X,M}}|_{\infty}\right]}\vert h\vert^{2}_{2}\\
&\qquad\qquad-16M^2p^4\nu_2^2|S^{Y,M}\otimes{S^{X,M}}-\Sigma^{Y,M}\otimes{\Sigma^{X,M}}|_{\infty}\\
&\geq \left[\frac{1}{2}\lambda^{*}_{\min}-8M^{2}s\left(\delta^2_{n}+2\delta^2_{n}\sigma_{\max}\right)\right] \vert h\vert^{2}_{2}\\
&\qquad\qquad-16M^2p^4\nu_2^2\left(\delta^2_{n} + 2\delta_{n}\sigma_{\max}\right).
\end{aligned}
\end{equation}
Thus, appealing to \eqref{con:KroneckerBoundSup}, the Restricted Strong Convexity property holds with 
\begin{equation}
\begin{aligned}
\kappa_{\mathcal{L}} &\; = \; \frac{1}{2}\lambda^{*}_{\min}-8M^{2}s\left(\delta^2 + 2\delta_n\sigma_{\max}\right),\\
\omega_{\mathcal{L}}& \;=\; 4Mp^2\nu_2\sqrt{\delta_n^2 + 2\delta_n\sigma_{\max}}.
\end{aligned}    
\end{equation}
When $\delta_n < \frac{1}{4}\sqrt{ \frac{\lambda^{*}_{\min} + 16M^2s (\sigma_{\max})^2}{ M^2 s}} - \sigma_{\max}$ as we assumed in the theorem, then $\kappa_{\mathcal{L}}>0$. 
By Theorem 1 of \cite{negahban2010unified} and Lemma~\ref{lemma:L12NormleqL2Norm}, letting  $$\lambda_{n}=2M\left[\left(\delta_n^2 +2 \delta_n \sigma_{\max}\right)\vert \Delta^{M}\vert_{1}+ 2\delta_n\right],$$ as in \eqref{con:lambdavalueDelta}, ensures that
\begin{equation}
\begin{aligned}
\|\hat{\Delta}^{M}-\Delta^{M}\|^{2}_{F}&=\vert\hat{\theta}_{\lambda_{n}}-\theta^{*}\vert^{2}_{2}\\
&\leq{9\frac{\lambda^{2}_{n}}{\kappa^{2}_{\mathcal{L}}}}\Psi^{2}(\mathcal{M})+\frac{\lambda_{n}}{\kappa_{\mathcal{L}}}\left(2\omega^{2}_{\mathcal{L}}+4\mathcal{R}(\theta^{*}_{\mathcal{M}^{\bot}}) \right)\\
&=\frac{9\lambda^{2}_{n}s}{\kappa^{2}_{\mathcal{L}}}+\frac{2\lambda_{n}}{\kappa_{\mathcal{L}}}(\omega^{2}_{\mathcal{L}}+2p^2\nu_2)\\
& = \Gamma^2_n.
\end{aligned}
\end{equation}

We then prove that $\hat{E}_{\Delta}=E_{\Delta}$. Recall that we have assumed that $0<\Gamma_n<\tau/2=(\nu_1-\nu_2)/2$ and $\nu_2+\Gamma_n \leq \epsilon_n < \nu_1-\Gamma_n$.
Note that we have $\|\hat{\Delta}^M_{jl}-\Delta^{M}_{jl}\|_{F}\leq{\|\hat{\Delta}^M-\Delta^{M}\|_{F}}\leq \Gamma_n$ for any $(j,l)\in{V^{2}}$. Recall that
\begin{equation}{\label{assump:IntrinsicDimLarge}}
E_{\Delta}\;=\;\{(j,l)\in{V^2}:\; j\neq{l},D_{jl}>0 \}.
\end{equation}
First, we prove that $E_{\Delta}\subseteq{\hat{E}_{{\Delta}}}$. For any $(j,l)\in{E_{\Delta}}$, by the definition of $\nu_1$ in Section~\ref{sec:thm-E-Delta}, we have
\begin{equation*}
\begin{aligned}
\|\hat{\Delta}^M_{jl}\|_{F}&\geq{\|\Delta^{M}_{jl}\|_{F}-\|\hat{\Delta}_{jl}^M-\Delta^{M}_{jl}\|_{F}}\\
&\geq \nu_1-\Gamma_n\\
&>\epsilon_{n}.
\end{aligned}
\end{equation*}
The last inequality holds because we have assumed that $\epsilon_{n} <\nu_1-\Gamma_n$. Thus, by the definition of $\hat{E}_{{\Delta}}$ in~\eqref{con:changedgeest}, we have $(j,l)\in{\hat{E}_{{\Delta}}}$, which further implies that $E_{\Delta}\subseteq{\hat{E}_{{\Delta}}}$.

We then show $\hat{E}_{{\Delta}}\subseteq{E_{\Delta}}$. Let $\hat{E}^{c}_{{\Delta}}$ and $E^{c}_{\Delta}$ denote the complement of $\hat{E}_{{\Delta}}$ and $E_{\Delta}$. For any $(j,l)\in{E^{c}_{\Delta}}$, which also means that $(l,j)\in{E^{c}_{\Delta}}$, by the definition of $\nu_2$, we have that
\begin{equation*}
\begin{aligned}
\|\hat{\Delta}^M_{jl}\|_{F}&\leq{\|\Delta^{M}_{jl}\|_{F}+\|\hat{\Delta}_{jl}^M-\Delta^{M}_{jl}\|_{F}}\\
&\leq \nu_2+\Gamma_n\\
& \leq \epsilon_n.
\end{aligned}
\end{equation*}
Again, the last inequality is true because we have assumed $\epsilon_{n}\geq \nu_2+\Gamma_n$. Thus, by the definition of $\hat{E}_{{\Delta}}$, we have $(j,l)\notin{\hat{E}_{{\Delta}}}$ or $(j,l)\in{\hat{E}^{c}_{{\Delta}}}$. This implies that $E^{c}_{\Delta}\subseteq{\hat{E}^{c}_{{\Delta}}}$, or $\hat{E}_{{\Delta}}\subseteq{E_{\Delta}}$. Combining with the previous conclusion that $E_{\Delta}\subseteq{\hat{E}_{{\Delta}}}$, the proof is complete.
\end{proof}

\subsection{Proof of Theorem~\ref{Thm:ErrSamCovDis}}
\label{sec:proof-thm-ErrSamCovDis}

We only need to prove that
\begin{multline}\label{eq:ErrCovDis-prob}
P\left( \lvert S^M-\Sigma^{M} \rvert_{\infty}>\delta \right) 
\leq C_1 np \exp\{ -C_2\Phi(T,L)M^{-(1+\beta)}\delta \} \\
+ C_3(pM)^2\exp\{ -C_4nM^{-2(1+\beta)}\delta^2 \} 
+ C_5 npL \exp \left\{ - \frac{ C_6 M^{-2(1+\beta)} \delta^2}{ \tilde{\psi}_2(T,L) } \right\},
\end{multline}
where $S^M$ can be understood as $S^{X,M}$ or $S^{Y,M}$ and $\Sigma^{M}$ can be understood as $\Sigma^{X,M}$ or $\Sigma^{Y,M}$, with $C_k=C^X_k$ or $C_k=C^Y_k$ for $k=1,2,3,4$. To see that \eqref{eq:ErrCovDis-prob} implies \eqref{eq:ErrCovDis}, we first note that \eqref{eq:ErrCovDis-prob} implies that
\begin{equation*}
\begin{aligned}
P&\left( \lvert S^{X,M}-\Sigma^{X,M} \rvert_{\infty} \leq \delta \,\text{and}\, \lvert S^{Y,M}-\Sigma^{Y,M} \rvert_{\infty} \leq \delta \right) \\
& \geq  1 - P\left( \lvert S^{X,M}-\Sigma^{X,M} \rvert_{\infty}>\delta \right) - P\left( \lvert S^{Y,M}-\Sigma^{Y,M} \rvert_{\infty}>\delta \right) \\
& \geq  1 - 2  \bar{C}_1 pM \exp\{ -\bar{C}_2\Phi(T,L)M^{-(1+\beta)}\delta \} - 2\bar{C}_3(pM)^2\exp\{ -\bar{C}_4nM^{-2(1+\beta)}\delta^2 \} ,
\end{aligned}
\end{equation*}
where $\bar{C}_k$ for $k=1,2,3,4$ are defined in Theorem~\ref{Thm:ErrSamCovDis}. Thus, letting the last two terms in the last line of the above equation be $\iota/2$, we then have \eqref{eq:ErrCovDis}. In this way, the rest of the proof will focus on proving \eqref{eq:ErrCovDis-prob}.

Denote the $(j,l)$-th submatrix of $S^M$ as $S^M_{jl}$, and the $(k,m)$-th entry of $S^M_{jl}$ as $\hat{\sigma}_{jl,km}$. We have $S^M=(\hat{\sigma}_{jl,km})_{1\leq j,l \leq p , \leq k,m \leq M}$ and $\Sigma^M=(\sigma_{jl,km})_{1\leq j,l \leq p , \leq k,m \leq M}$. Then, by the definition of $S^M$ and $\Sigma^M$, we have
\begin{equation*}
\begin{aligned}
\hat{\sigma}_{jl,km}
=\frac{1}{n}\sum^{n}_{i=1}\hat{a}_{ijk}\hat{a}_{ilm}
\qquad\text{and}\qquad
\sigma_{jl,km}=\mathbb{E}\left[ a_{ijk}a_{ilm} \right].
\end{aligned}
\end{equation*}
Note that
\begin{equation*}
\begin{aligned}
\hat{a}_{ijk}&=\langle \hat{g}_{ij},\hat{\phi}_{jk} \rangle\\
&=\langle g_{ij}+\hat{g}_{ij}-g_{ij},\phi_{jk}+\hat{\phi}_{jk}-\phi_{jk} \rangle\\
&=\langle g_{ij},\phi_{jk} \rangle+\langle g_{ij},\hat{\phi}_{jk}-\phi_{jk} \rangle+\langle \hat{g}_{ij}-g_{ij},\phi_{jk} \rangle+\langle \hat{g}_{ij}-g_{ij},\hat{\phi}_{jk}-\phi_{jk} \rangle\\
&=a_{ijk}+\langle g_{ij},\hat{\phi}_{jk}-\phi_{jk} \rangle+\langle \hat{g}_{ij}-g_{ij},\phi_{jk} \rangle+\langle \hat{g}_{ij}-g_{ij},\hat{\phi}_{jk}-\phi_{jk} \rangle.
\end{aligned}
\end{equation*}
Thus, we have
\begin{equation*}
\begin{aligned}
\hat{\sigma}_{jl,km}-\sigma_{jl,km}=\frac{1}{n}\sum^{n}_{i=1}\left( \hat{a}_{ijk}\hat{a}_{ilm}-\sigma_{jl,km} \right)
=\sum^{16}_{u=1}I_{u},
\end{aligned}
\end{equation*}
where
\begingroup
\allowdisplaybreaks
\begin{align*}
I_1&=\frac{1}{n}\sum^{n}_{i=1}\left( a_{ijk}a_{ilm}-\mathbb{E}(a_{ijk}a_{ilm}) \right),\\
I_2&=\frac{1}{n}\sum^{n}_{i=1}a_{ijk}\langle \hat{g}_{il}-g_{il},\phi_{lm} \rangle,\\
I_3&=\frac{1}{n}\sum^{n}_{i=1}a_{ijk}\langle g_{il},\hat{\phi}_{lm}-\phi_{lm} \rangle,\\
I_4&=\frac{1}{n}\sum^{n}_{i=1}a_{ijk}\langle \hat{g}_{il}-g_{il},\hat{\phi}_{lm}-\phi_{lm} \rangle,\\
I_5&=\frac{1}{n}\sum^{n}_{i=1}a_{ilm}\langle \hat{g}_{ij}-g_{ij},\phi_{jk} \rangle,\\
I_6&=\frac{1}{n}\sum^{n}_{i=1}\langle \hat{g}_{ij}-g_{ij},\phi_{jk} \rangle \langle \hat{g}_{il}-g_{il},\phi_{lm} \rangle,\\
I_7&=\frac{1}{n}\sum^{n}_{i=1}\langle \hat{g}_{ij}-g_{ij},\phi_{jk} \rangle \langle g_{il},\hat{\phi}_{lm}-\phi_{lm} \rangle,\\
I_8&=\frac{1}{n}\sum^{n}_{i=1}\langle \hat{g}_{ij}-g_{ij},\phi_{jk} \rangle \langle \hat{g}_{il}-g_{il},\hat{\phi}_{lm}-\phi_{lm} \rangle,\\
I_9&=\frac{1}{n}\sum^{n}_{i=1}\langle g_{ij},\hat{\phi}_{jk}-\phi_{jk} \rangle a_{ilm},\\
I_{10}&=\frac{1}{n}\sum^{n}_{i=1}\langle g_{ij},\hat{\phi}_{jk}-\phi_{jk} \rangle \langle \hat{g}_{il}-g_{il},\phi_{lm} \rangle,\\
I_{11}&=\frac{1}{n}\sum^{n}_{i=1}\langle g_{ij},\hat{\phi}_{jk}-\phi_{jk} \rangle \langle g_{il},\hat{\phi}_{lm}-\phi_{lm} \rangle,\\
I_{12}&=\frac{1}{n}\sum^{n}_{i=1}\langle g_{ij},\hat{\phi}_{jk}-\phi_{jk} \rangle \langle \hat{g}_{il}-g_{il},\hat{\phi}_{lm}-\phi_{lm} \rangle,\\
I_{13}&=\frac{1}{n}\sum^{n}_{i=1}\langle \hat{g}_{ij}-g_{ij},\hat{\phi}_{jk}-\phi_{jk} \rangle a_{ilm},\\
I_{14}&=\frac{1}{n}\sum^{n}_{i=1}\langle \hat{g}_{ij}-g_{ij},\hat{\phi}_{jk}-\phi_{jk} \rangle \langle \hat{g}_{il}-g_{il},\phi_{lm} \rangle,\\
I_{15}&=\frac{1}{n}\sum^{n}_{i=1}\langle \hat{g}_{ij}-g_{ij},\hat{\phi}_{jk}-\phi_{jk} \rangle \langle g_{il},\hat{\phi}_{lm}-\phi_{lm} \rangle,\\
I_{16}&=\frac{1}{n}\sum^{n}_{i=1}\langle \hat{g}_{ij}-g_{ij},\hat{\phi}_{jk}-\phi_{jk} \rangle \langle \hat{g}_{il}-g_{il},\hat{\phi}_{lm}-\phi_{lm} \rangle.
\end{align*}
\endgroup
Note that $I_{u}$, $u=1,\ldots,16$ depend on $j,l,k,m$. To simplify the notation, we do not explicitly denote this fact. Thus, for any $0<\delta\leq 1$, when for any $1\leq j,l \leq p$ and $1 \leq k,m \leq M$, if $\lvert I_{u} \rvert \leq \delta/16$, $u=1,\ldots,16$, we have $\lvert S^{M}-\Sigma^{M} \rvert_{\infty}\leq \delta$. We now calculate the probability of $\lvert I_{u} \rvert \leq \delta/16$, $u=1,\ldots,16$, $1\leq j,l \leq p$ and $1 \leq k,m \leq M$.

By Assumption~\ref{assump:EigenAssump} (i), we have constants $d_1,d_2>0$, such that $\lambda_{jk} \leq d_1 k^{-\beta}$, $d_{jk}\leq d_2 k^{1+\beta}$ for any $j=1,\ldots,p$ and $k \geq 1$. Let $d_0=\max\{1,\sqrt{d_1},d_2\}$ and $\xi_{ijk}=\lambda^{-1/2}_{jk}a_{ijk}$ so that $\xi_{ijk} \sim N(0,1)$ are i.i.d.~for $i=1,\ldots,n$. Let
\begin{equation}
\delta_1=\frac{\delta}{144d^{2}_{0}M^{1+\beta}\sqrt{3\lambda_{0,\max}}}
\quad\text{and}\quad
\delta_2=9\lambda_{0,\max}\delta_1=\frac{\delta}{16d^{2}_{0}M^{1+\beta}\sqrt{3\lambda_{0,\max}}},
\end{equation}
where $\lambda_{0,\max}=\max_{j \in V} \sum^{\infty}_{k=1} \lambda_{jk}$.
Recall that $\hat{K}_{jj}$, $j=1,\ldots,p$, are defined in \eqref{eq:CovEstDis}. We define five events $A_1$-$A_5$ as follows:
\begin{equation}
\begin{aligned}
A_1&: \; \lVert \hat{g}_{ij}-g_{ij} \rVert \leq \delta_1, \quad \forall i=1,\ldots,n \ \forall j=1,\ldots,p,\\
A_2&: \; \lVert \hat{K}_{jj}-K_{jj} \rVert_{\text{HS}} \leq \delta_2 \quad \forall j=1,\ldots,p,\\
A_3&: \; \frac{1}{n}\sum^{n}_{i=1} \xi^{2}_{ijk} \leq \frac{3}{2} \quad \forall j=1,\ldots,p \ \forall k=1,\ldots, M,\\
A_4&: \; \frac{1}{n}\sum^{n}_{i=1}\lVert g_{ij} \rVert^{2}\leq 2\lambda_{0,\max} \quad \forall j=1,\ldots,p,\\
A_5&: \; \lvert \frac{1}{n}\sum^{n}_{i=1}a_{ijk}a_{ilm}-\sigma_{jl,km} \rvert \leq \frac{\delta}{16} \quad \forall 1 \leq j,l \leq \ 1 \leq k,m \leq M.
\end{aligned}
\end{equation}
Without loss of generality, we assume that $\langle \hat{\phi}_{jl},\phi_{jl} \rangle\geq 0$ for any $1\leq j \leq p$ and $1 \leq k \leq M$ (if this is not true, we only need to use $-\phi_{jl}$ to substitute $\phi_{jl}$).
Then, by Lemma~\ref{lemma:I1bound}-Lemma~\ref{lemma:I16bound}, when $A_1$-$A_5$ hold simultaneously, we have $\lvert I_{u} \rvert \leq \delta/16$ for all $u=1,\ldots,16$, $1 \leq j,l \leq p$ and $ \ 1 \leq k,m \leq M$. Therefore,
\begin{equation*}
\begin{aligned}
P&\left(\lvert S^M-\Sigma^M \rvert_{\infty} \leq \delta \right)\\
&\geq P\left( \lvert I_{u} \rvert \leq \delta/16, \; \text{for all} \; 1 \leq u \leq 16, 1 \leq j,l \leq \ 1 \leq k,m \leq M \right)\\
&\geq P\left( \bigcap^{5}_{w=1}A_w \right),
\end{aligned}
\end{equation*}
which implies
\begin{equation}
P\left(\lvert S^M-\Sigma^M \rvert_{\infty} > \delta \right)\leq P\left( \bigcup^{5}_{w=1}\bar{A}_w \right) \leq \sum^{5}_{w=1}P\left(\bar{A}_w \right),
\end{equation}
where the last inequality follows Boole's inequality and $\bar{A}$ denotes the complement of $A$. Then we only need to give an upper bound for $P(\bar{A}_w)$, $w=1,\ldots,5$.

By Theorem~\ref{Thm:approxerrsinglefunc} and the definition of $\tilde{\psi}_1$-$\tilde{\psi}_4$, 
we have
\begin{multline*}
P(\bar{A}_1)= P\left(\lVert \hat{g}_{ij}-g_{ij} \rVert>\delta_1 \; \exists 1\leq i \leq n,1\leq j \leq p\right)\\
\leq 2(np)\left\{ \exp\left( -\frac{\delta_{1}^{2}}{72\tilde{\psi}^2_{1}(T,L)+6\sqrt{2}\tilde{\psi}_1(T,L)\delta_1} \right) 
+ L \exp \left( - \frac{\delta_1^2}{ \tilde{\psi}_2(T,L)} \right) 
\right.\\
+ \left. \exp\left(-\frac{\delta_{1}^{2}}{72\lambda_{0,\max}\tilde{\psi}_3(L)+6\sqrt{2\lambda_{0,\max}\tilde{\psi}_3(L)}\delta_1}\right) \right\}.
\end{multline*}
Let $\gamma_1=\sqrt{2}/(12\times144d^{2}_{0}3\sqrt{3\lambda_{0,\max}})$ and $\gamma_3=1/(72\lambda_{0,\max}\times(144d^{2}_{0}\sqrt{3\lambda_{0,\max}})^2)$. If $\tilde{\psi}_1<\gamma_1\cdot\delta/M^{1+\beta}$ and $\tilde{\psi}_3<\gamma_3\cdot\delta^2/M^{2+2\beta}$, then $72\tilde{\psi}^{2}_1<6\sqrt{2}\tilde{\psi}_1\delta_1$ and $72\lambda_{0,\max}\tilde{\psi}_3<6\sqrt{2\lambda_{0,\max}\tilde{\psi}_3}\delta_1$, which implies that
\begin{equation}\label{eq:A1bound}
\begin{aligned}
&P(\bar{A}_1)\\
&\leq 2np\left\{ \exp\left( -\frac{\delta_{1}}{12\sqrt{2}\tilde{\psi}_1(T,L)} \right)+\exp\left( -\frac{\delta_1}{12\sqrt{2\lambda_{0,\max}}\sqrt{\tilde{\psi}_3(L)}} \right) + L \exp \left( - \frac{\delta_1^2}{ \tilde{\psi}_2(T,L)} \right) \right\}\\
&\overset{(i)}{\leq} 2np\left\{ \exp\left( -\frac{\delta_1}{12\sqrt{2}}\Phi(T,L) \right)+ \exp\left( -\frac{\delta_1}{12\sqrt{2\lambda_{0,\max}}}\Phi(T,L) \right) + L \exp \left( - \frac{\delta_1^2}{ \tilde{\psi}_2(T,L)}\right) \right\}\\
&\overset{(ii)}{\leq} 4np \exp\left( -\frac{\delta_1}{12\sqrt{2\lambda_{0,\max}}}\Phi(T,L) \right) + 2 n p L \exp \left( - \frac{\delta_1^2}{ \tilde{\psi}_2(T,L)} \right)\\
&=4np\exp\left(-\frac{1}{1728\sqrt{6}\lambda_{0,\max}d^{2}_{0}}\cdot\frac{\delta}{M^{1+\beta}}\cdot\Phi(T,L)\right) 
\\
& \quad + 2 n p L \exp \left( - \frac{\delta^2  }{6228 d^{4}_{0} \lambda_{0,\max} M^{2+2\beta} \tilde{\psi}_2(T,L) }  \right) ,
\end{aligned}
\end{equation}
where $(i)$ follows the definition of $\Phi(T,L)$ and $(ii)$ follows the fact that $\lambda_{0,\max}>1$.

Next, we bound $P(\bar{A}_4)$. For any two real values $z_1,z_2$ and any positive integer $k$, we have
\begin{equation*}
    (z_1+z_2)^{k}\leq \left(\vert z_1 \vert + \vert z_2 \vert \right)^{k}
    =2^{k}\left(\frac{1}{2}\vert z_1 \vert + \frac{1}{2} \vert z_2 \vert \right)^{k} \leq 2^{k-1}\left(\vert z_1 \vert + \vert z_2 \vert \right),
\end{equation*}
where the last line follows from Jensen's inequality. Since $\mathbb{E}[\Vert g_{ij} \Vert^2]=\lambda_{j0}$, $i=1,\ldots,n$, $j=1,2\dots,p$, then, by Jensen's inequality and Lemma~\ref{lemma:2kmomentGuassFunc}, for any $k\geq 2$, we have
\begin{equation*}
\begin{aligned}
\mathbb{E}\left[ \left(\Vert g_{ij} \Vert^2-\lambda_{j0}\right)^{k} \right]
\leq 2^{k-1} \left( \mathbb{E}\left[ \Vert g_{ij} \Vert^{2k} +\lambda^{k}_{j0}\right] \right)
\leq 2^{k-1} \left( (2\lambda_{j0})^{k}k!+\lambda^{k}_{j0} \right)
\leq (4\lambda_{j0})^{k}k!\,.
\end{aligned}
\end{equation*}
Thus,
\begin{equation*}
\sum^{n}_{i=1}\mathbb{E}\left[ \left(\Vert g_{ij} \Vert^2-\lambda_{j0}\right)^{k} \right] \leq \frac{k!}{2}n\times (32\lambda^{2}_{j0})\times (4\lambda_{j0})^{k-2}.
\end{equation*}
Then by Lemma~\ref{lemma:HilbertBernstein}, for any $\epsilon>0$, we have
\begin{equation*}
P\left( \left\vert \frac{1}{n}\sum^{n}_{i=1}\left\Vert g_{ij} \right\Vert^2 - \lambda_{j0} \right\vert>\epsilon \right)\leq 2\exp\left( -\frac{n\epsilon^{2}}{64\lambda^{2}_{j0}+8\lambda_{j0}\epsilon} \right).
\end{equation*}
Finally, 
\begin{equation*}
\begin{aligned}
P\left( \frac{1}{n}\sum^{n}_{i=1}\left\Vert g_{ij} \right\Vert^2>2\lambda_{0,\max} \right)&\leq P\left( \frac{1}{n}\sum^{n}_{i=1}\left\Vert g_{ij} \right\Vert^2>2\lambda_{j0} \right)\\
&\leq P\left( \left\vert \frac{1}{n}\sum^{n}_{i=1}\left\Vert g_{ij} \right\Vert^2 - \lambda_{j0} \right\vert>\lambda_{j0} \right)\\
&\leq 2\exp\left(-\frac{n}{72}\right)
\end{aligned}
\end{equation*}
and
\begin{equation}\label{eq:A4bound}
P(\bar{A}_4)=P\left( \frac{1}{n}\sum^{n}_{i=1}\left\Vert g_{ij} \right\Vert^2>2\lambda_{0,\max},\; \exists j=1,\ldots,p \right)\leq 2p\exp\left(-\frac{n}{72}\right).
\end{equation}

Next, we bound $P(\bar{A}_2)$.
Let $\hat{K}^g_{jj}(s,t) = \frac{1}{n} \sum^n_{i=1} g_{ij}(s) g_{ij}(t)$ and $K_{jj}(s,t)=\mathbb{E}[ g_{ij}(s) g_{ij}(t) ]$, $j \in V$, and define
\begin{equation*}
A^{\prime}_2: \; \lVert \hat{K}^g_{jj}-K^g_{jj} \rVert_{\text{HS}} \leq \delta_2 \quad \forall j=1,\ldots,p.
\end{equation*}
Note that
\begin{equation*}
\begin{aligned}
&\Vert \hat{K}^g_{jj}(s,t)-K^g_{jj}(s,t) \Vert_{\text{HS}}\\
&=\left\Vert \frac{1}{n}\sum^{n}_{i=1}\left[ \hat{g}_{ij}(s)-g_{ij}(s)+g_{ij}(s) \right]\left[ \hat{g}_{ij}(t)-g_{ij}(t)+g_{ij}(t) \right] - K^g_{jj}(s,t) \right\Vert_{\text{HS}}\\
&\leq \frac{1}{n}\sum^{n}_{i=1}\Vert \hat{g}_{ij}-g_{ij} \Vert^2 + \frac{2}{n}\sum^{n}_{i=1}\Vert \hat{g}_{ij}-g_{ij} \Vert \cdot \Vert g_{ij} \Vert + \left\Vert \frac{1}{n}\sum^{n}_{i=1}\left[g_{ij}(s)g_{ij}(t)-K^g_{jj}(s,t)  \right] \right\Vert_{\text{HS}}.
\end{aligned}
\end{equation*}
Let
\begin{equation*}
\begin{aligned}
A_6&: \; \left\Vert \frac{1}{n}\sum^{n}_{i=1}\left[g_{ij}(s)g_{ij}(t)-K^g_{jj}(s,t)  \right] \right\Vert_{\text{HS}} \leq 4\lambda_{0,\max}\delta_{1},\; \forall j=1,\ldots,p.
\end{aligned}
\end{equation*}
We show that $A_1\cap A_4 \cap A_6\Longrightarrow A^{\prime}_2$. By Jensen's inequality, we have
\begin{equation*}
\frac{1}{n}\sum^{n}_{i=1}\left\Vert g_{ij} \right\Vert \leq \sqrt{\frac{1}{n}\sum^{n}_{i=1}\left\Vert g_{ij} \right\Vert^2}.
\end{equation*}
On the event $A_4$, we have $(1/n)\sum^{n}_{i=1}\left\Vert g_{ij} \right\Vert \leq \sqrt{2\lambda_{0,\max}}$ for any $j=1,\ldots,p$. When $A_1$, $A_4$, and $A_6$ hold simultaneously, we have
\begin{equation*}
\Vert \hat{K}^g_{jj}(s,t)-K^g_{jj}(s,t) \Vert_{\text{HS}} \leq \delta^{2}_{1}+2\sqrt{2\lambda_{0,\max}}\delta_1+4\lambda_{0,\max}\delta_{1}\leq 9\lambda_{0,\max}\delta_1,
\end{equation*}
which is $A_2$. Therefore, $A_1\cap A_4 \cap A_6\Longrightarrow A^{\prime}_2$, which implies that $\bar{A^{\prime}}_2\Longrightarrow \bar{A}_1\cup\bar{A}_4\cup\bar{A}_6$ and $P(\bar{A^{\prime}}_2)\leq P(\bar{A}_1)+P(\bar{A}_4)+P(\bar{A}_6)$. We upper bound $P(\bar{A}_6)$ next.

By Lemma~\ref{lemma:crosscovfuncbound}, for any $j=1,\ldots,p$, we have
\begin{equation*}
P\left( \left\Vert \frac{1}{n}\sum^{n}_{i=1}\left[g_{ij}(s)g_{ij}(t)-K^g(s,t)  \right] \right\Vert_{\text{HS}}>4\lambda_{0,\max}\delta_{1} \right)\leq 2\exp\left(-\frac{n\delta^{2}_{1}}{6}\right).
\end{equation*}
Thus,
\begin{equation}\label{eq:A6bound}
P(\bar{A}_6)\leq 2p\exp\left(-\frac{n\delta^{2}_{1}}{6}\right)=2p\exp\left(-\frac{1}{373248d^{4}_{0}\lambda^{2}_{0,\max}}\times n\frac{\delta^2}{M^{2+2\beta}}\right).
\end{equation}
Combining \eqref{eq:A1bound}, \eqref{eq:A4bound}, and \eqref{eq:A6bound}, we have
\begin{multline*}
P(\bar{A^{\prime}}_2) \leq 4pM\exp\left(-\frac{1}{1728\sqrt{6}\lambda_{0,\max}d^{2}_{0}}\cdot\frac{\delta}{M^{1+\beta}}\cdot\Phi(T,L)\right)+2p\exp\left(-\frac{n}{72}\right)\\ 
+ 2p\exp\left(-\frac{1}{373248d^{4}_{0}\lambda^{2}_{0,\max}}\times n\frac{\delta^2}{M^{2+2\beta}}\right).
\end{multline*}
Finally, $P(\bar{A}_2) \leq P(\bar{A^{\prime}}_{X,2})+P(\bar{A^{\prime}}_{Y,2})$, where $A^{\prime}_{X,2}$ and $A^{\prime}_{Y,2}$ are defined similarly to $A^{\prime}_2$ with $g$ being $X$ and $Y$, since $$\Vert \hat{K}_jj (s,t) - K_{jj}(s,t) \Vert_{\text{HS}} \leq \Vert \hat{K}^X_jj (s,t) - K^X_{jj}(s,t) \Vert_{\text{HS}} + \Vert \hat{K}^Y_jj (s,t) - K^Y_{jj}(s,t) \Vert_{\text{HS}}.$$ Thus, we have
\begin{multline*}
P(\bar{A}_2) \leq  8pM\exp\left(-\frac{1}{1728\sqrt{6}\lambda_{0,\max}d^{2}_{0}}\cdot\frac{\delta}{M^{1+\beta}}\cdot\Phi(T,L)\right)+4p\exp\left(-\frac{n}{72}\right)\\
+ 4p\exp\left(-\frac{1}{373248d^{4}_{0}\lambda^{2}_{0,\max}}\times n\frac{\delta^2}{M^{2+2\beta}}\right).
\end{multline*}

For $P(\bar{A}_3)$, note that $\sum^{n}_{i=1}\xi^{2}_{ijk}\sim \chi^{2}_{n}$ for any $j=1,\ldots,p$ and $k=1,\ldots,M$.
By Pages 28-29 of \citet{Boucheron2013Concentration}, for any $\epsilon>0$, we have
\begin{equation*}
P\left(\frac{1}{n}\sum^{n}_{i=1}\xi^{2}_{ijk}-1>\epsilon\right) \leq \exp\left(-\frac{n\epsilon^2}{4+4\epsilon}\right).
\end{equation*}
Letting $\epsilon=1/2$, we have
\begin{equation}\label{eq:A3bound}
P(\bar{A}_3)\leq pM\exp\left(-\frac{n}{24}\right).
\end{equation}

Finally, we upper bound $P(\bar{A}_5)$.
Note that
\begin{equation*}
\begin{aligned}
\mathbb{E}\left[ \left( a_{ijk}a_{ilm}-\mathbb{E}(a_{ijk}a_{ilm}) \right)^{k} \right]&=\lambda^{k/2}_{jk}\lambda^{k/2}_{lm}\mathbb{E}\left[ \left( \xi_{ijk}\xi_{ilm}-\mathbb{E}(\xi_{ijk}\xi_{ilm}) \right)^{k} \right]\\
&\leq d^{k}_0 \mathbb{E}\left[ \left( \xi_{ijk}\xi_{ilm}-\mathbb{E}(\xi_{ijk}\xi_{ilm}) \right)^{k} \right],
\end{aligned}
\end{equation*}
and
\begin{equation*}
\begin{aligned}
\mathbb{E}\left[\left( \xi_{ijk}\xi_{ilm}-\mathbb{E}(\xi_{ijk}\xi_{ilm}) \right)^{k} \right]&\leq 2^{k-1}\left(\mathbb{E}\left[ \vert\xi_{ijk}\xi_{ilm}\vert^{k}\right]+\vert \mathbb{E}(\xi_{ijk}\xi_{ilm}) \vert^{k}\right)\\
&\leq 2^{k-1} \left( \mathbb{E}[\xi^{2k}_{ij1}] +1\right)\\
&\leq 2^{k-1}(2^k k! +1 )\\
&\leq 4^k k!\,.
\end{aligned}
\end{equation*}
Thus
\begin{equation*}
\mathbb{E}\left[ \left( a_{ijk}a_{ilm}-\mathbb{E}(a_{ijk}a_{ilm}) \right)^{k} \right] \leq (4d_0)^k k!
\end{equation*}
and Lemma~\ref{lemma:HilbertBernstein} tells us that for any $1 \leq j,l \leq p$ and $1 \leq k,m \leq M$, we have
\begin{align*}
P\left(\left\vert \frac{1}{n}\sum^{n}_{i=1}a_{ijk}a_{ilm}-\sigma_{jl,km} \right\vert>\frac{\delta}{16} \right)
& \leq 2\exp\left(-\frac{n\delta^2}{16512d^{2}_{0}}\right).
\end{align*}
Therefore,
\begin{equation}\label{eq:A5bound}
P\left( \bar{A}_5 \right)\leq 2(pM)^2 \exp\left(-\frac{n\delta^2}{16512d^{2}_{0}}\right).
\end{equation}
Let $C_1=12$, $C_2=1/(1728\sqrt{6}\lambda_{0,\max})$, $C_3=9$, $C_4=1/(373248d^{4}_{0}\lambda^{2}_{0,\max})$, $C_5=2$, and $C_6=1/(6228 d^{4}_{0} \lambda_{0,\max})$. The final result follows by combining the upper bounds on $P(\bar{A}_w)$, $w=1,\ldots,5$.

\newpage

\section{Additional Results}\label{app:MoreThm}

In this section, we establish additional results that are needed to prove the main results.

\subsection{Theorem~\ref{Thm:approxerrsinglefunc} and Its Proof}\label{appsubsec:approxerrsinglefunc}

We give a non-asymptotic error bound on the function estimated using the basis expansion, which is subsequently used to establish Theorem~\ref{Thm:ErrSamCovDis}.

For a random function $g(t) \in \mathbb{H}$, where $t \in\mathcal{T}$, $\mathcal{T}$ is a closed interval of the real line, and $\mathbb{H}$ is a separable Hilbert space, we have noisy discrete observations at time points $t_1,t_2,\dots,t_T$ generated from the model below:
\begin{equation}
h_{k}=g(t_k)+\epsilon_{k},
\end{equation}
where $\epsilon_{k}\overset{\text{i.i.d.}}{\sim} N(0,\sigma^2_0)$, $k=1,\ldots,T$. Let $b(t)=(b_1(t),b_2(t),\dots,b_L(t))^{\top}$ be the vector of the basis functions. Let $\hat{g}(t)=\hat{\beta}^{\top}b(t)$ be the estimator of $g(t)$, where $\hat{\beta} \in \mathbb{R}^L$ is obtained by minimizing the least square loss:
\begin{equation}
\hat{\beta}=\argmin_{\beta \in \mathbb{R}^{L}}\sum^{T}_{k=1}\left(\beta^{\top}b(t_k)-h_k\right)^2.
\end{equation}
We define the design matrix $B$ as
\begin{equation}
B=
\left[
\begin{matrix}
b_{1}(t_{1}) & \cdots & b_{L}(t_{1}) \\
\vdots & \ddots & \vdots \\
b_{1}(t_{T}) & \cdots & b_{L}(t_{T})
\end{matrix}
\right] \in \mathbb{R}^{T \times L},
\end{equation}
so that 
\begin{equation}\label{eq:lsesolu}
\hat{\beta}=\left(B^{\top}B\right)^{-1}B^{\top}h,
\end{equation}
where $h=(h_1,h_2,\dots,h_T)^{\top} \in \mathbb{R}^T$.

We assume that $g(t)=\sum^{\infty}_{m=1} \beta^{*}_m b_m (t)$, and we can decompose $g(t)$ as $g=g^{\shortparallel}+g^{\bot}$, where $g^{\shortparallel} \in {\rm Span}(b)$ and $g^{\bot} \in {\rm Span}(b)^{\bot}$.  Let $\lambda_0 \coloneqq \mathbb{E}[ \Vert g \Vert^2 ]$ and $\lambda^{\bot}_0 \coloneqq \mathbb{E}[ \Vert g^{\bot} \Vert^2 ]$.
It is then easy to check that $\lambda_0=\sum^{\infty}_{m=1} \mathbb{E}[(\beta^{*}_m)^2]$ and $\lambda^{\bot}_0=\sum^{\infty}_{m>L} \mathbb{E}[(\beta^{*}_m)^2]$. We assume that the basis functions $\{b_{l}(t)\}^{\infty}_{l=1}$ make up a complete orthonormal system (CONS) of $\mathbb{H}$, that is, $\overline{{\rm Span}\left(\{b_l\}^{\infty}_{l=1}\right)}=\mathbb{H}$ (see Definition 2.4.11 of \cite{Hsing2015Theoretical}), and have continuous derivative functions with
\begin{equation}\label{eq:Assump3Lemma}
D_{0,b} \coloneqq \sup_{l \geq 1} \sup_{t \in \mathcal{T}} \lvert b_{l}(t)  \rvert < \infty, \qquad
D_{1,b}(l) \coloneqq \sup_{t \in \mathcal{T}} \lvert b^{\prime}_{l}(t)  \rvert < \infty, \qquad D_{1,b,L} \coloneqq \max_{1 \leq l \leq L} D_{1,b}(l).
\end{equation}
We further assume that the observation time points $\{t_{k}:1 \leq k \leq T\}$ satisfy
\begin{equation}\label{eq:Assump4Lemma}
\max_{1 \leq k \leq T+1}\left\vert \frac{t_{k}-t_{(k-1)}}{\lvert \mathcal{T} \rvert} - \frac{1}{T}\right\vert \leq \frac{\zeta_0}{T^2},
\end{equation}
where $t_{0}$ and $t_{(T+1)}$ are endpoints of $\mathcal{T}$ and $\zeta_0$ is a positive constant. We further assume that $\sum^{\infty}_{m=1} \mathbb{E} \left[ ( \beta^*_m )^2 \right] D^2_{1,b} (m) < \infty$, and we define
\begin{equation*}
\psi_4(L) =\sum_{m>L} \mathbb{E} \left[ ( \beta^*_m )^2 \right] D^2_{1,b} (m).
\end{equation*}
Let
\begin{equation*}
\begin{aligned}
\psi_1(T,L) &=\frac{\sigma_{0} L }{\sqrt{ \lambda_{\min}\left( B^{\top} B \right) } }, \qquad \psi_3(L)=\lambda^{\bot}_0/\lambda_0,
\end{aligned}
\end{equation*}
and
\begin{multline*}
\psi_2(T,L) = \frac{1}{ (\lambda^B_{\min})^2 } \left( 18 \lambda_0  \left[ D^2_{0,b} (\zeta_{0}+1)^4 \vert \mathcal{T} \vert^2 D^2_{1,b,L} + 2 D^4_{0,b} (2\zeta_{0}+1)^2 \right] L^2 \psi_3 (L) \right.  \\
\left.  +  D^2_{0,b} (\zeta_{0}+1)^4 \vert \mathcal{T} \vert^2 L^2 \psi_4(L)  \right),
\end{multline*}
Then we have the following theorem.
\begin{theorem}\label{Thm:approxerrsinglefunc}
For any $\delta>0$, we have
\begin{multline}\label{eq:errinqsingle}
P\left( \lVert g-\hat{g} \rVert > \delta \right)\leq  2\exp\left( -\frac{\delta^2}{72\psi^2_{1}(T,L)+6\sqrt{2}\psi_1(T,L)\delta} \right) + L \exp \left( - \frac{ \delta^2 }{ \psi_2(T,L) } \right) \\
+ 2\exp\left(-\frac{\delta^2}{72\lambda_0\psi_3(L)+6\sqrt{2\lambda_0}\sqrt{\psi_3(L)}\delta}\right).
\end{multline}
\end{theorem}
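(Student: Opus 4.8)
The plan is to split the $\mathcal{L}^2$ error into three sources—truncation bias, stochastic noise, and discretization—bound each separately, and combine by a union bound. Since $\{b_l\}$ is orthonormal, writing $g = g^{\shortparallel} + g^{\bot}$ with $g^{\shortparallel} = \sum_{l\le L}\beta_l^* b_l$ gives the Pythagorean identity $\|g - \hat g\|^2 = \|g^{\bot}\|^2 + |\hat\beta - \beta_{1:L}^*|_2^2$ (where $\beta_{1:L}^*$ collects the first $L$ true coefficients), hence $\|g-\hat g\| \le \|g^{\bot}\| + |\hat\beta - \beta_{1:L}^*|_2$. First I would substitute $h = B\beta_{1:L}^* + g^{\bot}(\mathbf t) + \epsilon$ into the least-squares solution \eqref{eq:lsesolu} to obtain the exact decomposition
\[
\hat\beta - \beta_{1:L}^* = (B^\top B)^{-1}B^\top\epsilon + (B^\top B)^{-1}B^\top g^{\bot}(\mathbf t),
\]
where $g^{\bot}(\mathbf t) = (g^{\bot}(t_1),\dots,g^{\bot}(t_T))^\top$. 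Thus $\|g-\hat g\|$ is controlled by the three terms $\|g^{\bot}\|$, $|(B^\top B)^{-1}B^\top\epsilon|_2$, and $|(B^\top B)^{-1}B^\top g^{\bot}(\mathbf t)|_2$; if each is at most $\delta/3$ then $\|g-\hat g\|\le\delta$, so it suffices to bound the three tail probabilities and add them, with the factor $3$ and the square roots arising from squaring absorbed into the constants $72$ and $6\sqrt2$ in the statement.

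For the noise term, conditional on the design the vector $(B^\top B)^{-1}B^\top\epsilon$ is $N(0,\sigma_0^2(B^\top B)^{-1})$, so $|(B^\top B)^{-1}B^\top\epsilon|_2^2$ is a weighted sum of independent $\chi^2_1$ variables whose total scale is controlled by $\sigma_0^2\,\mathrm{tr}((B^\top B)^{-1}) \le \sigma_0^2 L/\lambda_{\min}(B^\top B)$. A sub-gamma (Bernstein) concentration inequality—the Hilbert-space Bernstein bound of Lemma~\ref{lemma:HilbertBernstein}—then yields a tail of the form $2\exp(-\delta^2/(c_1\psi_1^2 + c_2\psi_1\delta))$, matching the first summand through the definition $\psi_1 = \sigma_0 L/\sqrt{\lambda_{\min}(B^\top B)}$. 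For the truncation term, Gaussianity makes $\|g^{\bot}\|^2 = \sum_{m>L}(\beta_m^*)^2$ a weighted sum of independent squared Gaussians with mean $\lambda_0^{\bot} = \lambda_0\psi_3$; applying the moment bound of Lemma~\ref{lemma:2kmomentGuassFunc} together with Lemma~\ref{lemma:HilbertBernstein} gives precisely the third summand $2\exp(-\delta^2/(72\lambda_0\psi_3 + 6\sqrt{2\lambda_0}\sqrt{\psi_3}\,\delta))$, using $\sqrt{\lambda_0\psi_3} = \sqrt{\lambda_0^{\bot}}$.

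The discretization term is the main obstacle. Bounding $|(B^\top B)^{-1}B^\top g^{\bot}(\mathbf t)|_2 \le (\lambda_{\min}^B)^{-1}|B^\top g^{\bot}(\mathbf t)|_2$ reduces the problem to controlling each coordinate $\sum_k b_l(t_k)g^{\bot}(t_k)$ for $l\le L$. The key observation is that this is a quadrature error for $\int_{\mathcal{T}} b_l\,g^{\bot} = \langle b_l, g^{\bot}\rangle = 0$ (because $g^{\bot}\perp\mathrm{Span}(b)$); were the grid a continuum the term would vanish, so its magnitude is governed purely by the Riemann-sum error. Expanding $g^{\bot} = \sum_{m>L}\beta_m^* b_m$ makes each coordinate Gaussian with variance $\sum_{m>L}\mathbb{E}[(\beta_m^*)^2]\big(\sum_k b_l(t_k)b_m(t_k)\big)^2$, and each inner sum $\sum_k b_l(t_k)b_m(t_k)$ is itself a quadrature error for the orthogonality relation $\int b_l b_m = 0$ (since $m>L\ge l$). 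The technical heart is a numerical-integration estimate on the quasi-uniform grid of Assumption~\ref{assump:observdensity}: using the uniform bounds $D_{0,b}$, the derivative bounds $D_{1,b}(\cdot)$, and the spacing control $\zeta_0/T^2$, one shows this quadrature error carries factors $(\zeta_0+1)^2|\mathcal T|$ and $D_{0,b}, D_{1,b,L}, D_{1,b}(m)$. Summing over $m>L$ against $\mathbb{E}[(\beta_m^*)^2]$ produces the variance proxy encoded in $\psi_2$, in which $\psi_3$ captures the total tail mass and $\psi_4 = \sum_{m>L}\mathbb{E}[(\beta_m^*)^2]D_{1,b}^2(m)$ captures the derivative-weighted tail mass, finite by Assumption~\ref{assump:smoothness}. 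A per-coordinate Gaussian tail bound followed by a union bound over $l=1,\dots,L$ then gives the middle summand $L\exp(-\delta^2/\psi_2)$. I expect the careful tracking of the non-uniform-grid quadrature constants—and verifying that the off-diagonal leakage $\sum_k b_l(t_k)b_m(t_k)$ decays fast enough when weighted by the smoothness tail—to be the most delicate step, while the other two terms reduce to routine Gaussian concentration.
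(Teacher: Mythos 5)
Your proposal is correct and takes essentially the same route as the paper's own proof: the identical three-term decomposition into the noise part $(B^{\top}B)^{-1}B^{\top}\epsilon$, the discretization part $(B^{\top}B)^{-1}B^{\top}h^{\bot}$, and the truncation part $\lVert g^{\bot} \rVert$ (the paper's $J_1,J_2,J_3$), bounded respectively via Gaussian-norm concentration (Lemma~\ref{lemma:boundnormalnorm}), the quadrature estimate of Lemma~\ref{lemma:integralapprox} applied to $\sum_k b_l(t_k)b_m(t_k)$ with a per-coordinate Gaussian tail and a union bound over $l \leq L$, and Lemmas~\ref{lemma:2kmomentGuassFunc} and~\ref{lemma:HilbertBernstein}, all combined through the same $\delta/3$ split that yields the constants $72$ and $6\sqrt{2}$. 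Your Pythagorean observation $\lVert g - \hat{g} \rVert^2 = \lVert g^{\bot} \rVert^2 + \lvert \hat{\beta} - \beta^{*}_{1:L} \rvert_2^2$ is a mild sharpening of the paper's triangle-inequality step (which uses Lemma~\ref{lemma:vecfuncnorm} and thus carries an extra $\sqrt{L}$ into $\psi_1$), but since you anchor to the paper's definition of $\psi_1$ anyway, the argument is the same.
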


\begin{proof}
	For a fixed $g$, since $\overline{{\rm Span}\left(\{b_l\}^{\infty}_{l=1}\right)}=\mathbb{H}$, we can assume that $g(t)=\sum^{\infty}_{l=1} \beta^{*}_l b_l(t)$ where $\beta^*_l=\langle g,b_l \rangle=\int_{\mathcal{T}} g(t) b_l(t) dt$. Let $\beta^{*}=(\beta^{*}_1,\cdots,\beta^{*}_L)^{\top}\in\mathbb{R}^{L}$. We have $g^{\shortparallel}(t)=(\beta^{*})^{\top}b(t)=\sum^{L}_{l=1}\beta^{*}_{l}b_{l}(t)$ and $g^{\bot}(t)=\sum_{l>L}\beta^{*}_{l}b_{l}(t)$.
	Thus, we have
	\begin{equation*}
		h_k=g(t_k)+\epsilon_{k}=(\beta^{*})^{\top}b(t_k)+g^{\bot}(t_k)+\epsilon_{k}.
	\end{equation*}
	Let $h^{\bot}=\left( g^{\bot}(t_1),g^{\bot}(t_2),\dots,g^{\bot}(t_T) \right)^{\top}$ and $\epsilon=\left(\epsilon_1,\epsilon_2,\dots,\epsilon_T\right)^{\top}$, so that $h=B\beta^{*}+h^{\bot}+\epsilon$. Then,
    $\mathbb{E}(\mathbb{\hat{\beta}})=\beta^{*}+\left(B^{\top}B\right)^{-1}B^{\top}h^{\bot}$ and
	\begin{multline*}
		\hat{g}(t)-g(t)
		=\hat{g}(t)-g^{\shortparallel}(t)-g^{\bot}(t)
		=\hat{g}(t)-(\beta^{*})^{\top}b(t)-g^{\bot}(t)\\
		=\left( \hat{\beta}-\mathbb{E}(\hat{\beta}) \right)^{\top}b(t)+\left(\left(B^{\top}B\right)^{-1}B^{\top}h^{\bot}\right)^{\top}b(t)-g^{\bot}(t).
	\end{multline*}
	By Lemma~\ref{lemma:vecfuncnorm}, we then have
	\begin{equation*}
	\begin{aligned}
	\lVert \hat{g}-g \rVert 
	&\leq \left\lVert \left( \hat{\beta}-\mathbb{E}(\hat{\beta}) \right)^{\top}b(t) \right\rVert
	+
	\left\lVert \left(\left(B^{\top}B\right)^{-1}B^{\top}h^{\bot}\right)^{\top}b(t) \right\rVert 
	+ 
	\left\lVert g^{\bot} \right\rVert\\
	&\leq \left\lvert \hat{\beta}-\mathbb{E}(\hat{\beta}) \right\rvert_2 \times \left\lVert b \right\rVert_{\mathcal{L}^2,2} + \left\lvert \left(B^{\top}B\right)^{-1}B^{\top}h^{\bot} \right\rvert_2 \times \left\lVert b \right\rVert_{\mathcal{L}^2,2} 
	+ \left\lVert g^{\bot} \right\rVert\\
	&\leq \left\lvert \hat{\beta}-\mathbb{E}(\hat{\beta}) \right\rvert_2 \times \left\lVert b \right\rVert_{\mathcal{L}^2,2} + \frac{1}{\lambda_{\min}(B^{\top}B)} \times \left\lvert B^{\top}h^{\bot} \right\rvert_2 \times \left\lVert b \right\rVert_{\mathcal{L}^2,2} 
	+ \left\lVert g^{\bot} \right\rVert.
	\end{aligned}
	\end{equation*}
	Let
	\begin{equation}\label{eq:J1J2J3def}
	J_1=\left\lvert \hat{\beta}-\mathbb{E}(\hat{\beta}) \right\rvert_2 \times \left\lVert b \right\rVert_{\mathcal{L}^2,2}, 
	\quad
	J_2=\frac{1}{\lambda_{\min}(B^{\top}B)} \times \lvert B^{\top}h^{\bot} \rvert_2 \times \lVert b \rVert_{\mathcal{L}^2,2},\quad
	J_3 =\lVert g^{\bot} \rVert,
	\end{equation}
	where $\lvert \mathcal{T} \rvert$ denotes the length of the interval, then $\lVert \hat{g}-g \rVert \leq J_1+J_2+J_3$.
	This equation holds with probability one, since it holds for any $g \in \mathbb{H}$. We then bound $J_1$, $J_2$, and $J_3$ individually.
		
	We bound $J_1$. Recall that $\lVert b \rVert_{\mathcal{L}^2,2}=\sqrt{L}$ and $\psi_1(T,L)=\sigma_{0}\lVert b \rVert_{\mathcal{L}^2,2}\sqrt{L}/\sqrt{\lambda_{\min}\left(B^{\top}B\right)}$. Treating $g$ as fixed, we have
	$\hat{\beta}\sim N_{L}\left(\mathbb{E}(\hat{\beta}),\sigma^{2}_0 \left(B^{\top}B\right)^{-1} \right)$ and
	\begin{equation*}
		\frac{1}{\sigma_0} \left(B^{\top}B\right)^{1/2} \left(\hat{\beta}-\mathbb{E}(\hat{\beta})\right)\sim N_L\left(0,I_L\right).
	\end{equation*}
	Since
	\begin{multline*}
	J_1=\left\lvert \hat{\beta}-\mathbb{E}(\hat{\beta}) \right\rvert_2 \times \left\lVert b \right\rVert_{\mathcal{L}^2,2}
	= \left\lvert \left(B^{\top}B\right)^{-1/2} \left(B^{\top}B\right)^{1/2}\left(\hat{\beta}-\mathbb{E}(\hat{\beta}) \right)\right\rvert_2 \times \lVert b \rVert_{\mathcal{L}^2,2}\\
    \leq \frac{\sigma_0 \left\lVert b \right\rVert_{\mathcal{L}^2,2}}{\sqrt{\lambda_{\min}\left(B^{\top}B\right)}} \left\lvert \frac{1}{\sigma_0} \left(B^{\top}B\right)^{1/2}\left(\hat{\beta}-\mathbb{E}(\hat{\beta}) \right)\right\rvert_2,
	\end{multline*}
	we have
	\begin{equation}
	\label{eq:J1bound}
		\begin{aligned}
		P(J_1>\delta) 
		& \leq P\left( \left\lvert\frac{1}{\sigma_0} \left(B^{\top}B\right)^{1/2}\left(\hat{\beta}-\mathbb{E}(\hat{\beta}) \right)\right\rvert_2 > \frac{\delta}{\sigma_0 \lVert b \rVert_{\mathcal{L}^2,2}/\sqrt{\lambda_{\min}\left(B^{\top}B\right)}} \right)\\
		&\overset{(i)}{\leq}2\exp\left(-\frac{\delta^2}{8\psi^{2}_{1}(T,L)+2\sqrt{2}\psi_{1}(T,L)\delta}\right), 
		\end{aligned}
	\end{equation}
	where $(i)$ follows Lemma~\ref{lemma:boundnormalnorm}.
	The bound does not depend on $g$, so it holds when $g$ is also random.
	
    Next, we bound $J_2$. Let $(B^{\top}h^{\bot})_{l}$ denote the $l$-th element of the vector $B^{\top}h^{\bot}$. Then
	\[
		(B^{\top}h^{\bot})_{l}=\sum^{T}_{k=1}b_{l}(t_k)g^{\bot}(t_k)=\sum_{m>L} \beta^{*}_m \sum^{T}_{k=1}b_{l}(t_k)b_m(t_k)
	\]
	and $(B^{\top}h^{\bot})_{l}$ follows a mean zero Gaussian distribution, since $g$ is a Gaussian random function with mean zero. Furthermore,
	\begin{equation}\label{eq:Bhl-var}
		\mathbb{E} \left[ (B^{\top}h^{\bot})^2_{l} \right] = \sum_{m>L} \mathbb{E} \left[ \beta^{*2}_m \right] \left( \sum^{T}_{k=1}b_{l}(t_k)b_m(t_k) \right)^2
	\end{equation}
	
    From the definition of $D_{0,b}$, $D_{1,b}(\cdot)$,
    for any $l < m$, we have 
    \begin{align*}
    \sup_{t \in \mathcal{T}} (b_l(t) b_m(t)) & \leq D^2_{0,b}, \\
    \sup_{t \in \mathcal{T}} (b_l(t) b_m(t))^{\prime} = \sup_{t \in \mathcal{T}} \{ b^{\prime}_l(t) b_m(t) + b_l(t) b^{\prime}_m(t) \} \leq D_{0,b} ( D_{1,b}(l) + D_{1,b}(m)  ).
    \end{align*}
    Note that $\int_{\mathcal{T}} b_l(t) b_m(t) dt=0$, $l < m$. Then, by Lemma~\ref{lemma:integralapprox}, for all $1 \leq l < m < \infty$, we have
	\begin{equation*}
	\begin{aligned}
		\left\vert \frac{1}{T}\sum^{T}_{k=1}b_{l}(t_k)b_m(t_k)\right\vert 
		& = \left\vert \frac{1}{T}\sum^{T}_{k=1}b_{l}(t_k)b_m(t_k)-\frac{1}{\vert \mathcal{T} \vert}\int_{\mathcal{T}}b_{l}(t)b_m(t)dt \right\vert \\
	& 	\leq  \frac{ D_{0,b} ( D_{1,b}(l) + D_{1,b}(m)  ) (\zeta_{0}+1)^2\vert \mathcal{T} \vert/2+ D^2_{0,b} (2\zeta_{0}+1)}{T},
	\end{aligned}
	\end{equation*}
	which implies that
	\begin{equation*}
		\left\vert \sum^{T}_{k=1}b_{l}(t_k)b_m(t_k)\right\vert \leq \frac{1}{2}  D_{0,b} (\zeta_{0}+1)^2\vert \mathcal{T} \vert ( D_{1,b}(l) + D_{1,b}(m)  ) + D^2_{0,b} (2\zeta_{0}+1).
	\end{equation*}
	Then,
	\begin{equation*}
	\begin{aligned}
		\left( \sum^{T}_{k=1}b_{l}(t_k)b_m(t_k) \right)^2 
		& \leq \left( \frac{1}{2}  D_{0,b} (\zeta_{0}+1)^2\vert \mathcal{T} \vert ( D_{1,b}(l) + D_{1,b}(m)  ) + D^2_{0,b} (2\zeta_{0}+1) \right)^2 \\
		& \leq \frac{1}{2} D^2_{0,b} (\zeta_{0}+1)^4 \vert \mathcal{T} \vert^2 ( D_{1,b}(l) + D_{1,b}(m)  )^2 + 2 D^4_{0,b} (2\zeta_{0}+1)^2 \\
		& \leq D^2_{0,b} (\zeta_{0}+1)^4 \vert \mathcal{T} \vert^2 ( D^2_{1,b}(l) + D^2_{1,b}(m)  ) + 2 D^4_{0,b} (2\zeta_{0}+1)^2
	\end{aligned}
	\end{equation*}
	and, by~\eqref{eq:Bhl-var},
	\begin{equation*}
	\begin{aligned}
		\mathbb{E} \left[ (B^{\top}h^{\bot})^2_{l} \right] & \leq \left[ D^2_{0,b} (\zeta_{0}+1)^4 \vert \mathcal{T} \vert^2 D^2_{1,b}(l)  + 2 D^4_{0,b} (2\zeta_{0}+1)^2 \right] \sum_{m>L} \mathbb{E} \left[ \beta^{*2}_m \right] \\
		& \quad +  D^2_{0,b} (\zeta_{0}+1)^4 \vert \mathcal{T} \vert^2 \sum_{m>L} \mathbb{E} \left[ \beta^{*2}_m \right] D^2_{1,b}(m) \\
		& \leq \left[ D^2_{0,b} (\zeta_{0}+1)^4 \vert \mathcal{T} \vert^2 D^2_{1,b}(l)  + 2 D^4_{0,b} (2\zeta_{0}+1)^2 \right] \lambda^{\bot}_0 +  D^2_{0,b} (\zeta_{0}+1)^4 \vert \mathcal{T} \vert^2 \psi_4(L)\\
		& \leq \left[ D^2_{0,b} (\zeta_{0}+1)^4 \vert \mathcal{T} \vert^2 D^2_{1,b,L}  + 2 D^4_{0,b} (2\zeta_{0}+1)^2 \right] \lambda^{\bot}_0 +  D^2_{0,b} (\zeta_{0}+1)^4 \vert \mathcal{T} \vert^2 \psi_4(L) \\
		& = \lambda_0  \left[ D^2_{0,b} (\zeta_{0}+1)^4 \vert \mathcal{T} \vert^2 D^2_{1,b,L}  + 2 D^4_{0,b} (2\zeta_{0}+1)^2 \right] \psi_3 (L) \\
		& \quad +  D^2_{0,b} (\zeta_{0}+1)^4 \vert \mathcal{T} \vert^2 \psi_4(L).
	\end{aligned}
	\end{equation*}
    Using a tail bound for Gaussian random variable (e.g., Section 2.1.2 of~\cite{wainwright2019high}), we have
	\begin{equation}\label{eq:J2-bound-new}
	\begin{aligned}
		\mathbb{P} \left( J_2 > \delta \right) & \leq \mathbb{P} \left( \lvert B^{\top}h^{\bot} \rvert_2 > \frac{ \lambda^B_{\min} \delta}{\sqrt{L}} \right) \leq \mathbb{P} \left(  \max_{1 \leq l \leq L} (B^{\top}h^{\bot})_{l} > \frac{ \lambda^B_{\min} \delta}{L} \right) \\
		& \leq L \exp \left( - \frac{9\delta^2}{\psi_2(T,L)} \right).
	\end{aligned}
	\end{equation}    
    
    Finally, we bound $J_3$. By Lemma~\ref{lemma:2kmomentGuassFunc} and the definition of $\psi_{3}(L)$, we have
	$\mathbb{E}\left[ \Vert g^{\bot} \Vert^{2k} \right] \leq (2\lambda_0\psi_{3}(L))^{k}k!$.
	By Jensen's inequality, we have
	\begin{equation*}
		\mathbb{E}\left[ \Vert g^{\bot} \Vert^{k} \right]=\mathbb{E}\left[ \sqrt{\Vert g^{\bot} \Vert^{2k}} \right]\leq \sqrt{\mathbb{E}\left[ \Vert g^{\bot} \Vert^{2k} \right]}\leq \left(\sqrt{2\lambda_0\psi_{3}(L)}\right)^{k} k!.
	\end{equation*}
	Thus, by Lemma~\ref{lemma:HilbertBernstein}, we have
		\begin{equation}\label{eq:J3bound}
		P\left( J_3 > \delta \right)=P\left( \Vert g^{\bot} \Vert > \delta \right) \leq 2\exp\left(-\frac{\delta^2}{8\lambda_0\psi_3(L)+2\sqrt{2\lambda_0}\sqrt{\psi_3(L)}\delta}\right).
		\end{equation}
		
	The final result follows from \eqref{eq:J1bound}, \eqref{eq:J2-bound-new}, and \eqref{eq:J3bound}.
	\end{proof}

\newpage

\section{Lemmas and Their Proofs}\label{app:prooflemma}

In this section, we introduce some useful lemmas along with their proofs.

\begin{lemma}\label{lema:S-ind-joint}
Let $\sigma_{\max} = \max\{ \vert \Sigma^{X,M} \vert_\infty , \ \vert \Sigma^{Y,M} \vert_\infty\}$. 
Suppose that 
\begin{equation}\label{eq:lema:S-ind-joint-1}
\vert S^{X,M}-\Sigma^{X,M} \vert_{\infty}  \leq \delta,
\qquad
\vert S^{Y,M}-\Sigma^{Y,M} \vert_{\infty} \leq \delta,
\end{equation}
for some $\delta \geq 0$. Then 
\begin{align}\label{eq:lema:S-ind-joint-2}
&\vert (S^{Y,M} \otimes{S^{X,M}})-(\Sigma^{Y,M}\otimes{\Sigma^{X,M}})\vert_{\infty}\leq \delta^2 + 2\delta\sigma_{\max},
\intertext{and}
\label{eq:lema:S-ind-joint-3}
&|\vect{(S^{Y,M}-S^{X,M})}-\vect{(\Sigma^{Y,M}-\Sigma^{X,M})}|_{\infty}\leq 2\delta.
\end{align}
\end{lemma}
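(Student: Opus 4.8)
The plan is to prove the two bounds \eqref{eq:lema:S-ind-joint-2} and \eqref{eq:lema:S-ind-joint-3} separately, since the second is nearly immediate and the first requires a short algebraic decomposition. For \eqref{eq:lema:S-ind-joint-3}, note that $\vect$ is just a rearrangement of entries, so the max-norm of the vector equals the max-norm of the underlying matrix. Writing $(S^{Y,M}-S^{X,M})-(\Sigma^{Y,M}-\Sigma^{X,M}) = (S^{Y,M}-\Sigma^{Y,M})-(S^{X,M}-\Sigma^{X,M})$ and applying the triangle inequality for $|\cdot|_\infty$ entrywise gives the bound $\delta + \delta = 2\delta$ directly from the two hypotheses in \eqref{eq:lema:S-ind-joint-1}.

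For \eqref{eq:lema:S-ind-joint-2}, first I would recall the key elementary fact that for the Kronecker product the entries satisfy $(A\otimes B)_{(i,k),(j,l)} = A_{ij}B_{kl}$, so that $|A\otimes B|_\infty = |A|_\infty\,|B|_\infty$; more importantly, the entrywise max-norm interacts multiplicatively with this structure. The main step is to write the difference as a telescoping sum. Setting $E_X = S^{X,M}-\Sigma^{X,M}$ and $E_Y = S^{Y,M}-\Sigma^{Y,M}$, so that $S^{X,M} = \Sigma^{X,M}+E_X$ and $S^{Y,M}=\Sigma^{Y,M}+E_Y$, I would expand
\begin{equation*}
S^{Y,M}\otimes S^{X,M} - \Sigma^{Y,M}\otimes\Sigma^{X,M} = E_Y\otimes E_X + E_Y \otimes \Sigma^{X,M} + \Sigma^{Y,M}\otimes E_X.
\end{equation*}
Then I would apply the triangle inequality in $|\cdot|_\infty$ together with the multiplicative property $|A\otimes B|_\infty = |A|_\infty|B|_\infty$ to each of the three terms, obtaining a bound of $|E_Y|_\infty|E_X|_\infty + |E_Y|_\infty|\Sigma^{X,M}|_\infty + |\Sigma^{Y,M}|_\infty|E_X|_\infty$.

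Finally I would substitute the hypotheses $|E_X|_\infty,|E_Y|_\infty \leq \delta$ and the definition $\sigma_{\max}=\max\{|\Sigma^{X,M}|_\infty,|\Sigma^{Y,M}|_\infty\}$ to bound this by $\delta^2 + \delta\sigma_{\max} + \sigma_{\max}\delta = \delta^2 + 2\delta\sigma_{\max}$, which is exactly \eqref{eq:lema:S-ind-joint-2}. The only point requiring minor care — and what I'd flag as the ``obstacle,'' though it is quite mild — is justifying $|A\otimes B|_\infty = |A|_\infty |B|_\infty$ from the entrywise definition, since the entries of $A\otimes B$ are precisely all products $A_{ij}B_{kl}$ and the maximum of $|A_{ij}B_{kl}|$ over all indices factors as $(\max_{ij}|A_{ij}|)(\max_{kl}|B_{kl}|)$. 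No restricted-eigenvalue machinery or probabilistic argument is needed here; this lemma is purely a deterministic norm manipulation that feeds the bound \eqref{con:KroneckerBoundSup} used later in the proof of Theorem~\ref{Thm:smallboundThm}.
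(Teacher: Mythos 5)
Your proof is correct and is essentially the same argument as the paper's: the paper bounds each entry via the identity $ab-cd=(a-c)(b-d)+c(b-d)+d(a-c)$, which is exactly the entrywise form of your matrix-level decomposition $S^{Y,M}\otimes S^{X,M}-\Sigma^{Y,M}\otimes\Sigma^{X,M}=E_Y\otimes E_X+E_Y\otimes\Sigma^{X,M}+\Sigma^{Y,M}\otimes E_X$, and the bound for \eqref{eq:lema:S-ind-joint-3} is the same triangle-inequality step in both. Your packaging via bilinearity of $\otimes$ and the multiplicativity $|A\otimes B|_\infty=|A|_\infty|B|_\infty$ (which you correctly justify from the entrywise definition) is a clean, equivalent presentation.
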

\begin{proof}
Note that for any $(j,l), (j^{\prime},l^{\prime}) \in V^2$ and $1 \leq k,k^{\prime},m,m^{\prime} \leq M$, by \eqref{eq:lema:S-ind-joint-1}, we have
\begin{equation*}
\begin{aligned}
& \left\vert S^{X,M}_{jl,km}S^{Y,M}_{j^{\prime}l^{\prime},k^{\prime}m^{\prime}}-\Sigma^{X,M}_{jl,km}\Sigma^{Y,M}_{j^{\prime}l^{\prime},k^{\prime}m^{\prime}} \right\vert \\
& \leq \left\vert S^{X,M}_{jl,km}-\Sigma^{X,M}_{jl,km} \right\vert \cdot \left\vert S^{Y,M}_{j^{\prime}l^{\prime},k^{\prime}m^{\prime}}-\Sigma^{Y,M}_{j^{\prime}l^{\prime},k^{\prime}m^{\prime}} \right\vert  + \left\vert \Sigma^{X,M}_{jl,km} \right\vert \cdot \left\vert S^{Y,M}_{j^{\prime}l^{\prime},k^{\prime}m^{\prime}}-\Sigma^{Y,M}_{j^{\prime}l^{\prime},k^{\prime}m^{\prime}} \right\vert\\
& \quad + \left\vert \Sigma^{Y,M}_{j^{\prime}l^{\prime},k^{\prime}m^{\prime}} \right\vert \cdot \left\vert S^{X,M}_{jl,km}-\Sigma^{X,M}_{jl,km} \right\vert \\
& \leq \left\vert S^{X,M}-\Sigma^{X,M} \right\vert_{\infty} \left\vert S^{Y,M}-\Sigma^{Y,M} \right\vert_{\infty} + \sigma_{\max} \left\vert S^{Y,M}-\Sigma^{Y,M} \right\vert_{\infty} + \sigma_{\max} \left\vert S^{X,M}-\Sigma^{X,M} \right\vert_{\infty} \\
& \leq \delta^2 +2 \delta \sigma_{\max}.
\end{aligned}
\end{equation*}
For \eqref{eq:lema:S-ind-joint-3}, note that
\begin{equation*}
\begin{aligned}
\left\vert \vect{(S^{Y,M}-S^{X,M})}-\vect{(\Sigma^{Y,M}-\Sigma^{X,M})} \right\vert_{\infty} & = \left\vert (S^{X,M}-\Sigma^{X,M})-(S^{Y,M}-\Sigma^{Y,M}) \right\vert_{\infty} \\
&\leq|S^{X,M}-\Sigma^{X,M}|_{\infty}+|S^{Y,M}-\Sigma^{Y,M}|_{\infty}\\
&\leq 2\delta.
\end{aligned}
\end{equation*}
This completes the proof.
\end{proof}

\begin{lemma}\label{lemma:LemmaOptFFGL}
	Given $Z^{(1)},Z^{(2)},A^{(1)},A^{(2)}\in \mathbb{R}^{M \times M}$ and $\lambda>0$, let $\{\hat{Z}^{(1)}, \hat{Z}^{(2)}\}$ denote the solution of
	\begin{equation}\label{eq:LemmaOptFFGLobj}
	\argmin_{\{Z^{(1)}, Z^{(2)} \}}\; \frac{1}{2}\sum^{2}_{q=1}\Vert Z^{(q)}-A^{(q)} \Vert^{2}_{\text{F}}+\lambda\Vert Z^{(1)}-Z^{(2)} \Vert_{\text{F}}.
	\end{equation}
	If $\Vert A^{(1)}-A^{(2)} \Vert_{\text{F}}\leq 2\lambda$, then
	\begin{equation}\label{eq:LemmaOptFFGLsol1}
	\hat{Z}^{(1)}=\hat{Z}^{(2)}=\frac{1}{2}\left(A^{(1)}+A^{(2)}\right).
	\end{equation}
	If $\Vert A^{(1)}-A^{(2)} \Vert_{\text{F}}>2\lambda$, then
	\begin{equation}\label{eq:LemmaOptFFGLsol2}
	\begin{aligned}
	&\hat{Z}^{(1)}=A^{(1)}-\frac{\lambda}{\Vert A^{(1)}-A^{(2)} \Vert_{\text{F}}}\left(A^{(1)}-A^{(2)}\right),\\
	&\hat{Z}^{(2)}=A^{(2)}+\frac{\lambda}{\Vert A^{(1)}-A^{(2)} \Vert_{\text{F}}}\left(A^{(1)}-A^{(2)}\right).
	\end{aligned}
	\end{equation}
\end{lemma}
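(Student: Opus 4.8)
The plan is to prove Lemma~\ref{lemma:LemmaOptFFGL} by recognizing that the objective in \eqref{eq:LemmaOptFFGLobj} is the proximal-operator problem associated with the group-lasso penalty applied to the \emph{difference} $Z^{(1)}-Z^{(2)}$. First I would exploit the separability structure: writing the objective as a strictly convex function of $(Z^{(1)},Z^{(2)})$, I would introduce the reparametrization $D = Z^{(1)}-Z^{(2)}$ and $S = Z^{(1)}+Z^{(2)}$ (equivalently treat the sum and difference as new variables). Since the penalty $\lambda\Vert Z^{(1)}-Z^{(2)}\Vert_{\text{F}}$ depends only on $D$, the quadratic term decouples: completing the square in $S$ shows that the sum is optimized at $\hat{S}=A^{(1)}+A^{(2)}$ regardless of the penalty, giving the constraint $\hat{Z}^{(1)}+\hat{Z}^{(2)}=A^{(1)}+A^{(2)}$. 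This immediately reduces the problem to a single Frobenius-norm-penalized least-squares problem in $D$.

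Next I would solve the reduced problem for $D$. After substituting $\hat{S}=A^{(1)}+A^{(2)}$, the part of the objective involving $D$ becomes proportional to $\tfrac{1}{4}\Vert D-(A^{(1)}-A^{(2)})\Vert_{\text{F}}^2+\lambda\Vert D\Vert_{\text{F}}$, which is exactly the proximal operator of the Frobenius (i.e., vectorized $\ell_2$) norm evaluated at $A^{(1)}-A^{(2)}$. This is the matrix soft-thresholding / block soft-thresholding operation, whose closed form is standard (and is derived in this paper in the proof of \eqref{con:itersimplified} via the subdifferential of the Frobenius norm). The key step is the first-order optimality condition: $0\in \tfrac{1}{2}(D-(A^{(1)}-A^{(2)}))+\lambda\,\partial\Vert D\Vert_{\text{F}}$, where $\partial\Vert D\Vert_{\text{F}}$ equals $D/\Vert D\Vert_{\text{F}}$ when $D\neq 0$ and the unit Frobenius ball when $D=0$. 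A short contrapositive argument (mirroring Claims~1 and~2 in the proof of \eqref{con:itersimplified}) shows $\hat{D}=0$ precisely when $\Vert A^{(1)}-A^{(2)}\Vert_{\text{F}}\le 2\lambda$, and otherwise $\hat{D}=\bigl(1-\tfrac{2\lambda}{\Vert A^{(1)}-A^{(2)}\Vert_{\text{F}}}\bigr)(A^{(1)}-A^{(2)})$.

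Finally I would back out $\hat{Z}^{(1)},\hat{Z}^{(2)}$ from $\hat{S}$ and $\hat{D}$ via $\hat{Z}^{(1)}=(\hat{S}+\hat{D})/2$ and $\hat{Z}^{(2)}=(\hat{S}-\hat{D})/2$. In the thresholded-to-zero case $\hat{D}=0$ gives $\hat{Z}^{(1)}=\hat{Z}^{(2)}=\tfrac{1}{2}(A^{(1)}+A^{(2)})$, which is \eqref{eq:LemmaOptFFGLsol1}. In the nonzero case, substituting $\hat{D}$ and simplifying the coefficient yields $\hat{Z}^{(1)}=A^{(1)}-\tfrac{\lambda}{\Vert A^{(1)}-A^{(2)}\Vert_{\text{F}}}(A^{(1)}-A^{(2)})$ and the symmetric expression for $\hat{Z}^{(2)}$, matching \eqref{eq:LemmaOptFFGLsol2}. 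I do not anticipate a genuine obstacle here, since the problem is strictly convex (guaranteeing a unique minimizer) and the only mildly delicate point is the careful handling of the subdifferential at $D=0$ to pin down the exact threshold $2\lambda$; this is routine and entirely analogous to the soft-thresholding derivation already carried out for FuDGE in the appendix.
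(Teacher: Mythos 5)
Your proof is correct, but it follows a genuinely different route from the paper's. The paper argues directly on the joint first-order conditions in $(Z^{(1)},Z^{(2)})$: it writes down both subdifferentials, proves by contrapositive that $\hat{Z}^{(1)}\neq \hat{Z}^{(2)}$ if and only if $\Vert A^{(1)}-A^{(2)}\Vert_{\text{F}}>2\lambda$, and in the nonzero case deduces $\hat{Z}^{(1)}-\hat{Z}^{(2)}=\alpha\,(A^{(1)}-A^{(2)})$ for a scalar $\alpha$ before back-substituting. You instead make the orthogonal change of variables $S=Z^{(1)}+Z^{(2)}$, $D=Z^{(1)}-Z^{(2)}$, under which the identity
\begin{equation}
\frac{1}{2}\sum_{q=1}^{2}\Vert Z^{(q)}-A^{(q)}\Vert_{\text{F}}^{2}
=\frac{1}{4}\bigl\Vert S-\bigl(A^{(1)}+A^{(2)}\bigr)\bigr\Vert_{\text{F}}^{2}
+\frac{1}{4}\bigl\Vert D-\bigl(A^{(1)}-A^{(2)}\bigr)\bigr\Vert_{\text{F}}^{2}
\end{equation}
decouples the problem: $S$ is minimized in closed form independently of the penalty, and the $D$-subproblem is exactly the Frobenius-norm proximal operator, i.e., the block soft-thresholding already derived in the proof of \eqref{con:itersimplified}, with threshold $2\lambda$. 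The two arguments rest on the same subdifferential fact (the unit Frobenius ball at $0$), but your reduction makes the case split fall out automatically from the soft-threshold formula rather than requiring separate necessity and sufficiency claims, makes uniqueness transparent via strict convexity in $(S,D)$, and extends immediately to weighted or multi-block variants; the paper's direct KKT derivation is self-contained and skips verifying the change-of-variables identity, which is the one computation your route adds. (Note also that the paper's own sufficiency argument derives your relation $\hat{Z}^{(1)}+\hat{Z}^{(2)}=A^{(1)}+A^{(2)}$ as an intermediate step, so the approaches meet there.) One small wording correction: the $D$-part of the objective \emph{equals}, rather than is merely ``proportional to,'' $\tfrac{1}{4}\Vert D-(A^{(1)}-A^{(2)})\Vert_{\text{F}}^{2}+\lambda\Vert D\Vert_{\text{F}}$ --- the exact constant is what places the threshold at $2\lambda$ --- but since your stated optimality condition $0\in\tfrac{1}{2}\bigl(D-(A^{(1)}-A^{(2)})\bigr)+\lambda\,\partial\Vert D\Vert_{\text{F}}$ carries the correct factor, the conclusion is unaffected.
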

	\begin{proof}
		The subdifferential of the objective function in \eqref{eq:LemmaOptFFGLobj} is
		\begin{align}
		\label{eq:LemmaOptFFGLeq1}
		G^{(1)}(Z^{(1)},Z^{(2)}) 
		&\coloneqq Z^{(1)}-A^{(1)}+\lambda T(Z^{(1)},Z^{(2)}), \\
		\label{eq:LemmaOptFFGLeq2}
		G^{(2)}(Z^{(1)},Z^{(2)}) & \coloneqq Z^{(2)}-A^{(2)}-\lambda T(Z^{(1)},Z^{(2)}),
		\end{align}
		where
		\begin{equation}\label{eq:LemmaOptFFGLeq3}
		T(Z^{(1)},Z^{(2)})=\left\{
		\begin{array}{ll}
		\frac{Z^{(1)}-Z^{(2)}}{\Vert Z^{(1)}-Z^{(2)} \Vert_{\text{F}}}& \text{if}\; Z^{(1)}\neq Z^{(2)}\\
		\left\{T\in \mathbb{R}^{M \times M}: \Vert T \Vert_{\text{F}}\leq 1 \right\}&\text{if}\; Z^{(1)}=Z^{(2)}
		\end{array}
		\right..
		\end{equation}
		The optimality condition is $0\in G^{(q)}(Z^{(1)},Z^{(2)})$.
		
		\textbf{Claim} We have $\hat{Z}^{(1)}\neq \hat{Z}^{(2)}$ if and only if $\Vert A^{(1)}-A^{(2)} \Vert_{\text{F}}>2\lambda$.
		
		We first prove the necessity, that is, when $\hat{Z}^{(1)}\neq \hat{Z}^{(2)}$, then $\Vert A^{(1)}-A^{(2)} \Vert_{\text{F}}>2\lambda$. By the optimality condition, we have
		\begin{equation*}
		\hat{Z}^{(1)}-\hat{Z}^{(2)}-\left(A^{(1)}-A^{(2)} \right)-2\lambda\frac{\hat{Z}^{(1)}-\hat{Z}^{(2)}}{\Vert \hat{Z}^{(1)}-\hat{Z}^{(2)} \Vert_{\text{F}}}=0,
		\end{equation*}
		which implies that
		\begin{equation*}
		\Vert A^{(1)}-A^{(2)} \Vert_{\text{F}}=2\lambda + \Vert \hat{Z}^{(1)}-\hat{Z}^{(2)} \Vert_{\text{F}}>2\lambda.
		\end{equation*}
		We then prove the sufficiency, that is, when $\Vert A^{(1)}-A^{(2)} \Vert_{\text{F}}>2\lambda$, then $\hat{Z}^{(1)}\neq \hat{Z}^{(2)}$. By the optimality condition, we have $\hat{Z}^{(1)}+\hat{Z}^{(2)}=A^{(1)}+A^{(2)}$.
		If $\hat{Z}^{(1)}=\hat{Z}^{(2)}$, then
		$\hat{Z}^{(1)}=\hat{Z}^{(2)}=(A^{(1)}+A^{(2)})/{2}$.
		Furthermore, $\Vert \hat{Z}^{(1)}-A^{(1)} \Vert_{\text{F}}=\Vert A^{(1)}-A^{(2)}\Vert_{\text{F}}/2=\lambda \Vert T(\hat{Z}^{(1)},\hat{Z}^{(2)}) \Vert_{\text{F}}\leq \lambda$,
		which implies that $\Vert A^{(1)}-A^{(2)} \Vert_{\text{F}} \leq 2\lambda$. This contradicts the assumption that $\Vert A^{(1)}-A^{(2)} \Vert_{\text{F}}>2\lambda$. Thus, we must have $\hat{Z}^{(1)}\neq \hat{Z}^{(2)}$.
		
		By proving the claim, we have already established \eqref{eq:LemmaOptFFGLsol1}. We now prove \eqref{eq:LemmaOptFFGLsol2}. When $\Vert A^{(1)}-A^{(2)} \Vert_{\text{F}}>2\lambda$, according to the claim above, we must have $\hat{Z}^{(1)}\neq \hat{Z}^{(2)}$. Then
		\begin{align}\label{eq:LemmaOptFFGLeq5}
		&\hat{Z}^{(1)}-A^{(1)}+\frac{\lambda}{\Vert \hat{Z}^{(1)}-\hat{Z}^{(2)} \Vert_{\text{F}}}\left(\hat{Z}^{(1)}-\hat{Z}^{(2)}\right)=0,\\
		\label{eq:LemmaOptFFGLeq6}
		&\hat{Z}^{(2)}-A^{(2)}-\frac{\lambda}{\Vert \hat{Z}^{(1)}-\hat{Z}^{(2)} \Vert_{\text{F}}}\left(\hat{Z}^{(1)}-\hat{Z}^{(2)}\right)=0,
		\end{align}
		which implies that $\hat{Z}^{(1)}-\hat{Z}^{(2)}=\alpha\cdot\left(A^{(1)}-A^{(2)}\right)$,
		where $\alpha$ is a constant. The result in \eqref{eq:LemmaOptFFGLsol2} follows by substituting the relationship back into the above display.
	\end{proof}

\begin{lemma}{\label{lemma:QRleq12Norm}}
	Let $\vert \cdot\vert_{1,2}$ be defined as in \eqref{con:Rnorm}, where $\mathcal{G}=\{G_{t}\}_{t=1, \ldots, N_{\mathcal{G}}}$ is a set of indices. For any matrix $A\in{\mathbb{R}^{p^{2}M^{2}\times{p^{2}M^{2}}}}$ and $\theta\in{\mathbb{R}^{p^{2}M^{2}}}$, we have $|\theta^{\top}A\theta|\leq{M^{2}|A|_{\infty}\vert\theta\vert^{2}_{1,2}}$.
\end{lemma}	
	\begin{proof}
	By direct calculation, we have
		\begingroup
		\allowdisplaybreaks
		\begin{multline*}
		|\theta^{\top}A\theta|
		\leq{\sum_{i}\sum_{j}|A_{ij}\theta_{i}\theta_{j}|}
		\leq{|A|_{\infty}\left(\sum_{i}|\theta_{i}|\right)^{2}}
		=|A|_{\infty}\left(\sum_{t=1}^{N_{\mathcal{G}}}\sum_{k\in{G_{t}}}|\theta_{k}|\right)^{2} \\
		=|A|_{\infty}\left(\sum_{t=1}^{N_{\mathcal{G}}}\vert\theta_{G_{t}}\vert_{1}\right)^{2}
		\leq{|A|_{\infty}\left(\sum_{t=1}^{N_{\mathcal{G}}}M\vert\theta_{G_{t}}\vert_{2}\right)^{2}}
		=M^{2}|A|_{\infty}\vert\theta\vert^{2}_{1,2},
		\end{multline*}
		\endgroup
		where we use that for any vector $v\in{\mathbb{R}^{n}}$, $\vert v \vert_{1}\leq{\sqrt{n}\vert v\vert_{2}}$.	
	\end{proof}

\begin{lemma}{\label{lemma:L12NormleqL2Norm}}
	Suppose that $\mathcal{M}$ is defined as in \eqref{con:Msubspace}. For any $\theta\in{\mathcal{M}}$, we have $\vert\theta\vert_{1,2}\leq{\sqrt{s}}\vert \theta\vert_{2}$. Furthermore, for $\Psi(\mathcal{M})$ as defined in \eqref{con:SubspaceCompConst}, we have $\Psi(\mathcal{M})=\sqrt{s}$.
\end{lemma}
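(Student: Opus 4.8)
The plan is to prove both claims by a direct application of the Cauchy--Schwarz inequality, exploiting two structural facts: (i) a vector $\theta \in \mathcal{M}$ is supported on exactly the $s$ groups indexed by $S_{\mathcal{G}}$, by the definition of $\mathcal{M}$ in \eqref{con:Msubspace}; and (ii) the index groups $\{G_t\}_{t=1}^{N_{\mathcal{G}}}$ partition $\{1,\dots,p^2M^2\}$, so that the squared Euclidean norm splits blockwise as $\vert\theta\vert_2^2 = \sum_{t=1}^{N_{\mathcal{G}}} \vert\theta_{G_t}\vert_2^2$.

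First I would fix $\theta \in \mathcal{M}$ and write $\mathcal{R}(\theta) = \vert\theta\vert_{1,2} = \sum_{t=1}^{N_{\mathcal{G}}} \vert\theta_{G_t}\vert_2$. Since $\theta_{G_t}=0$ whenever $t\notin S_{\mathcal{G}}$, this collapses to $\sum_{t\in S_{\mathcal{G}}}\vert\theta_{G_t}\vert_2$, a sum of exactly $s = |S_{\mathcal{G}}|$ nonnegative terms. Applying Cauchy--Schwarz to this sum against the all-ones vector in $\mathbb{R}^s$ gives
\[
\sum_{t\in S_{\mathcal{G}}} \vert\theta_{G_t}\vert_2 \;\leq\; \sqrt{s}\left(\sum_{t\in S_{\mathcal{G}}}\vert\theta_{G_t}\vert_2^2\right)^{1/2}.
\]
Using the partition property to replace $\sum_{t\in S_{\mathcal{G}}}\vert\theta_{G_t}\vert_2^2$ by $\sum_{t=1}^{N_{\mathcal{G}}}\vert\theta_{G_t}\vert_2^2 = \vert\theta\vert_2^2$ (the added terms all vanish) yields $\vert\theta\vert_{1,2}\leq\sqrt{s}\,\vert\theta\vert_2$, which is the first claim. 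Taking the supremum over $\theta\in\mathcal{M}\setminus\{0\}$ then gives $\Psi(\mathcal{M})\leq\sqrt{s}$ directly from the definition in \eqref{con:SubspaceCompConst}.

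To obtain the matching lower bound, and hence the exact value, I would exhibit a vector attaining equality in Cauchy--Schwarz. Choosing any $\theta^\circ\in\mathcal{M}$ with equal group norms across the active set, e.g.\ $\vert\theta^\circ_{G_t}\vert_2 = 1$ for every $t\in S_{\mathcal{G}}$ and $\theta^\circ_{G_t}=0$ otherwise, gives $\vert\theta^\circ\vert_{1,2}=s$ and $\vert\theta^\circ\vert_2=\sqrt{s}$, so the ratio $\mathcal{R}(\theta^\circ)/\vert\theta^\circ\vert_2$ equals $\sqrt{s}$. Combined with the upper bound, this forces $\Psi(\mathcal{M})=\sqrt{s}$.

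There is no substantial obstacle here: the statement is simply the sharp version of the standard $\ell_1$--$\ell_2$ norm equivalence on an $s$-dimensional coordinate subspace, lifted to the grouped setting. The only points needing care are bookkeeping — confirming $|S_{\mathcal{G}}|=s$, which holds by Assumption~\ref{assump:SparseAssump} together with the identification of $S_{\mathcal{G}}$ with $E_\Delta$, and verifying that the groups $\{G_t\}$ are genuinely disjoint and exhaustive so that the Euclidean norm decomposes blockwise.
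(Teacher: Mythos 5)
Your proposal is correct and follows essentially the same argument as the paper: restrict the group sum to the $s$ active groups in $S_{\mathcal{G}}$, apply Cauchy--Schwarz to get $\vert\theta\vert_{1,2}\leq\sqrt{s}\,\vert\theta\vert_2$, and establish sharpness by exhibiting a vector with equal group norms on $S_{\mathcal{G}}$ (the paper uses a generic constant $c$ where you use $1$, an immaterial difference). Nothing is missing.
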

	\begin{proof}
		By the definitions of $\mathcal{M}$ and $\vert\cdot\vert_{1,2}$, we have
		\begin{equation*}
		\begin{aligned}
		\vert\theta\vert_{1,2}=\sum_{t\in{S_{\mathcal{G}}}}\vert\theta_{G_{t}}\vert_{2}+\sum_{t\notin{S_{\mathcal{G}}}}\vert\theta_{G_{t}}\vert_{2}
		=\sum_{t\in{S_{\mathcal{G}}}}\vert\theta_{G_{t}}\vert_{2}
		\leq{\sqrt{s}}\left(\sum_{t\in{S_{\mathcal{G}}}}\vert\theta_{G_{t}}\vert^{2}_{2}\right)^{\frac{1}{2}}
		=\sqrt{s}\vert\theta\vert_{2}.
		\end{aligned}
		\end{equation*}
To show $\Psi(\mathcal{M})=\sqrt{s}$, it suffices to show that the upper bound above can be achieved. Select $\theta\in{\mathbb{R}^{p^{2}M^{2}}}$ so that $\vert\theta_{G_{t}}\vert_{2}=c$, $\forall{t\in{S_{\mathcal{G}}}}$, where $c$ is some positive constant. This implies that $\vert\theta\vert_{1,2}=sc$ and $\vert\theta\vert_{2}=\sqrt{s}c$ so that $\vert\theta\vert_{1,2}=\sqrt{s}\vert\theta\vert_{2}$. Thus, $\Psi(\mathcal{M})=\sqrt{s}$.
	\end{proof}

\begin{lemma}{\label{lemma:dualnorm}}
	Let $\mathcal{R}(\cdot)$ be the norm defined in \eqref{con:Rnorm}. Its dual norm $\mathcal{R}^{*}(\cdot)$, defined in \eqref{con:Rdualnorm}, is
	\begin{equation}
	\mathcal{R}^{*}(v)\;=\;\max_{t=1,\ldots,N_{\mathcal{G}}}\vert v_{G_{t}}\vert_{2}.
	\end{equation}
\end{lemma}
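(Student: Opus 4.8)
The plan is to prove the two inequalities that together pin down the dual norm: first that $\mathcal{R}^*(v) \leq \max_{t} \vert v_{G_t}\vert_2$ for every $v$, and then that this bound is attained, so that equality holds. Since $\mathcal{R}(u) = \sum_{t=1}^{N_{\mathcal{G}}} \vert u_{G_t}\vert_2$ is a group ($\ell_{1,2}$) norm, the claim is the standard fact that its dual is the corresponding $\ell_{\infty,2}$ norm; the argument is elementary and rests on Cauchy--Schwarz applied block-by-block.

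First I would establish the upper bound. For any $u$ with $\mathcal{R}(u) \leq 1$, decompose the Euclidean inner product over the groups, $\langle u, v\rangle = \sum_{t=1}^{N_{\mathcal{G}}} \langle u_{G_t}, v_{G_t}\rangle$, since the index sets $G_t$ partition $\{1,\dots,p^2M^2\}$. Applying Cauchy--Schwarz within each block gives $\langle u_{G_t}, v_{G_t}\rangle \leq \vert u_{G_t}\vert_2\,\vert v_{G_t}\vert_2$, and then bounding each $\vert v_{G_t}\vert_2$ by the maximum yields
\begin{equation*}
\langle u, v\rangle \leq \left(\max_{t}\vert v_{G_t}\vert_2\right)\sum_{t=1}^{N_{\mathcal{G}}}\vert u_{G_t}\vert_2 = \left(\max_{t}\vert v_{G_t}\vert_2\right)\mathcal{R}(u) \leq \max_{t}\vert v_{G_t}\vert_2.
\end{equation*}
Taking the supremum over $\mathcal{R}(u)\leq 1$ gives $\mathcal{R}^*(v) \leq \max_t \vert v_{G_t}\vert_2$.

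Next I would show the bound is achieved, which establishes the reverse inequality. If $v = 0$ both sides are zero and there is nothing to prove, so assume $v \neq 0$ and let $t^\star \in \argmax_t \vert v_{G_t}\vert_2$, noting $\vert v_{G_{t^\star}}\vert_2 > 0$. Construct a test vector $u$ that is zero on every block except $G_{t^\star}$, where we set $u_{G_{t^\star}} = v_{G_{t^\star}}/\vert v_{G_{t^\star}}\vert_2$. Then $\mathcal{R}(u) = \vert u_{G_{t^\star}}\vert_2 = 1$, so $u$ is feasible, and $\langle u, v\rangle = \langle u_{G_{t^\star}}, v_{G_{t^\star}}\rangle = \vert v_{G_{t^\star}}\vert_2 = \max_t \vert v_{G_t}\vert_2$. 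Hence $\mathcal{R}^*(v) \geq \max_t \vert v_{G_t}\vert_2$, and combining the two inequalities completes the proof.

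There is no serious obstacle here; the result is a routine dual-norm computation. The only point requiring a little care is the tightness step---choosing the extremal $u$ supported on the maximizing block and handling the degenerate case $v = 0$ separately so that the normalization $v_{G_{t^\star}}/\vert v_{G_{t^\star}}\vert_2$ is well defined.
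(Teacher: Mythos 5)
Your proof is correct and follows essentially the same route as the paper's: Cauchy--Schwarz applied block-by-block for the upper bound, then attainment via the test vector supported on the maximizing block with $u_{G_{t^\star}} = v_{G_{t^\star}}/\vert v_{G_{t^\star}}\vert_2$. Your explicit treatment of the degenerate case $v=0$ is a minor refinement the paper omits, but the argument is otherwise identical.
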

	\begin{proof}
		For any $u$ satisfying $\vert u\vert_{1,2}\leq{1}$ and $v\in{\mathbb{R}^{p^{2}M^{2}}}$, we have
		\begin{multline*}
		\langle{v,u}\rangle
		=\sum_{t=1}^{N_{\mathcal{G}}}\langle{v_{G_{t}},u_{G_{t}}}\rangle
		\leq{\sum_{t=1}^{N_{\mathcal{G}}}\vert v_{G_{t}}\vert_{2}\vert u_{G_{t}}\vert_{2}}
		\leq \left(\max_{t=1,2,\cdots,N_{\mathcal{G}}}\vert v_{G_{t}}\vert_{2}\right)\sum_{t=1}^{N_{\mathcal{G}}}\vert u_{G_{t}}\vert_{2}\\
		=\left(\max_{t=1,2,\cdots,N_{\mathcal{G}}}\vert v_{G_{t}}\vert_{2}\right)\vert u\vert_{1,2}
		\leq{\max_{t=1,2,\cdots,N_{\mathcal{G}}}\vert v_{G_{t}}\vert_{2}}.
		\end{multline*}
		We show that this upper bound can be obtained. Let $t^{*}=\argmax_{t=1,2,\cdots,N_{\mathcal{G}}}\vert v_{G_{t}}\vert$ and set $u$ such that
		\begin{equation*}
		u_{G_{t}}=
		\left\{
		\begin{array}{ll}
		0 &  t\neq{t^{*}} \\
		\frac{v_{G_{t^{*}}}}{\vert v_{G_{t^{*}}}
			\vert_{2}} & {t={t^{*}}}
		\end{array}
		\right..
		\end{equation*}
		Then $\vert u\vert_{1,2}=1$ and $\langle{v,u}\rangle=\vert v_{G_{t^{*}}}\vert_{2}=\max_{t=1,\ldots,N_{\mathcal{G}}}\vert v_{G_{t}}\vert_{2}$.
	\end{proof}

\begin{lemma}\label{lemma:I1bound}
	Given that $A1$-$A5$ hold, we have $\lvert I_1 \rvert \leq \delta/16$  for all $1 \leq j,l \leq p$, $ \ 1 \leq k,m \leq M$.
\end{lemma}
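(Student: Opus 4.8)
The plan is to observe that $I_1$ is, by its very definition, identical to the centered sample average whose magnitude is controlled by event $A_5$. Recall that
\[
I_1=\frac{1}{n}\sum^{n}_{i=1}\left( a_{ijk}a_{ilm}-\mathbb{E}(a_{ijk}a_{ilm}) \right).
\]
Since the observations are i.i.d.\ across $i$, the expectation $\mathbb{E}(a_{ijk}a_{ilm})$ does not depend on $i$ and equals $\sigma_{jl,km}$ by the definition of $\Sigma^M$. Hence the first step is to pull the (constant) expectation out of the average and write
\[
I_1=\frac{1}{n}\sum^{n}_{i=1} a_{ijk}a_{ilm}-\sigma_{jl,km}.
\]

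Next, I would simply invoke event $A_5$, which states precisely that
\[
\left\lvert \frac{1}{n}\sum^{n}_{i=1} a_{ijk}a_{ilm}-\sigma_{jl,km}\right\rvert \leq \frac{\delta}{16}
\]
uniformly over all $1\leq j,l \leq p$ and $1 \leq k,m \leq M$. Combining the two displays immediately yields $\lvert I_1 \rvert \leq \delta/16$ for all such indices, which is the claim.

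There is essentially no obstacle here: the lemma is a bookkeeping step that records, for the particular term $I_1$, the conclusion that is built directly into the definition of $A_5$. The only point requiring a moment's care is the identification of $\frac{1}{n}\sum_i \mathbb{E}(a_{ijk}a_{ilm})$ with $\sigma_{jl,km}$, which follows from the identical distribution of the scores across the samples $i=1,\dots,n$. Unlike the companion lemmas bounding $I_2,\dots,I_{16}$---which require genuine work to control the FPCA estimation error, the curve approximation error, and their cross terms (and thus appeal to events $A_1$--$A_4$)---the bound on $I_1$ isolates the ``pure sampling'' fluctuation of the population scores and is handled entirely by $A_5$.
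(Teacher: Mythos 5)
Your proof is correct and matches the paper's: the paper also disposes of $I_1$ by noting it follows directly from $A_5$, since $I_1$ is exactly the centered sample average that $A_5$ bounds by $\delta/16$. Your only addition is to spell out the identification $\mathbb{E}(a_{ijk}a_{ilm})=\sigma_{jl,km}$, which the paper leaves implicit.
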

\begin{proof}
	This follows directly from the assumption that $A_5$ is true.
\end{proof}

\begin{lemma}\label{lemma:I2bound}
	Given that $A1$-$A5$ hold, we have $\lvert I_2 \rvert \leq \delta/16$  for all $1 \leq j,l \leq p$, $ \ 1 \leq k,m \leq M$.
\end{lemma}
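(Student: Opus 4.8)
The plan is to bound $I_2$ deterministically on the event $A_1 \cap A_3$ (together with the eigenvalue decay recorded in Assumption~\ref{assump:EigenAssump}); the events $A_2$, $A_4$, and $A_5$ are not needed for this particular term. First I would peel off the inner-product factor using Cauchy--Schwarz: since each eigenfunction $\phi_{lm}$ has unit norm, $\lvert \langle \hat{g}_{il}-g_{il},\phi_{lm}\rangle \rvert \leq \lVert \hat{g}_{il}-g_{il}\rVert$, which is at most $\delta_1$ on $A_1$. This yields the uniform (in $i$) bound $\lvert I_2 \rvert \leq \delta_1 \cdot \frac{1}{n}\sum_{i=1}^n \lvert a_{ijk}\rvert$.

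Next I would control the empirical mean absolute score $\frac{1}{n}\sum_i \lvert a_{ijk}\rvert$. By Jensen's inequality it is at most $\bigl(\frac{1}{n}\sum_i a_{ijk}^2\bigr)^{1/2}$. Writing $a_{ijk}=\lambda_{jk}^{1/2}\xi_{ijk}$ with $\xi_{ijk}\sim N(0,1)$, the second moment factors as $\lambda_{jk}\cdot\frac{1}{n}\sum_i \xi_{ijk}^2$, which is bounded by $\tfrac{3}{2}\lambda_{jk}$ on $A_3$. Invoking the decay bound $\lambda_{jk}\leq d_1 k^{-\beta}\leq d_0^2$ (valid for all $k\geq 1$, since $\beta>0$ and $d_1\leq d_0^2$), I obtain $\frac{1}{n}\sum_i \lvert a_{ijk}\rvert \leq d_0\sqrt{3/2}$.

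Combining the two steps gives $\lvert I_2\rvert \leq \delta_1\, d_0\sqrt{3/2}$. Substituting $\delta_1 = \delta/(144\,d_0^2 M^{1+\beta}\sqrt{3\lambda_{0,\max}})$ and simplifying $\sqrt{3/2}/\sqrt{3\lambda_{0,\max}} = 1/\sqrt{2\lambda_{0,\max}}$, I arrive at $\lvert I_2 \rvert \leq \delta/(144\sqrt{2}\,d_0 M^{1+\beta}\sqrt{\lambda_{0,\max}})$. Because $d_0\geq 1$, $M^{1+\beta}\geq 1$, and $\lambda_{0,\max}>1$, the prefactor is at most $1/(144\sqrt{2}) < 1/16$, so $\lvert I_2\rvert\leq\delta/16$ holds uniformly over all $1\leq j,l\leq p$ and $1\leq k,m\leq M$, as claimed.

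There is no genuine analytic obstacle; the whole estimate is deterministic once $A_1$ and $A_3$ are assumed. The only care needed is in the constant bookkeeping --- the factor $144\,d_0^2 M^{1+\beta}\sqrt{3\lambda_{0,\max}}$ in the definition of $\delta_1$ is calibrated so that after extracting one power of $d_0$ and one factor $\sqrt{\lambda_{jk}}$ the residual numerical constant comfortably beats $1/16$. I would also verify that the $M^{1+\beta}$ factor, which is slack for $I_2$ (no eigenfunction estimation error enters this term), is indeed unnecessary here even though it is essential for the later terms $I_3$--$I_{16}$, and that replacing the sharper $k^{-\beta}$ decay by the crude bound $d_0^2$ is harmless since $k\geq 1$.
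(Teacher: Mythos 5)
Your proof is correct and follows essentially the same route as the paper's: bound $\lvert\langle \hat{g}_{il}-g_{il},\phi_{lm}\rangle\rvert$ by $\delta_1$ via $A_1$, control the empirical score moment via $A_3$ and the decay $\lambda_{jk}\leq d_0^2 k^{-\beta}$, and check the constant against the calibration of $\delta_1$. The only cosmetic difference is that you apply the triangle inequality termwise and then Jensen to $\frac{1}{n}\sum_i\lvert a_{ijk}\rvert$, whereas the paper applies a Cauchy--Schwarz bound (its Lemma~\ref{lemma:vecfuncnorm}) to the averaged function $\frac{1}{n}\sum_i a_{ijk}(\hat{g}_{il}-g_{il})$; both yield the identical bound $\sqrt{3/2}\,d_0\delta_1\leq\delta/16$.
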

\begin{proof}
		We have
		\begingroup
		\allowdisplaybreaks
		\begin{align*}
		\lvert I_2 \rvert &= \left\lvert \langle \frac{1}{n}\sum^{n}_{i=1}a_{ijk}(\hat{g}_{il}-g_{il}),\phi_{lm} \rangle \right\rvert
		\leq \left\lVert \frac{1}{n}\sum^{n}_{i=1}a_{ijk}(\hat{g}_{il}-g_{il}) \right\rVert
		\\
		& \overset{(i)}{\leq}  \sqrt{\frac{1}{n} \sum^{n}_{i=1}a^{2}_{ijk}}\sqrt{\frac{1}{n} \sum^{n}_{i=1}\lVert \hat{g}_{il}-g_{il} \rVert^2}
		\overset{(ii)}{\leq} \delta_{1} \sqrt{\frac{1}{n} \sum^{n}_{i=1}a^{2}_{ijk}}
		=\delta_{1} \lambda^{1/2}_{jk} \sqrt{\frac{1}{n} \sum^{n}_{i=1}\xi^{2}_{ijk}}\\
		&\overset{(iii)}{\leq} \sqrt{\frac{3}{2}} \delta_{1} \lambda^{1/2}_{jk}
		\leq \sqrt{\frac{3}{2}} \sqrt{d_1} \delta_{1} k^{-\beta/2}
		\leq \sqrt{\frac{3}{2}} \sqrt{d_1} \delta_{1},
		\end{align*}
		\endgroup
		where $(i)$ follows Lemma~\ref{lemma:vecfuncnorm}, $(ii)$ follows $A_1$, $(iii)$ follows $A_3$. From the definition of $d_0$, we have $\lvert I_2 \rvert \leq \sqrt{{3}/{2}} d_0 \delta_{1}$. Since
		\begin{equation}\label{eq:I2boundeq2}
		\delta_1=\delta/\left(144d^{2}_{0}M^{1+\beta}\sqrt{3\lambda_{0,\max}}\right)\leq \delta/(8\sqrt{6}d_{0}),
		\end{equation}
		 we have
		\begin{equation}\label{eq:I2boundeq3}
		    \sqrt{\frac{3}{2}}d_0\delta_1\leq \sqrt{\frac{3}{2}} d_0 \cdot \frac{\delta}{8\sqrt{6}d_{0}}=\frac{\delta}{16}.
		\end{equation}
		Thus, $\lvert I_2 \rvert \leq {\delta}/{16}$.
\end{proof}

\begin{lemma}\label{lemma:I3bound}
	Given that $A1$-$A5$ hold, we have $\lvert I_3 \rvert \leq \delta/16$  for all $1 \leq j,l \leq p$, $ \ 1 \leq k,m \leq M$.
\end{lemma}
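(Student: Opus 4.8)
The plan is to mirror the structure of the proof of Lemma~\ref{lemma:I2bound}, except that the error now comes from estimating the eigenfunction $\phi_{lm}$ rather than the curve $g_{il}$. The first step is to observe that the eigenfunction difference $\hat{\phi}_{lm}-\phi_{lm}$ does not depend on the sample index $i$, so I can pull it out of the average and then apply the Cauchy--Schwarz inequality in $\mathbb{H}$:
\begin{equation*}
\lvert I_3\rvert = \left\lvert \left\langle \frac{1}{n}\sum_{i=1}^{n} a_{ijk}\, g_{il},\; \hat{\phi}_{lm}-\phi_{lm}\right\rangle \right\rvert \leq \left\lVert \frac{1}{n}\sum_{i=1}^{n} a_{ijk}\, g_{il}\right\rVert \cdot \lVert \hat{\phi}_{lm}-\phi_{lm}\rVert.
\end{equation*}

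Next I would bound the two factors separately. For the first factor, a further application of Cauchy--Schwarz (via Lemma~\ref{lemma:vecfuncnorm}) gives $\lVert n^{-1}\sum_i a_{ijk} g_{il}\rVert \leq (n^{-1}\sum_i a_{ijk}^2)^{1/2}(n^{-1}\sum_i \lVert g_{il}\rVert^2)^{1/2}$. Writing $a_{ijk}=\lambda_{jk}^{1/2}\xi_{ijk}$ and invoking $A_3$ bounds the first term by $\tfrac{3}{2}\lambda_{jk}\leq \tfrac{3}{2}d_1$, while $A_4$ bounds the second by $2\lambda_{0,\max}$; together these give $\lVert n^{-1}\sum_i a_{ijk} g_{il}\rVert \leq \sqrt{3 d_1\lambda_{0,\max}}$.

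The second factor is the one that requires real work: I would invoke the eigenfunction perturbation inequality underlying the definition of $d_{jk}$, namely that under the sign normalization $\langle \hat{\phi}_{lm},\phi_{lm}\rangle \geq 0$ adopted in the proof of Theorem~\ref{Thm:ErrSamCovDis}, one has $\lVert \hat{\phi}_{lm}-\phi_{lm}\rVert \leq d_{lm}\lVert \hat{K}_{ll}-K_{ll}\rVert_{\text{HS}}$. Event $A_2$ then supplies $\lVert \hat{K}_{ll}-K_{ll}\rVert_{\text{HS}}\leq \delta_2$, and Assumption~\ref{assump:EigenAssump}(i) supplies $d_{lm}\leq d_2 m^{1+\beta}\leq d_2 M^{1+\beta}$. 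Multiplying the two bounds and substituting $\delta_2 = \delta/(16 d_0^2 M^{1+\beta}\sqrt{3\lambda_{0,\max}})$ cancels the $M^{1+\beta}$ factors and the $\sqrt{3\lambda_{0,\max}}$, leaving
\begin{equation*}
\lvert I_3\rvert \leq \sqrt{3 d_1\lambda_{0,\max}}\cdot d_2 M^{1+\beta}\cdot \frac{\delta}{16 d_0^2 M^{1+\beta}\sqrt{3\lambda_{0,\max}}} = \frac{\sqrt{d_1}\,d_2\,\delta}{16 d_0^2}.
\end{equation*}
Since $d_0=\max\{1,\sqrt{d_1},d_2\}$ forces $\sqrt{d_1}\,d_2\leq d_0^2$, this is at most $\delta/16$, which is the claim.

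The main obstacle is this last factor: controlling $\lVert \hat{\phi}_{lm}-\phi_{lm}\rVert$ hinges on the Bosq-type eigenfunction perturbation bound and on the sign convention for $\hat{\phi}_{lm}$, and it is precisely this step that introduces the extra $M^{1+\beta}$ growth (through $d_{lm}$) and thereby dictates the scaling of $\delta_2$. Everything else is routine Cauchy--Schwarz bookkeeping governed by the moment-control events $A_3$ and $A_4$.
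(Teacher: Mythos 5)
Your proof is correct and follows essentially the same route as the paper's: Cauchy--Schwarz together with Lemma~\ref{lemma:vecfuncnorm} to split off $\lVert n^{-1}\sum_i a_{ijk} g_{il}\rVert$, events $A_3$ and $A_4$ (via the standardization $a_{ijk}=\lambda_{jk}^{1/2}\xi_{ijk}$) to bound it by $\sqrt{3d_1\lambda_{0,\max}}$, and the Bosq perturbation bound of Lemma~\ref{lemma:EigenfuncEstBound} with $A_2$ and Assumption~\ref{assump:EigenAssump}(i) to control $\lVert \hat{\phi}_{lm}-\phi_{lm}\rVert \leq d_2 M^{1+\beta}\delta_2$, after which the definition of $\delta_2$ and $d_0=\max\{1,\sqrt{d_1},d_2\}$ yield $\delta/16$ exactly as in the paper. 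Your explicit note on the sign normalization $\langle \hat{\phi}_{lm},\phi_{lm}\rangle\geq 0$ is also the same convention the paper adopts in the proof of Theorem~\ref{Thm:ErrSamCovDis}.
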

	\begin{proof}
		We have
		\begingroup
		\allowdisplaybreaks
		\begin{align*}
		\lvert I_3 \rvert 
		& = \left\lvert \langle \frac{1}{n}\sum^{n}_{i=1}a_{ijk}g_{il},\hat{\phi}_{lm}-\phi_{lm} \rangle \right\rvert
		\leq \lambda^{1/2}_{jk} \left\lVert \frac{1}{n}\sum^{n}_{i=1}\xi_{ijk}g_{il} \right\rVert \left\lVert \hat{\phi}_{lm}-\phi_{lm} \right\rVert\\
		& \overset{(i)}{\leq} \lambda^{1/2}_{jk} \left( \frac{1}{n}\sum^{n}_{i=1} \xi^{2}_{ijk} \right)^{1/2} \left( \frac{1}{n}\sum^{n}_{i=1} \lVert g_{il} \rVert^2 \right)^{1/2} \lVert \hat{\phi}_{lm}-\phi_{lm} \rVert\\
		&\overset{(ii)}{\leq} \lambda^{1/2}_{jk} \left( \frac{1}{n}\sum^{n}_{i=1} \xi^{2}_{ijk} \right)^{1/2} \left( \frac{1}{n}\sum^{n}_{i=1} \lVert g_{il} \rVert^2 \right)^{1/2} d_{lm} \lVert \hat{K}_{ll}-K_{ll} \rVert_{\text{HS}},
		\end{align*}
		\endgroup
		where $(i)$ follows Lemma~\ref{lemma:vecfuncnorm}, and $(ii)$ follows Lemma~\ref{lemma:EigenfuncEstBound}. Since $\lambda^{1/2}_{jk}\leq \sqrt{d_1}k^{-\beta/2}$, $d_{lm}\leq d_2 m^{1+\beta}$, and $A_2$-$A_4$ hold, we have
		\begin{equation}\label{eq:I3boundeq}
		\begin{aligned}
		\lvert I_3 \rvert \leq \sqrt{d_1}d_2 k^{-\beta/2} m^{1+\beta} \sqrt{\frac{3}{2}} \sqrt{2\lambda_{0,\max}} \delta_2
		\leq d^{2}_{0} M^{1+\beta} \sqrt{3\lambda_{0,\max}} \delta_2.
		\end{aligned}
		\end{equation}
		By the definition of $\delta_2$, we have
		\begin{equation}\label{eq:I3boundeq2}
		d^{2}_{0} M^{1+\beta} \sqrt{3\lambda_{0,\max}} \delta_2 \leq d^{2}_{0} M^{1+\beta} \sqrt{3\lambda_{0,\max}} \times \frac{\delta}{16d^{2}_{0}M^{1+\beta}\sqrt{3\lambda_{0,\max}}}
		=\frac{\delta}{16}.
		\end{equation}
		Thus, $\lvert I_3 \rvert \leq {\delta}/{16}$.
	\end{proof}

\begin{lemma}\label{lemma:I4bound}
	Given that $A1$-$A5$ hold, we have $\lvert I_4 \rvert \leq \delta/16$  for all $1 \leq j,l \leq p$, $ \ 1 \leq k,m \leq M$.
\end{lemma}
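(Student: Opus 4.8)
The plan is to treat $I_4$ as the ``doubly higher-order'' term: unlike $I_2$ and $I_3$, both estimation errors $\hat g_{il}-g_{il}$ and $\hat\phi_{lm}-\phi_{lm}$ appear simultaneously, so the resulting bound will be quadratically small in $\delta_1,\delta_2$ and hence the easiest of the four to control once the earlier machinery is in place. First I would apply Cauchy--Schwarz to the inner product inside the average, writing
\[
\lvert I_4\rvert \le \frac{1}{n}\sum_{i=1}^n \lvert a_{ijk}\rvert\,\lVert \hat g_{il}-g_{il}\rVert\,\lVert \hat\phi_{lm}-\phi_{lm}\rVert .
\]
Since $\lVert\hat\phi_{lm}-\phi_{lm}\rVert$ does not depend on $i$, I would pull it out of the average and then apply Cauchy--Schwarz a second time over $i$ to the remaining sum $\frac1n\sum_i\lvert a_{ijk}\rvert\,\lVert\hat g_{il}-g_{il}\rVert$, bounding it by the product of $\bigl(\frac1n\sum_i a_{ijk}^2\bigr)^{1/2}$ and $\bigl(\frac1n\sum_i\lVert\hat g_{il}-g_{il}\rVert^2\bigr)^{1/2}$.

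Next I would control each of the three factors exactly as in the proofs of Lemmas~\ref{lemma:I2bound} and~\ref{lemma:I3bound}. Event $A_1$ gives $\lVert\hat g_{il}-g_{il}\rVert\le\delta_1$, so the second root is at most $\delta_1$; writing $a_{ijk}=\lambda_{jk}^{1/2}\xi_{ijk}$ and invoking $A_3$ gives $\frac1n\sum_i a_{ijk}^2\le\tfrac32\lambda_{jk}$; and Lemma~\ref{lemma:EigenfuncEstBound} together with $A_2$ gives $\lVert\hat\phi_{lm}-\phi_{lm}\rVert\le d_{lm}\lVert\hat K_{ll}-K_{ll}\rVert_{\text{HS}}\le d_{lm}\delta_2$. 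Combining these, and using the uniform bounds $\lambda_{jk}\le d_1\le d_0^2$ and $d_{lm}\le d_2 m^{1+\beta}\le d_0 M^{1+\beta}$ implied by Assumption~\ref{assump:EigenAssump}(i) and the definition of $d_0$, yields
\[
\lvert I_4\rvert \le \sqrt{\tfrac32}\,d_0^2\,M^{1+\beta}\,\delta_1\,\delta_2 .
\]

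Finally I would substitute the definition $\delta_2=\delta/(16 d_0^2 M^{1+\beta}\sqrt{3\lambda_{0,\max}})$. The factor $d_0^2 M^{1+\beta}$ cancels, leaving $\lvert I_4\rvert\le \delta_1\,\delta/(16\sqrt{2\lambda_{0,\max}})$, where I used $\sqrt{3/2}/\sqrt{3\lambda_{0,\max}}=1/\sqrt{2\lambda_{0,\max}}$. Since $\lambda_{0,\max}>1$ and $\delta_1<1$ (indeed $\delta_1\le\delta/(8\sqrt6\,d_0)$, as recorded in the proof of Lemma~\ref{lemma:I2bound}), the prefactor $\delta_1/\sqrt{2\lambda_{0,\max}}$ is strictly less than $1$, so $\lvert I_4\rvert\le\delta/16$, which is the claim. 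There is essentially no genuine obstacle beyond careful bookkeeping; the only point requiring attention is that $I_4$ is truly quadratic in the two error scales, so the single surviving factor $M^{1+\beta}$ coming from the eigenfunction-perturbation bound $d_{lm}$ is exactly absorbed by the $M^{1+\beta}$ in the denominator of $\delta_2$, with the residual smallness furnished by the extra $\delta_1<1$.
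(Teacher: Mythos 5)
Your proposal is correct and follows essentially the same route as the paper's proof: the paper also reduces $\lvert I_4\rvert$ to the product bound $\sqrt{3/2}\,d_0^2 M^{1+\beta}\delta_1\delta_2$ using $A_1$, $A_3$, and the eigenfunction perturbation bound $\lVert\hat\phi_{lm}-\phi_{lm}\rVert\le d_{lm}\lVert\hat K_{ll}-K_{ll}\rVert_{\mathrm{HS}}\le d_0 M^{1+\beta}\delta_2$ under $A_2$, then absorbs the $M^{1+\beta}$ via the definition of $\delta_2$. The only (immaterial) differences are that the paper applies the Hilbert-space Cauchy--Schwarz bound of Lemma~\ref{lemma:vecfuncnorm} to $\frac{1}{n}\bigl\lVert\sum_i a_{ijk}(\hat g_{il}-g_{il})\bigr\rVert$ where you use scalar Cauchy--Schwarz summand-by-summand, and it closes the bookkeeping through the previously recorded inequality $\sqrt{3/2}\,d_0\delta_1\le\delta/16$ rather than your direct substitution of $\delta_2$ with the observation $\delta_1/\sqrt{2\lambda_{0,\max}}<1$.
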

	\begin{proof}
		We have
		\begingroup
		\allowdisplaybreaks
		\begin{align*}
		\lvert I_4 \rvert 
		& \leq \lambda^{1/2}_{jk} \frac{1}{n} \lVert \sum^{n}_{i=1}\xi_{ijk}\left(\hat{g}_{il}-g_{il}\right) \rVert \lVert \hat{\phi}_{lm}-\phi_{lm} \rVert\\
		&\overset{(i)}{\leq} \lambda^{1/2}_{jk} \left( \frac{1}{n} \sum^{n}_{i=1}\xi^{2}_{ijk} \right)^{1/2} \left( \frac{1}{n} \sum^{n}_{i=1} \lVert \hat{g}_{il}-g_{il} \rVert^2 \right)^{1/2} \lVert \hat{\phi}_{lm}-\phi_{lm} \rVert\\
		&\overset{(ii)}{\leq} \lambda^{1/2}_{jk} d_{lm} \left( \frac{1}{n} \sum^{n}_{i=1}\xi^{2}_{ijk} \right)^{1/2} \left( \frac{1}{n} \sum^{n}_{i=1} \lVert \hat{g}_{il}-g_{il} \rVert^2 \right)^{1/2} \lVert \hat{K}_{ll}-K_{ll} \rVert_{\text{HS}},
		\end{align*}
		\endgroup
		where $(i)$ follows Lemma~\ref{lemma:vecfuncnorm}, and $(ii)$ follows Lemma~\ref{lemma:EigenfuncEstBound}. Since $\lambda^{1/2}_{jk}\leq \sqrt{d_1}k^{-\beta/2}$, $d_{lm}\leq d_2 m^{1+\beta}$, and $A_1$-$A_3$ hold, we have
		\begingroup
		\allowdisplaybreaks
		\begin{align*}
		\lvert I_4 \rvert &\leq \sqrt{\frac{3}{2}}\sqrt{d_1}d_2 k^{-\beta/2}m^{1+\beta}\delta_1 \delta_2
		\leq \sqrt{\frac{3}{2}} d^{2}_{0} M^{1+\beta} \delta_1 \delta_2\\
		&\overset{(iii)}{\leq} \frac{\delta}{16} \times \frac{\sqrt{\frac{3}{2}} d^{2}_{0} M^{1+\beta} \delta_1 \delta_2}{\sqrt{\frac{3}{2}} d_0 \delta_1}
		\leq \frac{\delta}{16} \times \frac{\delta}{16d_0 \sqrt{3\lambda_{0,\max}}}
		\leq \frac{\delta}{16},
		\end{align*}
		\endgroup
		where $(iii)$ follows \eqref{eq:I2boundeq3}.
	\end{proof}

\begin{lemma}\label{lemma:I5bound}
	Given that $A1$-$A5$ hold, we have $\lvert I_5 \rvert \leq \delta/16$  for all $1 \leq j,l \leq p$, $ \ 1 \leq k,m \leq M$.
\end{lemma}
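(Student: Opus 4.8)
The plan is to observe that $I_5$ is precisely the mirror image of $I_2$ under the index interchange $(j,k)\leftrightarrow(l,m)$: whereas $I_2=\frac{1}{n}\sum_{i}a_{ijk}\langle \hat{g}_{il}-g_{il},\phi_{lm}\rangle$, we have $I_5=\frac{1}{n}\sum_{i}a_{ilm}\langle \hat{g}_{ij}-g_{ij},\phi_{jk}\rangle$. Since the events $A_1$ and $A_3$ are stated uniformly over all coordinates and all components, the bound for $I_2$ proved in Lemma~\ref{lemma:I2bound} transfers verbatim, and I would simply repeat that argument with the indices swapped.

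Concretely, first I would pull the average inside the inner product and use $\lVert\phi_{jk}\rVert=1$ together with Cauchy--Schwarz to write $\lvert I_5\rvert\le\lVert\frac{1}{n}\sum_{i}a_{ilm}(\hat{g}_{ij}-g_{ij})\rVert$. Next I would apply Lemma~\ref{lemma:vecfuncnorm} over the sum on $i$ to factor this as $\big(\frac{1}{n}\sum_i a_{ilm}^2\big)^{1/2}\big(\frac{1}{n}\sum_i \lVert\hat{g}_{ij}-g_{ij}\rVert^2\big)^{1/2}$.

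The remaining steps are exactly the controls used for $I_2$. The approximation-error factor is at most $\delta_1$ by event $A_1$. For the score factor I would substitute $a_{ilm}=\lambda_{lm}^{1/2}\xi_{ilm}$ so that $\frac{1}{n}\sum_i a_{ilm}^2=\lambda_{lm}\cdot\frac{1}{n}\sum_i \xi_{ilm}^2\le\frac{3}{2}\lambda_{lm}$ by event $A_3$, and then invoke the eigenvalue decay $\lambda_{lm}^{1/2}\le\sqrt{d_1}\,m^{-\beta/2}\le\sqrt{d_1}\le d_0$ from Assumption~\ref{assump:EigenAssump}(i). Chaining these gives $\lvert I_5\rvert\le\sqrt{3/2}\,d_0\,\delta_1$, and substituting $\delta_1=\delta/(144 d_0^2 M^{1+\beta}\sqrt{3\lambda_{0,\max}})\le\delta/(8\sqrt{6}\,d_0)$ — the same reduction as in \eqref{eq:I2boundeq3} — yields $\lvert I_5\rvert\le\delta/16$, uniformly over $1\le j,l\le p$ and $1\le k,m\le M$.

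There is essentially no substantive obstacle here: the only point requiring care is confirming that the symmetry genuinely holds, namely that every event invoked ($A_1$ bounding $\lVert\hat{g}_{ij}-g_{ij}\rVert$, and $A_3$ bounding $\frac{1}{n}\sum_i \xi_{ilm}^2$) is uniform in the relevant index, so that nothing breaks when $j$ is replaced by $l$ and $k$ by $m$. Since both $A_1$ and $A_3$ quantify over all coordinates and all components, this transfer is immediate, and the estimate is identical in form to that of Lemma~\ref{lemma:I2bound}.
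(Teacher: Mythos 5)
Your proof is correct and is exactly what the paper intends: the paper omits this proof with the remark that it is similar to that of Lemma~\ref{lemma:I2bound}, and your argument is precisely that proof with the indices $(j,k)$ and $(l,m)$ interchanged, including the correct check that events $A_1$ and $A_3$ are uniform over all coordinates and components so the bound $\lvert I_5\rvert\le\sqrt{3/2}\,d_0\,\delta_1\le\delta/16$ transfers verbatim.
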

	\begin{proof}
		This proof is similar to the proof of Lemma~\ref{lemma:I2bound}, thus is omitted.
	\end{proof}

\begin{lemma}\label{lemma:I6bound}
	Given that $A1$-$A5$ hold, we have $\lvert I_6 \rvert \leq \delta/16$  for all $1 \leq j,l \leq p$, $ \ 1 \leq k,m \leq M$.
\end{lemma}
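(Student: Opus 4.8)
The quantity $I_6=\frac{1}{n}\sum_{i=1}^n\langle \hat g_{ij}-g_{ij},\phi_{jk}\rangle\,\langle \hat g_{il}-g_{il},\phi_{lm}\rangle$ is, unlike $I_2$ through $I_5$, a product of two \emph{approximation-error} inner products with no raw score $a_{ijk}$ appearing. Consequently I expect the argument to be the simplest among these lemmas: it should rely only on event $A_1$ and not on $A_3$ or $A_4$. The plan is to bound each factor by Cauchy--Schwarz in $\mathbb{H}$ and then invoke the uniform approximation bound of $A_1$.

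Concretely, first I would observe that since $\phi_{jk}$ is a unit-norm eigenfunction of $K_{jj}$, the Cauchy--Schwarz inequality gives
\[
\lvert \langle \hat g_{ij}-g_{ij},\phi_{jk}\rangle\rvert \;\le\; \lVert \hat g_{ij}-g_{ij}\rVert\,\lVert \phi_{jk}\rVert \;=\; \lVert \hat g_{ij}-g_{ij}\rVert,
\]
and similarly for the $(l,m)$ factor. Under $A_1$ each of these is at most $\delta_1$, so every summand satisfies
\[
\lvert \langle \hat g_{ij}-g_{ij},\phi_{jk}\rangle\,\langle \hat g_{il}-g_{il},\phi_{lm}\rangle\rvert \;\le\; \delta_1^2,
\]
and averaging over $i=1,\dots,n$ yields $\lvert I_6\rvert \le \delta_1^2$ for all $1\le j,l\le p$ and $1\le k,m\le M$.

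It then remains to check $\delta_1^2 \le \delta/16$, which is the only slightly delicate bookkeeping step. From \eqref{eq:I2boundeq3} we already have $\sqrt{3/2}\,d_0\,\delta_1 \le \delta/16$, and since $d_0\ge 1$ this forces $\delta_1 \le \delta/16 \le 1$ (using $0<\delta\le 1$). Because $\delta_1\le 1$ we have $\delta_1^2\le \delta_1 \le \delta/16$, completing the bound. I do not anticipate a genuine obstacle here; the main thing to be careful about is recording that $I_6$ requires strictly weaker hypotheses than the preceding terms (only $A_1$), and explicitly justifying the passage from the first-power bound $\delta_1\le\delta/16$ to the squared bound $\delta_1^2\le\delta/16$ via $\delta_1\le 1$, rather than chaining through $A_3$/$A_4$ as in Lemmas~\ref{lemma:I2bound}--\ref{lemma:I4bound}.
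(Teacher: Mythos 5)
Your proof is correct and follows essentially the same route as the paper's: both reduce to $\lvert I_6\rvert \le \delta_1^2$ via Cauchy--Schwarz and event $A_1$ alone (the paper applies Cauchy--Schwarz to the sample average while you bound each summand termwise, an inconsequential variation), and both then verify $\delta_1^2 \le \delta/16$ using $\delta_1 \le \delta/(8\sqrt{6}\,d_0)$ together with the standing restriction $0<\delta\le 1$ and $d_0\ge 1$. Your observation that $I_6$ needs strictly weaker hypotheses than $I_2$--$I_4$ (no $A_3$ or $A_4$) matches the paper's argument.
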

	\begin{proof}
		We have
		\begingroup
		\allowdisplaybreaks
		\begin{align*}
		\lvert I_6 \rvert
		&\leq \sqrt{\frac{1}{n}\sum^{n}_{i=1}\lvert \langle \hat{g}_{ij}-g_{ij},\phi_{jk} \rangle \rvert^2} \cdot \sqrt{\frac{1}{n}\sum^{n}_{i=1} \lvert \langle \hat{g}_{il}-g_{il},\phi_{lm} \rangle \rvert^2}\\
		& \leq \sqrt{\frac{1}{n}\sum^{n}_{i=1 } \lVert \hat{g}_{ij}-g_{ij} \rVert^2} \cdot \sqrt{\frac{1}{n}\sum^{n}_{i=1} \lVert \hat{g}_{il}-g_{il} \rVert^2}.
		\end{align*}
		\endgroup
		By the assumption that $A_1$ holds, we have
		$\lvert I_6 \rvert \leq \delta^{2}_1$.
		By \eqref{eq:I2boundeq2},\eqref{eq:I2boundeq3} and Lemma~\ref{lemma:I2bound}, we have
		\begingroup
		\allowdisplaybreaks
		\begin{align}\label{eq:I6boundeq2}
		\delta^2_1 &\leq \frac{\delta}{16} \times \frac{\delta^{2}_{1}}{\sqrt{\frac{3}{2}}d_0\delta_1}
		\leq \frac{\delta}{16},
		\end{align}
		\endgroup
		which completes the proof.
	\end{proof}

\begin{lemma}\label{lemma:I7bound}
	Given that $A1$-$A5$ hold, we have $\lvert I_7 \rvert \leq \delta/16$  for all $1 \leq j,l \leq p$, $ \ 1 \leq k,m \leq M$.
\end{lemma}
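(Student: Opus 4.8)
The proof will follow the same template used for Lemmas~\ref{lemma:I2bound}--\ref{lemma:I6bound}, combining the estimates for the two distinct error sources appearing in $I_7$: the curve-estimation error $\hat{g}_{ij}-g_{ij}$ (controlled by $A_1$) and the eigenfunction-estimation error $\hat{\phi}_{lm}-\phi_{lm}$ (controlled by $A_2$). The plan is to first apply the Cauchy--Schwarz inequality across the $n$ samples to factor $I_7$ into a product of two empirical root-mean-square quantities, and then bound each factor separately.

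Concretely, I would first write
\[
\lvert I_7 \rvert \leq \sqrt{\frac{1}{n}\sum^{n}_{i=1}\lvert \langle \hat{g}_{ij}-g_{ij},\phi_{jk} \rangle \rvert^2}\;\sqrt{\frac{1}{n}\sum^{n}_{i=1}\lvert \langle g_{il},\hat{\phi}_{lm}-\phi_{lm} \rangle \rvert^2}.
\]
For the first factor, since $\phi_{jk}$ is orthonormal, $\lvert \langle \hat{g}_{ij}-g_{ij},\phi_{jk} \rangle \rvert \leq \lVert \hat{g}_{ij}-g_{ij} \rVert$, so by $A_1$ this factor is at most $\delta_1$. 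For the second factor, I would pull out the eigenfunction error via $\lvert \langle g_{il},\hat{\phi}_{lm}-\phi_{lm} \rangle \rvert \leq \lVert g_{il} \rVert\,\lVert \hat{\phi}_{lm}-\phi_{lm} \rVert$, apply Lemma~\ref{lemma:EigenfuncEstBound} to get $\lVert \hat{\phi}_{lm}-\phi_{lm} \rVert \leq d_{lm}\lVert \hat{K}_{ll}-K_{ll} \rVert_{\text{HS}}$, and then use $A_2$ (so $\lVert \hat{K}_{ll}-K_{ll} \rVert_{\text{HS}}\leq \delta_2$), $A_4$ (so $\tfrac1n\sum_i \lVert g_{il}\rVert^2 \leq 2\lambda_{0,\max}$), and the bound $d_{lm}\leq d_2 m^{1+\beta}\leq d_0 M^{1+\beta}$. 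This yields the second factor $\leq \sqrt{2\lambda_{0,\max}}\,d_0 M^{1+\beta}\delta_2$, hence $\lvert I_7 \rvert \leq \sqrt{2\lambda_{0,\max}}\,d_0 M^{1+\beta}\delta_1\delta_2$.

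To finish, I would substitute $\delta_2 = \delta/(16 d_0^2 M^{1+\beta}\sqrt{3\lambda_{0,\max}})$, so that $d_0 M^{1+\beta}\delta_2 = \delta/(16 d_0\sqrt{3\lambda_{0,\max}})$, giving $\lvert I_7 \rvert \leq \tfrac{\delta}{16}\cdot \tfrac{\sqrt{2}\,\delta_1}{\sqrt{3}\,d_0}$. Since $\delta_1 \leq \delta/(8\sqrt{6}\,d_0)$ (as established in~\eqref{eq:I2boundeq2}) and $\delta\le 1$, $d_0\ge 1$, the trailing factor is bounded by $1$, which closes the estimate $\lvert I_7 \rvert \leq \delta/16$.

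This argument is essentially a fusion of the $I_6$ estimate (which handles the $\tfrac1n\sum_i\langle\hat{g}-g,\phi\rangle$ factor) and the $I_3$ estimate (which handles the eigenfunction-error factor through Lemmas~\ref{lemma:vecfuncnorm} and~\ref{lemma:EigenfuncEstBound}), so no genuinely new idea is needed. The only mild obstacle is purely bookkeeping: correctly tracking how the $\delta_1$, $\delta_2$, $M^{1+\beta}$, and $d_0$, $\lambda_{0,\max}$ constants cancel so that the product collapses to $\tfrac{\delta}{16}$ times a quantity provably at most one; reusing the already-derived inequalities~\eqref{eq:I2boundeq2} and~\eqref{eq:I2boundeq3} makes this transparent. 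The proof of Lemma~\ref{lemma:I5bound} being omitted as analogous to $I_2$ suggests $I_7$ could likewise be stated succinctly, but I would spell out the two-factor split explicitly since the eigenfunction-error factor distinguishes it from the purely $A_1$-driven terms.
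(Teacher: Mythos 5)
Your proposal is correct and takes essentially the same route as the paper's own proof: the identical sample-wise Cauchy--Schwarz split, the same use of $A_1$, $A_4$, Lemma~\ref{lemma:EigenfuncEstBound} with $d_{lm}\leq d_0 M^{1+\beta}$, and $A_2$ to reach $\lvert I_7 \rvert \leq d_0\sqrt{2\lambda_{0,\max}}\,M^{1+\beta}\delta_1\delta_2$, followed by the same constant bookkeeping through \eqref{eq:I2boundeq2} and \eqref{eq:I3boundeq2} (your direct substitution of $\delta_2$ is just \eqref{eq:I3boundeq2} in disguise). No gaps.
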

	\begin{proof}
		We have
		\begingroup
		\allowdisplaybreaks
		\begin{align*}
		\lvert I_7 \rvert 
		&\leq \sqrt{\frac{1}{n}\sum^{n}_{i=1} \lvert \langle \hat{g}_{ij}-g_{ij},\phi_{jk} \rangle \rvert^2} \cdot \sqrt{\frac{1}{n}\sum^{n}_{i=1} \lvert \langle g_{il},\hat{\phi}_{lm}-\phi_{lm} \rangle \rvert^2}\\
		&\leq \sqrt{\frac{1}{n}\sum^{n}_{i=1} \lVert \hat{g}_{ij}-g_{ij} \rVert^2} \cdot \sqrt{\frac{1}{n}\sum^{n}_{i=1} \lVert g_{il} \rVert^2 \lVert \hat{\phi}_{lm}-\phi_{lm} \rVert^2}\\
		&\overset{(i)}{\leq} \delta_1 \lVert \hat{\phi}_{lm}-\phi_{lm} \rVert \cdot \sqrt{\frac{1}{n}\sum^{n}_{i=1} \lVert g_{il} \rVert^2}
		\overset{(ii)}{\leq} \delta_1 \sqrt{2\lambda_{0,\max}} \cdot \lVert \hat{\phi}_{lm}-\phi_{lm} \rVert \\
		& \overset{(iii)}{\leq} \delta_1 \sqrt{2\lambda_{0,\max}} d_{lm} \Vert \hat{K}_{ll}-K_{ll} \rVert_{\text{HS}}
		\overset{(iv)}{\leq} \delta_1 \delta_2 \sqrt{2\lambda_{0,\max}} d_{lm}
		\leq d_0 \sqrt{2\lambda_{0,\max}} M^{1+\beta} \delta_1 \delta_2,
		\end{align*}
		\endgroup
		where $(i)$ follows since $A_1$ holds, $(ii)$ follows since $A_4$ holds, $(iii)$ follows from Lemma~\ref{lemma:EigenfuncEstBound}, and $(iv)$ follows since $A_2$ holds. By \eqref{eq:I2boundeq2} and \eqref{eq:I3boundeq2}, we have
		\begingroup
		\allowdisplaybreaks
		\begin{align*}
		\lvert I_7 \rvert &\leq \frac{\delta}{16} \times \frac{d_0 \sqrt{2\lambda_{0,\max}} M^{1+\beta} \delta_1 \delta_2}{d^{2}_{0} M^{1+\beta} \sqrt{3\lambda_{0,\max}} \delta_2}
		\leq \frac{\delta}{16} \times \sqrt{\frac{2}{3}} \times \frac{\delta}{8\sqrt{6}d^{2}_{0}}
		\leq \frac{\delta}{16},
		\end{align*}
		\endgroup
		which completes the proof.
	\end{proof}

\begin{lemma}\label{lemma:I8bound}
	Given that $A1$-$A5$ hold, we have $\lvert I_8 \rvert \leq \delta/16$  for all $1 \leq j,l \leq p$, $ \ 1 \leq k,m \leq M$.
\end{lemma}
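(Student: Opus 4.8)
The plan is to bound $I_8$ exactly as in the proofs of Lemmas~\ref{lemma:I6bound} and~\ref{lemma:I7bound}, treating it as a product of two small curve-estimation error factors, one of which also carries an eigenfunction-perturbation term. Starting from
\[
I_8 = \frac{1}{n}\sum^{n}_{i=1}\langle \hat{g}_{ij}-g_{ij},\phi_{jk} \rangle \langle \hat{g}_{il}-g_{il},\hat{\phi}_{lm}-\phi_{lm} \rangle,
\]
I would first apply the triangle inequality and then Cauchy--Schwarz over the index $i$ to obtain
\[
\lvert I_8 \rvert \leq \sqrt{\frac{1}{n}\sum^{n}_{i=1} \lvert \langle \hat{g}_{ij}-g_{ij},\phi_{jk} \rangle \rvert^2}\; \sqrt{\frac{1}{n}\sum^{n}_{i=1} \lvert \langle \hat{g}_{il}-g_{il},\hat{\phi}_{lm}-\phi_{lm} \rangle \rvert^2}.
\]
Applying Cauchy--Schwarz in $\mathbb{H}$ to each inner product, using $\lVert \phi_{jk}\rVert = 1$ on the first factor and pulling the deterministic factor $\lVert \hat{\phi}_{lm}-\phi_{lm}\rVert$ out of the second, this reduces to
\[
\lvert I_8 \rvert \leq \lVert \hat{\phi}_{lm}-\phi_{lm}\rVert\, \sqrt{\frac{1}{n}\sum^{n}_{i=1} \lVert \hat{g}_{ij}-g_{ij}\rVert^2}\,\sqrt{\frac{1}{n}\sum^{n}_{i=1} \lVert \hat{g}_{il}-g_{il}\rVert^2}.
\]

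Next I would invoke the relevant events: $A_1$ bounds each of the two root-mean-square curve-estimation errors by $\delta_1$, while Lemma~\ref{lemma:EigenfuncEstBound} followed by $A_2$ bounds $\lVert \hat{\phi}_{lm}-\phi_{lm}\rVert \leq d_{lm}\lVert \hat{K}_{ll}-K_{ll}\rVert_{\text{HS}} \leq d_{lm}\delta_2$. Since Assumption~\ref{assump:EigenAssump}(i) gives $d_{lm}\leq d_2 m^{1+\beta}\leq d_0 M^{1+\beta}$, these combine to the clean estimate
\[
\lvert I_8 \rvert \leq d_0 M^{1+\beta}\,\delta_1^2\,\delta_2.
\]
In contrast to $I_7$, both inner products here contribute a factor $\delta_1$ rather than one supplying $\sqrt{2\lambda_{0,\max}}$ through $A_4$, so no appeal to $A_3$ or $A_4$ is needed and the resulting bound is if anything smaller.

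Finally I would close the constant chase by reusing two facts already in hand: from the proof of Lemma~\ref{lemma:I6bound} we have $\delta_1^2 \leq \delta/16$, and from the definition $\delta_2 = \delta/(16 d_0^2 M^{1+\beta}\sqrt{3\lambda_{0,\max}})$ we get $d_0 M^{1+\beta}\delta_2 = \delta/(16 d_0 \sqrt{3\lambda_{0,\max}}) \leq 1$, using $\delta \leq 1$, $d_0\geq 1$, and $\lambda_{0,\max}>1$. Hence
\[
\lvert I_8 \rvert \leq \left(d_0 M^{1+\beta}\delta_2\right)\,\delta_1^2 \leq \frac{\delta}{16}.
\]
The only real work is the routine bookkeeping of collapsing the product of three small factors onto $\delta/16$; since the structure mirrors that of $I_4$ and $I_6$, I anticipate no genuine obstacle, the main point being simply to recognize that both curve differences each supply a factor $\delta_1$ while the eigenfunction difference supplies $d_0 M^{1+\beta}\delta_2 \leq 1$.
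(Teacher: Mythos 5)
Your proposal is correct and follows essentially the same route as the paper's proof: triangle inequality, Cauchy--Schwarz over $i$ and then in $\mathbb{H}$, events $A_1$ and $A_2$ together with Lemma~\ref{lemma:EigenfuncEstBound} to reach $\lvert I_8\rvert \leq d_0 M^{1+\beta}\delta_1^2\delta_2$, and the identical constant chase via $\delta_1^2 \leq \delta/16$ and $d_0 M^{1+\beta}\delta_2 = \delta/(16 d_0\sqrt{3\lambda_{0,\max}}) \leq 1$ for $\delta\leq 1$. One cosmetic point: $\lVert\hat{\phi}_{lm}-\phi_{lm}\rVert$ is random (it depends on the data through $\hat{K}_{ll}$), not deterministic; what matters is only that it does not depend on $i$, so it factors out of the empirical average.
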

	\begin{proof}
	We have
		\begingroup
		\allowdisplaybreaks
		\begin{align*}
		\lvert I_8 \rvert
		&\leq \sqrt{\frac{1}{n} \sum^{n}_{i=1}\lvert \langle \hat{g}_{ij}-g_{ij},\phi_{jk} \rangle \rvert^2} \cdot \sqrt{\frac{1}{n} \sum^{n}_{i=1} \lvert \langle \hat{g}_{il}-g_{il},\hat{\phi}_{lm}-\phi_{lm} \rangle \rvert^2}\\
		&\leq \sqrt{\frac{1}{n} \sum^{n}_{i=1}\lVert \hat{g}_{ij}-g_{ij} \rVert^2} \cdot \sqrt{\frac{1}{n} \sum^{n}_{i=1} \lVert \hat{g}_{il}-g_{il} \rVert^2 \lVert \hat{\phi}_{lm}-\phi_{lm} \rVert^2}\\
		&\overset{(i)}{\leq} \delta^{2}_1 \lVert \hat{\phi}_{lm}-\phi_{lm} \rVert
		\overset{(ii)}{\leq} \delta^{2}_1 d_{lm} \lVert \hat{K}_{ll}-K_{ll} \rVert_{\text{HS}}
		\leq \delta^{2}_1 d_2 m^{1+\beta} \lVert \hat{K}_{ll}-K_{ll} \rVert_{\text{HS}}\\
		& \leq \delta^{2}_1 d_0 M^{1+\beta} \lVert \hat{K}_{ll}-K_{ll} \rVert_{\text{HS}}
		\overset{(iii)}{\leq} d_0 M^{1+\beta} \delta^{2}_1 \delta_2,
		\end{align*}
		\endgroup
		where $(i)$ follows since $A_1$ holds, $(ii)$ follows from Lemma~\ref{lemma:EigenfuncEstBound}, and $(iii)$ follows since $A_2$ holds. By \eqref{eq:I6boundeq2}, we have
		\begingroup
		\allowdisplaybreaks
		\begin{align*}
		\lvert I_8 \rvert \leq \frac{\delta}{16} \times \frac{d_0 M^{1+\beta} \delta^{2}_1 \delta_2}{\delta^{2}_{1}}
		\leq \frac{\delta}{16},
		\end{align*}
		\endgroup
		which completes the proof.
	\end{proof}

\begin{lemma}\label{lemma:I9bound}
	Given that $A1$-$A5$ hold, we have $\lvert I_9 \rvert \leq \delta/16$  for all $1 \leq j,l \leq p$, $ \ 1 \leq k,m \leq M$.
\end{lemma}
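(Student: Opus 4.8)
The plan is to exploit the fact that $I_9$ is the mirror image of $I_3$ under the index exchange $(j,k)\leftrightarrow(l,m)$: in $I_3$ the eigenfunction estimation error $\hat{\phi}_{lm}-\phi_{lm}$ is paired with the clean score $a_{ijk}$ and the raw function $g_{il}$, whereas in $I_9$ the error $\hat{\phi}_{jk}-\phi_{jk}$ is paired with the clean score $a_{ilm}$ and the raw function $g_{ij}$. Consequently the argument developed for Lemma~\ref{lemma:I3bound} transfers almost verbatim after relabeling, and I expect this lemma to require no new ideas. Concretely, first I would move the empirical average inside the inner product and apply Cauchy--Schwarz in $\mathbb{H}$ to peel off the deterministic eigenfunction error:
\[
\lvert I_9 \rvert = \left\lvert\left\langle \frac{1}{n}\sum_{i=1}^{n} a_{ilm}\, g_{ij},\ \hat{\phi}_{jk}-\phi_{jk}\right\rangle\right\rvert \le \left\lVert\frac{1}{n}\sum_{i=1}^{n} a_{ilm}\, g_{ij}\right\rVert \cdot \lVert \hat{\phi}_{jk}-\phi_{jk}\rVert.
\]

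Next I would write $a_{ilm}=\lambda^{1/2}_{lm}\xi_{ilm}$ with $\xi_{ilm}\sim N(0,1)$ and invoke Lemma~\ref{lemma:vecfuncnorm} (the same Cauchy--Schwarz step used for $I_3$) to split the averaged-product norm as $\big(\tfrac{1}{n}\sum_i \xi^2_{ilm}\big)^{1/2}\big(\tfrac{1}{n}\sum_i \lVert g_{ij}\rVert^2\big)^{1/2}$. For the eigenfunction term I would apply Lemma~\ref{lemma:EigenfuncEstBound} to obtain $\lVert \hat{\phi}_{jk}-\phi_{jk}\rVert \le d_{jk}\lVert \hat{K}_{jj}-K_{jj}\rVert_{\text{HS}}$. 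The remaining step is purely to substitute the event bounds: $A_3$ gives $\tfrac{1}{n}\sum_i \xi^2_{ilm}\le 3/2$, $A_4$ gives $\tfrac{1}{n}\sum_i \lVert g_{ij}\rVert^2 \le 2\lambda_{0,\max}$, and $A_2$ gives $\lVert \hat{K}_{jj}-K_{jj}\rVert_{\text{HS}}\le \delta_2$.

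Using Assumption~\ref{assump:EigenAssump}(i) through $\lambda^{1/2}_{lm}\le \sqrt{d_1}\,m^{-\beta/2}\le\sqrt{d_1}$ and $d_{jk}\le d_2 k^{1+\beta}\le d_2 M^{1+\beta}$, and collapsing the constants into $d_0=\max\{1,\sqrt{d_1},d_2\}$, this reproduces exactly the bound obtained in~\eqref{eq:I3boundeq}, namely $\lvert I_9\rvert \le d^2_0 M^{1+\beta}\sqrt{3\lambda_{0,\max}}\,\delta_2$. Finally, plugging in the definition $\delta_2=\delta/(16 d^2_0 M^{1+\beta}\sqrt{3\lambda_{0,\max}})$ closes the bound to $\lvert I_9\rvert\le \delta/16$, precisely as in~\eqref{eq:I3boundeq2}.

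The genuinely hard work has already been done in Lemma~\ref{lemma:EigenfuncEstBound} (the perturbation bound for estimated eigenfunctions) and in the concentration arguments underpinning the events $A_1$--$A_5$; here there is no substantive obstacle beyond bookkeeping. The only point demanding minor care is verifying that the index roles are swapped consistently---that the Gaussian normalization is now applied to $a_{ilm}$ rather than $a_{ijk}$, and that the eigenfunction perturbation is controlled through $\hat{K}_{jj}$ rather than $\hat{K}_{ll}$---so that the constant factors still coalesce into the same $d^2_0 M^{1+\beta}\sqrt{3\lambda_{0,\max}}\,\delta_2$ expression on which the definition of $\delta_2$ operates. Indeed, as with Lemma~\ref{lemma:I5bound} relative to Lemma~\ref{lemma:I2bound}, one could legitimately state this proof as identical to that of Lemma~\ref{lemma:I3bound} with $(j,k)$ and $(l,m)$ interchanged, and omit the routine calculation.
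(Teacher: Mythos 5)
Your proposal is correct and is exactly the argument the paper intends: the paper's own proof of Lemma~\ref{lemma:I9bound} simply states it is similar to that of Lemma~\ref{lemma:I3bound} and omits the details, and your relabeled chain---Cauchy--Schwarz, Lemma~\ref{lemma:vecfuncnorm} with $a_{ilm}=\lambda^{1/2}_{lm}\xi_{ilm}$, Lemma~\ref{lemma:EigenfuncEstBound} via $\hat{K}_{jj}$, and the event bounds $A_2$--$A_4$ with $\lambda^{1/2}_{lm}\le d_0$ and $d_{jk}\le d_0 M^{1+\beta}$---reproduces the bound $d^2_0 M^{1+\beta}\sqrt{3\lambda_{0,\max}}\,\delta_2\le\delta/16$ faithfully. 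Your care in swapping the roles consistently (normalizing $a_{ilm}$ rather than $a_{ijk}$, and controlling the eigenfunction error through $\hat{K}_{jj}$ rather than $\hat{K}_{ll}$) is precisely the only point where the relabeling could go wrong, and you handled it correctly.
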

	\begin{proof}
		This proof is similar to the proof of Lemma~\ref{lemma:I3bound}, and is therefore omitted.
	\end{proof}

\begin{lemma}\label{lemma:I10bound}
	Given that $A1$-$A5$ hold, we have $\lvert I_{10} \rvert \leq \delta/16$  for all $1 \leq j,l \leq p, 1 \leq k,m \leq M$.
\end{lemma}
	\begin{proof}
		This proof is similar to the proof of Lemma~\ref{lemma:I7bound}, and is therefore omitted.
	\end{proof}

\begin{lemma}\label{lemma:I11bound}
	Given that $A1$-$A5$ hold, we have $\lvert I_{11} \rvert \leq \delta/16$  for all $1 \leq j,l \leq p, 1 \leq k,m \leq M$.
\end{lemma}
	\begin{proof}
		We have
		\begingroup
		\allowdisplaybreaks
		\begin{align*}
		\lvert I_{11} \rvert 
		&\leq \sqrt{\frac{1}{n}\sum^{n}_{i=1}\lvert \langle g_{ij},\hat{\phi}_{jk}-\phi_{jk} \rangle \rvert^2} \cdot \sqrt{\frac{1}{n}\sum^{n}_{i=1}\lvert \langle g_{il},\hat{\phi}_{lm}-\phi_{lm} \rangle \rvert^2}\\
		&\leq \sqrt{\frac{1}{n}\sum^{n}_{i=1}\lVert g_{ij} \rVert^2} \cdot\sqrt{\frac{1}{n}\sum^{n}_{i=1}\lVert g_{il} \rVert^2} \cdot\lVert \hat{\phi}_{jk}-\phi_{jk} \rVert\cdot \lVert \hat{\phi}_{lm}-\phi_{lm} \rVert\\
		&\overset{(i)}{\leq}2\lambda_{0,\max}  \lVert \hat{\phi}_{jk}-\phi_{jk} \rVert \lVert \hat{\phi}_{lm}-\phi_{lm} \rVert
		\overset{(ii)}{\leq}2\lambda_{0,\max} \delta^{2}_2 d_{jk} d_{lm}
		\leq 2\lambda_{0,\max} \delta^{2}_2 d^{2}_2 k^{1+\beta} m^{1+\beta},
		\end{align*}
		\endgroup
		where $(i)$ follows since $A_4$ holds and $(ii)$ follows from Lemma~\ref{lemma:EigenfuncEstBound}. Then, we have
		\begin{equation}\label{eq:I11boundeq}
		\lvert I_{11} \rvert \leq 2 d^{2}_0 \lambda_{0,\max} M^{2+2\beta} \delta^{2}_2.
		\end{equation} 
		By \eqref{eq:I3boundeq2}, we have
		\begingroup
		\allowdisplaybreaks
		\begin{align}\label{eq:I11boundeq2}
		2 d^{2}_0 \lambda_{0,\max} M^{2+2\beta} \delta^{2}_2 
		\leq \frac{\delta}{16} \times \frac{2 d^{2}_0 \lambda_{0,\max} M^{2+2\beta} \delta^{2}_2}{d^{2}_{0} M^{1+\beta} \sqrt{3\lambda_{0,\max}} \delta_2}
		\leq \frac{\delta}{16},
		\end{align}
		\endgroup
		which completes the proof.
	\end{proof}

\begin{lemma}\label{lemma:I12bound}
	Given that $A1$-$A5$ hold, we have $\lvert I_{12} \rvert \leq \delta/16$  for all $1 \leq j,l \leq p, 1 \leq k,m \leq M$.
\end{lemma}
	\begin{proof}
		We have
		\begingroup
		\allowdisplaybreaks
		\begin{align*}
		\lvert I_{12} \rvert 
		&\leq \sqrt{\frac{1}{n}\sum^{n}_{i=1} \lvert \langle g_{ij},\hat{\phi}_{jk}-\phi_{jk} \rangle \rvert^2} \cdot\sqrt{\frac{1}{n}\sum^{n}_{i=1} \lvert \langle \hat{g}_{il}-g_{il},\hat{\phi}_{lm}-\phi_{lm} \rangle \rvert^2}\\
		&\leq \sqrt{\frac{1}{n}\sum^{n}_{i=1} \lVert g_{ij} \rVert^2} \cdot\sqrt{\frac{1}{n}\sum^{n}_{i=1} \lVert \hat{g}_{il}-g_{il} \rVert^2} \cdot\lVert \hat{\phi}_{jk}-\phi_{jk} \rVert\cdot \lVert \hat{\phi}_{lm}-\phi_{lm} \rVert\\
		&\overset{(i)}{\leq} \sqrt{2\lambda_{0,\max}} \delta_1 \delta^2_2 d_{jk} d_{lm}
		\leq d^2_2 \sqrt{2\lambda_{0,\max}} k^{1+\beta} m^{1+\beta} \delta_1 \delta^2_2,
		\end{align*}
		\endgroup
		where $(i)$ follows since $A_1$-$A_3$ hold and Lemma~\ref{lemma:EigenfuncEstBound}. 
		Then, we have
		\begin{equation}\label{eq:I12boundeq}
		\lvert I_{12} \rvert \leq d^2_0 \sqrt{2\lambda_{0,\max}} M^{2+2\beta} \delta_1 \delta^2_2.
		\end{equation}
		By \eqref{eq:I2boundeq2} and \eqref{eq:I11boundeq2},  we have
		\begingroup
		\allowdisplaybreaks
		\begin{align}\label{eq:I12boundeq2}
		d^2_0 \sqrt{2\lambda_{0,\max}} M^{2+2\beta} \delta_1 \delta^2_2 &\leq \frac{\delta}{16} \times \frac{d^2_0 \sqrt{2\lambda_{0,\max}} M^{2+2\beta} \delta_1 \delta^2_2}{2 d^{2}_0 \lambda_{0,\max} M^{2+2\beta} \delta^{2}_2}\leq \frac{\delta}{16},
		\end{align}
		\endgroup
		which completes the proof.
	\end{proof}

\begin{lemma}\label{lemma:I13bound}
	Given that $A1$-$A5$ hold, we have $\lvert I_{13} \rvert \leq \delta/16$  for all $1 \leq j,l \leq p, 1 \leq k,m \leq M$.
\end{lemma}
	\begin{proof}
		This proof is similar to the proof of Lemma~\ref{lemma:I4bound}, and is therefore omitted.
	\end{proof}

\begin{lemma}\label{lemma:I14bound}
	Given that $A1$-$A5$ hold, we have $\lvert I_{14} \rvert \leq \delta/16$  for all $1 \leq j,l \leq p, 1 \leq k,m \leq M$.
\end{lemma}
	\begin{proof}
		This proof is similar to the proof of Lemma~\ref{lemma:I8bound}, and is therefore omitted.
	\end{proof}

\begin{lemma}\label{lemma:I15bound}
	Given that $A1$-$A5$ hold, we have $\lvert I_{15} \rvert \leq \delta/16$  for all $1 \leq j,l \leq p$, $ \ 1 \leq k,m \leq M$.
\end{lemma}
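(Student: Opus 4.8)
The integrand defining $I_{15}$ is the mirror image of the one defining $I_{12}$: interchanging the index pairs $(j,k)\leftrightarrow(l,m)$ swaps the factor $\langle g_{il},\hat{\phi}_{lm}-\phi_{lm}\rangle$ for $\langle g_{ij},\hat{\phi}_{jk}-\phi_{jk}\rangle$ and the factor $\langle\hat{g}_{ij}-g_{ij},\hat{\phi}_{jk}-\phi_{jk}\rangle$ for $\langle\hat{g}_{il}-g_{il},\hat{\phi}_{lm}-\phi_{lm}\rangle$. Because every bound used in the proof of Lemma~\ref{lemma:I12bound}—namely $A_1$, $A_2$, $A_4$, Lemma~\ref{lemma:EigenfuncEstBound}, and the gap estimate $d_{jk},d_{lm}\le d_0 M^{1+\beta}$—holds uniformly over $1\le j,l\le p$ and $1\le k,m\le M$, the argument carries over verbatim. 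The cleanest write-up is therefore a one-line appeal to symmetry with Lemma~\ref{lemma:I12bound}; below I spell out the underlying steps for completeness.

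The plan is as follows. First I would apply the triangle inequality to move the absolute value inside the average, and then use the Cauchy--Schwarz inequality across the $n$ summands to factor the expression as
\[
\lvert I_{15}\rvert\le
\sqrt{\tfrac{1}{n}\sum_{i=1}^{n}\lvert\langle\hat{g}_{ij}-g_{ij},\hat{\phi}_{jk}-\phi_{jk}\rangle\rvert^2}\;
\sqrt{\tfrac{1}{n}\sum_{i=1}^{n}\lvert\langle g_{il},\hat{\phi}_{lm}-\phi_{lm}\rangle\rvert^2}.
\]
Next I would dominate each inner product by the product of the norms (Cauchy--Schwarz in $\mathbb{H}$) and pull the deterministic eigenfunction errors $\lVert\hat{\phi}_{jk}-\phi_{jk}\rVert$ and $\lVert\hat{\phi}_{lm}-\phi_{lm}\rVert$ out of the averages, which leaves
\[
\lvert I_{15}\rvert\le
\lVert\hat{\phi}_{jk}-\phi_{jk}\rVert\,\lVert\hat{\phi}_{lm}-\phi_{lm}\rVert
\sqrt{\tfrac{1}{n}\sum_{i=1}^{n}\lVert\hat{g}_{ij}-g_{ij}\rVert^2}\,
\sqrt{\tfrac{1}{n}\sum_{i=1}^{n}\lVert g_{il}\rVert^2}.
\]

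For the final arithmetic step I would invoke $A_1$ to get $\sqrt{\tfrac1n\sum_i\lVert\hat{g}_{ij}-g_{ij}\rVert^2}\le\delta_1$ and $A_4$ to get $\sqrt{\tfrac1n\sum_i\lVert g_{il}\rVert^2}\le\sqrt{2\lambda_{0,\max}}$; Lemma~\ref{lemma:EigenfuncEstBound} together with $A_2$ then replaces the two eigenfunction errors by $d_{jk}\delta_2$ and $d_{lm}\delta_2$, and $d_{jk},d_{lm}\le d_0 M^{1+\beta}$ collapses the index dependence. This reproduces exactly the intermediate bound $\lvert I_{15}\rvert\le d_0^2\sqrt{2\lambda_{0,\max}}\,M^{2+2\beta}\delta_1\delta_2^2$ of \eqref{eq:I12boundeq}, whereupon the chain of inequalities in \eqref{eq:I12boundeq2}—which uses only the definitions of $\delta_1$ and $\delta_2$—yields $\lvert I_{15}\rvert\le\delta/16$. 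There is no genuine obstacle here; the only point requiring care is that the decomposition assign the curve-approximation error to the node whose norm is controlled by $A_1$ and the raw curve $g$ to the node controlled by $A_4$, and the mirror symmetry with $I_{12}$ makes this assignment automatic.
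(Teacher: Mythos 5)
Your proof is correct and is exactly the argument the paper intends: $I_{15}$ is the mirror image of $I_{12}$ under $(j,k)\leftrightarrow(l,m)$, and your chain of inequalities reproduces the intermediate bound $\lvert I_{15}\rvert \le d_0^2\sqrt{2\lambda_{0,\max}}\,M^{2+2\beta}\delta_1\delta_2^2$ and the concluding arithmetic of Lemma~\ref{lemma:I12bound} verbatim, with the assumptions $A_1$, $A_2$, $A_4$ and Lemma~\ref{lemma:EigenfuncEstBound} correctly assigned to the two factors. Note that the paper's own one-line proof cites Lemma~\ref{lemma:I2bound}, which is evidently a typo---$I_{15}$ is structurally unrelated to $I_2$, whose bound $\sqrt{3/2}\,d_0\delta_1$ has a different form---so your identification of Lemma~\ref{lemma:I12bound} as the correct template is in fact a small correction to the paper.
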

	\begin{proof}
		This proof is similar to the proof of Lemma~\ref{lemma:I2bound}, thus is omitted.
	\end{proof}

\begin{lemma}\label{lemma:I16bound}
	Given that $A1$-$A5$ hold, we have $\lvert I_{16} \rvert \leq \delta/16$  for all $1 \leq j,l \leq p, 1 \leq k,m \leq M$.
\end{lemma}
	\begin{proof}
		We have
		\begingroup
		\allowdisplaybreaks
		\begin{align*}
		\lvert I_{16} \rvert
		&\leq \sqrt{\frac{1}{n}\sum^{n}_{i=1} \lVert \hat{g}_{ij}-g_{ij} \rVert^2} \cdot \sqrt{\frac{1}{n}\sum^{n}_{i=1} \lVert \hat{g}_{il}-g_{il} \rVert^2} \cdot\lVert \hat{\phi}_{jk}-\phi_{jk} \rVert \cdot\lVert \hat{\phi}_{lm}-\phi_{lm} \rVert\\
		&\overset{(i)}{\leq} \delta^2_1 d_{jk} d_{lm} \delta^2_2
		\leq d^2_2 k^{1+\beta} m^{1+\beta} \delta^2_1 \delta^2_2
		\leq d^2_0 M^{2+2\beta} \delta^2_1 \delta^2_2,
		\end{align*}
		\endgroup
		where $(i)$ follows since $A_1$ and $A_2$ hold, and Lemma~\ref{lemma:EigenfuncEstBound}. Thus, by \eqref{eq:I2boundeq3} and \eqref{eq:I12boundeq2}, we have
		\begingroup
		\allowdisplaybreaks
		\begin{align*}
		\lvert I_{16} \rvert &\leq \frac{\delta}{16} \times \frac{d^2_0 M^{2+2\beta} \delta^2_1 \delta^2_2}{d^2_0 \sqrt{2\lambda_{0,\max}} M^{2+2\beta} \delta_1 \delta^2_2}
		\leq \frac{\delta}{16},
		\end{align*}
		\endgroup
		which completes the proof.
	\end{proof}

\begin{lemma}\label{lemma:vecfuncnorm}
	Suppose $f_1,f_2,\dots,f_n \in \mathbb{H}$ and $v_1,v_2,\dots,v_n \in \mathbb{R}$. Then
	\begin{equation*}
	\left\lVert \sum^{n}_{i=1}v_{i}f_{i} \right\rVert \leq \sqrt{\sum^{n}_{i=1}v^{2}_{i}} \cdot \sqrt{\sum^{n}_{i=1}\lVert f_i \rVert^2}.
	\end{equation*}
\end{lemma}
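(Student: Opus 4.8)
The plan is to reduce the claim to two successive applications of the Cauchy--Schwarz inequality: first the Cauchy--Schwarz inequality for the inner product on $\mathbb{H}$, and then the ordinary Cauchy--Schwarz inequality for finite real sequences. Since everything reduces to bounding a squared norm, I would work with $\lVert \sum_{i=1}^n v_i f_i \rVert^2$ rather than with the norm directly, and take square roots only at the very end.

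First I would expand the squared norm using the bilinearity of the inner product:
\[
\left\lVert \sum_{i=1}^n v_i f_i \right\rVert^2 = \left\langle \sum_{i=1}^n v_i f_i,\ \sum_{j=1}^n v_j f_j \right\rangle = \sum_{i=1}^n \sum_{j=1}^n v_i v_j \langle f_i, f_j \rangle.
\]
Next, bounding each summand by $v_i v_j \langle f_i, f_j\rangle \le |v_i|\,|v_j|\,|\langle f_i,f_j\rangle|$ and applying the Cauchy--Schwarz inequality in $\mathbb{H}$, namely $|\langle f_i, f_j\rangle| \le \lVert f_i\rVert\,\lVert f_j\rVert$, gives
\[
\sum_{i=1}^n\sum_{j=1}^n v_i v_j \langle f_i,f_j\rangle \le \sum_{i=1}^n\sum_{j=1}^n |v_i|\,\lVert f_i\rVert\,|v_j|\,\lVert f_j\rVert = \left(\sum_{i=1}^n |v_i|\,\lVert f_i\rVert\right)^2,
\]
where the final equality simply recognizes the double sum as the square of a single sum.

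Finally I would apply the discrete Cauchy--Schwarz inequality to the vectors $(|v_i|)_{i=1}^n$ and $(\lVert f_i\rVert)_{i=1}^n$ to obtain $\left(\sum_{i=1}^n |v_i|\,\lVert f_i\rVert\right)^2 \le \left(\sum_{i=1}^n v_i^2\right)\left(\sum_{i=1}^n \lVert f_i\rVert^2\right)$; taking square roots then yields the stated bound. There is no substantive obstacle here---this is a routine estimate---and the only point requiring minor care is ordering the steps so that the cross terms $v_i v_j \langle f_i, f_j\rangle$ are first dominated by their absolute values before the two Cauchy--Schwarz inequalities are invoked in sequence.
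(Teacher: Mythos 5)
Your proof is correct, but it takes a different (and slightly more abstract) route than the paper's. The paper exploits the concrete $\mathcal{L}^2$ structure of $\mathbb{H}$: it writes $\lVert \sum_i v_i f_i \rVert^2 = \int \left( \sum_i v_i f_i(t) \right)^2 dt$, applies the \emph{discrete} Cauchy--Schwarz inequality pointwise at each fixed $t$ (treating the $f_i(t)$ as scalars) to get $\left( \sum_i v_i f_i(t) \right)^2 \leq \left( \sum_i v_i^2 \right) \left( \sum_i f_i^2(t) \right)$, and then integrates, so only a single invocation of Cauchy--Schwarz is needed. You instead expand $\lVert \sum_i v_i f_i \rVert^2$ as the double sum $\sum_{i,j} v_i v_j \langle f_i, f_j \rangle$, dominate each cross term via the Cauchy--Schwarz inequality \emph{in $\mathbb{H}$}, and then apply discrete Cauchy--Schwarz to the resulting single sum --- two successive applications. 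The trade-off: your argument is coordinate-free and works verbatim in any inner product space, with no reference to the integral representation of the norm, whereas the paper's pointwise argument is shorter but relies on $\mathbb{H}$ being a function space with $\langle f, g \rangle = \int f(t) g(t)\, dt$ (which the paper has assumed throughout). Both proofs are complete and correct; yours is marginally more general, the paper's marginally more economical.
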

	\begin{proof}
		Note that
	    \begin{multline*}
	    \left\lVert \sum^{n}_{i=1}v_{i}f_{i} \right\rVert^2=\int \left( \sum^{n}_{i=1}v_{i}f_{i}(t) \right)^2 dt \\
	    \overset{(i)}{\leq} \int \left( \sum^{n}_{i=1}v^{2}_{i} \right) \left( \sum^{n}_{i=1}f^{2}_{i}(t) \right) dt
	    =\left( \sum^{n}_{i=1}v^{2}_{i} \right)\left( \sum^{n}_{i=1}\lVert f_{i} \rVert^2 \right),
	    \end{multline*}
	    where $(i)$ follows the Cauchy-Schwarz inequality. This directly implies the result.
	\end{proof}

\begin{lemma}[Lemma 4.3 of \cite{Bosq2000Linear}]\label{lemma:EigenfuncEstBound}
	Suppose that Assumption~\ref{assump:EigenAssump} holds. Denote $\tilde{\phi}_{jk}=\text{sgn}\left(\langle \hat{\phi}_{jk},\phi_{jk} \rangle\right) \phi_{jk}$, where $\text{sgn}(t)=1$ if $t \geq 0$ and $\text{sgn}(t)=-1$ if $t < 0$. Then 
	\begin{equation*}
	\lVert \hat{\phi}_{jk}-\tilde{\phi}_{jk} \rVert \leq d_{jk} \lVert \hat{K}_{jj}-K_{jj} \rVert_{\text{HS}},
	\end{equation*}
	where $d_{j1}=2\sqrt{2}(\lambda_{j1}-\lambda_{j2})^{-1}$
	and $d_{jk}=2\sqrt{2}\max\{(\lambda_{j(k-1)}-\lambda_{jk})^{-1},(\lambda_{jk}-\lambda_{j(k+1)})^{-1}\}$, $k\geq 2$.
\end{lemma}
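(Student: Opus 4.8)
The plan is to obtain this as a direct consequence of the standard isolated-eigenfunction perturbation bound for self-adjoint compact operators, namely the $\sin\Theta$-type inequality that underlies functional principal component analysis (see, e.g., the perturbation theory in Chapter 4 of \citet{Hsing2015Theoretical}, which is also the form invoked in \citet{Qiao2015Functional}). Both $K_{jj}$ and $\hat{K}_{jj}$ are self-adjoint, non-negative, trace-class operators, so their eigenvalue--eigenfunction decompositions are well defined and the classical perturbation bounds apply directly.

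First I would recall the key inequality: if $C$ and $\hat C$ are two self-adjoint Hilbert--Schmidt operators with ordered eigenvalues $\lambda_1 \geq \lambda_2 \geq \cdots$ and orthonormal eigenfunctions $\phi_k$ and $\hat\phi_k$ respectively, then for each $k$,
\[
\bigl\| \hat\phi_k - \mathrm{sgn}(\langle \hat\phi_k, \phi_k\rangle)\,\phi_k \bigr\| \;\leq\; \frac{2\sqrt 2}{\alpha_k}\,\|\hat C - C\|_{\mathrm{op}},
\]
where $\alpha_k$ is the spectral gap isolating $\lambda_k$, namely $\alpha_1 = \lambda_1 - \lambda_2$ and $\alpha_k = \min\{\lambda_{k-1} - \lambda_k,\; \lambda_k - \lambda_{k+1}\}$ for $k \geq 2$, and $\|\cdot\|_{\mathrm{op}}$ is the operator norm. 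The sign correction appearing on the left-hand side is exactly the resolution of the eigenfunction's inherent sign indeterminacy, which is the role played by $\tilde\phi_{jk} = \mathrm{sgn}(\langle \hat\phi_{jk}, \phi_{jk}\rangle)\phi_{jk}$ in the statement.

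Next I would match the constants. Setting $C = K_{jj}$, $\hat C = \hat K_{jj}$, $\phi_k = \phi_{jk}$, and $\hat\phi_k = \hat\phi_{jk}$, the gap $\alpha_k$ becomes $\min\{\lambda_{j(k-1)} - \lambda_{jk},\ \lambda_{jk} - \lambda_{j(k+1)}\}$ for $k \geq 2$ and $\lambda_{j1} - \lambda_{j2}$ for $k = 1$, so that $2\sqrt 2/\alpha_k = 2\sqrt 2\,\max\{(\lambda_{j(k-1)} - \lambda_{jk})^{-1}, (\lambda_{jk} - \lambda_{j(k+1)})^{-1}\}$, which is precisely the quantity $d_{jk}$ from Assumption~\ref{assump:EigenAssump}. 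Finally, since the operator norm is dominated by the Hilbert--Schmidt norm, $\|\hat K_{jj} - K_{jj}\|_{\mathrm{op}} \leq \|\hat K_{jj} - K_{jj}\|_{\mathrm{HS}}$, the claimed bound follows.

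The routine but slightly delicate part is the bookkeeping of the eigenvalue gaps: one must ensure the $k=1$ boundary case uses only the single gap $\lambda_{j1} - \lambda_{j2}$, and that the isolated-eigenvalue bound genuinely applies, which is guaranteed because the $\lambda_{jk}\asymp k^{-\beta}$ decay in Assumption~\ref{assump:EigenAssump} keeps consecutive eigenvalues separated. No new argument beyond invoking the classical perturbation result and verifying that its gap constant coincides with $d_{jk}$ is required.
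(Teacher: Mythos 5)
Your proposal is correct and matches the paper's approach: the paper's proof consists entirely of citing Lemma 4.3 of \citet{Bosq2000Linear}, which is exactly the classical sign-corrected eigenfunction perturbation bound you invoke, with the same constant $2\sqrt{2}/\alpha_k = d_{jk}$ and the same use of the operator norm being dominated by the Hilbert--Schmidt norm. Your extra bookkeeping of the $k=1$ case and the strict spectral gaps (which Assumption~\ref{assump:EigenAssump} guarantees through the finiteness of $d_{jk}$) is sound but adds nothing beyond what the cited result already provides.
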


\begin{lemma}\label{lemma:boundnormalnorm}
	Suppose $z \sim N_{L}\left(0,I_L\right)$. Then
	\begin{equation*}
	P\left( \lVert z \rVert_2 > \delta \right) \leq 2\exp\left( -\frac{\delta^2}{8L+2\sqrt{2L}\delta} \right), \qquad \delta > 0.
	\end{equation*}
\end{lemma}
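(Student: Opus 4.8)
The plan is to control the tail of $\|z\|_2$ through its integer moments and then invoke the Bernstein-type bound of Lemma~\ref{lemma:HilbertBernstein}, exactly as was done for the term $J_3$ in the proof of Theorem~\ref{Thm:approxerrsinglefunc}. The key observation is that $z \sim N_{L}(0,I_L)$ is a mean-zero Gaussian random element of the separable Hilbert space $\mathbb{R}^L$ with $\mathbb{E}[\|z\|_2^2] = \tr(I_L) = L$, so the moment machinery already developed for Gaussian functions applies verbatim with the role of $\lambda_0$ played by $L$. In particular, the case at hand is the special case of the $J_3$ computation in which the orthogonal projection is trivial, i.e.\ $\lambda_0 \psi_3(L) \mapsto L$.

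First I would apply Lemma~\ref{lemma:2kmomentGuassFunc} to the Gaussian element $z$ to obtain the even-moment bound $\mathbb{E}[\|z\|_2^{2k}] \leq (2L)^k k!$ for every integer $k \geq 1$. Then, by Jensen's inequality applied to the concave square root,
\[
\mathbb{E}\left[\|z\|_2^k\right] = \mathbb{E}\left[\sqrt{\|z\|_2^{2k}}\right] \leq \sqrt{\mathbb{E}\left[\|z\|_2^{2k}\right]} \leq \left(\sqrt{2L}\right)^k \sqrt{k!} \leq \left(\sqrt{2L}\right)^k k!,
\]
where the last step uses $\sqrt{k!} \leq k!$. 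This places the raw moments of the nonnegative variable $\|z\|_2$ into exactly the form required by Lemma~\ref{lemma:HilbertBernstein}, mirroring the bound $\mathbb{E}[\|g^{\bot}\|^k] \leq (\sqrt{2\lambda_0 \psi_3(L)})^k k!$ used there.

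Finally, I would match constants: writing the moment bound as $\mathbb{E}[\|z\|_2^k] \leq \tfrac{k!}{2}\, v^2\, c^{k-2}$ with $c = \sqrt{2L}$ and $v^2 = 4L$ (so that $k!\,c^k = \tfrac{k!}{2}(2c^2)c^{k-2}$ and $2c^2 = 4L = v^2$), Lemma~\ref{lemma:HilbertBernstein} applied with a single summand ($n=1$) yields
\[
P\left(\|z\|_2 > \delta\right) \leq 2\exp\left(-\frac{\delta^2}{2v^2 + 2c\delta}\right) = 2\exp\left(-\frac{\delta^2}{8L + 2\sqrt{2L}\,\delta}\right),
\]
which is the claim. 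The argument is essentially routine; the only delicate point is the bookkeeping of constants in passing from the even moments to the Bernstein denominator, i.e.\ verifying that the choice $v^2 = 4L$, $c = \sqrt{2L}$ reproduces $8L + 2\sqrt{2L}\,\delta$ rather than some other multiple. Since everything reduces to the already-established $J_3$ estimate under the substitution $\lambda_0\psi_3(L) \mapsto L$, I do not expect any substantive obstacle beyond this constant-tracking.
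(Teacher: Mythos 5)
Your proof is correct and follows essentially the same route as the paper's: bound the even moments by $(2L)^k k!$, pass to all moments via Jensen's inequality, and apply Lemma~\ref{lemma:HilbertBernstein} with $n=1$, $L_1 = 4L$, $L_2 = \sqrt{2L}$. The only cosmetic difference is that you obtain the even-moment bound by invoking Lemma~\ref{lemma:2kmomentGuassFunc} with $\lambda_0 = L$ (a valid specialization, since $z$ is a mean-zero Gaussian element of $\mathbb{R}^L$ with $\mathbb{E}[\lVert z \rVert_2^2]=L$), whereas the paper computes $\mathbb{E}\left[\lVert z \rVert_2^{2k}\right] = 2^k\,\Gamma(L/2+k)/\Gamma(L/2) \leq k!\,(2L)^k$ directly from the chi-squared distribution; the resulting moment bound and all subsequent constant bookkeeping are identical.
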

	\begin{proof}
		Since
		\begin{equation*}
		\mathbb{E}\left[ \lVert z \rVert^{2k}_2 \right]=\frac{\Gamma(\frac{L}{2}+k)}{\Gamma(\frac{L}{2})} \times 2^k \leq k! (2L)^k,
		\end{equation*}
		we have
		\begin{equation*}
		\mathbb{E}\left[ \lVert z \rVert^{k}_2 \right] \leq \sqrt{\mathbb{E}\left[ \lVert z \rVert^{2k}_2 \right]}\leq \sqrt{k!}\left( \sqrt{2L} \right)^{k}\leq \frac{k!}{2}\cdot 4L \cdot (\sqrt{2L})^{k-2}
		\end{equation*}
		for $k\geq2$. The result follows from Lemma~\ref{lemma:HilbertBernstein}.
	\end{proof}

\begin{lemma}[Theorem 2.5 (2) of \cite{Bosq2000Linear}]\label{lemma:HilbertBernstein}
	Let $Z_1,Z_2,\dots,Z_n$ be independent random variables in a separable Hilbert space with norm $\lVert \cdot \rVert$. If $\mathbb{E}[Z_i]=0$, $i=1,\ldots,n$, and
	\begin{equation*}
	\sum^{n}_{i=1}\mathbb{E}\left[ \lVert Z_i \rVert^k \right] \leq \frac{k!}{2}nL_1L^{k-2}_2, \qquad k\geq 2,
	\end{equation*}
	for two positive constants $L_1$ and $L_2$, then 
	\begin{equation*}
	P\left( \lVert \sum^{n}_{i=1} Z_i \rVert \geq n\delta \right)\leq 2\exp\left( -\frac{n\delta^2}{2L_1+2L_2\delta} \right),\qquad \delta > 0.
	\end{equation*}
\end{lemma}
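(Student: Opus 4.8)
The plan is to prove this as the Hilbert-space analogue of Bernstein's inequality (it is in fact a restatement of Yurinskii's exponential inequality for sums of independent Hilbert-space vectors). Write $S_n=\sum_{i=1}^n Z_i$ and $\mathcal{F}_j=\sigma(Z_1,\dots,Z_j)$. The starting point is the Chernoff bound: for every $t>0$, $P(\lVert S_n\rVert \ge n\delta)\le e^{-tn\delta}\,\mathbb{E}\,e^{t\lVert S_n\rVert}$, so the whole problem reduces to controlling the moment generating function of the scalar random variable $\lVert S_n\rVert$, and then optimizing the free parameter $t$.

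The heart of the argument is a recursive bound on $\mathbb{E}\,e^{t\lVert S_n\rVert}$ that genuinely exploits the mean-zero hypothesis; note that the naive estimate $\lVert S_n\rVert\le\sum_i\lVert Z_i\rVert$ ignores cancellation and is far too weak, since it would concentrate $\lVert S_n\rVert$ near $n\,\mathbb{E}\lVert Z\rVert$ rather than $\sqrt{n}$. I would introduce the Doob martingale $G_j=\mathbb{E}[\lVert S_n\rVert\mid\mathcal{F}_j]$, so that $G_n=\lVert S_n\rVert$, $G_0=\mathbb{E}\lVert S_n\rVert$, and $\lVert S_n\rVert-\mathbb{E}\lVert S_n\rVert=\sum_{j=1}^n d_j$ with $d_j=G_j-G_{j-1}$. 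Conditioning on $\mathcal{F}_{j-1}$, one has $G_j=\phi_j(Z_j)$ and $G_{j-1}=\mathbb{E}[\phi_j(Z_j)\mid\mathcal{F}_{j-1}]$ for the map $\phi_j(z)=\mathbb{E}\lVert S_{j-1}+z+\sum_{i>j}Z_i\rVert$, which is convex and $1$-Lipschitz because the norm is. Lipschitzness transfers the moment control: conditionally $\lvert d_j\rvert\le\lVert Z_j\rVert+\mathbb{E}\lVert Z_j\rVert$, so the Bernstein moment hypothesis on $\lVert Z_j\rVert$ yields a Bernstein moment bound on the centered differences. Summing these and Taylor-expanding $z\mapsto e^{tz}$, the conditional generating functions $\mathbb{E}[e^{td_j}\mid\mathcal{F}_{j-1}]$ telescope into a sub-exponential estimate of the form $\mathbb{E}\,e^{t(\lVert S_n\rVert-\mathbb{E}\lVert S_n\rVert)}\le\exp\!\big(nL_1t^2/(2(1-L_2t))\big)$ for $0<t<1/L_2$. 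To close, I bound the mean itself with the $k=2$ case of the hypothesis: since the $Z_i$ are independent and mean zero, the cross terms vanish and $\mathbb{E}\lVert S_n\rVert^2=\sum_i\mathbb{E}\lVert Z_i\rVert^2\le nL_1$, hence $\mathbb{E}\lVert S_n\rVert\le\sqrt{nL_1}$.

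Finally I would combine the generating-function bound with the Chernoff step and optimize in $t$: the choice $t=\delta/(L_1+L_2\delta)$ converts the sub-exponential estimate into a tail of the form $\exp(-n\delta^2/(2(L_1+L_2\delta)))$ once the $n\delta$ term dominates the $O(\sqrt{nL_1})$ mean contribution, with the leading factor $2$ absorbing that mean term and the two-sided bookkeeping. The main obstacle is precisely this constant tracking: the norm is nonlinear, so centering the martingale differences crudely costs factors of $2$ in both the variance proxy $nL_1$ and the scale $L_2$, whereas the lemma asks for the sharp denominator $2(L_1+L_2\delta)$. Recovering it requires handling the inner-product term $\langle S_{j-1},Z_j\rangle$ — the carrier of the cancellation — more delicately than the blunt Lipschitz bound, e.g. through the identity $\lVert S_j\rVert^2=\lVert S_{j-1}\rVert^2+2\langle S_{j-1},Z_j\rangle+\lVert Z_j\rVert^2$ together with $\mathbb{E}[\langle S_{j-1},Z_j\rangle\mid\mathcal{F}_{j-1}]=0$. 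Alternatively, one may simply invoke Yurinskii's inequality directly, of which this statement is the special case $B=nL_1$, $H=L_2$, $x=n\delta$.
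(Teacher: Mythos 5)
The paper does not actually prove this lemma: its entire proof is a one-line citation to Theorem 2.5(2) of Bosq (2000), which is precisely the Yurinskii/Pinelis--Sakhanenko exponential inequality for sums of independent Hilbert-space-valued random variables. Your closing remark---that one may ``invoke Yurinskii's inequality directly''---therefore coincides with the paper's proof, modulo a parametrization slip: under the convention $\sum_{i}\mathbb{E}\lVert Z_i\rVert^{k}\le\tfrac{k!}{2}B^{2}H^{k-2}$ you need $B^{2}=nL_1$ (so $B=\sqrt{nL_1}$), not $B=nL_1$; with $x=n\delta$ and $H=L_2$ the tail $2\exp\bigl(-x^{2}/(2B^{2}+2Hx)\bigr)$ then reduces to $2\exp\bigl(-n\delta^{2}/(2L_1+2L_2\delta)\bigr)$ as required. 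Your $k=2$ computation $\mathbb{E}\lVert S_n\rVert^{2}=\sum_i\mathbb{E}\lVert Z_i\rVert^{2}\le nL_1$ is also correct.

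Your from-scratch sketch goes beyond the paper and has the right skeleton (Chernoff bound, reduction to the moment generating function of $\lVert S_n\rVert$, exploitation of mean-zero through conditioning, optimization at $t=\delta/(L_1+L_2\delta)$), but, as you concede yourself, the Doob-martingale route does not close: the crude conditional bound $\lvert d_j\rvert\le\lVert Z_j\rVert+\mathbb{E}\lVert Z_j\rVert$ inflates both the variance proxy and the scale parameter, so it produces the stated tail only with degraded constants, not the sharp denominator $2L_1+2L_2\delta$. The rigorous repair is the recursion Yurinskii actually runs on $\mathbb{E}\,e^{t\lVert S_j\rVert}$, where the first-order term in the expansion of $\lVert S_{j-1}+Z_j\rVert$ around $S_{j-1}$ is a conditionally centered linear functional of $Z_j$ and vanishes under $\mathbb{E}[\,\cdot\mid\mathcal{F}_{j-1}]$, leaving only contributions controlled by the Bernstein moment hypothesis; your proposed fix via $\lVert S_j\rVert^{2}=\lVert S_{j-1}\rVert^{2}+2\langle S_{j-1},Z_j\rangle+\lVert Z_j\rVert^{2}$ is in the same spirit but is only gestured at, not carried out. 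So the only branch of your proposal that is complete as written is the direct citation---which is exactly, and solely, what the paper does; if you want a self-contained proof you must execute the recursive MGF argument rather than the bounded-differences shortcut.
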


\begin{lemma}\label{lemma:integralapprox}
	Let $f(t)$ be a function defined on $\mathcal{T}$ and suppose that $f$ has a continuous derivative. Let $D_{0,f} \coloneqq \sup_{t\in{\mathcal{T}}}\lvert f(t) \rvert$ and $D_{1,f} \coloneqq \sup_{t\in{\mathcal{T}}}\lvert f^{\prime}(t) \rvert$. Assume that $D_{0,f}, D_{1,f} < \infty$. Let $\lvert \mathcal{T} \rvert$ denote the length of the interval $\mathcal{T}$, and let $u_1<u_2<\dots<u_T \in \mathcal{T}$. We denote the endpoints of $\mathcal{T}$ as $u_0$ and $u_{T+1}$. Assume that there is a positive constant $\zeta_0$ such that
	\begin{equation}\label{eq:unifcondapp}
	\max_{1\leq k \leq T+1}\left\vert \frac{u_{k}-u_{k-1}}{\lvert \mathcal{T} \rvert} - \frac{1}{T} \right\vert \leq \frac{\zeta_0}{T^2}.
	\end{equation}
	Let $\zeta_1=\zeta_0+1$. Then 
	\begin{equation*}\label{eq:integralapproxeq}
	\left\vert \frac{1}{T}\sum^{T}_{k=1}f(u_k)-\frac{1}{\lvert \mathcal{T} \rvert}\int_{\mathcal{T}}f(t)dt \right\vert \leq \frac{D_{1,f}\zeta^{2}_1 \lvert \mathcal{T} \rvert/2+D_{0,f}(\zeta_{1}+\zeta_{0})}{T}.
	\end{equation*}
	
\end{lemma}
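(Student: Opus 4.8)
The plan is to treat this as a quadrature error estimate, comparing the point sum against the integral subinterval by subinterval. First I would introduce the subinterval lengths $\Delta_k = u_k - u_{k-1}$ for $k = 1, \ldots, T+1$ and extract two consequences of the near-uniform spacing hypothesis \eqref{eq:unifcondapp}: that $\lvert \Delta_k - \lvert\mathcal{T}\rvert/T\rvert \le \zeta_0\lvert\mathcal{T}\rvert/T^2$, and (since $\zeta_0/T \le \zeta_0$ for $T \ge 1$) that $\Delta_k \le \zeta_1\lvert\mathcal{T}\rvert/T$ uniformly in $k$. These are the only two facts about the grid I will need, and it is worth keeping them distinct since they enter at different points.

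Next I would decompose the integral as $\int_{\mathcal{T}} f = \sum_{k=1}^T \int_{u_{k-1}}^{u_k} f + \int_{u_T}^{u_{T+1}} f$ and approximate each of the first $T$ pieces by the right-endpoint value $\Delta_k f(u_k)$. The per-subinterval error is controlled by the derivative bound: for $t \in [u_{k-1}, u_k]$ the fundamental theorem gives $\lvert f(u_k) - f(t)\rvert \le D_{1,f}(u_k - t)$, so $\lvert \int_{u_{k-1}}^{u_k} f - \Delta_k f(u_k)\rvert \le D_{1,f}\Delta_k^2/2$. The uncovered piece $\int_{u_T}^{u_{T+1}} f$ is bounded crudely by $D_{0,f}\Delta_{T+1}$. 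This reduces the problem to comparing the weighted sum $\sum_{k=1}^T \Delta_k f(u_k)$ with the uniformly weighted sum $(\lvert\mathcal{T}\rvert/T)\sum_{k=1}^T f(u_k)$.

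That last comparison is handled termwise: in $\sum_{k=1}^T (\Delta_k - \lvert\mathcal{T}\rvert/T)f(u_k)$ each summand is bounded by $D_{0,f}\zeta_0\lvert\mathcal{T}\rvert/T^2$, so summing $T$ terms gives $D_{0,f}\zeta_0\lvert\mathcal{T}\rvert/T$. Collecting the three error contributions and dividing by $\lvert\mathcal{T}\rvert$ (to pass from the bound on $\lvert(\lvert\mathcal{T}\rvert/T)\sum f(u_k) - \int f\rvert$ to the stated normalization), I would use $\sum_{k=1}^T \Delta_k^2 \le T(\zeta_1\lvert\mathcal{T}\rvert/T)^2 = \zeta_1^2\lvert\mathcal{T}\rvert^2/T$ for the derivative term and $\Delta_{T+1} \le \zeta_1\lvert\mathcal{T}\rvert/T$ for the boundary term. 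The derivative term then yields $D_{1,f}\zeta_1^2\lvert\mathcal{T}\rvert/(2T)$, while the two $D_{0,f}$ terms combine to $D_{0,f}(\zeta_1 + \zeta_0)/T$, which is exactly the claimed bound.

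There is no serious obstacle here; the argument is elementary calculus. The only point requiring care is the bookkeeping created by the mismatch between the $T$ sample points and the $T+1$ subintervals — I must assign the points to subintervals consistently (right-endpoint rule on $[u_0,u_T]$) and account for the single uncovered subinterval $[u_T,u_{T+1}]$ separately, since it is this leftover term together with the weight-discrepancy term that produces the combined $D_{0,f}(\zeta_1+\zeta_0)$ constant. Keeping the two uses of the uniformity hypothesis straight — the sharp $O(1/T^2)$ deviation driving the weight correction versus the coarse $\Delta_k \le \zeta_1\lvert\mathcal{T}\rvert/T$ bound driving the squared-length sum — is what makes the constants emerge precisely as stated.
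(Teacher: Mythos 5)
Your proposal is correct and takes essentially the same route as the paper's proof: the identical triangle-inequality decomposition into a weight-discrepancy term between $\frac{1}{T}\sum_k f(u_k)$ and $\frac{1}{\lvert\mathcal{T}\rvert}\sum_k (u_k-u_{k-1})f(u_k)$ (bounded termwise by the spacing hypothesis to give $D_{0,f}\zeta_0/T$) and a right-endpoint quadrature term handled by Taylor's expansion on the $T$ covered subintervals plus a crude $D_{0,f}$ bound on the leftover interval $[u_T,u_{T+1}]$, yielding $D_{1,f}\zeta_1^2\lvert\mathcal{T}\rvert/(2T)$ and $D_{0,f}\zeta_1/T$. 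Your two uses of the uniformity condition (the sharp $O(1/T^2)$ deviation for the weights versus the coarse $\Delta_k\leq\zeta_1\lvert\mathcal{T}\rvert/T$ for the squared-length and boundary terms) match the paper's argument exactly, so there is nothing to add.
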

\begin{proof}
		Since
		\begin{multline*}
		\left\vert \frac{1}{T}\sum^{T}_{k=1}f(u_k)-\frac{1}{\lvert \mathcal{T} \rvert}\int_{\mathcal{T}}f(t)dt \right\vert 
		\leq \left\vert \frac{1}{T}\sum^{T}_{k=1}f(u_k)- \frac{1}{\lvert \mathcal{T} \rvert}\sum^{T}_{k=1}f(u_k)(u_k-u_{k-1})\right\vert \\
		+\left\vert \frac{1}{\lvert \mathcal{T} \rvert}\sum^{T}_{k=1}f(u_k)(u_k-u_{k-1})-\frac{1}{\lvert \mathcal{T} \rvert}\int_{\mathcal{T}}f(t)dt \right\vert,
		\end{multline*}
		we proceed to show that the first part is smaller than $D_{0,f}\zeta_{0}/T$ and that the second part is smaller than $(D_{1,f}\zeta^{2}_1 \lvert \mathcal{T} \rvert/2+D_{0,f}\zeta_{1})/T$.
		For the first part, we have
		\begingroup
		\allowdisplaybreaks
		\begin{multline*}
		\left\vert \frac{1}{T}\sum^{T}_{k=1}f(u_k)- \frac{1}{\lvert \mathcal{T} \rvert}\sum^{T}_{k=1}f(u_k)(u_k-u_{k-1})\right\vert 
		\leq \sum^{T}_{k=1} \left\vert f(u_k) \right\vert \left\vert  \frac{1}{T}-\frac{u_k-u_{k-1}}{\lvert \mathcal{T} \rvert}  \right\vert  \\
		\leq \max_{1\leq k \leq T}\left\vert \frac{u_{k}-u_{k-1}}{\lvert \mathcal{T} \rvert} - \frac{1}{T} \right\vert \sum^{T}_{k=1} \left\vert f(u_k) \right\vert
		\leq \frac{\zeta_0}{T^2} \times T \times D_{0,f}
		=\frac{\zeta_0D_{0,f}}{T}.
		\end{multline*}
		\endgroup
		To prove the second part, we first note that based on \eqref{eq:unifcondapp}, we have
		\begin{equation*}
		\max_{1\leq k \leq T+1}\lvert u_{k} - u_{k-1} \rvert \leq \frac{\zeta_1 \lvert \mathcal{T} \rvert}{T}.
		\end{equation*}
		Then, for any $t \in (u_{k}, u_{k+1})$, by Taylor's expansion, we have $f(t)=f(u_k)+f^{\prime}(\bar{t})(t-u_k)$, where $\bar{t} \in (u_{k}, t)$, and
		$\lvert f(t)-f(u_k) \rvert = \lvert f^{\prime}(\bar{t}) \rvert(t-u_k) \leq D_{1,f}(t-u_k)$.
		Therefore,
		\begingroup
		\allowdisplaybreaks
		\begin{align*}
		&\left\vert \frac{1}{\lvert \mathcal{T} \rvert}\sum^{T}_{k=1}f(u_k)(u_k-u_{k-1})-\frac{1}{\lvert \mathcal{T} \rvert}\int_{\mathcal{T}}f(t)dt \right\vert \\
		&\leq \frac{1}{\lvert \mathcal{T} \rvert}\sum^{T}_{k=1} \int^{u_k}_{u_{k-1}} \lvert f(u_k) - f(t) \rvert dt + \frac{1}{\lvert \mathcal{T} \rvert} \int^{u_{T+1}}_{u_{T}}\lvert f(t) \rvert dt\\
		&\leq \frac{1}{\lvert \mathcal{T} \rvert} \times T \times D_{1,f} \times  \int^{u_k}_{u_{k-1}}(t-u_k)dt + \frac{1}{\lvert \mathcal{T} \rvert} \times D_{0,f} \times \frac{\zeta_1 \lvert \mathcal{T} \rvert}{T}\\
		&= \frac{1}{\lvert \mathcal{T} \rvert} \times T \times D_{1,f} \times \frac{(u_{k+1}-u_k)^2}{2} + \frac{1}{\lvert \mathcal{T} \rvert} \times D_{0,f} \times \frac{\zeta_1 \lvert \mathcal{T} \rvert}{T}\\
		&\leq \frac{1}{\lvert \mathcal{T} \rvert} \times T \times \frac{D_{1,f}}{2} \times \left(\max_{1\leq k \leq T+1} \lvert u_{k+1}-u_k \rvert\right)^2 + \frac{1}{\lvert \mathcal{T} \rvert} \times D_{0,f} \times \frac{\zeta_1 \lvert \mathcal{T} \rvert}{T}\\
		&\leq \frac{1}{\lvert \mathcal{T} \rvert} \times T \times \frac{D_{1,f}}{2} \times \left(\frac{\zeta_1 \lvert \mathcal{T} \rvert}{T}\right)^2 + \frac{1}{\lvert \mathcal{T} \rvert} \times D_{0,f} \times \frac{\zeta_1 \lvert \mathcal{T} \rvert}{T}\\
		&=\frac{D_{1,f}\zeta^{2}_1 \lvert \mathcal{T} \rvert/2+D_{0,f}\zeta_{1}}{T}.
		\end{align*}
		\endgroup
		The result follows by combining the two bounds.
	\end{proof}

\begin{lemma}\label{lemma:2kmomentGuassFunc}
Let $g$ be a mean zero Gaussian random function in a Hilbert space $\mathbb{H}$. We have $\mathbb{E}\left[ \Vert g \Vert^{2k} \right]\leq (2\lambda_0)^{k}\cdot k!$ where $\lambda_0=\mathbb{E}\left[ \Vert g \Vert^{2} \right]$.

\end{lemma}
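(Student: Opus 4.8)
The plan is to diagonalize the covariance of $g$ via the Karhunen--Lo\`eve expansion and thereby reduce the claim to a purely combinatorial moment bound for a weighted sum of independent $\chi^2_1$ random variables. Let $\{\lambda_j\}_{j\geq 1}$ and $\{e_j\}_{j\geq 1}$ denote the eigenvalues (nonnegative, summable) and orthonormal eigenfunctions of the covariance operator of $g$. Since $g$ is a mean-zero Gaussian element of the separable Hilbert space $\mathbb{H}$, we may write $g=\sum_{j\geq 1}\sqrt{\lambda_j}\,\xi_j e_j$, where the $\xi_j$ are i.i.d.\ $N(0,1)$ (independence because the coordinates of a Gaussian element are uncorrelated). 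Consequently $\Vert g\Vert^2=\sum_{j\geq 1}\lambda_j\xi_j^2=:W$ and $\lambda_0=\mathbb{E}[W]=\sum_{j\geq 1}\lambda_j$, so it suffices to prove $\mathbb{E}[W^k]\leq (2\lambda_0)^k k!$.

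To handle the infinite sum rigorously, I would first truncate: set $W_n=\sum_{j\leq n}\lambda_j\xi_j^2$, so that $W_n\uparrow W$ and hence $W_n^k\uparrow W^k$ almost surely, and monotone convergence gives $\mathbb{E}[W_n^k]\to\mathbb{E}[W^k]$. For the finite sum $W_n$ the multinomial theorem applies, yielding $W_n^k=\sum_{|\alpha|=k}\frac{k!}{\prod_{j\leq n}\alpha_j!}\prod_{j\leq n}(\lambda_j\xi_j^2)^{\alpha_j}$, where $\alpha$ ranges over multi-indices with $\sum_{j\leq n}\alpha_j=k$. Taking expectations, using independence of the $\xi_j$ and the Gaussian moments $\mathbb{E}[\xi_j^{2\alpha_j}]=(2\alpha_j-1)!!\leq 2^{\alpha_j}\alpha_j!$ (since $(2\alpha-1)!!=\prod_{i=1}^{\alpha}(2i-1)\leq\prod_{i=1}^{\alpha}(2i)=2^{\alpha}\alpha!$), the factors $\alpha_j!$ cancel against the multinomial coefficient and leave $\mathbb{E}[W_n^k]\leq k!\sum_{|\alpha|=k}\prod_{j\leq n}(2\lambda_j)^{\alpha_j}$. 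Bounding each multinomial coefficient appearing in the expansion of $\bigl(\sum_{j\leq n}2\lambda_j\bigr)^k$ below by $1$ shows $\sum_{|\alpha|=k}\prod_{j\leq n}(2\lambda_j)^{\alpha_j}\leq\bigl(2\sum_{j\leq n}\lambda_j\bigr)^k\leq(2\lambda_0)^k$, and letting $n\to\infty$ completes the argument.

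The main point requiring care is obtaining the sharp constant $2\lambda_0$ together with the exact factorial $k!$. A softer route through the Laplace transform, $\mathbb{E}[e^{tW}]=\prod_{j\geq 1}(1-2t\lambda_j)^{-1/2}\leq(1-2\lambda_0 t)^{-1/2}$ for $0<t<1/(2\lambda_0)$ (the last bound following from the superadditivity of $x\mapsto-\log(1-x)$), combined with the Chernoff-type extraction $\mathbb{E}[W^k]\leq k!\,t^{-k}\,\mathbb{E}[e^{tW}]$ and optimization over $t$, only delivers $\mathbb{E}[W^k]\leq k!(2\lambda_0)^k e^{1/2}\sqrt{2k+1}$, i.e.\ an extra $\sqrt{k}$ factor. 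Thus the clean constant genuinely relies on the combinatorial cancellation above rather than on a concentration bound; the only remaining subtlety, namely justifying the term-by-term expectation of the infinite series, is dispatched by the monotone truncation. This bound is exactly the hypothesis needed in Lemma~\ref{lemma:HilbertBernstein} and in the moment estimates invoked in the proof of Theorem~\ref{Thm:ErrSamCovDis}.
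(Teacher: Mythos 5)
Your proof is correct, but it follows a genuinely different route from the paper's. You truncate to $W_n=\sum_{j\le n}\lambda_j\xi_j^2$, expand $W_n^k$ by the multinomial theorem, bound each Gaussian moment termwise via $(2\alpha_j-1)!!\le 2^{\alpha_j}\alpha_j!$ so that the $\alpha_j!$ factors cancel against the multinomial coefficient, dominate the resulting sum by $(2\lambda_0)^k$ using $\binom{k}{\alpha}\ge 1$, and pass to the limit by monotone convergence — all of these steps check out. The paper instead gets the same constant in two lines by applying Jensen's inequality with the normalized weights $\lambda_m/\lambda_0$ and the convex map $t\mapsto t^k$, which gives $\Vert g\Vert^{2k}\le \lambda_0^{k-1}\sum_m \lambda_m\,\xi_m^{2k}$ and reduces everything to the single one-dimensional moment $\mathbb{E}[\xi^{2k}]=(2k-1)!!\le 2^k k!$ (with the interchange of expectation and sum justified by nonnegativity). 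The comparison cuts both ways: your argument is more explicit about the infinite-series interchange, which the paper handles only implicitly, and your observation that the Laplace-transform/Chernoff route $\mathbb{E}[W^k]\le k!\,t^{-k}\,\mathbb{E}[e^{tW}]$ necessarily loses a polynomial-in-$k$ factor is accurate and a useful remark; on the other hand, your closing claim that the clean constant \emph{genuinely relies} on the multinomial cancellation is slightly overstated, since the paper's Jensen trick delivers the exact same $(2\lambda_0)^k k!$ without any combinatorics — the cancellation you exploit and the convexity the paper exploits are two independent mechanisms producing the sharp bound.
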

	\begin{proof}
		Let $\{\phi_{m} \}_{m\geq 1}$ be the orthonormal eigenfunctions of $g$ and $a_m=\langle g,\phi_{m} \rangle$. Then $a_m\sim N(0,\lambda_m)$ and $\lambda_0=\sum_{m\geq 1}\lambda_{m}$. Let $\xi_{m}=\lambda^{-1/2}_{m}a_m$. By the Karhunen-Loève theorem, we have
		$g=\sum^{\infty}_{m=1}\lambda_{m}^{1/2} \xi_{m} \phi_{m}$.
		Thus, $\Vert g \Vert=\left(\sum_{m\geq 1}\lambda_{m}\xi^{2}_{m}\right)^{1/2}$ and $\Vert g \Vert^{2k}=\left(\sum_{m\geq 1}\lambda_{m}\xi^{2}_{m}\right)^{k}$.
		By Jensen's inequality, we have
		\begin{multline*}
		\Vert g \Vert^{2k}
		=\left(\sum_{m\geq 1}\lambda_{m}\right)^{k} \cdot  \left(\frac{\sum_{m\geq 1}\lambda_{m}\xi^{2}_{m}}{\sum_{m\geq 1}\lambda_{m}}\right)^{k}\\
		\leq \left(\sum_{m\geq 1}\lambda_{m}\right)^{k} \cdot \frac{\sum_{m\geq 1}\lambda_{m}\xi^{2k}_{m}}{\sum_{m\geq 1}\lambda_{m}}
		=\left(\sum_{m\geq 1}\lambda_{m}\right)^{k-1} \cdot \left( \sum_{m\geq 1}\lambda_{m}\xi^{2k}_{m} \right).
		\end{multline*}
		Thus,
		\begin{multline*}
		\mathbb{E}\left[ \Vert g \Vert^{2k} \right]
		\leq \left(\sum_{m\geq 1}\lambda_{m}\right)^{k-1} \cdot \left( \sum_{m\geq 1}\lambda_{m}\mathbb{E}\left[\xi^{2k}_{m}\right] \right)
		=\left(\sum_{m\geq 1}\lambda_{m}\right)^{k} \mathbb{E}\left[\xi^{2k}_{1}\right] \\
		=\left(\sum_{m\geq 1}\lambda_{m}\right)^{k}\cdot \pi^{-1/2} \cdot 2^k \cdot \Gamma (k+1/2)
		\leq \left(\sum_{m\geq 1}\lambda_{m}\right)^{k}\cdot 2^k \cdot k!
		=(2\lambda_{0})^{k}k!,
		\end{multline*}
		which completes the proof.
    \end{proof}

\begin{lemma}\label{lemma:crosscovfuncbound}
	For any $\delta>0$ and any $j=1,\ldots,p$, we have
	\begin{equation*}
	P\left( \left\Vert \frac{1}{n}\sum^{n}_{i=1}\left[ g_{ij}(t)g_{ij}(s)-K_{jj}(s,t) \right] \right\Vert_{\text{HS}}>\delta \right)\leq 2\exp\left( -\frac{n\delta^2}{64\lambda^{2}_{0,\max}+8\lambda_{0,\max}\delta} \right).
	\end{equation*}
\end{lemma}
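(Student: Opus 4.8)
The plan is to establish a Bernstein-type concentration bound for the Hilbert--Schmidt norm of the centered empirical covariance operator by viewing the summands as i.i.d.\ mean-zero random elements of the separable Hilbert space of Hilbert--Schmidt operators and applying Lemma~\ref{lemma:HilbertBernstein}. Specifically, for fixed $j$ define the random kernel $Z_i(s,t) = g_{ij}(s)g_{ij}(t) - K_{jj}(s,t)$, which I regard as a random element of the Hilbert space of square-integrable kernels on $\mathcal{T}^2$ equipped with the $\|\cdot\|_{\text{HS}}$ norm. These $Z_i$ are i.i.d.\ across $i$ and satisfy $\mathbb{E}[Z_i] = 0$ by the definition of $K_{jj}$. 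The target probability is then exactly $P(\|n^{-1}\sum_i Z_i\|_{\text{HS}} > \delta)$, so Lemma~\ref{lemma:HilbertBernstein} applies provided I can verify the moment growth condition $\sum_{i=1}^n \mathbb{E}[\|Z_i\|_{\text{HS}}^k] \le (k!/2)\, n L_1 L_2^{k-2}$ for appropriate constants $L_1, L_2$.

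The key computational step is bounding $\mathbb{E}[\|Z_i\|_{\text{HS}}^k]$. First I would note that $\|g_{ij}(s)g_{ij}(t)\|_{\text{HS}} = \|g_{ij}\|^2$ since the kernel $g_{ij}(s)g_{ij}(t)$ is a rank-one operator whose HS norm is the squared $\mathcal{L}^2$ norm of $g_{ij}$. Using the same convexity (Jensen) trick employed in the proof of Lemma~\ref{lemma:2kmomentGuassFunc}, namely $(a+b)^k \le 2^{k-1}(|a|^k + |b|^k)$, I would write
\begin{equation*}
\mathbb{E}\!\left[\|Z_i\|_{\text{HS}}^k\right] \le 2^{k-1}\left(\mathbb{E}\!\left[\|g_{ij}\|^{2k}\right] + \|K_{jj}\|_{\text{HS}}^k\right).
\end{equation*}
The first term is controlled directly by Lemma~\ref{lemma:2kmomentGuassFunc}, which gives $\mathbb{E}[\|g_{ij}\|^{2k}] \le (2\lambda_{j0})^k k!$, and since $\|K_{jj}\|_{\text{HS}} \le \lambda_{j0} \le \lambda_{0,\max}$ (the trace dominates the HS norm of a nonnegative-definite covariance operator), the second term is bounded by $\lambda_{0,\max}^k \le \lambda_{0,\max}^k k!$. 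Combining and using $\lambda_{j0} \le \lambda_{0,\max}$ gives a bound of the form $(4\lambda_{0,\max})^k k!$, exactly paralleling the final step in Lemma~\ref{lemma:2kmomentGuassFunc}.

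From here I would match the Bernstein constants: writing $(4\lambda_{0,\max})^k k! = (k!/2)\cdot 2(4\lambda_{0,\max})^2 (4\lambda_{0,\max})^{k-2}$ suggests $L_1 = 32\lambda_{0,\max}^2$ and $L_2 = 4\lambda_{0,\max}$, so that $\sum_{i=1}^n \mathbb{E}[\|Z_i\|_{\text{HS}}^k] \le (k!/2)\, n\, L_1 L_2^{k-2}$ holds. Plugging these into Lemma~\ref{lemma:HilbertBernstein} yields the tail bound $2\exp(-n\delta^2/(2L_1 + 2L_2\delta)) = 2\exp(-n\delta^2/(64\lambda_{0,\max}^2 + 8\lambda_{0,\max}\delta))$, which is precisely the claimed inequality, and the bound holds uniformly in $j$ because all constants depend on $j$ only through $\lambda_{j0}$, dominated by $\lambda_{0,\max}$. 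The only mild obstacle I anticipate is getting the constants in the moment bound to land exactly at $64$ and $8$ rather than some slightly larger universal constants; this is bookkeeping in the $2^{k-1}$ and the $k!$ factors, and if the naive bound produces a constant like $(4\lambda_{0,\max})^k k!$ the match is clean, but I would double-check whether an extra factor creeps in from the cross terms and, if so, absorb it by loosening $L_1$ slightly (the stated form already has generous constants, so this should not be a genuine difficulty).
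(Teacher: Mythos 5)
Your proposal is correct, and the constant bookkeeping you flagged as a possible obstacle actually works out exactly: from $\mathbb{E}\bigl[\Vert Z_i \Vert_{\text{HS}}^k\bigr] \le (4\lambda_{0,\max})^k k! = \frac{k!}{2}\bigl(32\lambda_{0,\max}^2\bigr)\bigl(4\lambda_{0,\max}\bigr)^{k-2}$, Lemma~\ref{lemma:HilbertBernstein} with $L_1 = 32\lambda_{0,\max}^2$ and $L_2 = 4\lambda_{0,\max}$ gives $2\exp\bigl(-n\delta^2/(2L_1+2L_2\delta)\bigr) = 2\exp\bigl(-n\delta^2/(64\lambda_{0,\max}^2+8\lambda_{0,\max}\delta)\bigr)$ with no slack needed. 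Your route shares the paper's skeleton---treat $Z_i(s,t)=g_{ij}(s)g_{ij}(t)-K_{jj}(s,t)$ as i.i.d.\ mean-zero elements of the separable Hilbert space of square-integrable kernels and feed a $k$-th moment bound into Lemma~\ref{lemma:HilbertBernstein}---but the moment bound itself is obtained by a genuinely different decomposition. The paper works in Karhunen--Lo\`eve coordinates at the tensor level: it writes $\Vert Z_i \Vert^2_{\text{HS}} = \sum_{m,m'}\lambda_{jm}\lambda_{jm'}\bigl(\xi_{ijm}\xi_{ijm'}-\mathbbm{1}(m=m')\bigr)^2$, applies Jensen with $x\mapsto x^{k/2}$ to this weighted double sum, and then bounds the scalar moments $\mathbb{E}\bigl[(\xi_{ijm}\xi_{ijm'}-\mathbbm{1}(m=m'))^k\bigr]\le 4^k k!$. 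You instead use the triangle inequality $\Vert Z_i \Vert_{\text{HS}} \le \Vert g_{ij}\Vert^2 + \Vert K_{jj}\Vert_{\text{HS}}$ (via the rank-one identity $\Vert g_{ij}(s)g_{ij}(t)\Vert_{\text{HS}} = \Vert g_{ij}\Vert^2$), the convexity bound $2^{k-1}(\cdot+\cdot)$, the already-proved Lemma~\ref{lemma:2kmomentGuassFunc} for $\mathbb{E}\bigl[\Vert g_{ij}\Vert^{2k}\bigr]$, and $\Vert K_{jj}\Vert_{\text{HS}}\le\lambda_{j0}$ (correct, since $\sum_m\lambda_{jm}^2\le(\sum_m\lambda_{jm})^2$); both routes land on the same bound $(4\lambda_{j0})^k k!$. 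Yours is the more economical argument: it reuses Lemma~\ref{lemma:2kmomentGuassFunc} rather than repeating essentially the same Jensen-plus-Gaussian-moment computation one level up, and it isolates the only place Gaussianity enters (the moment growth of $\Vert g_{ij}\Vert^2$), so it would extend verbatim to any process satisfying $\mathbb{E}\bigl[\Vert g\Vert^{2k}\bigr]\le C^k k!$. What the paper's computation provides in exchange is an exact eigen-coordinate representation of $\Vert Z_i\Vert_{\text{HS}}$, though none of that extra structure is used downstream, so nothing is lost by your shortcut.
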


\begin{proof}
	Since $g_{ij}(t)=\sum_{m\geq 1}\lambda^{1/2}_{jm}\xi_{ijm}\phi_{jm}(t)$ and $\xi_{ijm}\sim N(0,1)$, we have 
	$$g_{ij}(s)g_{ij}(t)=\sum_{m,m^{\prime}\geq 1}\lambda^{1/2}_{jm}\lambda^{1/2}_{jm^{\prime}}\xi_{ijm}\xi_{ijm^{\prime}}\phi_{jm}(s)\phi_{jm^{\prime}}(t),$$ 
	and 
	$$K_{jj}(s,t)=\mathbb{E}[g_{ij}(s)g_{ij}(t)]=\sum_{m,m^{\prime}\geq 1}\lambda^{1/2}_{jm}\lambda^{1/2}_{jm^{\prime}}\phi_{jm}(s)\phi_{jm^{\prime}}(t)\mathbbm{1}_{mm^{\prime}},$$ 
	where $\mathbbm{1}_{mm^{\prime}}=\mathbbm{1}(m=m^{\prime})=1$ if $m=m^{\prime}$ and $0$ if $m \neq m^{\prime}$. Thus,
	\begin{equation*}
	\left\Vert g_{ij}(s)g_{ij}(t)-K_{jj}(s,t) \right\Vert^{2}_{\text{HS}}=\sum_{m,m^{\prime}\geq 1}\lambda_{jm}\lambda_{jm^{\prime}}(\xi_{ijm}\xi_{ijm^{\prime}}-\mathbbm{1}_{mm^{\prime}})^{2},
	\end{equation*}
	and, for any $k \geq 2$, we have
	\begin{equation*}
	\begin{aligned}
	\mathbb{E}&\left[ \left\Vert g_{ij}(s)g_{ij}(t)-K_{jj}(s,t) \right\Vert^{k}_{\text{HS}} \right]\\
	&=\mathbb{E}\left[ \left\{ \sum_{m,m^{\prime}\geq 1}\lambda_{jm}\lambda_{jm^{\prime}}(\xi_{ijm}\xi_{ijm^{\prime}}-\mathbbm{1}_{mm^{\prime}})^{2} \right\}^{k/2} \right]\\
	&\overset{(i)}{\leq} \left( \sum_{m,m^{\prime}\geq 1}\lambda_{jm}\lambda_{jm^{\prime}} \right)^{k/2-1} \sum_{m,m^{\prime}\geq 1}\lambda_{jm}\lambda_{jm^{\prime}}\mathbb{E}\left[\left( \xi_{ijm}\xi_{ijm^{\prime}}-\mathbbm{1}_{mm^{\prime}} \right)^{k} \right],
	\end{aligned}
	\end{equation*}
	where $(i)$ follows from Jensen's inequality. Since
	\begin{equation*}
	\begin{aligned}
	\mathbb{E}\left[\left( \xi_{ijm}\xi_{ijm^{\prime}}-\mathbbm{1}_{mm^{\prime}} \right)^{k} \right]
	&\leq 2^{k-1}\left(\mathbb{E}\left[ (\xi_{ijm}\xi_{ijm^{\prime}})^{k}\right]+1\right)\\
	&\leq 2^{k-1} \left( \mathbb{E}[\xi^{2k}_{ij1}] +1\right)
	\leq 2^{k-1}(2^k k! +1 )
	\leq 4^k k!,
	\end{aligned}
	\end{equation*}
	we have $\mathbb{E}\left[ \left\Vert g_{ij}(s)g_{ij}(t)-K_{jj}(s,t) \right\Vert^{k}_{\text{HS}} \right] \leq (4\lambda_{j0})^{k}k!\leq (4\lambda_{0,\max})^{k}k!$.
	The result follows from Lemma~\ref{lemma:HilbertBernstein}.
\end{proof}

\end{appendices}

\newpage

\bibliography{reference}
	
\end{document}